%% file: example_paper.tex
\documentclass{article}

\usepackage{microtype}
\usepackage{graphicx}
\usepackage{subcaption}
\usepackage{booktabs} 

\usepackage{hyperref}

\usepackage[preprint]{icml2026}

\usepackage{amsmath}
\usepackage{amssymb}
\usepackage{mathtools}
\usepackage{amsthm}
\usepackage{cleveref}
\usepackage{multirow}
\usepackage{makecell}

\crefname{section}{Section}{Sections}
\Crefname{section}{Section}{Sections}
\crefname{chapter}{Chapter}{Chapters}
\Crefname{chapter}{Chapter}{Chapters}
\crefname{appendix}{Appendix}{Appendices}
\Crefname{appendix}{Appendix}{Appendices}
\crefname{equation}{Eq.}{Eqs.}
\Crefname{equation}{Eq.}{Eqs.}
\crefname{part}{Part}{Parts}
\Crefname{part}{Part}{Parts}
\crefname{table}{Table}{Tables}
\Crefname{table}{Table}{Tables}
\crefname{figure}{Figure}{Figures}
\Crefname{figure}{Figure}{Figures}
\crefname{theorem}{Theorem}{Theorems}
\Crefname{theorem}{Theorem}{Theorems}
\crefname{proposition}{Proposition}{Propositions}
\Crefname{proposition}{Proposition}{Propositions}
\crefname{corollary}{Corollary}{Corollaries}
\Crefname{corollary}{Corollary}{Corollaries}
\crefname{lemma}{Lemma}{Lemmas}
\Crefname{lemma}{Lemma}{Lemmas}
\crefname{finding}{Finding}{Findings}
\Crefname{finding}{Finding}{Findings}
\crefname{assumption}{Assumption}{Assumptions}
\Crefname{assumption}{Assumption}{Assumptions}
\crefname{algorithm}{Algorithm}{Algorithms}
\Crefname{algorithm}{Algorithm}{Algorithms}

\newcount\Comments  
\newcommand{\kibitz}[2]{\ifnum\Comments=1{\color{#1}{#2}}\fi}

\newcount\CommentsAdd  
\newcommand{\kibitzAdd}[2]{\ifnum\CommentsAdd=1{\color{#1}{#2}}\fi}
\definecolor{english}{rgb}{0.0, 0.5, 0.0}

\usepackage{thm-restate}

\makeatletter
\def\thmheadbrackets#1#2#3{%
  \thmname{#1}\thmnumber{\@ifnotempty{#1}{ }\@upn{#2}}%
  \thmnote{ {\the\thm@notefont[#3]}}}
\makeatother

\newtheoremstyle{brakets}
  {}
  {}
  {\itshape}
  {}
  {\bfseries}
  {.}
  { }
  {\thmheadbrackets{#1}{#2}{#3}}

\newtheoremstyle{definitionbrakets}
  {}                       
  {}                       
  {\normalfont}               
  {}                          
  {\bfseries}                 
  {.}                         
  { }                         
  {\thmheadbrackets{#1}{#2}{#3}}

\theoremstyle{brakets}

\newtheorem{theorem}{Theorem}

\theoremstyle{remark}
\newtheorem{remark}[theorem]{Remark}
\theoremstyle{definitionbrakets}

\newtheorem{claim}{Claim}

\usepackage[textsize=tiny]{todonotes}

\input{math_commands}

\icmltitlerunning{LLM Active Alignment}

\begin{document}

\twocolumn[
  \icmltitle{LLM Active Alignment: A Nash Equilibrium Perspective}



  \icmlsetsymbol{equal}{*}

  \begin{icmlauthorlist}
    \icmlauthor{Tonghan Wang}{equal,xxx}
    \icmlauthor{Yuqi Pan}{equal,yyy}
    \icmlauthor{Xinyi Yang}{equal,comp}
    \icmlauthor{Yanchen Jiang}{yyy}
    \icmlauthor{Milind Tambe}{yyy}
    \icmlauthor{David C. Parkes}{yyy}
  \end{icmlauthorlist}

  \icmlaffiliation{xxx}{College of AI, Tsinghua University, Beijing, China}
  \icmlaffiliation{yyy}{Harvard University, Cambridge, MA, USA}
  \icmlaffiliation{comp}{Department of Electronic Engineering, Tsinghua University, Beijing, China}

  \icmlcorrespondingauthor{Tonghan Wang}{tonghanwang1996@gmail.com}

  \icmlkeywords{Machine Learning, ICML}

  \vskip 0.3in
]



\printAffiliationsAndNotice{}  

\begin{abstract}
    We develop a game-theoretic framework for predicting and steering the behavior of populations of large language models (LLMs) through Nash equilibrium (NE) analysis. To avoid the intractability of equilibrium computation in open-ended text spaces, we model each agent’s action as a mixture over human subpopulations. Agents choose actively and strategically which groups to align with, yielding an interpretable and behaviorally substantive policy class. We derive closed-form NE characterizations, adopting standard concave-utility assumptions to enable analytical system-level predictions and give explicit, actionable guidance for shifting alignment targets toward socially desirable outcomes. The method functions as an active alignment layer on top of existing alignment pipelines such as RLHF. In a social-media setting, we show that a population of LLMs, especially reasoning-based models, may exhibit political exclusion—pathologies where some subpopulations are ignored by all LLM agents—which can be avoided by our method, illustrating the promise of applying the method to regulate multi-agent LLM dynamics across domains.

    
    
\end{abstract}

\input{text/1-Introduction}

\input{text/2-Model}
\input{text/4-Experiment}
\input{text/5-Conclusion}


\section*{Impact Statement}
This paper introduces a game-theoretic framework to predict and steer the behavior of populations of Large Language Models (LLMs). A significant societal contribution of this work is the identification and formalization of  ``political exclusion'' arising within stable Nash Equilibria of a population of LLM agents, wherein strategic competition for engagement can lead LLM populations to systematically ignore the perspectives of specific human subpopulations. Our work offers a mathematical foundation for ``active alignment''—enabling system designers to move beyond passive data aggregation and instead tune institutional incentives to ensure fair representation. While our methods could theoretically be used to manipulate LLM behavior, their primary intended impact is to provide governance primitives to ensure that the deployment in multi-agent LLM systems does not lead to outcomes harmful or exclusionary to human groups.

\bibliography{example_paper}
\bibliographystyle{icml2026}

\newpage
\appendix
\onecolumn
\input{text/appendix}

\end{document}

%% file: math_commands.tex
\usepackage{amsmath,amsfonts,bm}

\def\eqref#1{equation~\ref{#1}}

\def\1{\bm{1}}

\newcommand{\cL}{\mathcal{L}}

\newcommand{\cS}{\mathcal{S}}

\def\va{{\bm{a}}}

\def\vq{{\bm{q}}}

\def\vw{{\bm{w}}}

\def\mA{{\bm{A}}}
\def\mB{{\bm{B}}}
\def\mC{{\bm{C}}}

\def\mI{{\bm{I}}}

\def\mQ{{\bm{Q}}}

\newcommand{\R}{\mathbb{R}}

\newcommand{\pk}{{(k)}}

\DeclareMathAlphabet{\mathsfit}{\encodingdefault}{\sfdefault}{m}{sl}
\SetMathAlphabet{\mathsfit}{bold}{\encodingdefault}{\sfdefault}{bx}{n}

\newcommand{\bB}{\mathbf{B}}
\newcommand{\bC}{\mathbf{C}}

\newcommand{\bI}{\mathbf{I}}

\newcommand{\bW}{\mathbf{W}}

\newcommand{\bw}{\mathbf{w}}

\newcommand{\bone}{\mathbf{1}}
\newcommand{\ba}{\mathbf{a}}

%% file: text/1-Introduction.tex
\section{Introduction}

In this paper, we study how to analyze and steer the behavior of a community of large language models (LLMs) by calculating and designing their Nash equilibria (NE)~\citep{sun2025survey,gemp2024steering}. NE is a game-theoretic concept that characterizes stable states of a system of strategic players: no player has an incentive to unilaterally change their strategy, as doing so would not increase their utility~\citep{nash1951non}.
This stability makes NE a natural object for predicting player behavior.

As it becomes increasingly conceivable that the world will be populated by powerful LLM agents~\citep{dafoe2021cooperative,chuang2024simulating,yang2024oasis,wu2024autogen,park2023generative}, understanding their NE becomes important~\citep{carichon2025coming,yao2024human}. In particular, as we argue, it provides a game-theoretic approach for addressing the emerging challenge of AI alignment in multi-agent, LLM systems. Specifically, in this paper, we study the NE of  LLM agents to answer the following question:

\begin{quote}\itshape
As LLMs exercise agency over an increasing number of daily activities, they will inevitably interact with and influence one another. How can we ensure that the resulting dynamics do not lead to outcomes harmful or undesirable for humans?
\end{quote}

Our goal is to derive and design the NE of an LLM-agent population as both a prediction and governance primitives for system outcomes. However, the problem of finding a NE is \emph{PPAD-complete}~\citep{daskalakis2009complexity}, presenting a major challenge, with NE calculation becoming intractable as LLMs operate in the space of open-ended text.  Prior work~\citep{yi2025debate,zhu2025reasoning,liu2025llm,liu2024large,huang2025competing,zhang2024nash}  enforces tractability by restricting the action space to small, task-specific sets, such as tones, rhetorical styles, or low-dimensional generation controls (e.g., temperature). While these abstractions make equilibrium computation feasible, the resulting equilibria may offer limited leverage for predicting system-level behavior or designing interventions that remain valid beyond the prescribed action set.

To address this tension, we define LLM strategies as mixtures over human subpopulations, with each subpopulation associated with 
preference or opinion data. 
Under this formulation, an LLM agent actively and strategically selects which human subpopulations to align with. The resulting 
space of modeled LLM behaviors
is substantive: it is rich and interpretable, expresses a continuum of behavior, and enables analyses for realistic LLM populations. 

From a theoretical perspective, we derive closed-form expressions for the Nash Equilibrium (NE) of LLM agents. We focus on settings with concave utilities --- a standard assumption in the field --- and utilize concave games, which serve as a generalization of normal-form games
~\cite{gemp2024approximating,papadimitriou2023computational,rosen1965existence}. 
Our results also allow for governance-relevant equilibrium interventions. They provide direct guidance on designing the environment in which LLM agents operate, 
and the interaction among them, to shift the alignment target of LLM agents, steering collective behavior toward desirable system-level behavior. 





This game-theoretic alignment complements established LLM alignment methods such as {\em RLHF}~\citep{ouyang2022training,rafailov2023direct,bai2022constitutional,wang2024comprehensive} and {\em pluralism}~\citep{sorensen2024position,feng2024modular}. It introduces an active notion of alignment: the alignment target emerges as the rational choice, in strategic interaction, of LLM agents themselves, rather than being passively imposed by regressing them toward an aggregate of human preferences. Our approach can be viewed as a layer built atop existing alignment pipelines, and be used to better navigate the sophisticated, open-ended human-AI ecosystem, where system-level outcomes can differ sharply from the behavior of any single model. As we show, this incentive-aware alignment layer can mitigate undesirable outcomes arising from strategic interaction that may persist even when individual models are well-aligned in isolation.


We instantiate our theoretical framework in a concrete scenario: LLMs operating social media accounts, each with self-interested motives to maximize the attention and engagement of human users. 
Recent  work has shown that algorithms can shape creator incentives and induce unexpected equilibrium effects~\citep{hron2023modeling}. Therefore, it is important
to understand, predict, and steer the equilibrium outcomes of LLMs from a computational perspective.


Within this setting, we observe a disturbing phenomenon: at an NE, independent of how LLMs are aligned individually, the LLM population can exhibit \emph{political exclusion}, in which the opinions of certain subpopulations are ignored by every LLM. Reasoning models, such as \emph{Qwen3-4B-Thinking} and \emph{DeepSeek-R1-Distill-Qwen-7B},
can even exacerbate such exclusion. We characterize and visualize the conditions under which exclusion arises. We further show that our method enables a rigorous analysis about how to design the NE to avoid this undesirable outcome, incentivizing LLMs to attend, actively and rationally, to the voices of all subpopulations. Although we use political exclusion in social media as a running example in this paper, our method is general and can be used to regulate the behavior of LLM agents across a wide range of other domains.

\textbf{Related Work}.\ \ \emph{Game theory and equilibria for LLM agents.}
A growing body of work studies LLMs as strategic agents and uses game-theoretic concepts to analyze or shape their interactions~\citep{sun2025survey}. Several lines of work use equilibria to coordinate multi-LLM reasoning~\citep{yi2025debate,zhu2025reasoning,zhang2024nash}. Others instantiate LLMs as players in explicit game environments, examining their strategic behaviors in multi-agent settings \citep{liu2024large,huang2025competing}. \citet{gemp2024steering} use game-theoretic solvers to steer a single LLM's generation by casting decoding controls, such as tone and argument styles, as actions. Different from existing work, our method introduces subpopulation alignment targets as the strategy space, enabling a theoretically grounded toolkit, supported by empirical evidence, that makes equilibrium analysis feasible while remaining behaviorally informative for steering LLM populations in general scenarios.

\emph{Pluralistic alignment and population simulation}
also represents heterogeneous human values or behaviors via mixtures of subpopulations or group-specific models.
Existing approaches typically use this structure to (i) measure which groups a model reflects \citep{santurkar2023whose}, (ii) train or compose systems that better serve diverse user groups \citep{feng2024modular,chen2025pal,sorensen2024position,yang2026policy}, or (iii) simulate populations by mixing persona- or group-conditioned behaviors \citep{bui2025mixture,xie2025distributional}. By contrast, we repurpose such structure as a strategy space for LLM populations, enabling equilibrium characterization and incentive-based governance analysis.

\emph{Algorithmic curation, engagement incentives, and exposure as a societal quantity.}
Prior work demonstrates that engagement-optimized recommendation and ranking can reshape information exposure by determining what content is amplified and which groups are heard~\citep{bakshy2015exposure,cinelli2021echo,milli2025engagement}. These concerns align with a line of fairness research that explicitly models \emph{exposure} as a primary societal metric~\citep{singh2018fairness,ye2025auditing}. Our equilibrium lens formalizes conditions under which certain groups become systematically under-represented and provides a way to reason about incentive designs that mitigate such outcomes.

\section{NE Preliminaries and Computational Challenges in LLM Populations}

In the simplest abstraction, in a normal-form game $G = (N, (A_i)_{i\in N}, (u_i)_{i\in N})$, each agent $i$ selects an action $a_i \in A_i$ and receives utility $u_i(a_1,\dots,a_n)$. A {\em strategy} in a normal-form game for agent $i$ is a distribution $\sigma_i \in \Delta(A_i)$. We write $\sigma = (\sigma_1,\dots,\sigma_n)$ for a strategy profile and $\sigma_{-i}$ for the strategies of all agents other than $i$. A strategy profile $\sigma^\star$ is a {\em Nash equilibrium (NE}) if no agent can increase its expected utility by unilateral deviation:
\begin{align}
\mathbb{E}[u_i(\sigma^\star_i,\sigma^\star_{-i})] \ge \mathbb{E}[u_i(\sigma_i,\sigma^\star_{-i})]\ \ \forall\sigma_i\in \Delta(A_i),\forall i\in N.    \nonumber
\end{align}
This property makes NE a natural choice for behavioral predictions: what outcomes are stable under strategic pressure.

Despite its conceptual attractiveness,
NE computation belongs to the complexity class \emph{PPAD-complete}~\citep{daskalakis2009complexity}, and is widely conjectured not to admit efficient algorithms.
The problem exacerbates when agents are LLMs. In realistic deployments, an LLM agent’s behavior is not well captured by a small discrete action set. The action is better viewed as a policy mapping prompts and histories to distributions over token sequences. This induces an enormous, implicitly represented strategy space. Moreover, utilities typically depend on long-horizon interactive trajectories, e.g., multi-turn persuasion, negotiation, or collaboration, rendering the evaluation of even a single strategy profile expensive.


%% file: text/2-Model.tex
\section{LLM Active Alignment}

The computational challenges associated with NE in LLM populations mean that any proposal to use NE as a prediction and governance primitive in LLM ecosystems has to \emph{make equilibrium computation feasible without sacrificing the capacity for behavior predictions and regulatory intervention}.
We address this by introducing a structured, low-rank strategy space grounded in human subpopulations and by adopting the framework of concave games. We formalize this setting in \Cref{sec:setting} and, under this structure, derive a closed-form NE characterization in \Cref{sec:calculating}.

\subsection{Subpopulation-Based Strategy Space }\label{sec:setting}

Rather than attempting to compute NE over unconstrained textual policy spaces, we work with a compact strategy space in which a strategy puts weights to aligning LLM behavior with preferences or opinions of each subpopulation.

This design is motivated by the fact that many existing preference and opinion datasets are annotated with respondent attribute labels, such as demographic or psychographic information, enabling principled partitioning of human judgments or opinions into interpretable groups. For example, the $\mathtt{OpinionQA}$ dataset~\citep{santurkar2023whose} includes demographic labels for each response, while the $\mathtt{Big\ Five\ Personality\ Traits}$ dataset~\citep{tseng_2025_bigfive_hf} stratifies respondents along personality dimensions such as openness and conscientiousness. More broadly, subpopulations can be defined using age, education, or other attributes. Subpopulations induced by these labels typically exhibit within-group behavioral coherence, and an LLM may aim for its behavior to reflect and preserve this structure.

We operationalize this idea by partitioning a dataset into $D$ disjoint sub-datasets, each representing a distinct labeled subpopulation.
Each \emph{subpopulation dataset} is denoted by $\cS_d = \{( 
    x_d^\pk, 
    y_d^\pk)\}_{k=1}^{K_d},\ d\in [D]$, 
where $x_d^\pk$ is a prompt (e.g., a question), $y_d^\pk$ is the corresponding human response, and $K_d$ is the number of prompt–response pairs associated with subpopulation $d$.
To represent the behavior of a subpopulation $d$, we train a LLM \emph{subpopulation model}, $\nu_d(y| x)$,  parameterizing a conditional distribution over responses given prompts.
The training is performed by minimizing the standard {\em autoregressive negative likelihood} on $\cS_d$.

With this setup, we consider a system of $M$ strategic players, namely the LLM agents. $M$ could be very large.
The \emph{strategy} of player $m\in [M]$ is a weight vector $w_m \in \Delta_D$, where
$\Delta_D := \{ w\in \R^D_{\ge0} : \sum_{d=1}^D w_d = 1\}$ is the probability simplex over the $D$ subpopulation models.
Given $w_m$, player $m$ induces a conditional response distribution 
$\pi_m(\cdot \mid x)$ by mixing the subpopulation conditionals:
\begin{align}
   \pi_m(y|x)=\sum_{d=1}^D \bw_{m,d}\nu_{d}(y| x),\qquad \forall x,y.\label{eq:strategy}
\end{align}


\textbf{Advantages of this setup}.\ \ \cref{eq:strategy} yields an immediate complexity benefit by replacing an implicit, prohibitively large textual policy space 
with a tractable, low-dimensional convex strategy set over weights, while still permitting equilibrium-based reasoning to be performed efficiently on realistic instances.

Meanwhile, this reduction does not compromise the representational and operational capacity of the framework. At an NE, an LLM player's strategy $\pi_m$
represents an alignment target. Once aligned, an NE well defines the players' behavior across the full range of tasks expressible by the underlying LLMs. At the same time, it reframes alignment as an endogenous choice and enables governance interventions to be analyzed in terms of how they reshape incentives over subpopulation mixtures—i.e., how LLM agents are incentivized to \emph{actively choose target subpopulations to align with}. 
This approach thereby represents a paradigm shift toward LLM \emph{active} alignment, functioning as an incentive layer atop existing alignment pipelines in single-LLM settings such as RLHF, RLAIF~\citep{bai2022constitutional,lee2024rlaif,sharma2024critical}, etc.

\subsection{NE Calculation}\label{sec:calculating}

We now discuss how to characterize NE in our setting. NE is determined by the utility function, depending on the specific scenario under consideration. In this paper, we focus on
a social media environment, where each player $m$ runs a social media account. In \Cref{sec:discussion}, we explain how the proposed method extends to a broad class of other settings and applications.

\subsubsection{Utility Functions}

We consider a utility function composed of three objectives that are natural in the social media context: (1) {\em Attractiveness}: players aim to maximize the attention they receive from human users, e.g., likes or reposts; (2) {\em Consistency}: players aim to avoid generating inconsistent content, which requires internal consistency of the mixed strategy each player adopts; and (3) {\em Diversity}: players prefer not to behave too similarly to other players since overlapping strategies dilute marginal attention and influence.


\textbf{Attractiveness}.\ \ The {\em attractiveness utility} is defined as
\begin{align}
u_m^{(A)}(\bw_m)= \va^\top \mathbf{w}_m.\label{eq:u_a}    
\end{align}
Here we define the {\em attractiveness vector} $\va\in\mathbb{R}^D$, where $a_i$ represents the  fractional size of subpopulation $i\in [D]$, i.e., $a_i=N_i/\sum_{j\in[D]} N_j$, where $N_i$ is the size of subpopulation $i$.
%
Intuitively, $u_m^{(A)}$ interprets population share as a proxy for expected attention/reach: in standard diffusion and influence models, the canonical objective is the expected size of population reached (or audience)~\citep{kempe2003maximizing}. Empirically, large-scale studies of online diffusion on Twitter find that larger audiences are associated with larger information cascades,
consistent with audience size scaling with attention~\cite{bakshy2011everyone,goel2016structural}.


\textbf{Consistency}.\ \ It is possible that cross-subpopulation inconsistency 
is inherent in an equilibrium strategy, which is undesirable because a player might generate conflicting responses. To prevent this, we introduce an
{\em inconsistency penalty},
\begin{align}
    u_m^{(I)} (\bw_m)\;=\; -\bw_m^\top \mC \bw_m,\label{eq:u_i}
\end{align}
where {\em inconsistency matrix} $C \in \mathbb{R}^{D \times D}$ consists of entries
\[
\mC_{ij}=
\begin{cases}
\displaystyle \frac{1}{K}\sum_{k=1}^K \Delta_{ij}(y^\pk,x^\pk), & i\neq j,\\[4pt]
\displaystyle \frac{1}{K}\sum_{k=1}^K \sum_{l\neq i} \Delta_{il}(y^\pk,x^\pk), & i=j.
\end{cases}
\]
Here $\Delta_{ij}(y^\pk,x^\pk):= \psi\Big(\big|\nu_i(y^\pk\mid x^\pk)-\nu_j(y^\pk\mid x^\pk)\big|\Big)$, and $\psi:\mathbb{R}_{\ge 0}\to\mathbb{R}_{\ge 0}$ can be any monotone increasing function (e.g., $\psi(r)=r$, $\psi(r)=r^p$ with $p\!\ge\!1$, or $\psi(r)=\log(1+r)$). 

\begin{claim}[$\mC$ is positive semidefinite]
\label{claim:C}
    By construction, $\mC$ is symmetric and diagonally dominant. By the Gershgorin circle theorem~\citep{varga2011gervsgorin}, all eigenvalues of $\mC$ are at least $0$. Therefore $\mC\succeq 0$.
\end{claim}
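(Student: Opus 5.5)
The plan is to verify the two structural properties named in \Cref{claim:C}, symmetry and (weak) diagonal dominance with nonnegative diagonal, directly from the definition of $\mC$. Then invoke the Gershgorin circle theorem to locate every eigenvalue in $[0,\infty)$, which for a real symmetric matrix is equivalent to $\mC\succeq 0$.

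\textbf{Symmetry.} Since $\big|\nu_i(y^\pk\mid x^\pk)-\nu_j(y^\pk\mid x^\pk)\big| = \big|\nu_j(y^\pk\mid x^\pk)-\nu_i(y^\pk\mid x^\pk)\big|$, we have $\Delta_{ij}=\Delta_{ji}$ pointwise. Averaging over $k$ gives $\mC_{ij}=\mC_{ji}$ for all $i\neq j$. Hence $\mC$ is real symmetric, its eigenvalues are real, and positive semidefiniteness is equivalent to all eigenvalues being nonnegative.

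\textbf{Diagonal dominance.} Because $\psi\ge 0$, every off-diagonal entry satisfies $\mC_{ij}\ge 0$, so $|\mC_{ij}|=\mC_{ij}$. Exchanging the finite sums over $k$ and $l$ gives
\[
\mC_{ii}=\frac{1}{K}\sum_{k=1}^K\sum_{l\neq i}\Delta_{il}(y^\pk,x^\pk)=\sum_{l\neq i}\frac{1}{K}\sum_{k=1}^K\Delta_{il}(y^\pk,x^\pk)=\sum_{l\neq i}\mC_{il}=\sum_{l\neq i}|\mC_{il}|.
\]
So each diagonal entry equals the $i$-th Gershgorin radius $R_i$, and in particular $\mC_{ii}\ge 0$. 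Indeed $\mC$ is exactly a weighted graph Laplacian with nonnegative edge weights $\mC_{ij}$.

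\textbf{Gershgorin step.} Every eigenvalue $\lambda$ lies in some disc $\{z:|z-\mC_{ii}|\le R_i\}$. With $\lambda$ real and $R_i=\mC_{ii}$, this gives $\lambda\ge \mC_{ii}-R_i=0$, so all eigenvalues of $\mC$ are at least $0$ and $\mC\succeq 0$. As a cross-check that avoids Gershgorin, I would also note the Laplacian identity $\bw^\top\mC\bw=\tfrac12\sum_{i\neq j}\mC_{ij}(\bw_i+\bw_j)^2\ge 0$. The plus sign appears because the off-diagonal entries enter with a positive sign rather than the usual negative one. Expanding $\tfrac12\sum_{i\neq j}\mC_{ij}(\bw_i^2+\bw_j^2+2\bw_i\bw_j)$ gives $\sum_i \mC_{ii}\bw_i^2+\sum_{i\ne j}\mC_{ij}\bw_i\bw_j$, matching the quadratic form.

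\textbf{Main obstacle.} The only delicate point is the bookkeeping in the dominance step. I need the diagonal to use the same $K$ and the same sample pairs $(x^\pk,y^\pk)$ as the off-diagonal entries, so that $\mC_{ii}$ equals the row sum exactly. The definition writes $K$ without a subpopulation index. I would state explicitly that a common evaluation set of $K$ prompt–response pairs is used for all $i,j$, since otherwise equality, and hence dominance, could fail.
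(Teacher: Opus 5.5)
Your proof is correct and follows the paper's route: symmetry from $\Delta_{ij}=\Delta_{ji}$, then the diagonal equal to the off-diagonal row sum, then Gershgorin. You also make explicit the nonnegative diagonal that Gershgorin needs and the paper leaves implicit, and your cross-check identity is exactly the expansion the paper gives in \Cref{eq:u_i_expansion}. One terminology fix: since the off-diagonal entries are nonnegative, $\mC$ is a \emph{signless} Laplacian (degree matrix plus weighted adjacency), not a graph Laplacian, although both are positive semidefinite.
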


Expanding the quadratic form gives us
\begin{align}
\bw_m^\top& \mC \bw_m
\;=\; \frac{1}{K}\sum\nolimits_{k=1}^K\sum\nolimits_{1\le i<j\le D} \label{eq:u_i_expansion}\\ 
&\psi\big(|\nu_{i}(y^\pk|x^\pk)-\nu_{j}(y^\pk|x^\pk)|\big)\,\big(w_{m,i}+w_{m,j}\big)^2,    \nonumber
\end{align}
According to \Cref{eq:u_i_expansion}, placing substantial weights on two subpopulations inconsistent with each other incurs a large penalty, echoing the intuition that mixing over
incompatible opinions makes it harder to remain internally consistent.

\textbf{Diversity}.\ \ We incorporate diminishing returns from redundant content: when many agents choose similar strategies, incremental attention and influence decrease. We model this competitive pressure by a preference of each agent for strategies with low similarity to those of other agents:
\begin{align}\label{eq:u_d}
    u_m^{(D)}(\mathbf{w}_m, \mathbf{w}_{-m}) = -\!\sum_{j \neq m} \langle \mathbf{w}_m, \mathbf{w}_j \rangle = -\!\sum_{i=1}^D \mathbf{w}_{m,i} \sum_{j \neq m} \mathbf{w}_{j,i} .
\end{align}
Let \(\bW=[\bw_1,\ldots,\bw_M]\in\mathbb{R}^{D\times M}\) be the strategy profile of the LLM population. The pairwise inner products in \cref{eq:u_d} are the off-diagonal entries of $\bW^\top\bW$. Penalizing similarity via \cref{eq:u_d} has similar effect to minimizing
\begin{align}
     \|\bW^\top \bW - \bI\|_F^2 =  \sum_{i=1}^{M} \left( \langle \bw_i, \bw_i \rangle - 1 \right)^2 + \sum_{i \ne j} \langle \bw_i, \bw_j \rangle^2,   \nonumber
\end{align}
which is a standard way to encourage collective diversity. For this reason, we refer to \(u_m^{(D)}\) as the \emph{diversity utility}.


%

With \Cref{eq:u_a}, (\ref{eq:u_i}), and (\ref{eq:u_d}), the utility function for player $m$ is
\begin{align}
    u_m(\bw_m,\bw_{-m}) =&\beta^{(A)} \va^\top\bw_m-\beta^{(I)}\bw_m^\top \bC\bw_m\notag\\
    &-\beta^{(D)}\sum_{j\neq m} \langle\bw_m,\bw_j\rangle,\label{eq:u}
\end{align}
where $\beta^{(A)}, \beta^{(I)}, \beta^{(D)} > 0$ are scaling factors. In our setting, these values are effectively induced by the incentivizing mechanism of a social media platform, such as ranking objectives, exposure allocation rules, or recommendation controls, and are shared across agents.


\subsubsection{Equilibrium Computation}

We now follow the framework of concave games to derive a tractable approach to computing
NE.
%
\begin{restatable}{lemma}{gradientcond}\label{lemma: gradientcondition}
    With utility function $u_m(\bw_m,\bw_{-m})$ defined in \Cref{eq:u}. A strategy $\bw_m$ in the interior of the simplex is a best response to $\bw_{-m}$ if and only if the utility gradient satisfies
    \begin{align}
    \nabla_{\bw_m} u_m=-\beta^{(D)}\sum_{j\neq m} \bw_{j}+\beta^{(A)} \va-2\beta^{(I)}\bC\bw_m = \lambda_m\mathbf{1},\nonumber
    \end{align}
    for some $\lambda_m\in \mathbb R$.
\end{restatable}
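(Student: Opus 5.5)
The plan is to treat player $m$'s best-response problem as a concave maximization over the simplex and to apply the KKT conditions. Fix $\bw_{-m}$ and view $u_m(\cdot,\bw_{-m})$ as a function of $\bw_m$ alone. The attractiveness term and the diversity term are linear in $\bw_m$. The inconsistency term $-\beta^{(I)}\bw_m^\top\bC\bw_m$ is concave, since $\bC\succeq 0$ by \Cref{claim:C} and $\beta^{(I)}>0$. So $u_m$ is concave in $\bw_m$. The feasible set $\Delta_D$ is a polytope, described by the affine constraint $\mathbf{1}^\top\bw_m=1$ and the linear inequalities $\bw_m\ge 0$. Linear constraints need no further qualification, so the KKT conditions are necessary and sufficient for global optimality. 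A best response is therefore exactly a KKT point.

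First I will compute the gradient. Differentiating \Cref{eq:u} gives $\beta^{(A)}\va$ from the linear term. Using symmetry of $\bC$ (again \Cref{claim:C}), the quadratic term contributes $-2\beta^{(I)}\bC\bw_m$. The diversity term contributes $-\beta^{(D)}\sum_{j\neq m}\bw_j$. This yields the stated expression for $\nabla_{\bw_m}u_m$.

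Next I will write the Lagrangian $u_m+\lambda_m(1-\mathbf{1}^\top\bw_m)+\mu^\top\bw_m$ with $\mu\ge 0$. Stationarity reads $\nabla u_m=\lambda_m\mathbf{1}-\mu$, together with complementary slackness $\mu_d w_{m,d}=0$.
\begin{itemize}
\item \emph{Necessity.} An interior $\bw_m$ has every $w_{m,d}>0$, so complementary slackness forces $\mu=0$. Hence $\nabla u_m=\lambda_m\mathbf{1}$.
\item \emph{Sufficiency.} If $\nabla u_m=\lambda_m\mathbf{1}$, take $\mu=0$; then all KKT conditions hold. Alternatively, use the concavity inequality directly. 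For any $\bw'\in\Delta_D$,
\[
u_m(\bw')\le u_m(\bw_m)+\nabla u_m^\top(\bw'-\bw_m)=u_m(\bw_m)+\lambda_m\mathbf{1}^\top(\bw'-\bw_m)=u_m(\bw_m),
\]
because both $\bw'$ and $\bw_m$ sum to one.
\end{itemize}

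I expect no real obstacle. The only points needing care are that concavity rests on $\bC\succeq 0$ and symmetry, both supplied by \Cref{claim:C}, and that the sufficiency direction uses the first-order concavity inequality rather than invoking KKT abstractly. One could also argue necessity elementarily without KKT. If two gradient coordinates differed, shifting a small amount of mass $\epsilon$ from the lower coordinate to the higher one stays in the simplex because the point is interior. It increases $u_m$ to first order, contradicting optimality.
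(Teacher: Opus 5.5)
Your proposal is correct and follows essentially the same route as the paper: concavity of $u_m(\cdot,\bw_{-m})$ from the symmetry and positive semidefiniteness of $\bC$, KKT conditions being necessary and sufficient under the affine simplex constraints, and complementary slackness forcing the multipliers on $\bw_m\ge 0$ to vanish at an interior point. Your explicit sufficiency check via the first-order concavity inequality is a small but useful addition, since the paper covers that direction only through its general remark that KKT is necessary and sufficient.
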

Here, ``interior'' means $w_m$ assigns strictly positive mass to every subpopulation $i\in[D]$. 
The detailed proof can be found in \Cref{appx:proof} and is based on~\citet{gemp2024approximating}. 
In an interior NE, for any $m\in[M]$, $\bw_m$ is a best response for $\bw_{-m}$. Therefore, an interior NE can be obtained by solving the following linear system
\begin{align}
\hspace{-0.6em}
\begin{cases}
\!-\!\beta^{(D)}\!\sum\limits_{j\neq m}\!\bw_{j}\!+\!\beta^{(A)} \va\!-\!2\beta^{(I)}\bC\bw_m
\!=\!\!\lambda_{m}\bone,\forall m;\\
\bone^\top \bw_m = 1, \forall m.
\end{cases}
\label{eq:br_kkt}
\end{align}

Here we impose the normalization constraint that weights sum to one, i.e.\ $\bone^\top \bw_m=1$ for each $m$. 
To derive an analytical solution for an interior equilibrium, we first temporarily relax the interior constraint, $\bw_m> \mathbf{0}$. As we will discuss later, the solution to this relaxed problem constitutes the unique valid interior equilibrium if and only if all resulting weights are positive. Conversely, if the relaxed problem yields no such solution with all weights being strictly positive, then no interior equilibrium exists. Appx.~\ref{appx:boundary_ne} discusses deriving NE that lie on the boundary of the simplex.


For notational convenience, let 
\begin{align}
    \mA=2\beta^{(I)} \bC-\beta^{(D)}\mI,
\end{align}
Then the first group of equations in \Cref{eq:br_kkt} is equivalent to
\begin{align}
    (\mA+\beta^{(D)}\mI)\bw_m+\beta^{(D)}\sum_{j\neq m} \bw_{j}
    =\beta^{(A)}\va-\lambda_{m}\bone,  \label{eq:solve_weight_block}
\end{align}
for $\forall m\in[M]$. 
Here $\va$ is the attractiveness vector, not a vector in $A$.
Taking any two equations from \Cref{eq:solve_weight_block} and subtracting them gives
\[
\mA(\bw_i-\bw_j)=(\lambda_j-\lambda_i)\,\bone.
\]
Since $\mA$ is an affine combination of symmetric matrices, it is symmetric.
If $\mu_1,\dots,\mu_D$ are the eigenvalues of $\mC$, then the eigenvalues of $\mA$ are
$2\beta^{(I)}\mu_j - \beta^{(D)}$ for $j=1,\dots,D$.
Hence $\mA$ is nonsingular (full rank) iff $2\beta^{(I)}\mu_j \neq \beta^{(D)}$ for all $j$.
This excludes only finitely many hyperparameter values (a measure-zero set). Therefore, we may, without loss of generality,
assume that $\mA$ is full rank. This can always be enforced by adding an arbitrarily small
ridge term, $\varepsilon \mI$, with $\varepsilon>0$,
which is equivalent to replacing $\beta^{(D)}$ by $\beta^{(D)}+\varepsilon$.
Using \Cref{eq:solve_weight_block} and solving for $\bw_m$ yields
\[
(\mA+M\beta^{(D)}\mI)\bw_m
\!=\!\beta^{(A)}\va-\lambda_m\bone
+\!\sum_{i=1}^M(\lambda_i-\lambda_m)\,\beta^{(D)}\!\mA^{-1}\bone.
\]
Since $\mC$ is positive semidefinite and $M>1$, the eigenvalues of $\mA+M\beta^{(D)}\mI$: $2\beta^{(I)}\mu_i+(M-1)\beta^{(D)}$ are larger than 0, which means $\mA+M\beta^{(D)}\mI$ is invertible. Therefore,
\begin{align}\label{eq:equilibrium_general}
&\bw_m
=\biggl(2\beta^{(I)}\bC+(M-1)\beta^{(D)}\mI\biggl)^{-1}
\biggl(
\beta^{(A)}\va-\lambda_m \bone \nonumber\\
&+\beta^{(D)}\sum_{i=1}^M (\lambda_i-\lambda_m)\,(2\beta^{(I)}\bC-\beta^{(D)} \mI)^{-1}\bone
\biggl).
\end{align}

We simplify this expression by introducing the normalization constraint, $\bone^\top\bw_m=1$, in \Cref{eq:br_kkt}.
Let
\[
\mB=(\mA+M\beta^{(D)}\mI)^{-1}=\big(2\beta^{(I)}\mC+(M-1)\beta^{(D)}\mI\big)^{-1}.\nonumber
\]
Premultiplying \Cref{eq:equilibrium_general} by $\bone^\top$ and using $\bone^\top\bw_m=1$ gives a linear relation among $\{\lambda_i\}$:
\begin{equation}\label{eq:lambda_linear}
-\underbrace{\big(s_1+\beta^{(D)} M s_2\big)}_{\alpha}\,\lambda_m
\;+\;
\beta^{(D)} s_2\,\sum_{i=1}^M \lambda_i
\;=\;
1 - s_0,
\end{equation}
with $s_0\!=\!\bone^\top \bB\,\beta^{(A)}\va$, $s_1\!=\!\bone^\top \mB\bone$, $s_2\!=\!\bone^\top \mB\mA^{-1}\bone$, and
\[
\alpha=\bone^\top \mB(\mI+M\beta^{(D)}\mA^{-1})\bone=\bone^\top\mA^{-1}\bone.
\]

\begin{restatable}[Only finitely many $\beta^{(D)}/\beta^{(I)}$ make $\alpha=0$]{lemma}{alphaiszero}\label{lemma: alphafinite}
Let $\beta:=\beta^{(D)}/\beta^{(I)}$. Take the eigendecomposition
$\mC=\mQ\,\mathrm{diag}(\mu_1,\dots,\mu_D)\,\mQ^\top$ and $\vq:=\mQ^\top\bone$.
Then
\[
\alpha \;=\; \bone^\top \mA^{-1}\bone
\;=\;\frac{1}{\beta^{(I)}}\,\sum_{j=1}^D \frac{q_j^2}{\,2c_j - \beta\,}
\;=:\;\frac{1}{\beta^{(I)}}\,f(\beta).
\]
The equation $\alpha=0$ (equivalently $f(\beta)=0$) has at most finitely many solutions $\beta$.    
\end{restatable}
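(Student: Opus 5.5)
The plan has two parts. First I derive the closed form of $\alpha$ by diagonalizing $\mA$. Then I show that the rational function $f$ cannot vanish identically, so it has only finitely many zeros.

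\textbf{Closed form.} Since $\mC$ is symmetric and PSD (\Cref{claim:C}), write $\mC=\mQ\,\mathrm{diag}(\mu_1,\dots,\mu_D)\,\mQ^\top$ with $\mQ$ orthogonal. Then $\mA=2\beta^{(I)}\mC-\beta^{(D)}\mI=\mQ\,\mathrm{diag}(2\beta^{(I)}\mu_j-\beta^{(D)})\,\mQ^\top$. Whenever $\mA$ is invertible, $\mA^{-1}=\mQ\,\mathrm{diag}\big((2\beta^{(I)}\mu_j-\beta^{(D)})^{-1}\big)\mQ^\top$. Setting $\vq=\mQ^\top\bone$ gives
\[
\bone^\top\mA^{-1}\bone=\sum_j \frac{q_j^2}{2\beta^{(I)}\mu_j-\beta^{(D)}}=\frac{1}{\beta^{(I)}}\sum_j\frac{q_j^2}{2\mu_j-\beta},
\]
after factoring $\beta^{(I)}>0$ out of each denominator. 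Here the $c_j$ in the statement are the eigenvalues $\mu_j$. In passing I will also confirm the identity $\alpha=\bone^\top\mB(\mI+M\beta^{(D)}\mA^{-1})\bone=\bone^\top\mA^{-1}\bone$. It follows from $\mB^{-1}=\mA+M\beta^{(D)}\mI$: we have $\mB(\mI+M\beta^{(D)}\mA^{-1})=\mB(\mA+M\beta^{(D)}\mI)\mA^{-1}=\mA^{-1}$.

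\textbf{Finiteness of zeros.} Group equal eigenvalues. Let $\gamma_1<\dots<\gamma_r$ be the distinct values among $2\mu_j$, and let $Q_k=\sum_{j:2\mu_j=\gamma_k} q_j^2\ge 0$. Then $f(\beta)=\sum_k Q_k/(\gamma_k-\beta)$ on the domain $\beta\notin\{\gamma_k\}$. Since $\sum_j q_j^2=\|\vq\|^2=\|\bone\|^2=D>0$, at least one $Q_k$ is positive. Multiplying by $P(\beta)=\prod_k(\gamma_k-\beta)$ gives
\[
f(\beta)P(\beta)=N(\beta):=\sum_k Q_k\prod_{l\neq k}(\gamma_l-\beta),
\]
a polynomial of degree at most $r-1$. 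On the domain, $f(\beta)=0$ if and only if $N(\beta)=0$.

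\textbf{Main obstacle: $N\not\equiv 0$.} The key step is showing that $N$ is not the zero polynomial, because a nonzero polynomial has at most $r-1$ roots and that finishes the proof. The direct route is to pick $k$ with $Q_k>0$ and evaluate at $\beta=\gamma_k$. Every term except the $k$-th contains the factor $(\gamma_k-\beta)$ and so vanishes, leaving $N(\gamma_k)=Q_k\prod_{l\neq k}(\gamma_l-\gamma_k)$. This is nonzero because the $\gamma_l$ are distinct.

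A second route, useful as a check, is that the leading coefficient of $N$ is $(-1)^{r-1}\sum_k Q_k=(-1)^{r-1}D\neq 0$, so $N$ has degree exactly $r-1$. Either route shows $f$ has at most $r-1\le D-1$ zeros, and hence $\alpha=0$ for only finitely many ratios $\beta$.
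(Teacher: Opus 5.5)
Your proof is correct, and it reaches finiteness by a different mechanism than the paper. The paper works with $f$ directly. On each open interval between consecutive poles $\{2c_j : q_j\neq 0\}$ it notes $f'(\beta)=\sum_j q_j^2/(2c_j-\beta)^2>0$, so $f$ is strictly increasing there and has at most one zero per interval, giving at most $K'+1$ zeros. You instead clear denominators and reduce everything to showing that the numerator polynomial $N$ is not identically zero. You settle this by evaluating at an active pole or via the leading coefficient. The evaluation only works because you first grouped repeated eigenvalues; without grouping, the ungrouped numerator vanishes at every repeated pole.

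The monotonicity route exploits the nonnegativity of the weights $q_j^2$. Combined with $f>0$ to the left of all poles and $f<0$ to the right, it gives a sharper interlacing picture: exactly one root strictly between consecutive active poles and none outside. This also brackets the singular ratios for bisection. Your algebraic route avoids calculus and gives the explicit bound $r-1\le D-1$. In its evaluation form it uses only that some grouped weight $Q_k$ is nonzero, not its sign, so it transfers unchanged to bilinear forms $u^\top\mA^{-1}v$ whose spectral weights can have mixed signs.

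You also supply two steps the paper leaves implicit. One is the diagonalization behind the closed form, with the identity $\bone^\top\mB(\mI+M\beta^{(D)}\mA^{-1})\bone=\bone^\top\mA^{-1}\bone$. The other is $\|\vq\|^2=D>0$, which the paper's strict inequality $f'>0$ tacitly requires.
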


%
The proof can be found in Appx.~\ref{appx:proof}. In the \emph{generic} case $\alpha\neq 0$, \Cref{eq:lambda_linear} forces all multipliers to be equal:
\[
\lambda_1=\cdots=\lambda_M=\lambda^\star,
\qquad
\lambda^\star=\frac{\beta^{(A)}\bone^\top \bB\,\va-1}{\bone^\top \bB\,\bone}.
\]

$\mB$ is positive definite because $\mC$ is positive semidefinite and $\beta^{(D)},\beta^{(I)}>0, M>1$. Therefore, the denominator is guaranteed to be larger than $0$.

Plugging this back into \Cref{eq:equilibrium_general} cancels the summation term (since $\sum_i(\lambda_i -\lambda^\star)=0$), yielding the interior equilibrium characterization stated in the following theorem.

\begin{theorem}
    The interior Nash equilibrium is unique and homogeneous, characterized by \begin{equation}\label{eq:equilibrium_hom}
        \bw_m=\bw^* =\big(2\beta^{(I)}\mC+(M-1)\beta^{(D)}\mI\big)^{-1}\big(\beta^{(A)}\va-\lambda^\star \bone\big)
    \end{equation}
    for all LLM players, $m\!\in\![M]$. $\va$ is the attractiveness vector and $\mC$ is the inconsistency matrix defined by \Cref{eq:u_a,eq:u_i}.
\end{theorem}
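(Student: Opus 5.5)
Let $\bW=[\bw_1,\ldots,\bw_M]$ be an interior Nash equilibrium. By \Cref{lemma: gradientcondition}, every $\bw_m$ is then a best response, so the profile solves the linear system \Cref{eq:br_kkt} with multipliers $\lambda_1,\dots,\lambda_M$. The proof goes in three steps. First, show that all players play the same strategy. Second, pin down the common multiplier $\lambda^\star$. Third, conclude that \Cref{eq:equilibrium_hom} is the only candidate, and that it is an equilibrium exactly when it is interior.

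\textbf{Homogeneity.} Subtract the stationarity equations of players $i$ and $j$ to get $\mA(\bw_i-\bw_j)=(\lambda_j-\lambda_i)\bone$. Since $\mA$ is nonsingular (the generic assumption already made), this gives $\bw_i-\bw_j=(\lambda_j-\lambda_i)\mA^{-1}\bone$. Apply $\bone^\top$ and use the normalization $\bone^\top\bw_i=\bone^\top\bw_j=1$:
\[
0=(\lambda_j-\lambda_i)\,\bone^\top\mA^{-1}\bone=(\lambda_j-\lambda_i)\,\alpha .
\]
In the generic case $\alpha\neq0$, guaranteed up to finitely many ratios by \Cref{lemma: alphafinite}, this forces $\lambda_i=\lambda_j$ and hence $\bw_i=\bw_j$. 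The same conclusion also follows from \Cref{eq:lambda_linear}, but this direct route is cleaner.

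\textbf{The common multiplier.} Write $\bw_m=\bw^*$ for all $m$. The stationarity equation becomes
\[
(2\beta^{(I)}\mC+(M-1)\beta^{(D)}\mI)\bw^*=\beta^{(A)}\va-\lambda^\star\bone .
\]
The matrix on the left is positive definite, because $\mC\succeq0$ by \Claim{claim:C} and $M>1$. So it is invertible with inverse $\mB$, and $\bw^*=\mB(\beta^{(A)}\va-\lambda^\star\bone)$. Imposing $\bone^\top\bw^*=1$ gives a scalar linear equation in $\lambda^\star$. Its coefficient $\bone^\top\mB\bone$ is strictly positive, so the equation has exactly one solution, namely the stated $\lambda^\star$. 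This proves that any interior NE must equal \Cref{eq:equilibrium_hom}, which gives uniqueness.

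\textbf{Existence and the main obstacle.} The delicate step is the converse: checking that the candidate really is a Nash equilibrium whenever its entries are positive. The stationarity condition is only sufficient if each player's problem is concave. For fixed $\bw_{-m}$, $u_m$ is the linear term $\beta^{(A)}\va^\top\bw_m-\beta^{(D)}\langle\bw_m,\sum_{j\ne m}\bw_j\rangle$ minus $\beta^{(I)}\bw_m^\top\mC\bw_m$. Since $\mC\succeq0$, $u_m$ is concave in $\bw_m$. Therefore \Cref{lemma: gradientcondition} applies in both directions: if $\bw^*>\mathbf 0$, then $\bw^*$ satisfies the KKT conditions on the simplex and is a best response for every player. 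If instead some entry of $\bw^*$ is nonpositive, no interior NE exists.

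The genericity conditions ($\mA$ nonsingular and $\alpha\ne0$) should be stated explicitly as standing assumptions. They are what make the cross-player subtraction argument, and hence homogeneity, go through.
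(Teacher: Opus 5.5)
Your proof is correct and follows the paper's argument. Both rest on the pairwise identity $\mA(\bw_i-\bw_j)=(\lambda_j-\lambda_i)\bone$ and the genericity conditions that $\mA$ is invertible and $\alpha=\bone^\top\mA^{-1}\bone\neq 0$; normalization then fixes the common multiplier. The one difference is that you apply $\bone^\top$ directly to the pairwise difference, which is shorter than the paper's detour through the general formula \Cref{eq:equilibrium_general} and the multiplier relation \Cref{eq:lambda_linear}, and your concavity-based converse matches the paper's if-and-only-if gradient lemma and the remark after the theorem.
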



\begin{remark}
 \cref{eq:equilibrium_hom} is derived without explicitly enforcing the interior constraint. Therefore, the interior equilibrium exists and is characterized by \cref{eq:equilibrium_hom} when all dimensions of $\bw^*$ are strictly positive. 
If any component of $\bw^*$ is non-positive, an interior equilibrium does not exist. In this case, the game still has NE that lie on the boundary of the weight simplex (i.e., some subpopulations must have zero weight).  We discuss how to calculate boundary NE in Appx.~\ref{appx:boundary_ne}.

As we have discussed, for \cref{eq:equilibrium_hom} to hold,
we also require the matrix $\mA$ to be invertible (i.e., $\beta^{(D)}/\beta^{(I)} \ne 2\mu_i$) and the normalization factor to be non-zero (i.e., $f(\beta) \ne 0$), see \Cref{lemma: alphafinite}. 
%
Crucially, these singularities occur only at a finite number of discrete hyperparameter ratios. Therefore, the analytical expression (\cref{eq:equilibrium_hom}) remains mathematically well-defined almost everywhere in the hyperparameter space, allowing us to neglect these  singularities in our  analysis.

\end{remark}


\subsection{Discussion}\label{sec:discussion}

\Cref{eq:equilibrium_hom} explicitly maps what shapes incentives—e.g., $\beta^{(A)}$, $\beta^{(I)}$, and $\beta^{(D)}$—to NE as behavioral predictions for the LLM population.
With this map, a social media platform can tune the incentivizing mechanism to shape $\beta^{(A)}$, $\beta^{(I)}$, and $\beta^{(D)}$ and move the equilibrium toward desired system outcomes such as improving social welfare. 
This supports the goal of using Nash equilibria as a predictive theory and actionable regulatory mechanism for LLMs, with examples shown in \Cref{sec:exp} .


\textbf{Applicability of the proposed method}.\ \ 
We instantiate the framework in a social-media setting, with utilities that are natural for that domain. However, the NE computation method defined by \cref{eq:br_kkt} is application-agnostic: it applies whenever each player's utility is concave in its own strategy, holding others fixed. This assumption is standard and covers a broad class of games~\citep{rosen1965existence}.

%% file: text/4-Experiment.tex
\begin{figure*}[t]
    \centering
    \includegraphics[width=0.9\linewidth]{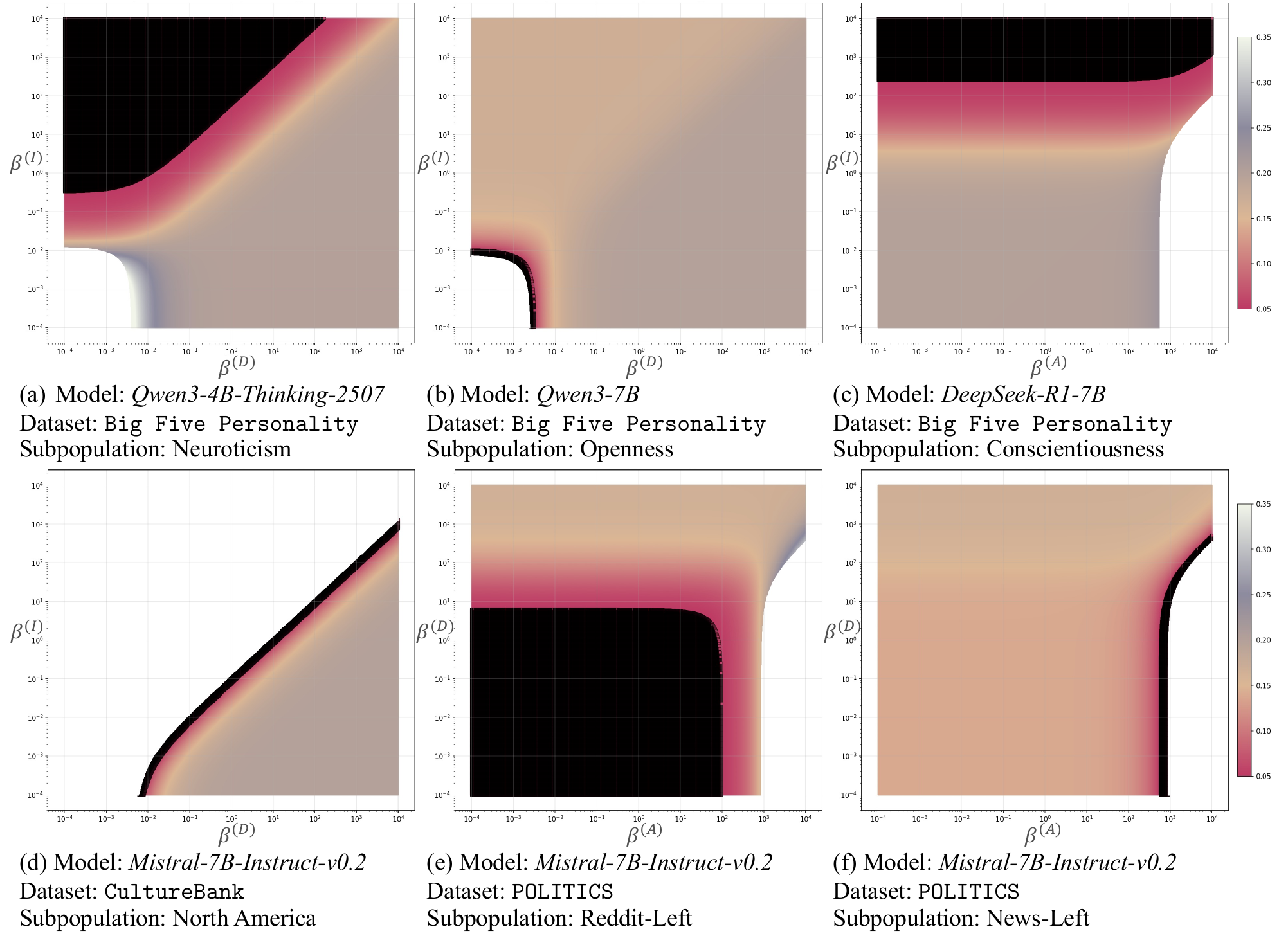}
    \caption{\textbf{Representative examples of political exclusion across models and datasets} (complete results in Appx.~\ref{appx:exp}). Each panel fixes the base model, dataset, and subpopulation, and visualizes the subpopulation’s interior Nash-equilibrium weight under varying preference coefficients. Specifically, one coefficient among $\beta^{(A)}$, $\beta^{(I)}$, and $\beta^{(D)}$ is fixed to $1$, while the remaining two vary along the x- and y-axes (as labeled, log-scaled). Color indicates the resulting equilibrium weight assigned to the focal subpopulation. Regions where the weight falls below $0.05$ are marked in black and referred to as the \emph{political exclusion area}. White regions indicate parameter values for which no interior equilibrium exists. We focus on interior equilibria because boundary equilibria necessarily set at least one subpopulation weight to zero
    and our goal is to understand how to avoid political exclusion.}
    \label{fig:exclusion}
\end{figure*}

\section{Experimental Results}\label{sec:exp}

We design experiments to test our theoretical results in realistic settings and answer the following questions: (1) When do equilibria assign vanishing weights to some subpopulations? How does this exclusion depend on whether attention-seeking ($\beta^{(A)}$), conflict-avoidance ($\beta^{(I)}$), or other objectives ($\beta^{(D)}$) are incentivized? (2) Are exclusion patterns fragile corner cases, or do they form stable, structured regions of the incentive space? (3) Do different LLM models systematically expand or shrink the parameter regimes in which exclusion arises? 

To answer these questions, we sweep $(\beta^{(A)},\beta^{(I)},\beta^{(D)})$ and use \Cref{eq:equilibrium_hom} to map how equilibrium weights vary. We interpret $(\beta^{(A)},\beta^{(I)},\beta^{(D)})$ as \emph{incentive configurations}: a social media platform can design incentivization mechanisms that effectively shape these coefficients.



Our experiments focus primarily on interior equilibria because our ultimate goal is to understand when political exclusion can be avoided. Boundary equilibria mechanically force at least one subpopulation weight to be zero, making exclusion inevitable by definition. 

\subsection{Setup}
We use three datasets with explicit subpopulation labels: $\mathtt{POLITICS}$~\citep{liu2022politics}, $\mathtt{CultureBank}$~\citep{shi2024culturebank} and $\mathtt{Big\ Five\ Personality\ Traits}$ ($\mathtt{Big\ Five}$ for short)~\citep{tseng_2025_bigfive_hf}. They
are widely studied in recent literature~\citep{feng2023pretraining,shum2025big,feng2024modular,myung2024blend,soldaini2024dolma,ye2024openfedllm,hilliard2024eliciting}. The $\mathtt{POLITICS}$ dataset comprises six distinct subpopulations spanning left-, center-, and right-leaning viewpoints, with data drawn from both news outlets and social media. The $\mathtt{CultureBank}$ dataset includes five regional subpopulations. In the $\mathtt{Big\ Five}$ dataset, the five subpopulations correspond to distinct personality traits.

First, to reduce confounds arising from subpopulation-model training and isolate the effects we study, on datasets $\mathtt{POLITICS}$ and $\mathtt{CultureBank}$, we use trained subpopulation models provided by~\citet{feng2024modular}, where \emph{Mistral-7B-Instruct-v0.2} is used as the base model and we hold them fixed as $\nu$. 
%
Furthermore, to ensure the robustness of our observations and analyses,  on the dataset $\mathtt{Big\ Five}$, we conduct experiments across different base models, including \emph{Qwen3-0.6B}, \emph{Qwen3-1.7B}, \emph{Qwen3-4B}, \emph{Qwen3-7B}, \emph{Qwen3-14B}, \emph{DeepSeek-R1-Distill-Qwen-7B}, \emph{Mistral-7B-Instruct-v0.2}, and \emph{Qwen3-4B-Thinking-2507}. 
Among them, \emph{DeepSeek-R1-Distill-Qwen-7B} and \emph{Qwen3-4B-Thinking-2507} are reasoning-based models. We train subpopulation models using standard autoregressive next-token prediction.
We use LoRA with a rank of 16, controlling the number of trainable parameters, and alpha of 16, scaling the influence of the adapted weights. We train the model for 5 epochs using the Adam optimizer with a learning rate of $2e{-5}$, a batch size of 64, and a dropout rate of 0.05.

We visualize the equilibrium weight assigned to each subpopulation defined by \cref{eq:equilibrium_hom}, under varying coefficient configurations $(\beta^{(A)}, \beta^{(I)}, \beta^{(D)})$. Each corresponding to a possible utility function for the LLM population. 

\begin{figure}[t]
    \centering
    \includegraphics[width=\linewidth]{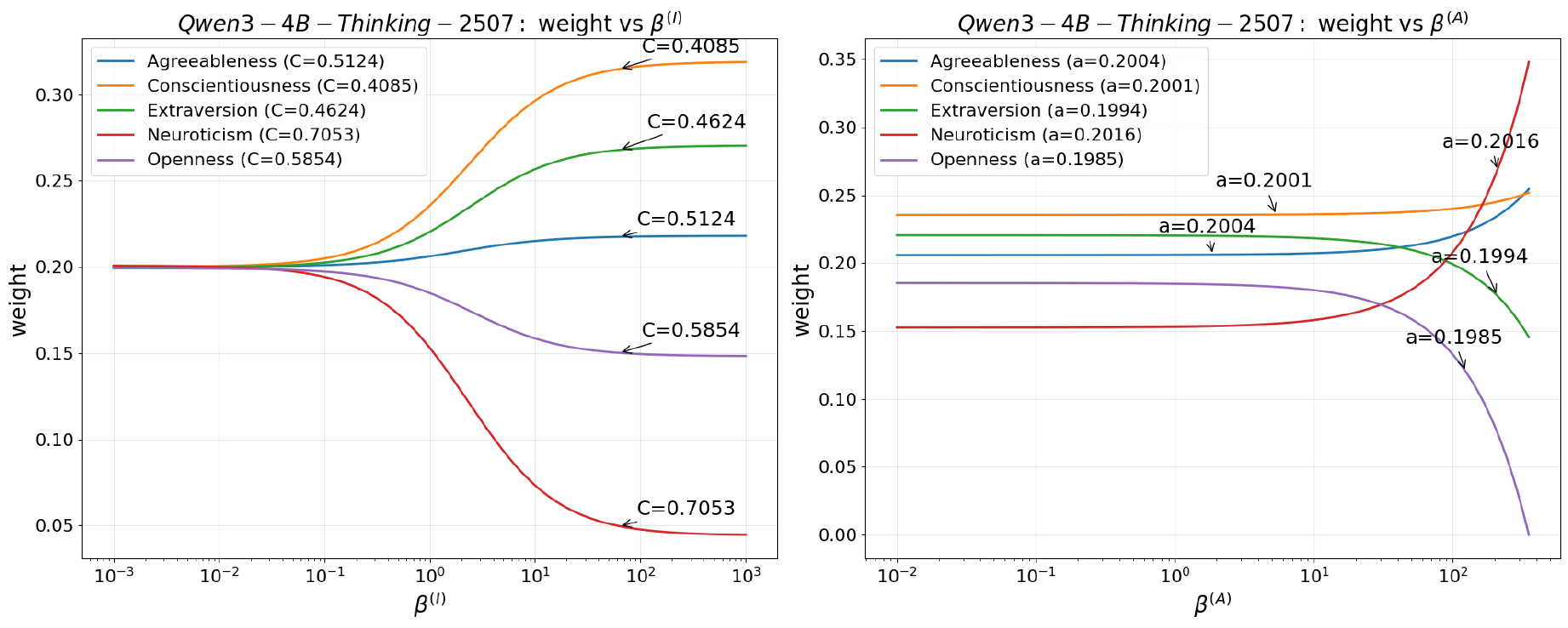}
    \caption{\textbf{Middle-of-the-road survives.} One-dimensional slices of the equilibrium weights for \emph{Qwen3-4B-Thinking-2507} on the dataset $\mathtt{Big\ Five}$. Left: increasing $\beta^{(I)}$ (with $\beta^{(A)}$=$\beta^{(D)}$=$1$) suppresses the most inconsistent subpopulation (largest $C$, the diagonal entries in $\mC$). Right: increasing $\beta^{(A)}$ (with $\beta^{(I)}$=$\beta^{(D)}$=$1$) concentrates weight on more prevalent traits (larger $a$, the entries in $\va$) and can drive the least prevalent trait toward zero.}
    \label{fig:aC}
\end{figure}
\subsection{Observations and Analyses}

\textbf{\emph{Observation 1: Systematic political exclusion emerges as an equilibrium outcome}.} A consistent pattern across models and datasets is that political exclusion is not an artifact or a rare corner case. Across a wide range of incentive configurations, exclusion appears with a highly structured geometry: it typically forms large contiguous regions or stable bands.

\cref{fig:exclusion} gives examples, and complete results are shown in Appx.~\ref{appx:exp}. On $\mathtt{Big\ Five}$, exclusion emerges for different traits under different models. \emph{Qwen3-4B-Thinking-2507} exhibits substantial exclusion of \emph{Neuroticism} over broad regions, \emph{DeepSeek-R1-Distill-Qwen-7B} instead produces strong exclusion of \emph{Conscientiousness}, and \emph{Qwen3-7B} concentrates exclusion on Openness. On $\mathtt{CultureBank}$, \emph{Mistral-7B-Instruct-v0.2} shows pronounced exclusion concentrated on \emph{North America}, with smaller boundary-adjacent exclusion for \emph{Europe}, while \emph{Africa}/\emph{Asia}/\emph{South America} remain robustly represented. Overall, political exclusion is a coherent consequence of strategic optimization under heterogeneous objectives, rather than an implementation quirk or a fragile edge case.

\textbf{\emph{Observation 2: The tyranny of the middle---robust representation favors the “mediocre”.}}
Across models and datasets, we observe that subpopulations that survive most reliably in the interior equilibrium are those that are \emph{middle-of-the-road}.

Exclusion appears in two repeatable ways: (i) large exclusion regions that expand at high $\beta^{(I)}$; and (ii) thin, boundary-hugging bands that typically emerge when $\beta^{(A)}$ is large. The high-$\beta^{(I)}$ pattern reflects a pressure that operates well inside the feasible set. In \cref{fig:aC}(left), increasing $\beta^{(I)}$ (holding $\beta^{(A)}$=$\beta^{(D)}$=$1$) suppresses the most inconsistent trait with the largest $C$ value, \emph{Neuroticism}, toward exclusion. Notably, this can override attractiveness: although \emph{Neuroticism} is the most prevalent subpopulation (largest $a$ in \cref{fig:aC}(right)), it is driven toward near-zero weight because of its inter-subpopulation inconsistency, producing exclusion across a broad high-$\beta^{(I)}$ region (\cref{fig:exclusion}(a)). In other words, coherence can beat popularity: even a majority group can be excluded when it is sufficiently inconsistent with other subpopulations. By contrast, \emph{Agreeableness} (\cref{fig:Openness}) for the same dataset and base model never enters the exclusion area across $\beta^{(A)}$ and $\beta^{(I)}$ sweeps. Consistent with this, stronger attention rewards and stricter inconsistency penalties tend to increase its equilibrium weight (\cref{fig:aC}).


Together, these patterns explain the tyranny of the middle: the most robustly represented groups are those that are \emph{simultaneously} moderately attractive and moderately consistent. Extremes vanish first, and groups that are unpopular or highly discordant are the earliest to be excluded.

\textbf{\emph{Observation 3: Reasoning models expand the exclusion regime.}}
On $\mathtt{Big\ Five}$, reasoning models exhibit a substantially larger exclusion area than non-reasoning models of a comparable size. 
\cref{tab:thinking_invalid_ignore} shows that \emph{Qwen3-4B-Thinking-2507} increases the exclusion area fraction from $0.51\%$ (\emph{Qwen3-4B}) to $4.54\%$, 
and \emph{DeepSeek-R1-Distill-Qwen-7B} increases it from $0.21\%$ (\emph{Qwen3-7B} and \emph{Mistral-7B-Instruct-v0.2}) to $3.72\%$. 
A sharper way to isolate exclusion is to condition on the existence of an interior equilibrium. We define the \emph{conditional exclusion rate} as
\emph{exclusion}/(1-\emph{invalid}), where \emph{invalid} is the area where no interior equilibrium exists and \emph{exclusion} is the exclusion area.
As shown in \cref{tab:thinking_invalid_ignore}, we have $5.04\%$ for \emph{Qwen3-4B-Thinking-2507} versus $1.13\%$ for \emph{Qwen3-4B}, and $4.07\%$ for \emph{DeepSeek-R1-Distill-Qwen-7B} versus $0.22$--$0.23\%$ for the two 7B non-reasoning counterparts. This confirms that 
even when an interior equilibrium is feasible, reasoning models are more likely to place some subpopulation in the exclusion area. 



\subsection{Governance Implications}\label{sec:governance_exp}
\begin{figure}
    \centering
    \includegraphics[width=\linewidth]{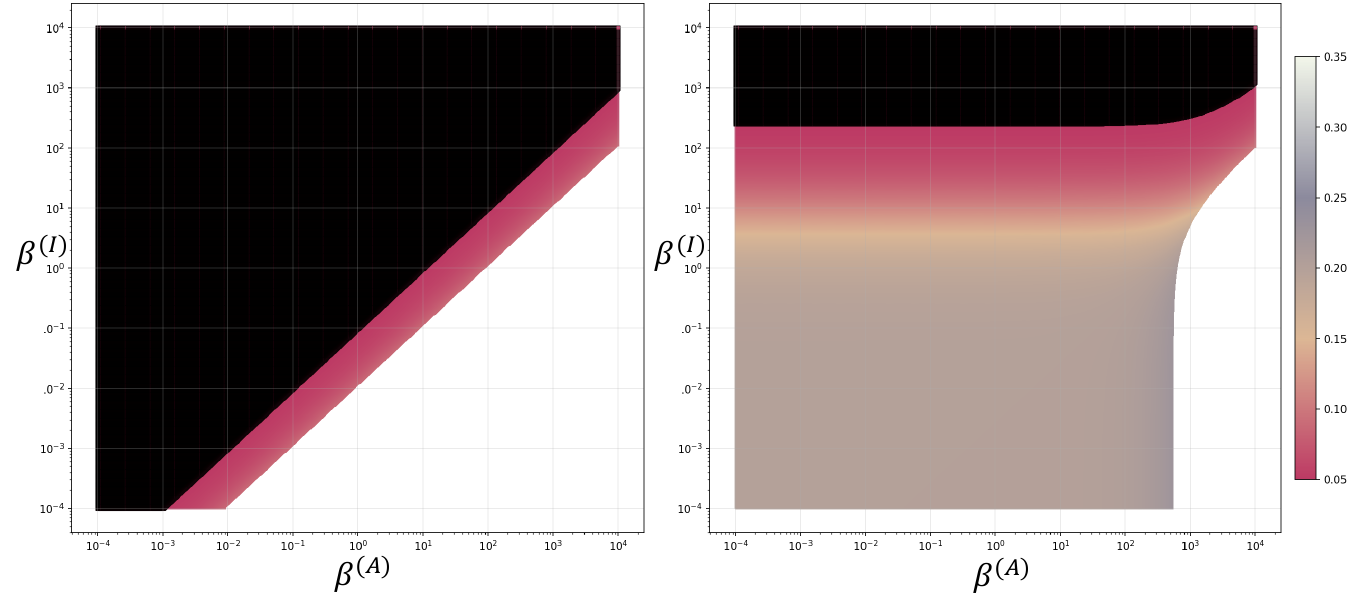}
    \caption{\textbf{Governance example.} Heatmaps show the interior equilibrium weight assigned to \emph{Conscientiousness} for \emph{DeepSeek-R1-Distill-Qwen-7B} on $\mathtt{Big\ Five}$ as a function of \((\beta^{(A)},\beta^{(I)})\), comparing \(\beta^{(D)}=0\) (left) and \(\beta^{(D)}=1\) (right). Increasing diversity incentives largely mitigates political exclusion.}
    \label{fig:gov_diversity}
\end{figure}

A central governance lesson from our experiments is that political exclusion is not an inherent property of a particular base model; it is \emph{incentive-contingent}. In particular, incentive configurations shaped by the system objective can expand or shrink the exclusion area, turning representation on or off for specific subpopulations.

\cref{fig:gov_diversity} provides a representative example. Holding the dataset $\mathtt{Big\ Five}$, the base model \emph{DeepSeek-R1-Distill-Qwen-7B}, and the focal subpopulation \emph{Conscientiousness} fixed, increasing the diversity coefficient from $\beta^{(D)}=0$ (left) to $\beta^{(D)}=1$ (right) significantly reduces the exclusion area. Operationally, this implies that diversity incentives can serve as a mitigation lever for system designers.

More broadly, our method provides a governance map connecting design choices to exclusion risks. This map supports principled pre-deployment testing and post-deployment monitoring for social media platforms: given a proposed incentivizing mechanism—such as ranking objectives, exposure allocation rules, or recommendation controls—that induces effective coefficients $\beta^{(A)}$, $\beta^{(I)}$, and $\beta^{(D)}$, the map identifies whether any subpopulation is predicted to be excluded. Platforms can also tune the mechanism—equivalently, adjust $\beta^{(A)}$, $\beta^{(I)}$, $\beta^{(D)}$—to ensure that no subpopulation is driven below the exclusion threshold.




\begin{table} [t]
    \caption{\textbf{Reasoning models exacerbates political exclusion.} For each base model, we aggregate exclusion area shares across $5$ traits and $3$ fixed-coefficient slices. \emph{Exclusion} reports the fraction of the coefficient grid where the interior equilibrium weight is driven below the exclusion threshold. \emph{Conditional Exclusion} (\emph{C-Excl.}) normalizes \emph{Exclusion} by the region where an interior equilibrium exists. Boldface highlights the largest exclusion rates within each parameter-count group.}
    \centering
    \hspace*{-3mm}
    \scalebox{0.95}{
    \setlength\tabcolsep{1.5mm}{ \begin{tabular}{crcrcrcrcr}
        \toprule
        
        \multicolumn{2}{l}{\multirow{2}{*}{Size}} &
        \multicolumn{2}{l}{\multirow{2}{*}{Model}}&
        \multicolumn{2}{l}{\multirow{2}{*}{Type}}&
        \multicolumn{4}{c}{Metrics}\\

        \cmidrule(lr){7-10}

        \multicolumn{2}{l}{} &
        \multicolumn{4}{l}{} & 
        \multicolumn{2}{c}{\emph{Exclusion}} & 
        \multicolumn{2}{c}{\emph{C-Excl.}} \\

        \cmidrule(lr){1-2}
        \cmidrule(lr){3-4}
        \cmidrule(lr){5-6}
        \cmidrule(lr){7-8}
        \cmidrule(lr){9-10}

        \multicolumn{2}{l}{\multirow{2}{*}{$\mathtt{4B}$}} &
        \multicolumn{2}{l}{\emph{Qwen3}} & 
        \multicolumn{2}{l}{Non-Reasoning} & 
        \multicolumn{2}{c}{0.510\%} & 
        \multicolumn{2}{c}{1.128\%} \\

         \multicolumn{2}{l}{}&
         \multicolumn{2}{l}{\emph{Qwen3}} & 
        \multicolumn{2}{l}{\textbf{Reasoning}} & 
         \multicolumn{2}{c}{\textbf{4.535\%}} & 
         \multicolumn{2}{c}{\textbf{5.040\%}}\\

         \midrule
         \multicolumn{2}{l}{\multirow{3}{*}{$\mathtt{7B}$}} &
         \multicolumn{2}{l}{Qwen3} & 
        \multicolumn{2}{l}{Non-Reasoning} & 
         \multicolumn{2}{c}{0.210\%} & 
         \multicolumn{2}{c}{0.231\%} \\

         \multicolumn{2}{l}{}&
         \multicolumn{2}{l}{Mistral} & 
        \multicolumn{2}{l}{Non-Reasoning} & 
         \multicolumn{2}{c}{0.208\%} & 
         \multicolumn{2}{c}{0.223\%}  \\

        \multicolumn{2}{l}{}&
         \multicolumn{2}{l}{DeepSeek R1} & 
        \multicolumn{2}{l}{\textbf{Reasoning}} & 
         \multicolumn{2}{c}{\textbf{3.721\%}} & 
         \multicolumn{2}{c}{\textbf{4.068\%}}  \\

        \toprule
        \vspace{-1em}
        \end{tabular}}}

    \label{tab:thinking_invalid_ignore}
\end{table}

%% file: text/5-Conclusion.tex
\section{Conclusion}

This work is a step toward treating LLM deployments as governed ecosystems rather than isolated models: once incentives are explicit, outcomes become something we can diagnose, anticipate, and—at least in principle—design against. 
We provide a theoretical toolkit to support this incentive-aware, equilibrium-based perspective on rigorous diagnosis and governance.
Our method also clarifies a conceptual shift: alignment needs not be treated as passive, exogenous constraints imposed on LLMs. Instead, in multi-agent settings where LLMs compete, cooperate, or bargain, the choice of who to align to becomes an active, endogenous strategic variable. 
A natural next direction is to move beyond static, one-shot equilibria to dynamic settings where incentives and populations co-evolve (platform feedback loops, shifting user composition, and model updates), and ask when desirable equilibria remain stable over time. 

%% file: text/appendix.tex
\section{Math Proofs}\label{appx:proof}


\gradientcond*
\begin{proof}
    This proof is based on Lemma $1$ in \citet{gemp2024approximating}.
Fix $\bw_{-m}$. Consider
\[
\max_{\bw_m\in\Delta_D}\ u_m(\bw_m,\bw_{-m}),
\]
with constraints $-\bw_m\le \mathbf 0$ and $\bone^\top \bw_m-1=0$. 

We first show that $u_m(\cdot,\bw_{-m})$ is concave in $\bw_m$.
The terms $\beta^{(A)}\ba^\top \bw_m$ and $-\beta^{(D)}\sum_{j\neq m}\bw_m^\top \bw_j$ are linear in $\bw_m$.
Moreover,
\[
\nabla^2_{\bw_m}\!\left(-\beta^{(I)}\bw_m^\top \bC\,\bw_m\right)
=-\beta^{(I)}(\bC+\bC^\top).
\]
Assuming $\beta^{(I)}\ge 0$, $\bC\succeq 0$ and $\bC=\bC^\top$ (\Cref{claim:C}), we have
$\nabla^2_{\bw_m}u_m(\bw_m,\bw_{-m})=-2\beta^{(I)}\bC\preceq 0$.
Hence $u_m(\cdot,\bw_{-m})$ is concave in $\bw_m$.

The Lagrangian is
\[
\cL(\bw_m,\lambda,\boldsymbol{\mu})
=u_m(\bw_m,\bw_{-m})+\lambda(\bone^\top \bw_m-1)+\boldsymbol{\mu}^\top(-\bw_m),
\qquad \boldsymbol{\mu}\ge \mathbf 0.
\]

Note that the objective is concave and the constraints are affine, therefore the KKT conditions are necessary and sufficient for optimality.

By KKT conditions, any optimal $\bw_m$ satisfies
\[
\nabla_{\bw_m}u_m(\bw_m,\bw_{-m})+\lambda\bone-\boldsymbol{\mu}=\mathbf 0,\qquad
\boldsymbol{\mu}\ge \mathbf 0,\qquad
\boldsymbol{\mu}\odot \bw_m=\mathbf 0,\qquad
\bw_m\in\Delta_D.
\]
If $\bw_m\in\mathrm{int}(\Delta_D)$, then $\bw_m>\mathbf 0$, hence complementary slackness gives $\boldsymbol{\mu}=\mathbf 0$.
Therefore $\nabla_{\bw_m}u_m(\bw_m,\bw_{-m})=-\lambda\bone$; renaming $-\lambda$ as $\lambda_m$ yields
$\nabla_{\bw_m}u_m(\bw_m,\bw_{-m})=\lambda_m\bone$.

Finally,
\[
\nabla_{\bw_m}u_m(\bw_m,\bw_{-m})
=\beta^{(A)}\ba-2\beta^{(I)}\bC\bw_m-\beta^{(D)}\sum_{j\neq m}\bw_j,
\]
which proves the claim.
\end{proof}

\alphaiszero*
\begin{proof}
The function $f(\beta)=\sum_{j} \frac{q_j^2}{2c_j-\beta}$ is continuous and differentiable on each open interval between consecutive pole locations $\{2c_j:\, q_j\neq 0\}$.
On any open interval between two consecutive poles, we have
\[
f'(\beta)
=\sum_{j} \frac{q_j^2}{(2c_j-\beta)^2}\;>\;0,
\]
so $f$ is strictly increasing there and can have at most one zero on each such interval.
Let $K'$ be the number of distinct poles; then there are at most $K'+1$ intervals and thus at most $K'+1$ real roots of $f(\beta)=0$.
Therefore the set of $=\beta^{(D)}/\beta^{(I)}$ that solves $\alpha=0$ is finite.
\end{proof}

\section{Boundary NE}\label{appx:boundary_ne}

Following standard differential-geometry notation and based on~\citet{gemp2024approximating}, let $T_\nu \mathcal{M}$ denote the tangent space of a manifold $\mathcal{M}$ at the point $\nu$.
For the interior of the $D$-action simplex $\Delta^{D-1}$, the tangent space is identical at every point, so we omit the subscript and write $T\Delta^{D-1}$.
We define the projection of a vector $z \in \mathbb{R}^d$ onto this tangent space by
\[
\Pi_{T\Delta^{D-1}}(z) \;=\; \Bigl[I - \frac{1}{D}\mathbf{1}\mathbf{1}^\top\Bigr]z,
\]
and we refer to $\Pi_{T\Delta^{D-1}}\!\bigl(\nabla^{k}_{x_k}\bigr)$ as a \emph{projected-gradient}.
When the dimension is unambiguous, we drop the superscript and simply write $T\Delta$.
Finally, let $\mathcal{U}(S)$ denote the discrete uniform distribution over the elements of a set $S$.

To handle boundary cases, we adopt the entropy-regularized (logit-QRE) formulation \cite{leonardos2021exploration}, and minimize the squared norm of \emph{projected} gradients of the entropy-augmented utilities.

Let $\mathbf{1}\in\mathbb{R}^D$ denote the all-ones vector and define the tangent space of the simplex by
$T\Delta= \{v\in\mathbb{R}^D:\mathbf{1}^\top v=0\}$.
We use the orthogonal projector onto $T\Delta$:
\begin{equation}
\Pi_{T\Delta}(v)\;=\;v-\frac{\mathbf{1}^\top v}{D}\,\mathbf{1}. \label{eq:proj}
\end{equation}

Consider the Shannon entropy $S(w_m)=-\sum_{i=1}^D \vw_{m,i}\log \vw_{m,i}$, and fix a temperature $\tau>0$.
Define the entropy-augmented utility
\begin{align}
u_m^\tau(\vw_1,\cdots,\vw_M)&=u_m(\vw_1,\cdots,\vw_M)\;+\;\tau\,S(\vw_m),\\
\nabla_{\vw_m} u_m^\tau(\vw_1,\cdots,\vw_M)\;&=\;\nabla_{\vw_m} u_m(\vw_1,\cdots,\vw_M)\;-\;\tau\bigl(\log \vw_m+\mathbf{1}\bigr),
\label{eq:tau-grad}
\end{align}
where $\log \vw_m$ is taken element-wise and $\vw_{m,i}>0$ for all $i$ (guaranteed for $\tau>0$ at a QRE).

Following \cite{gemp2024approximating}, we define
\begin{equation}
L_\tau(\vw_1,\cdots,\vw_M)\;=\;\sum_{m=1}^M \eta_m\,\bigl\|
\Pi_{T\Delta}\bigl(\nabla_{\vw_m} u_m^\tau(\vw_1,\cdots,\vw_M)\bigr)
\bigr\|_2^2,
\label{eq:qre-loss}
\end{equation}
with positive weights $\eta_m>0$.
Zeros of $L_\tau$ coincide with logit-QRE; as $\tau\downarrow 0$, the QRE approaches a Nash equilibrium that may reside on the boundary.

Substituting \Cref{eq:u} into \Cref{eq:tau-grad} and \Cref{eq:qre-loss} yields the explicit expression:
\begin{align}
\nabla_{\vw_m} u_m^\tau(\vw_1,\cdots,\vw_M)
&=\;-\,\beta^{(D)}\sum_{j\ne m}\vw_j
\;+\;\beta^{(A)} \va
\;-\;2\beta^{(I)}\, \mC\vw_m
\;-\;\tau\bigl(\log \vw_m+\mathbf{1}\bigr), \label{eq:tau-grad-expanded}\\
L_\tau(\vw_1,\cdots,\vw_M)
&=\;\sum_{m=1}^M \eta_m\,\Bigl\|
\Pi_{T\Delta}\Bigl(
-\,\beta^{(D)}\sum_{j\ne m}\vw_j
+\beta^{(A)} \va
-2\beta^{(I)} \mC\vw_m
-\tau(\log \vw_m+\mathbf{1})
\Bigr)\Bigr\|_2^2. \label{eq:qre-loss-expanded}
\end{align}

\section{Experiments}\label{appx:exp}

In our experiments, to compute matrix $\mC$ and vector $\va$, we use the $\mathtt{OpinionQA}$ dataset~\citep{santurkar2023whose}, where each sample consists of a question with $K_\text{c}$ choices, an associated attribute, and a ground-truth human preference distribution.
We prompt LLM players with: ``In terms of [attribute], [question ($x^\pk$)]. The answer is [option ($y^\pk$)]." We then extract the logit assigned to the token sequence corresponding to [option]. For a given pair of LLMs $(i, j)$, we define $c_{ij}$ as the average, across samples, of the discrepancy in their predicted (logit-based) probabilities for [option]. Collecting these quantities yields the off-diagonal entries of $\mC$.
We set the diagonal entries to the aggregate discrepancy $c_{jj} = \sum_{i \neq j} c_{ij}$. Options come from a discrete, finite set. To construct $\va$, we compute, for each LLM $i$, the average (over samples) inner product between the model's option-level logit-based probabilities and the corresponding ground-truth human preference distribution over the options. 

We show the complete results in \cref{fig:Region}-\cref{fig:Openness}.


\begin{figure}
    \centering
    \includegraphics[width=0.3\linewidth]{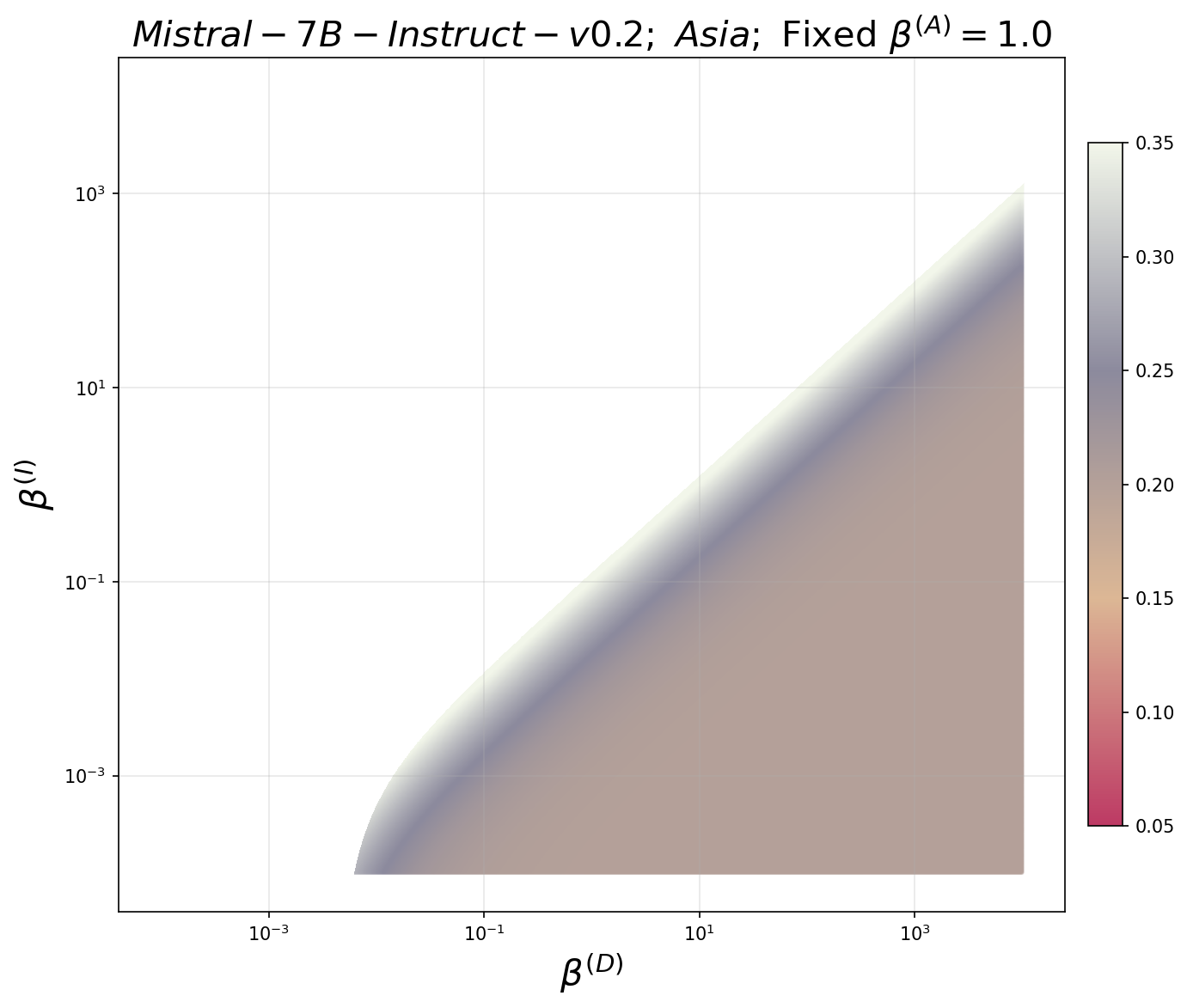}
    \includegraphics[width=0.3\linewidth]{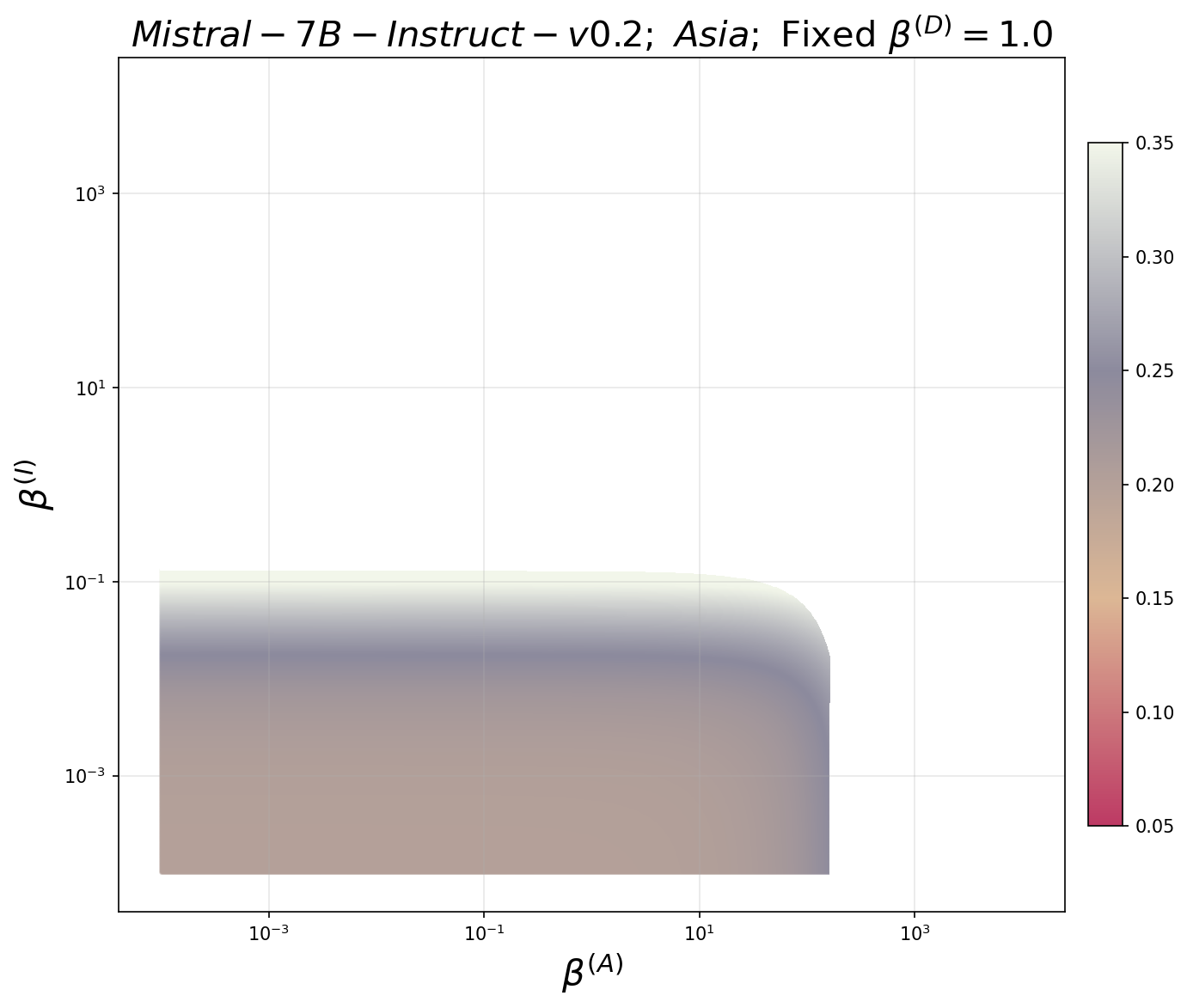}
    \includegraphics[width=0.3\linewidth]{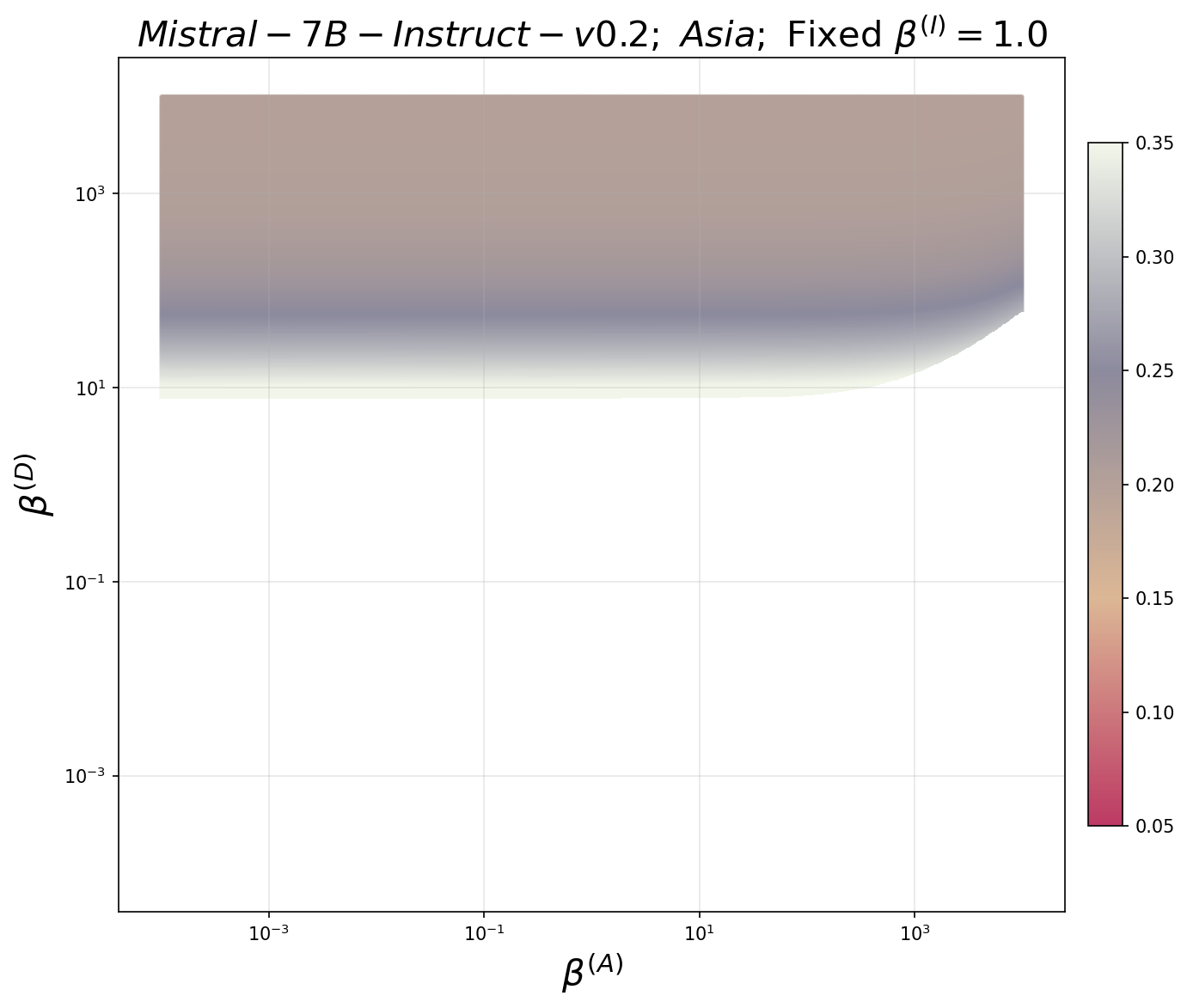}

    \includegraphics[width=0.3\linewidth]{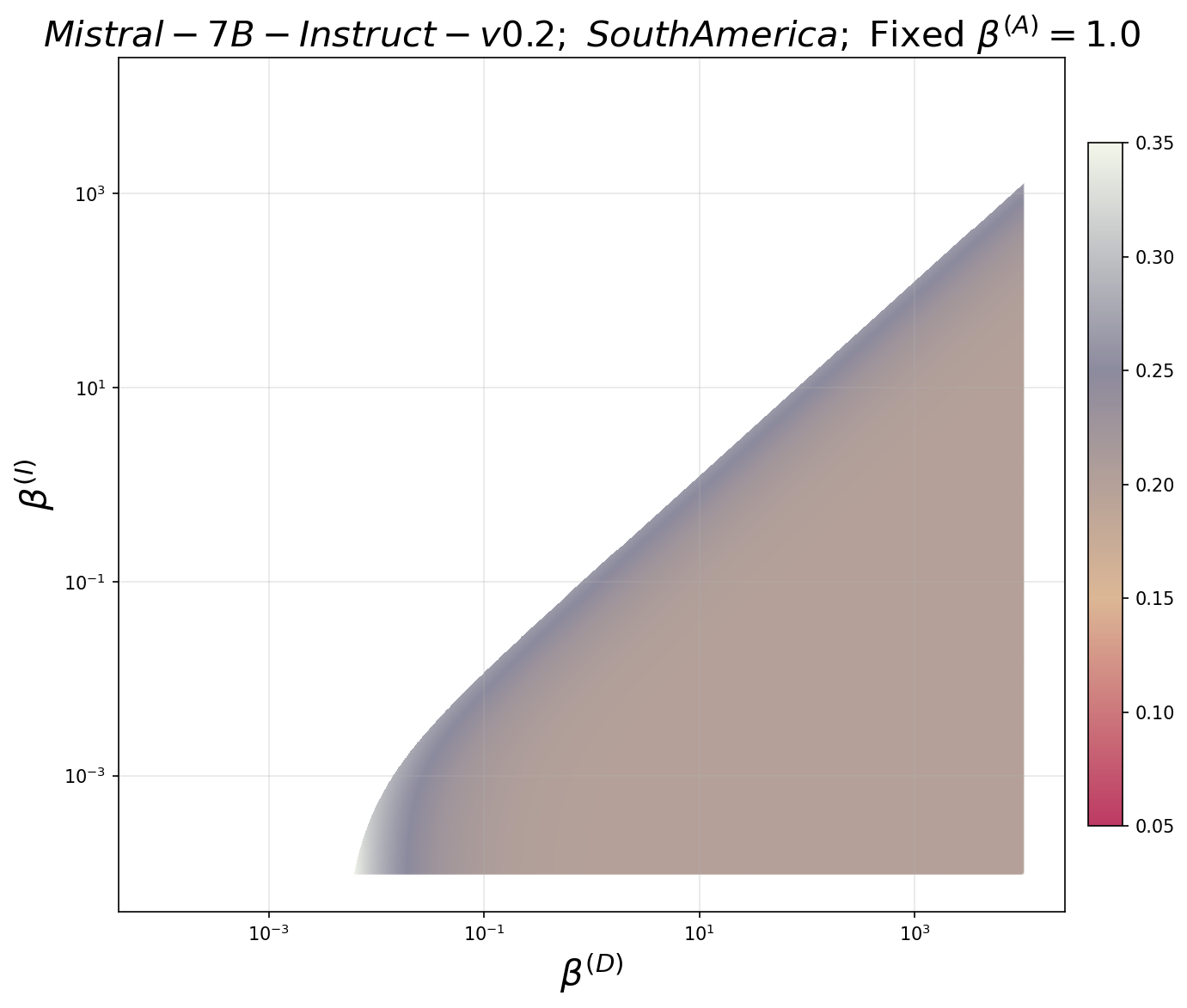}
    \includegraphics[width=0.3\linewidth]{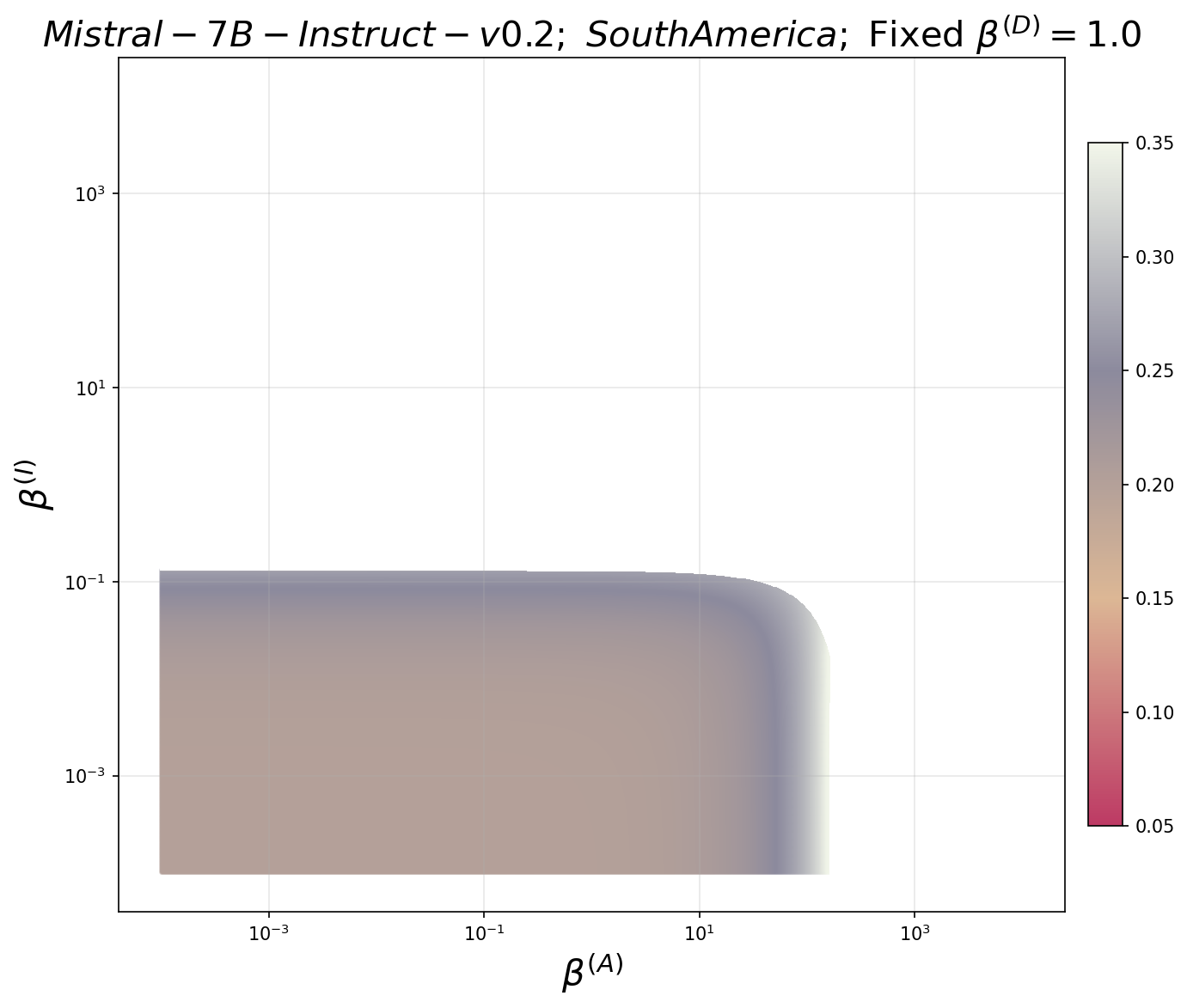}
    \includegraphics[width=0.3\linewidth]{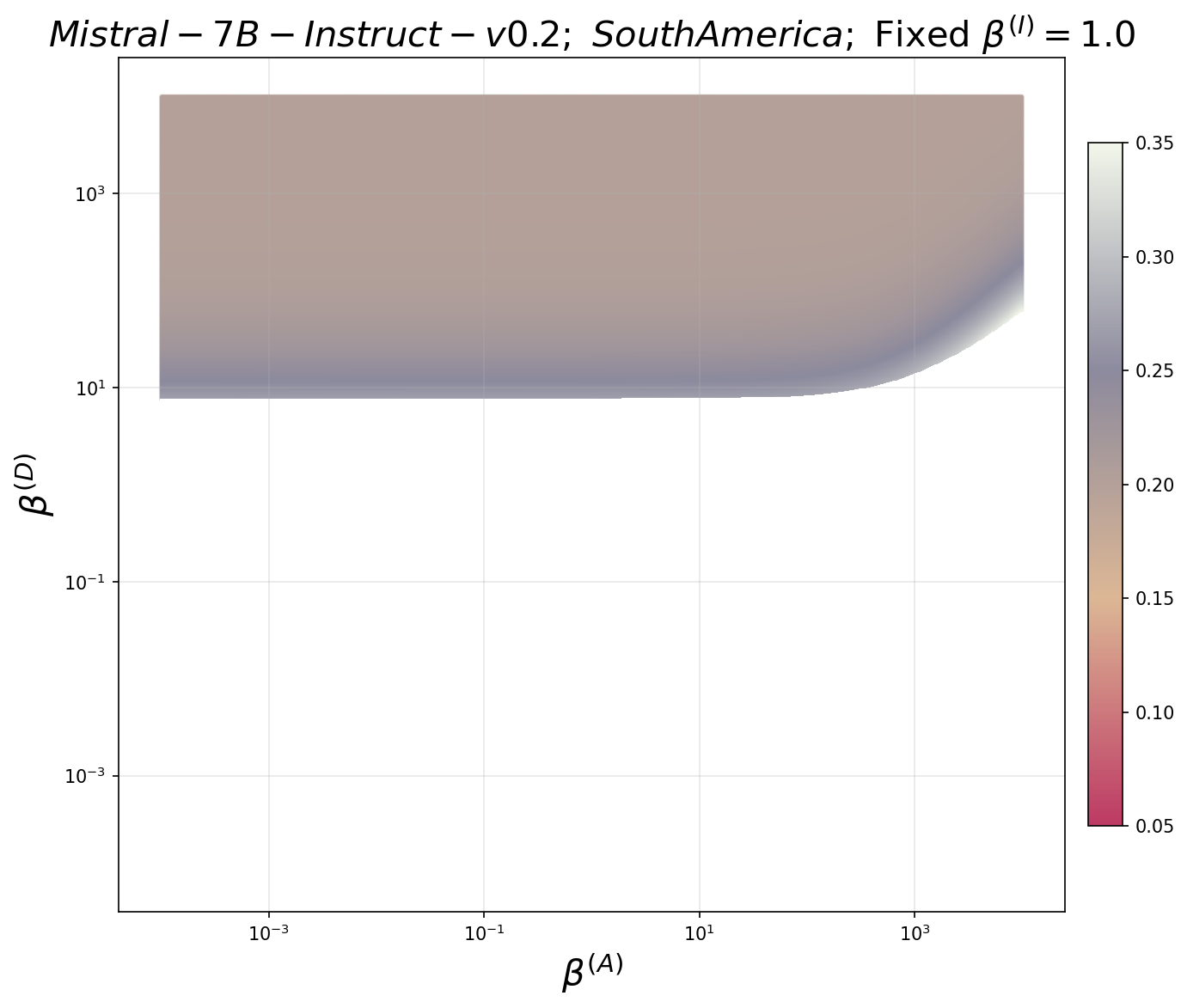}

    \includegraphics[width=0.3\linewidth]{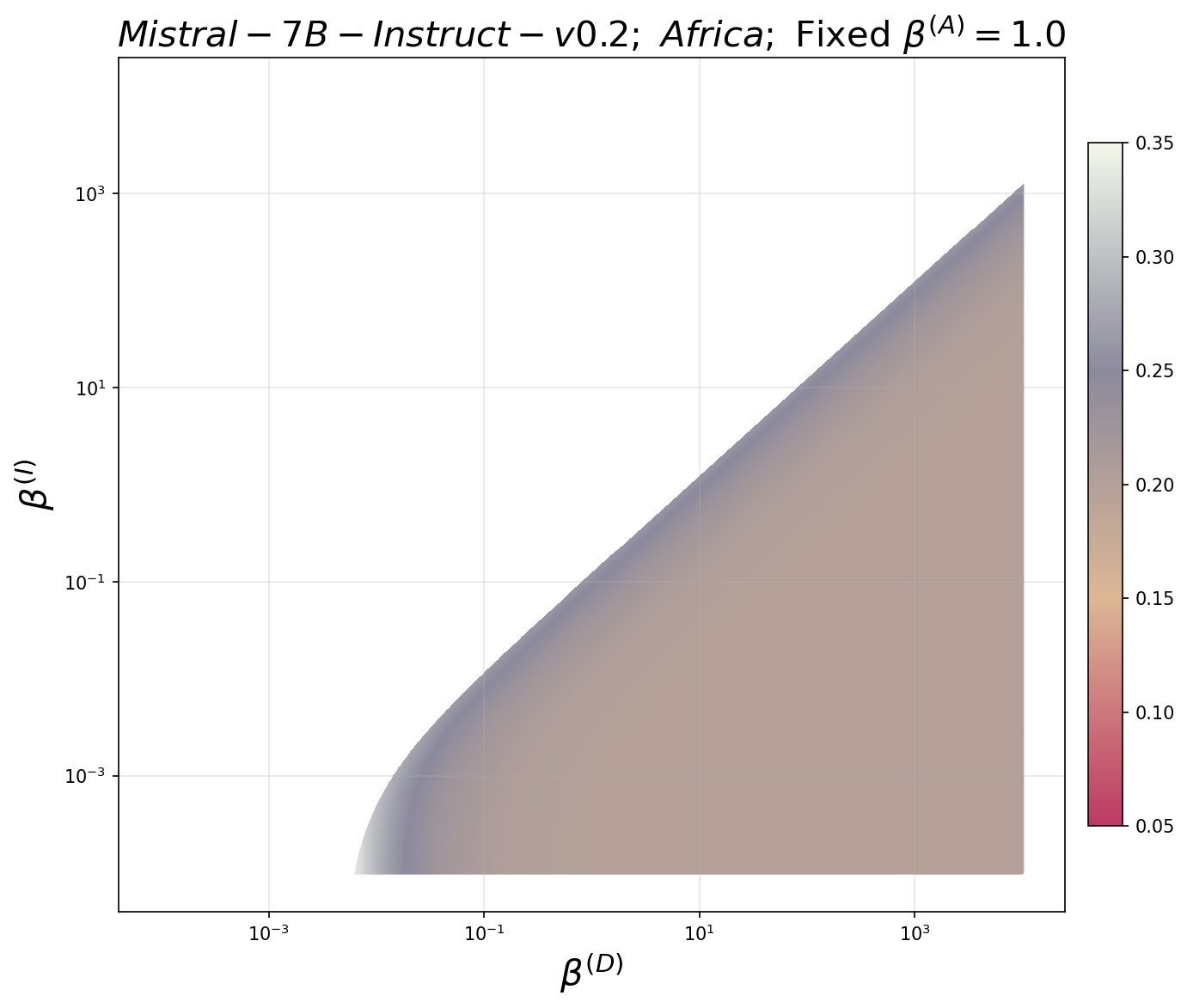}
    \includegraphics[width=0.3\linewidth]{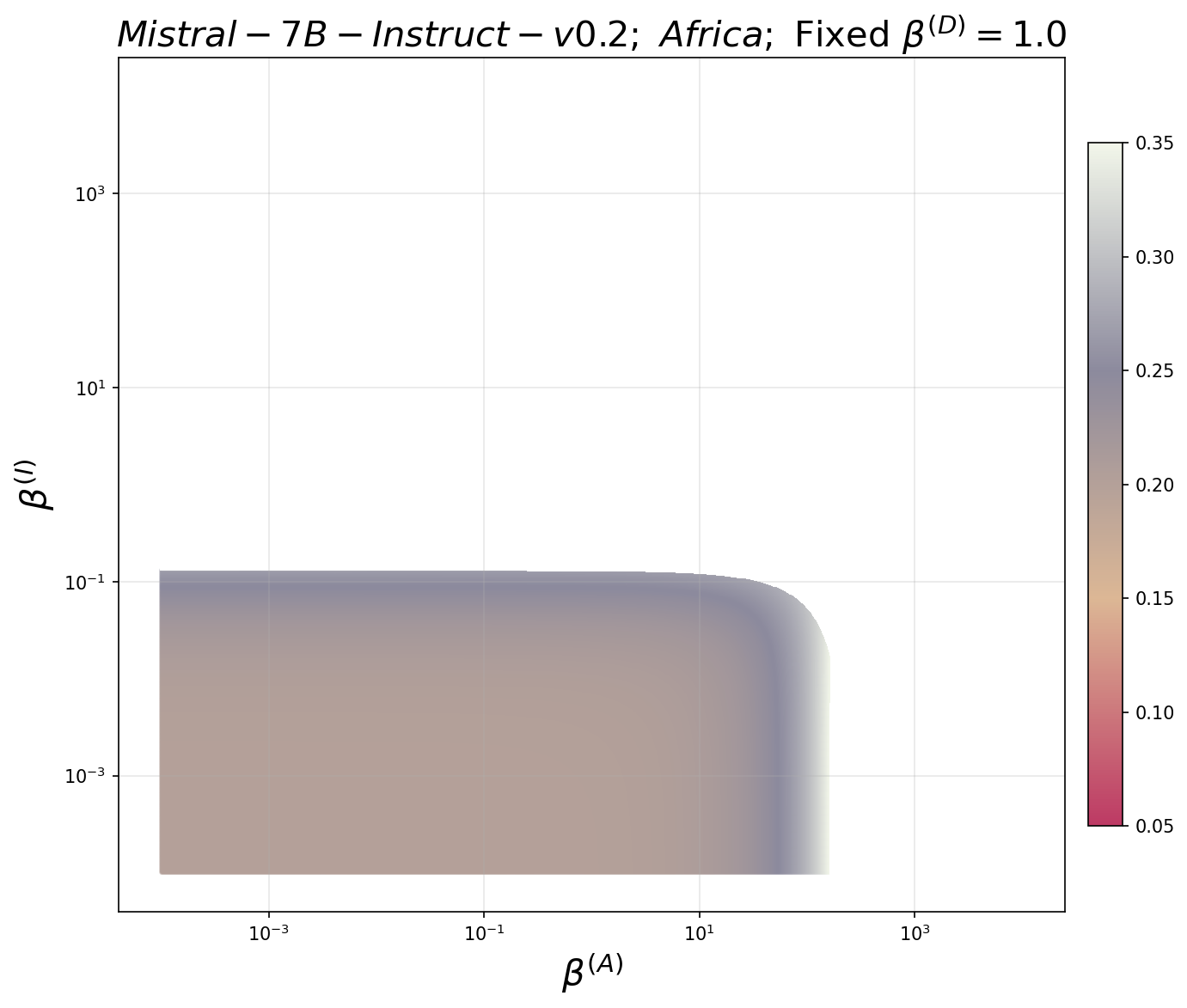}
    \includegraphics[width=0.3\linewidth]{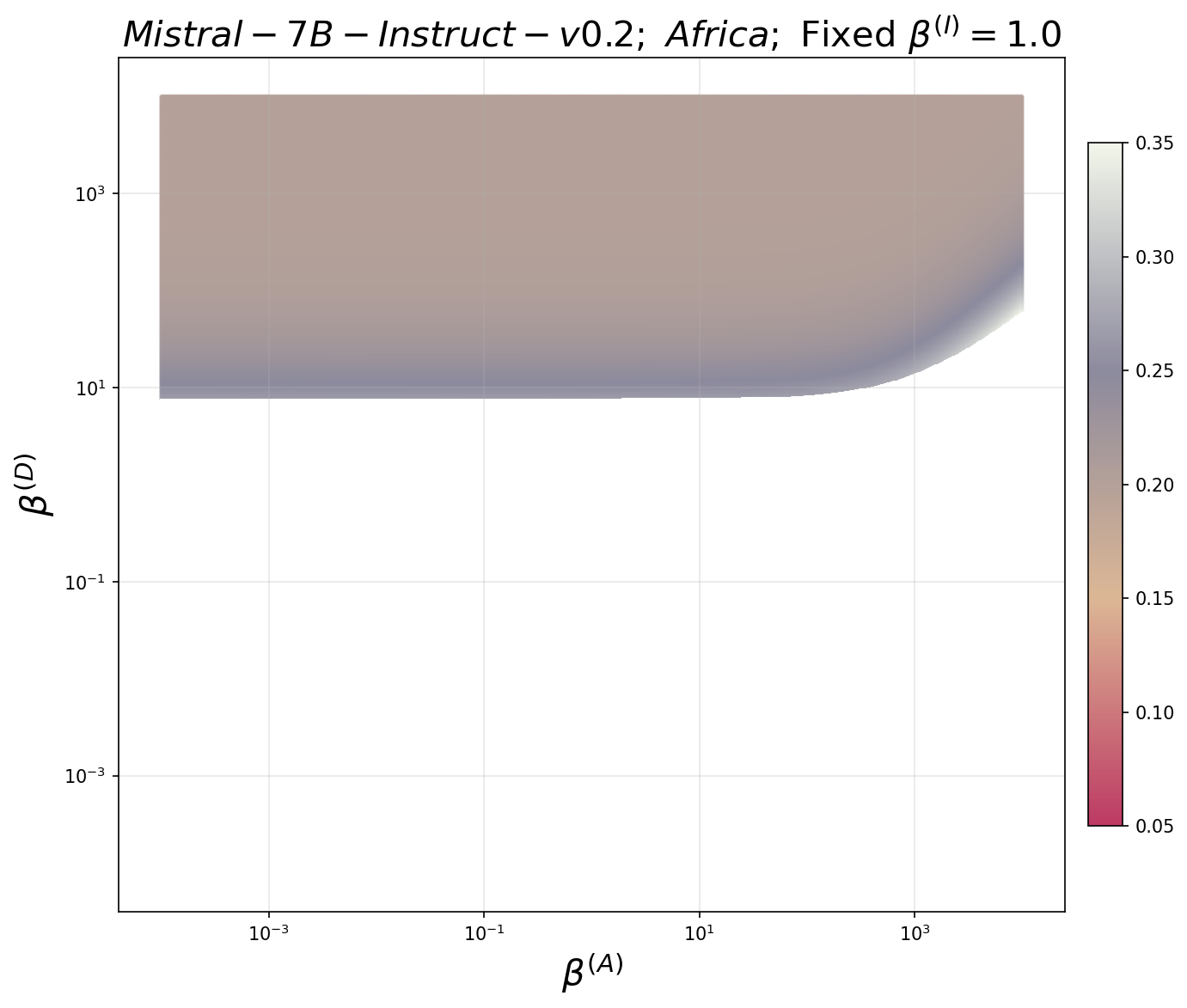}

    \includegraphics[width=0.3\linewidth]{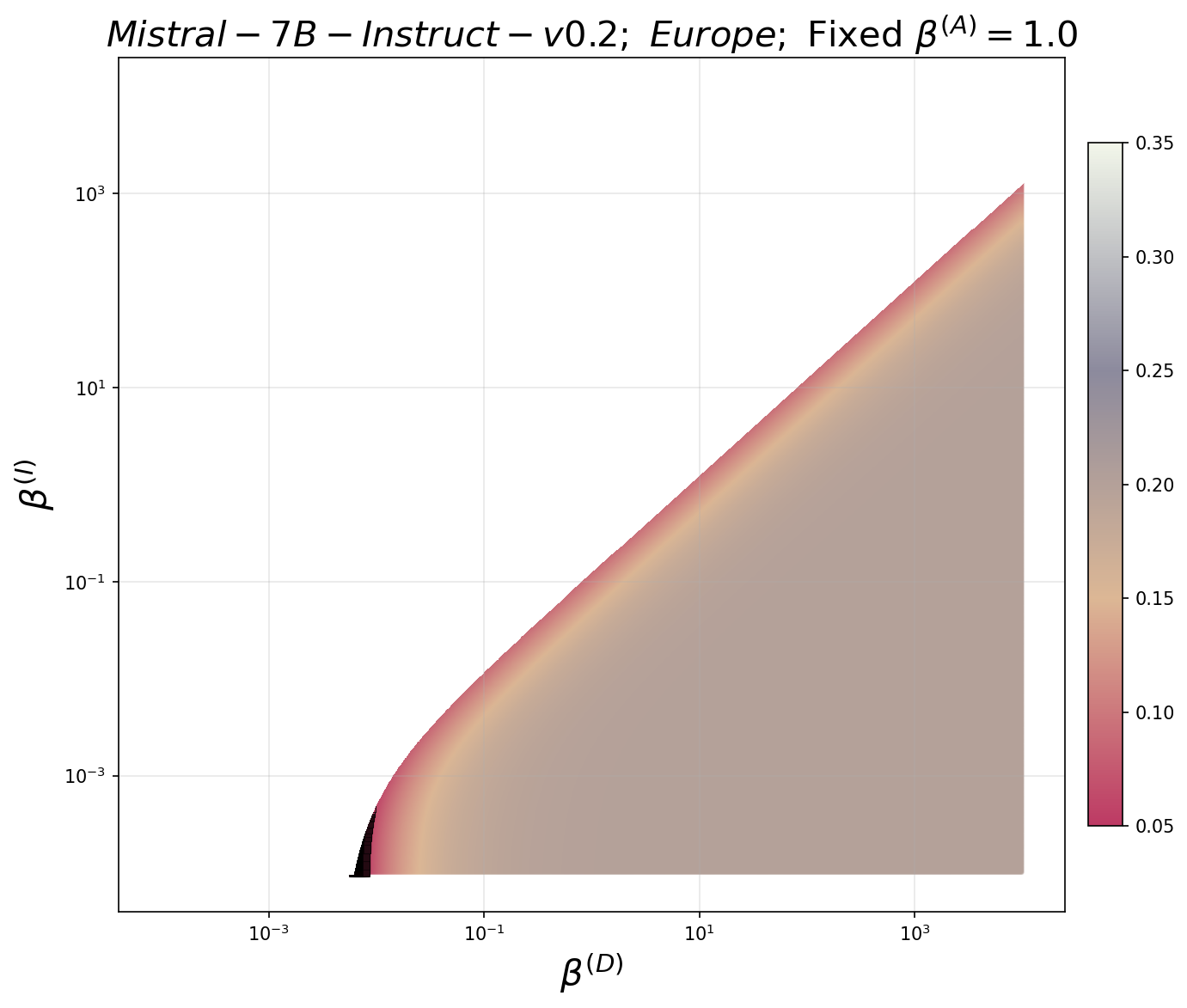}
    \includegraphics[width=0.3\linewidth]{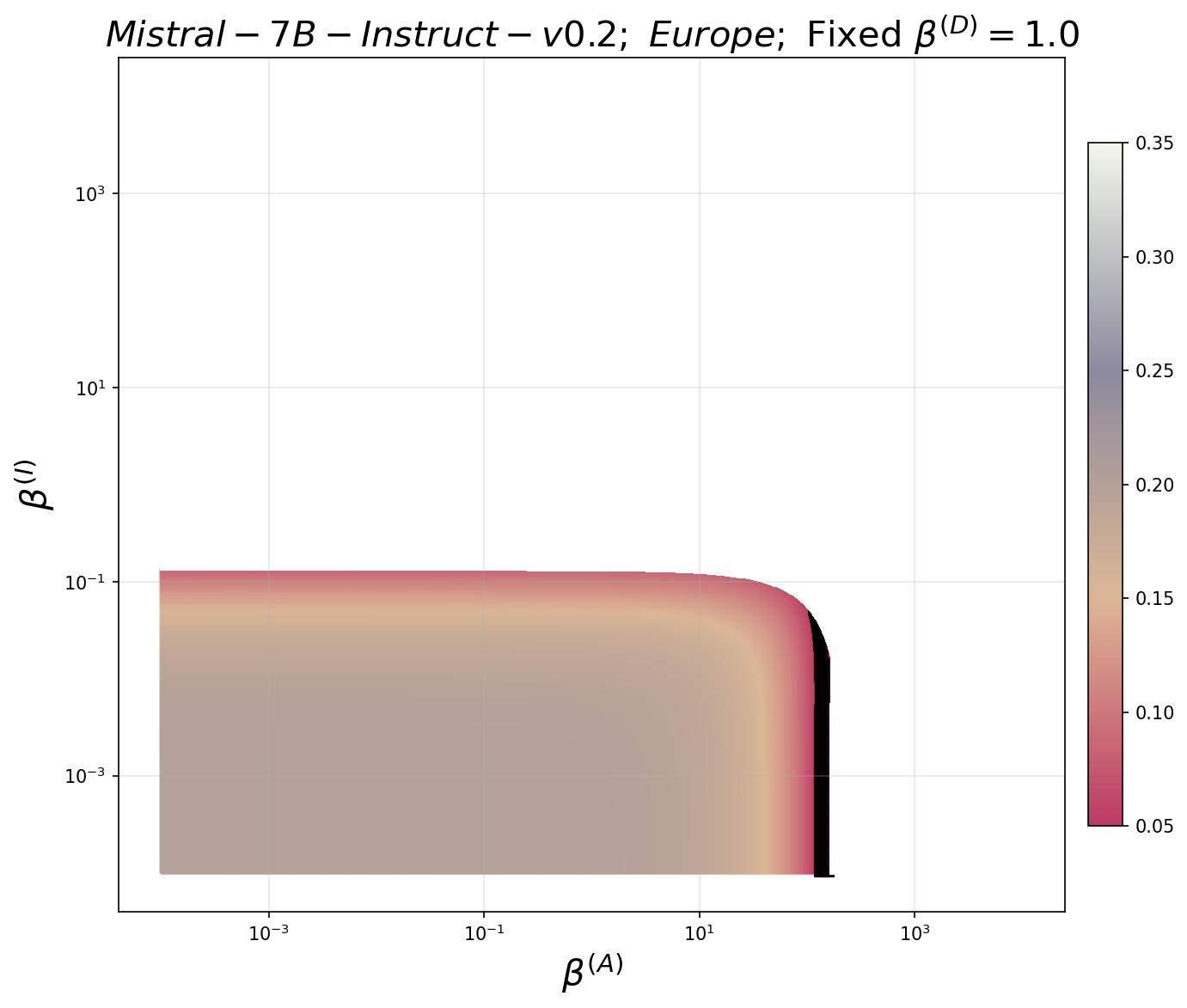}
    \includegraphics[width=0.3\linewidth]{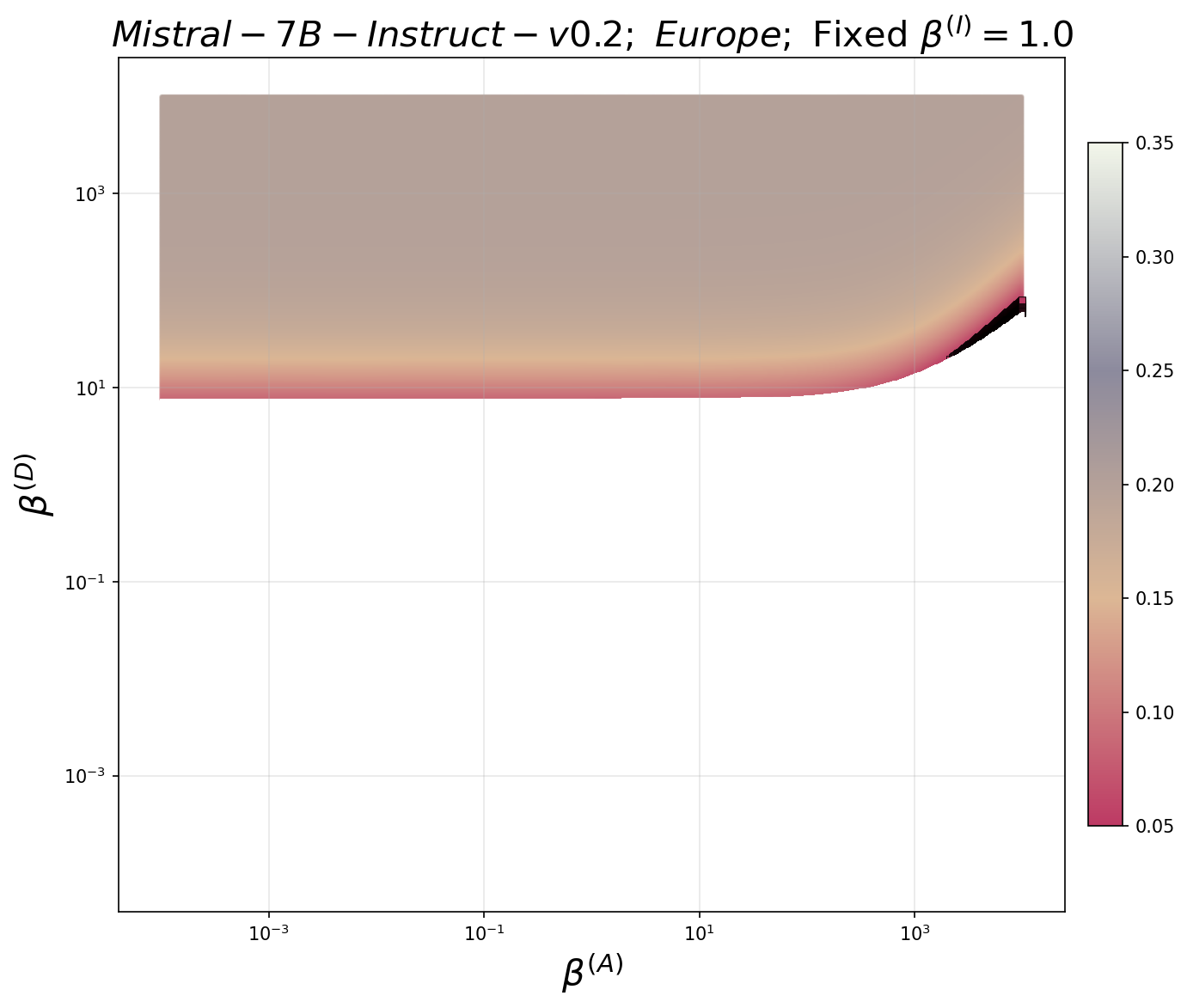}

    \includegraphics[width=0.3\linewidth]{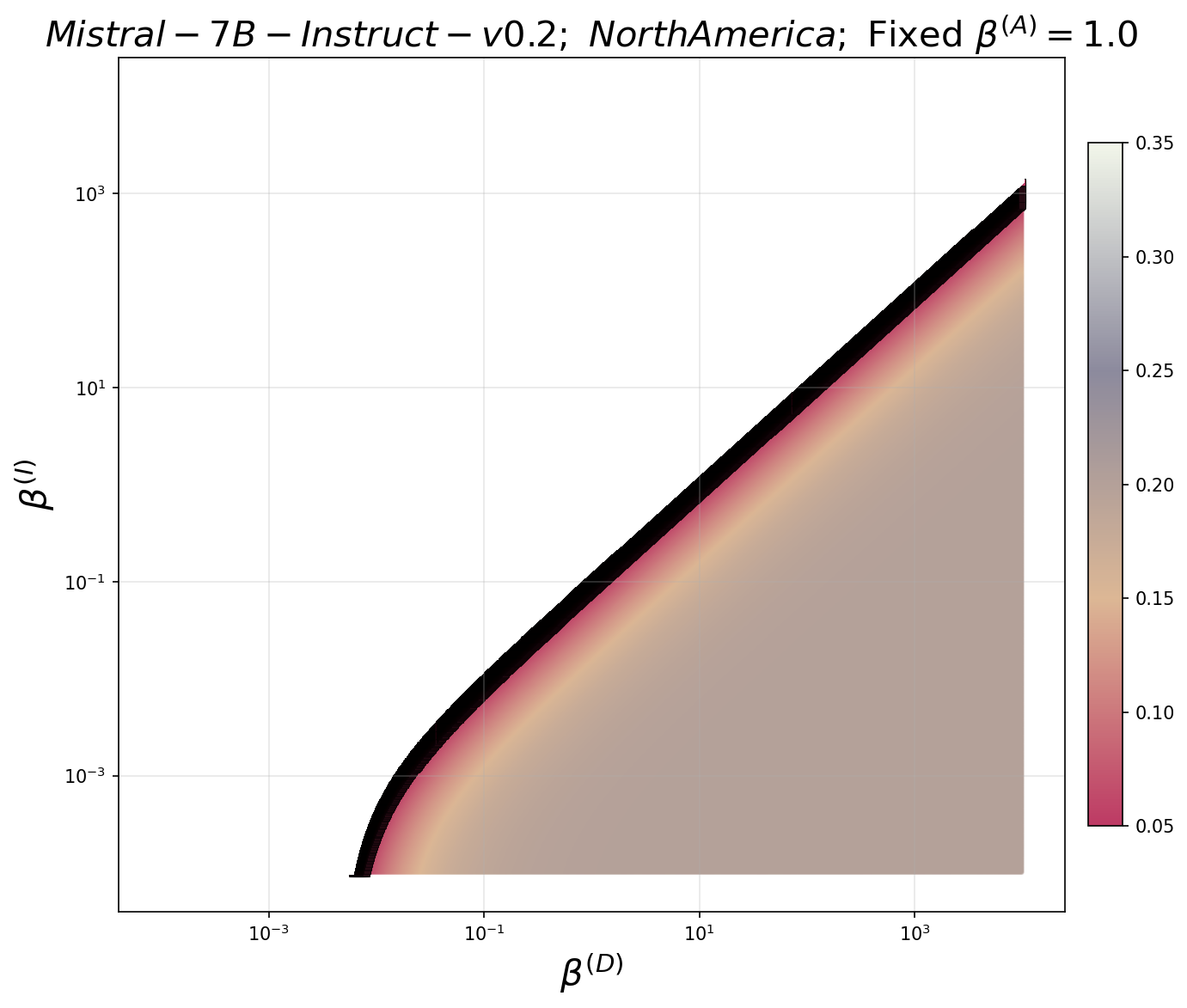}
    \includegraphics[width=0.3\linewidth]{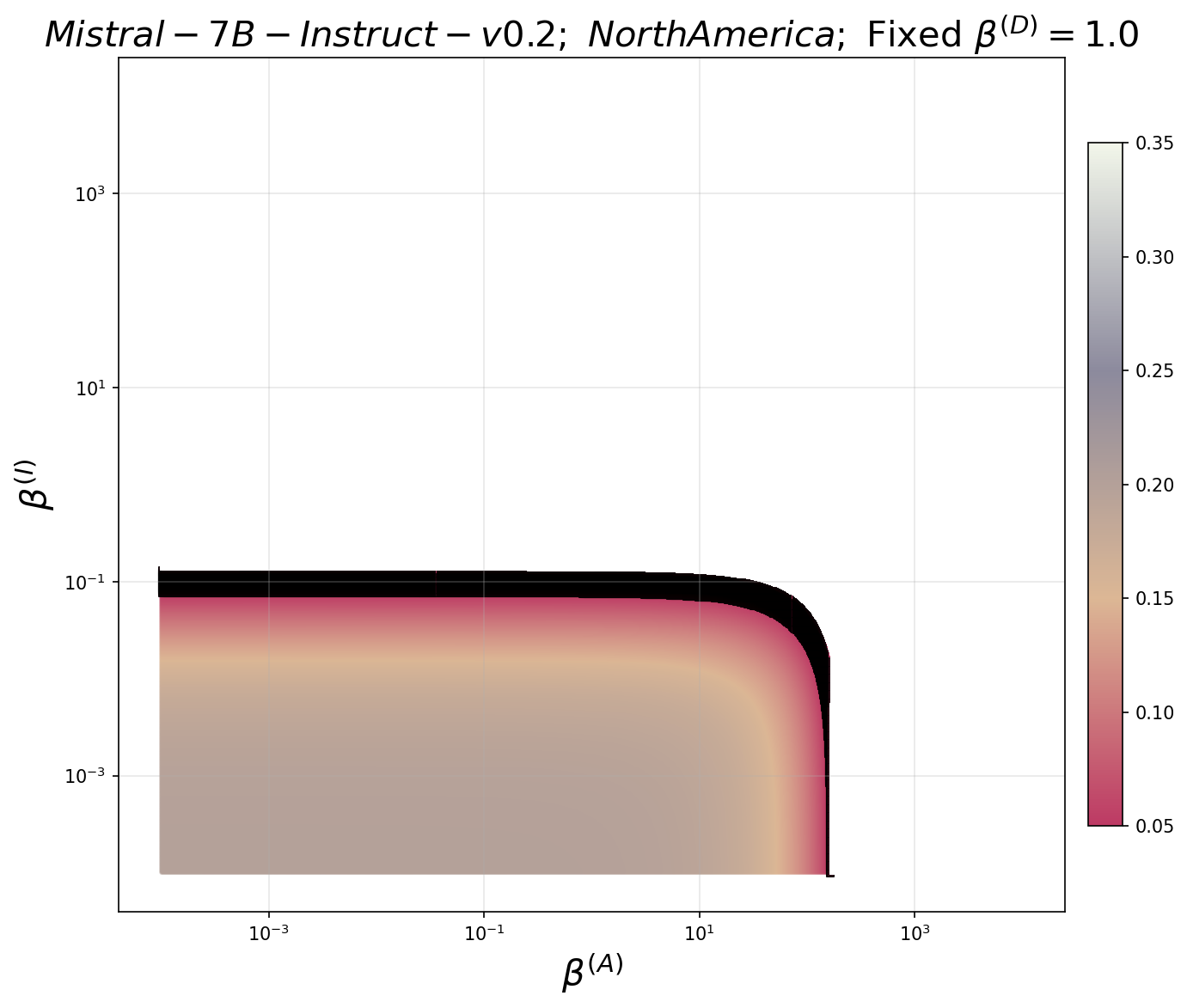}
    \includegraphics[width=0.3\linewidth]{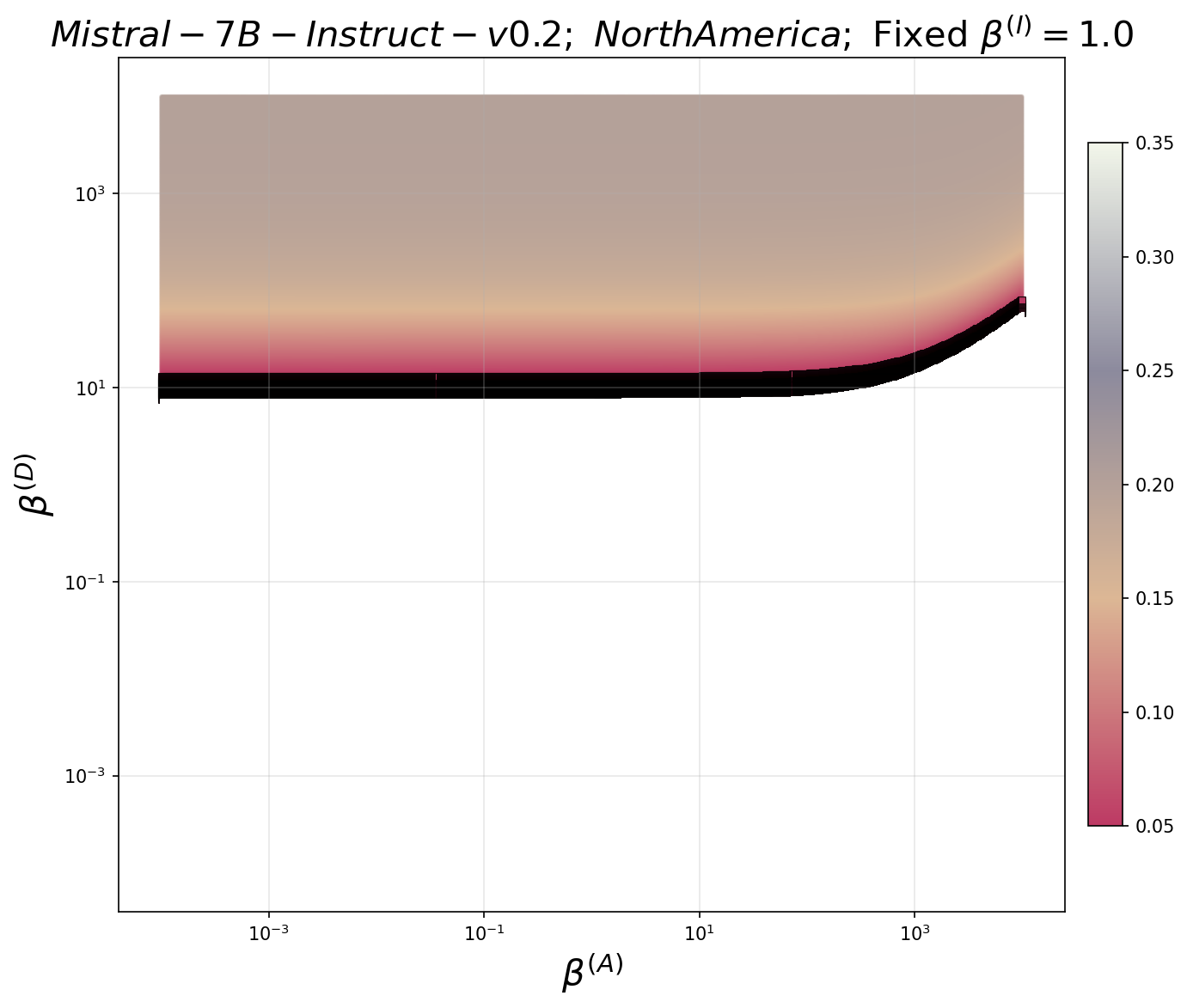}

    \caption{Political exclusion of \emph{Mistral-7B-Instruct-v0.2} on the $\mathtt{CultureBank}$ dataset.}
    \label{fig:Region}
\end{figure}

\begin{figure}
    \centering

    \includegraphics[width=0.25\linewidth]{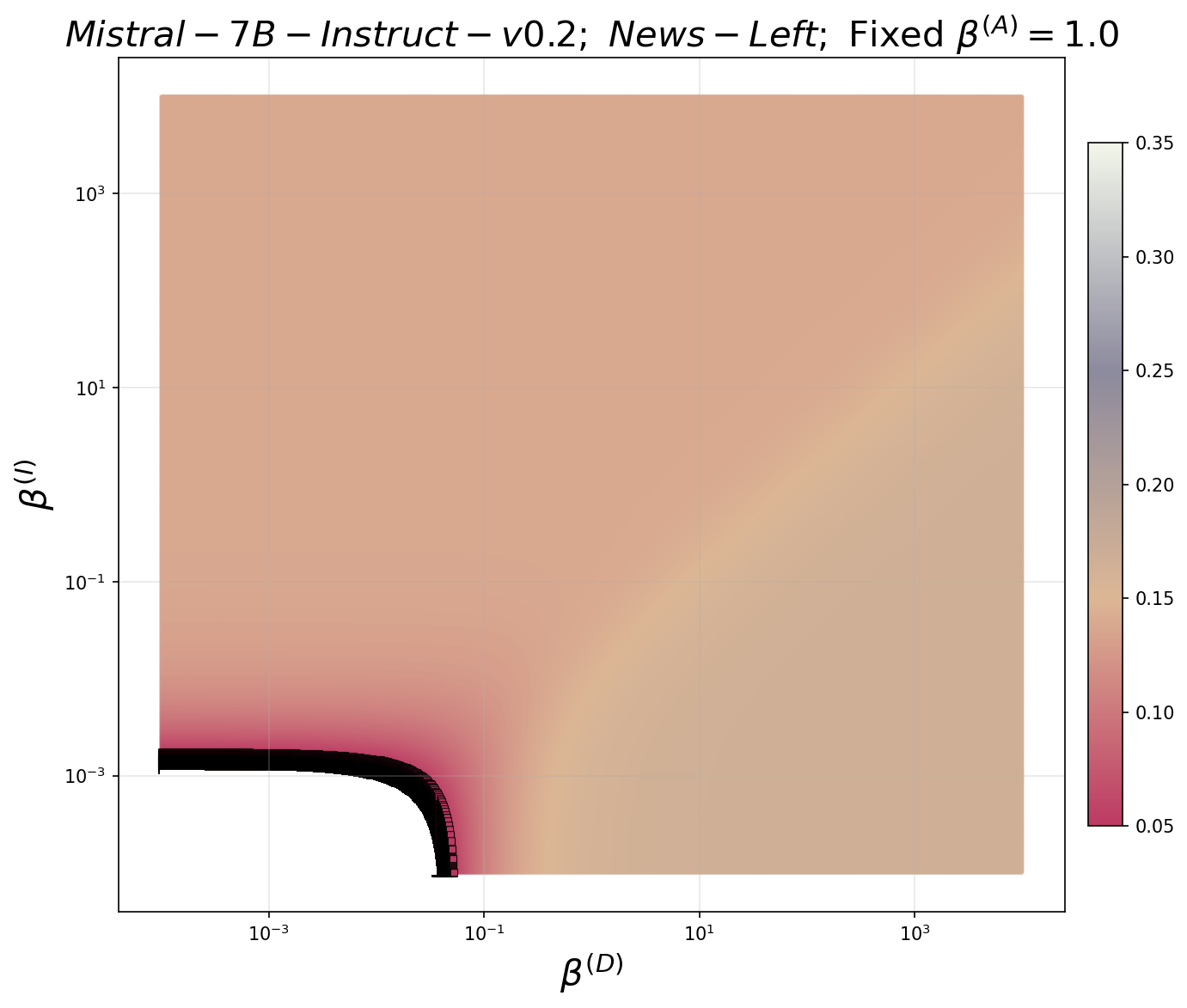}
    \includegraphics[width=0.25\linewidth]{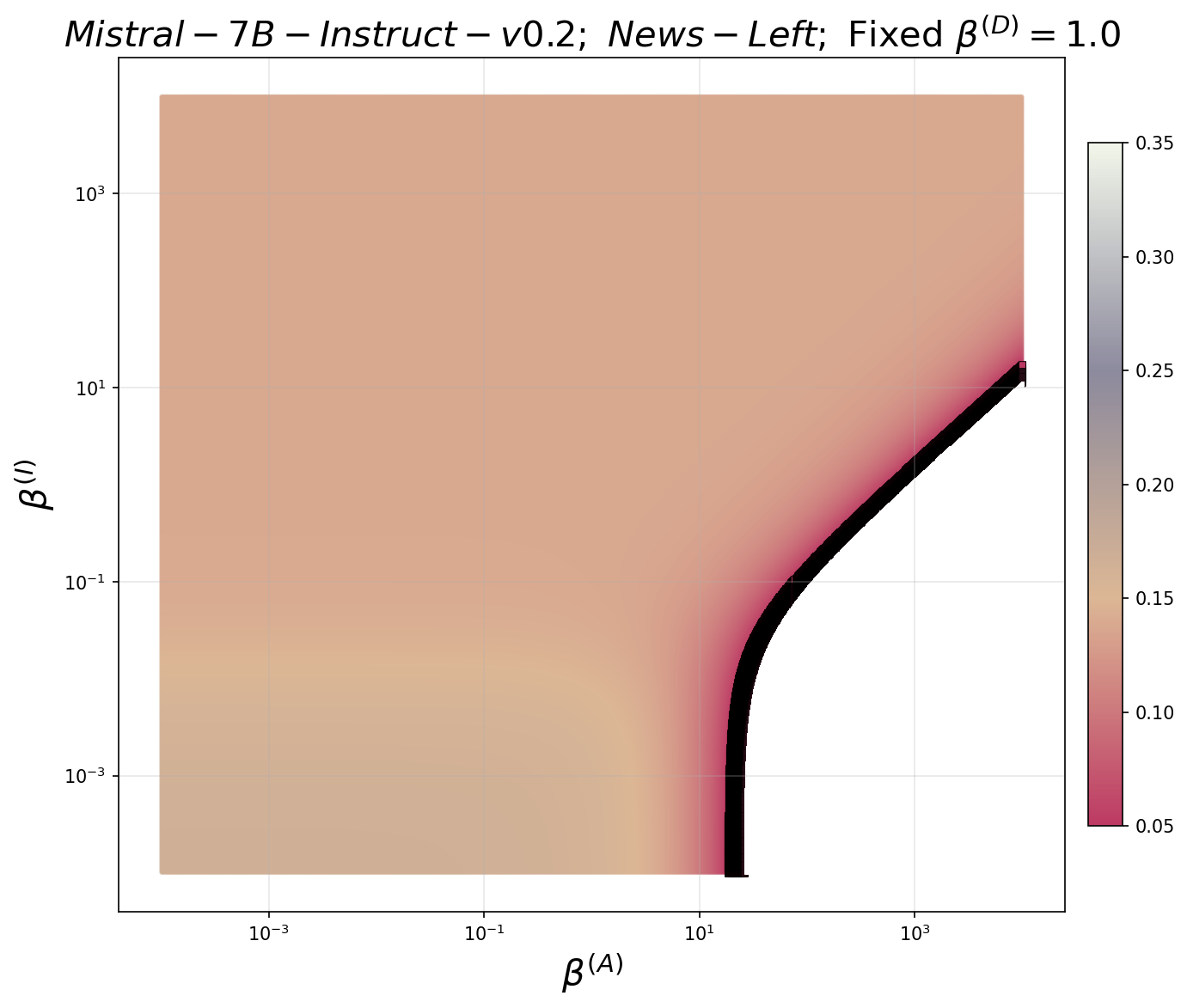}
    \includegraphics[width=0.25\linewidth]{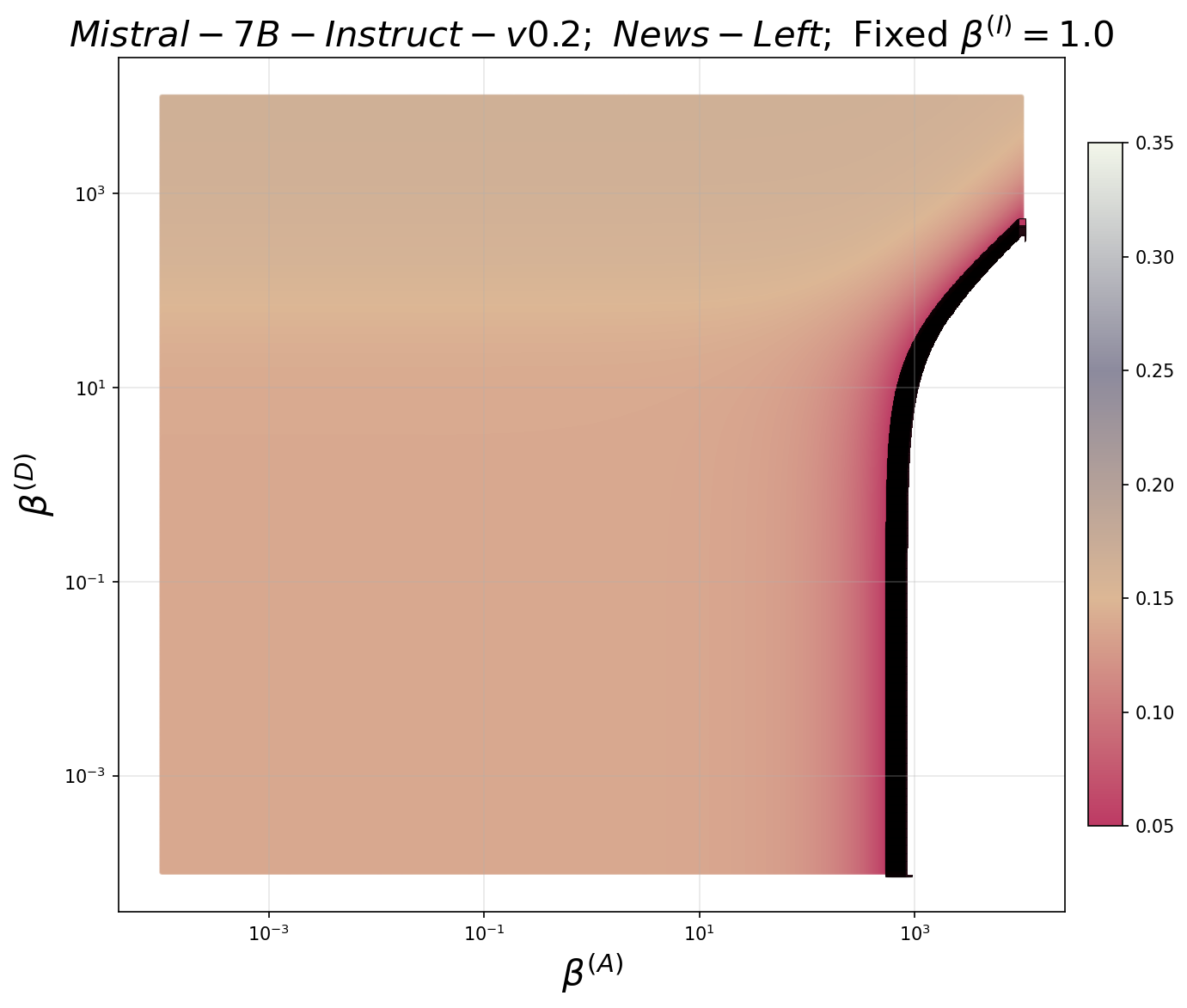}

    \includegraphics[width=0.25\linewidth]{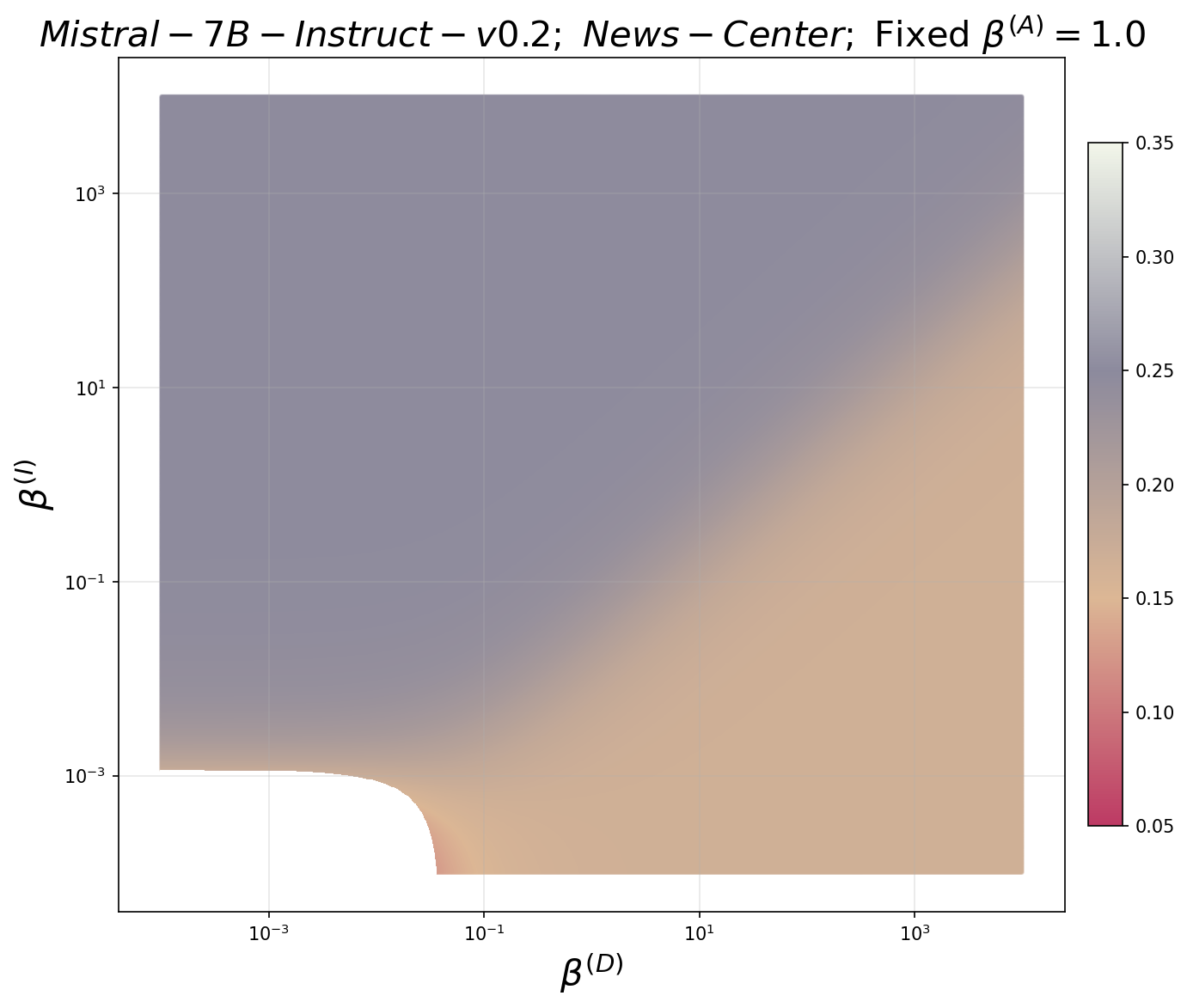}
    \includegraphics[width=0.25\linewidth]{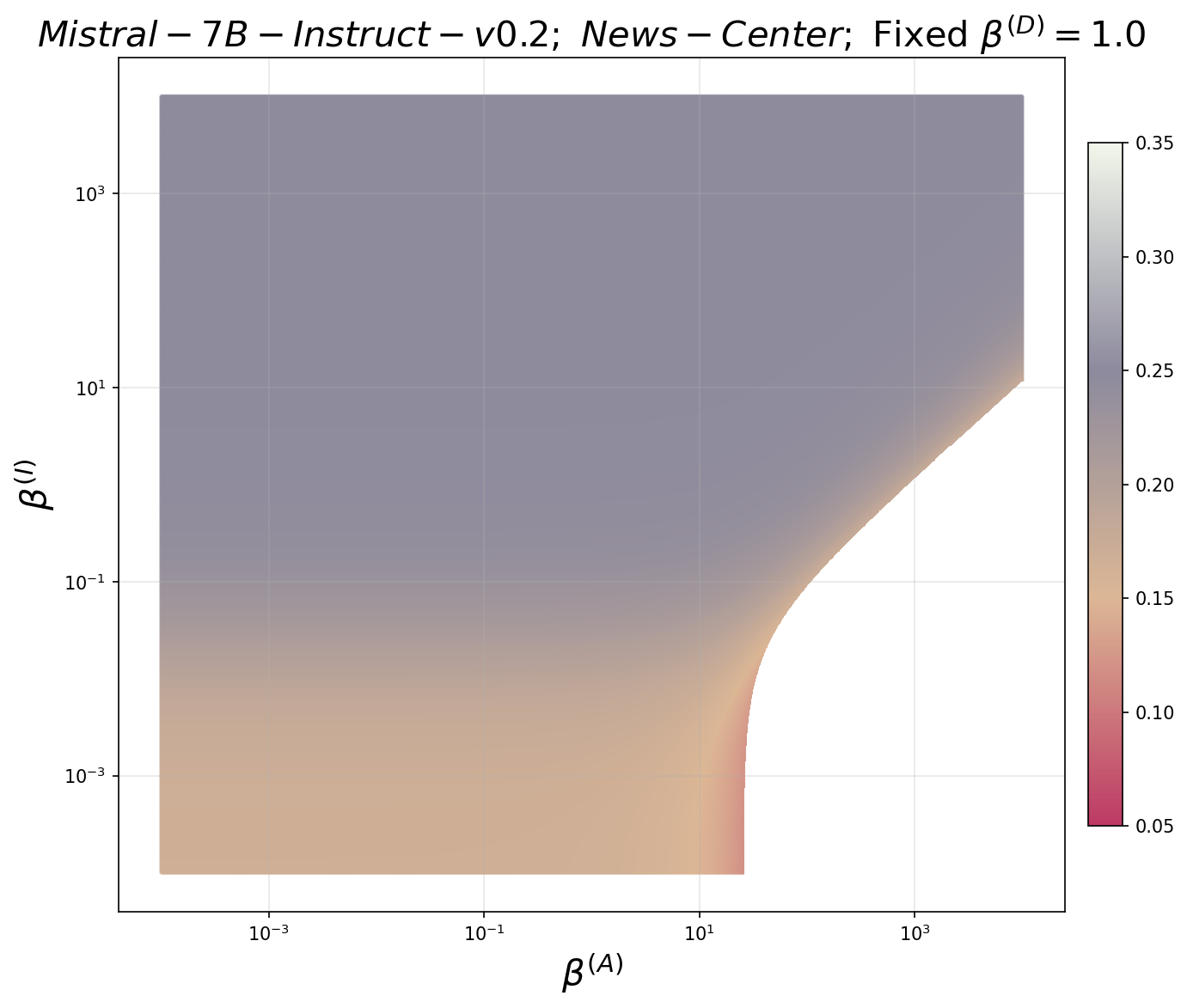}
    \includegraphics[width=0.25\linewidth]{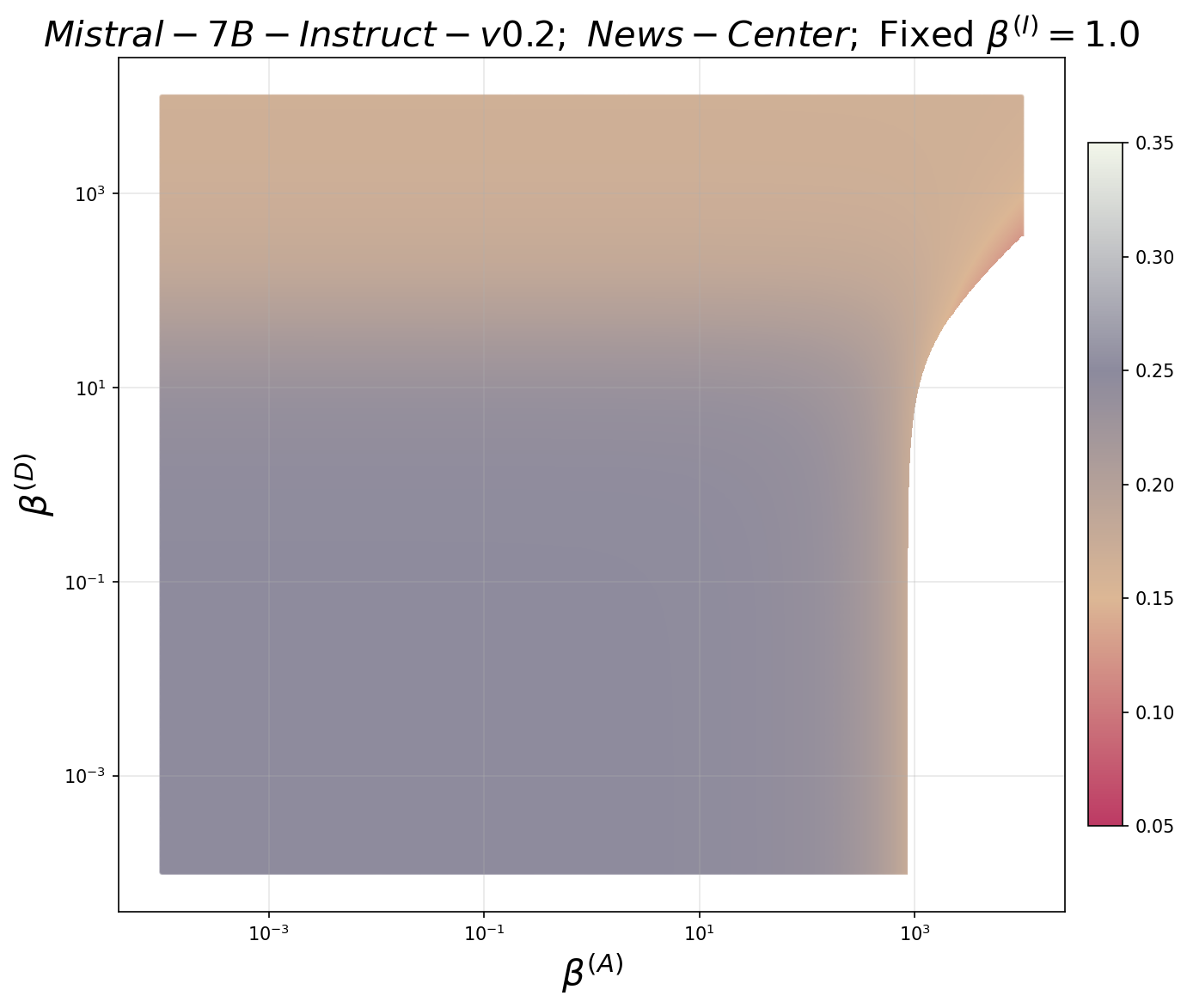}

    \includegraphics[width=0.25\linewidth]{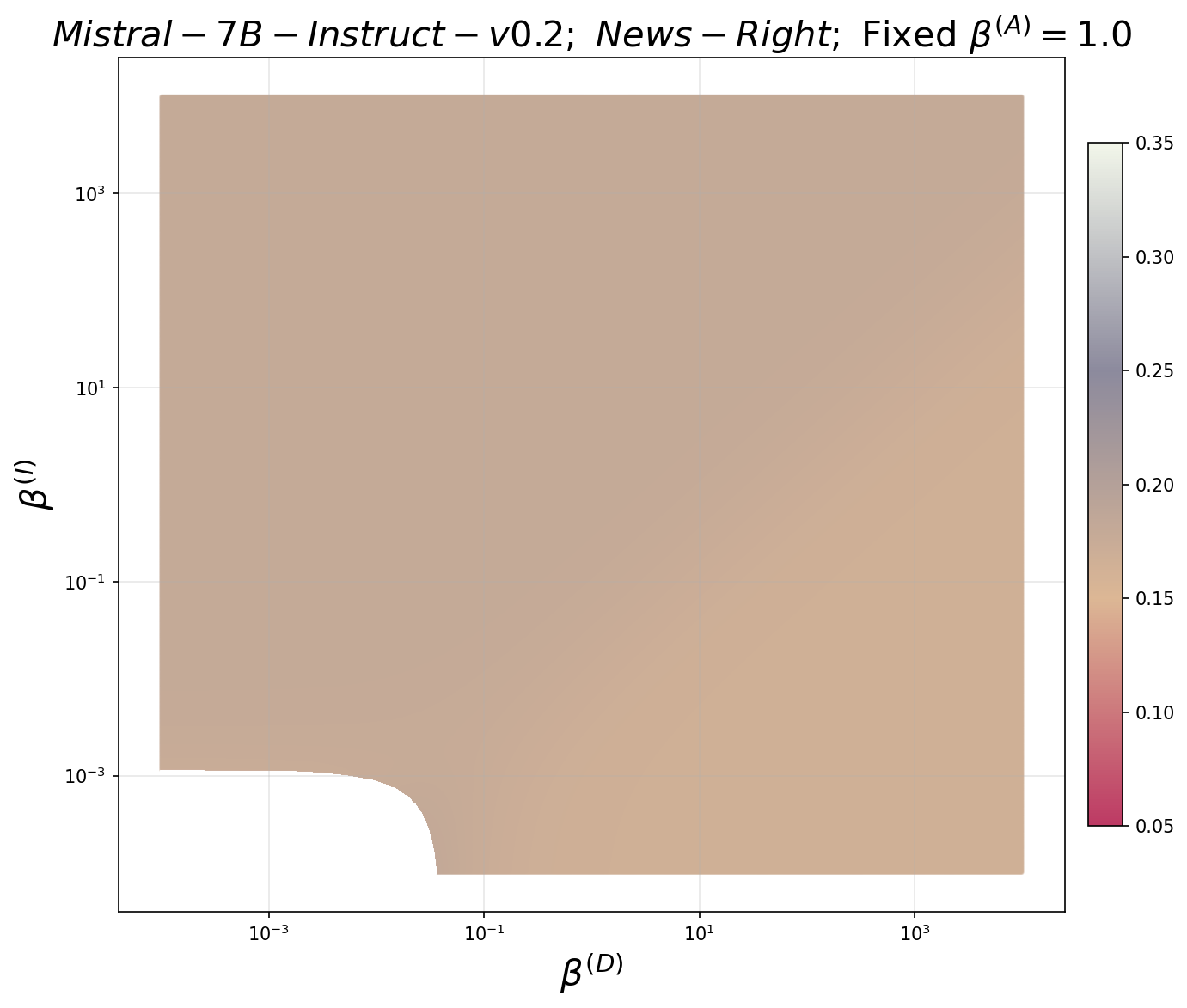}
    \includegraphics[width=0.25\linewidth]{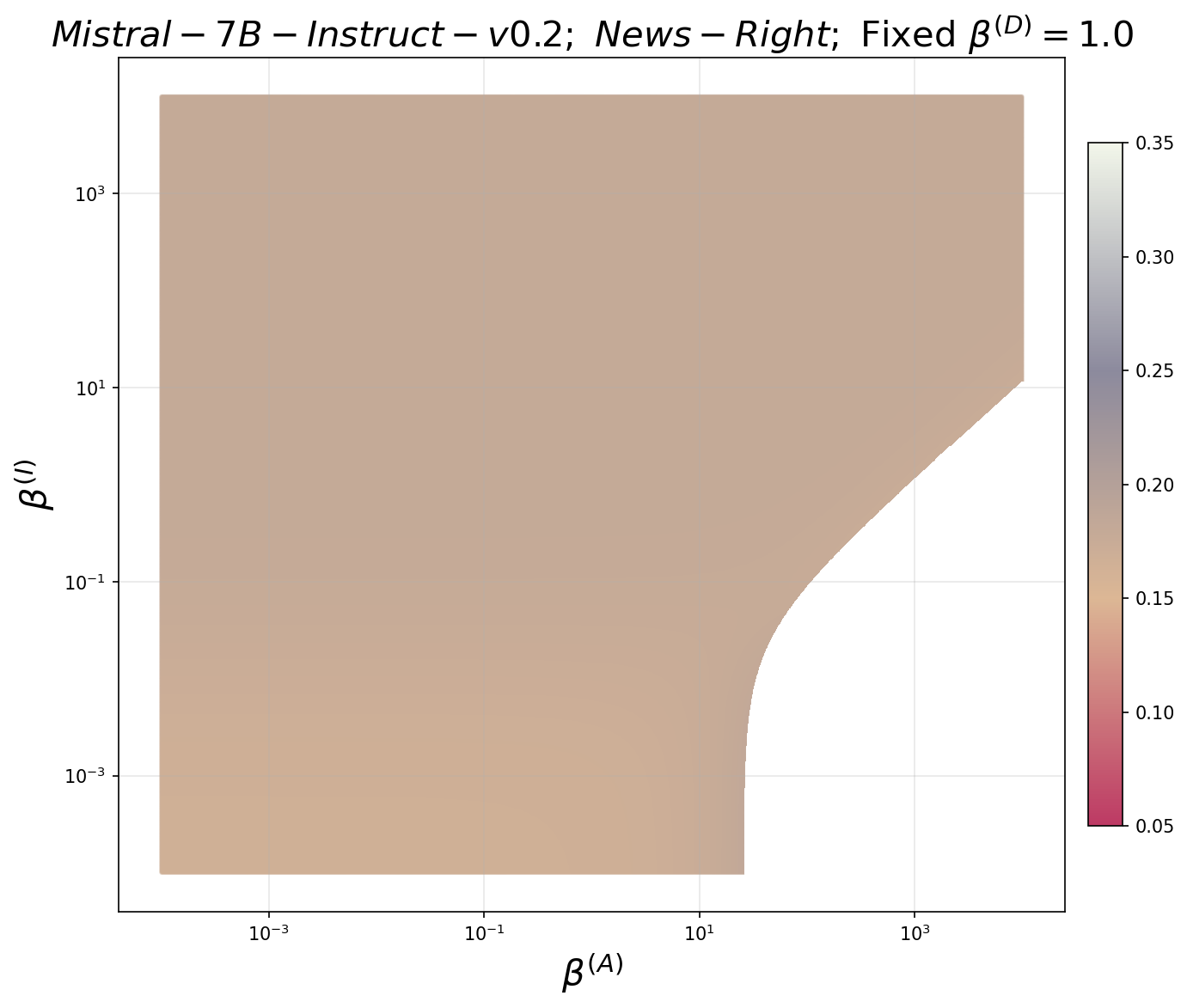}
    \includegraphics[width=0.25\linewidth]{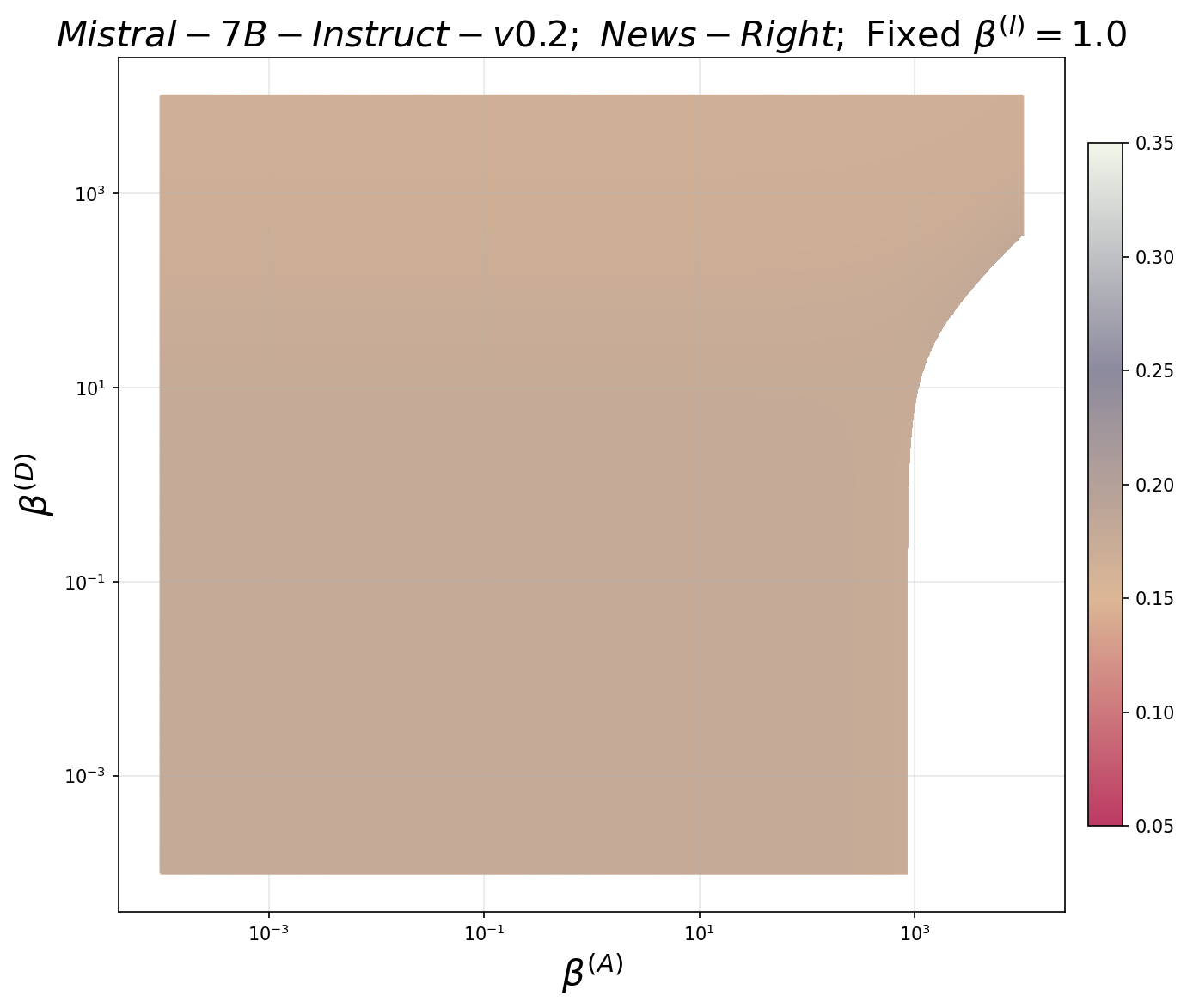}
    
    \centering

    \includegraphics[width=0.25\linewidth]{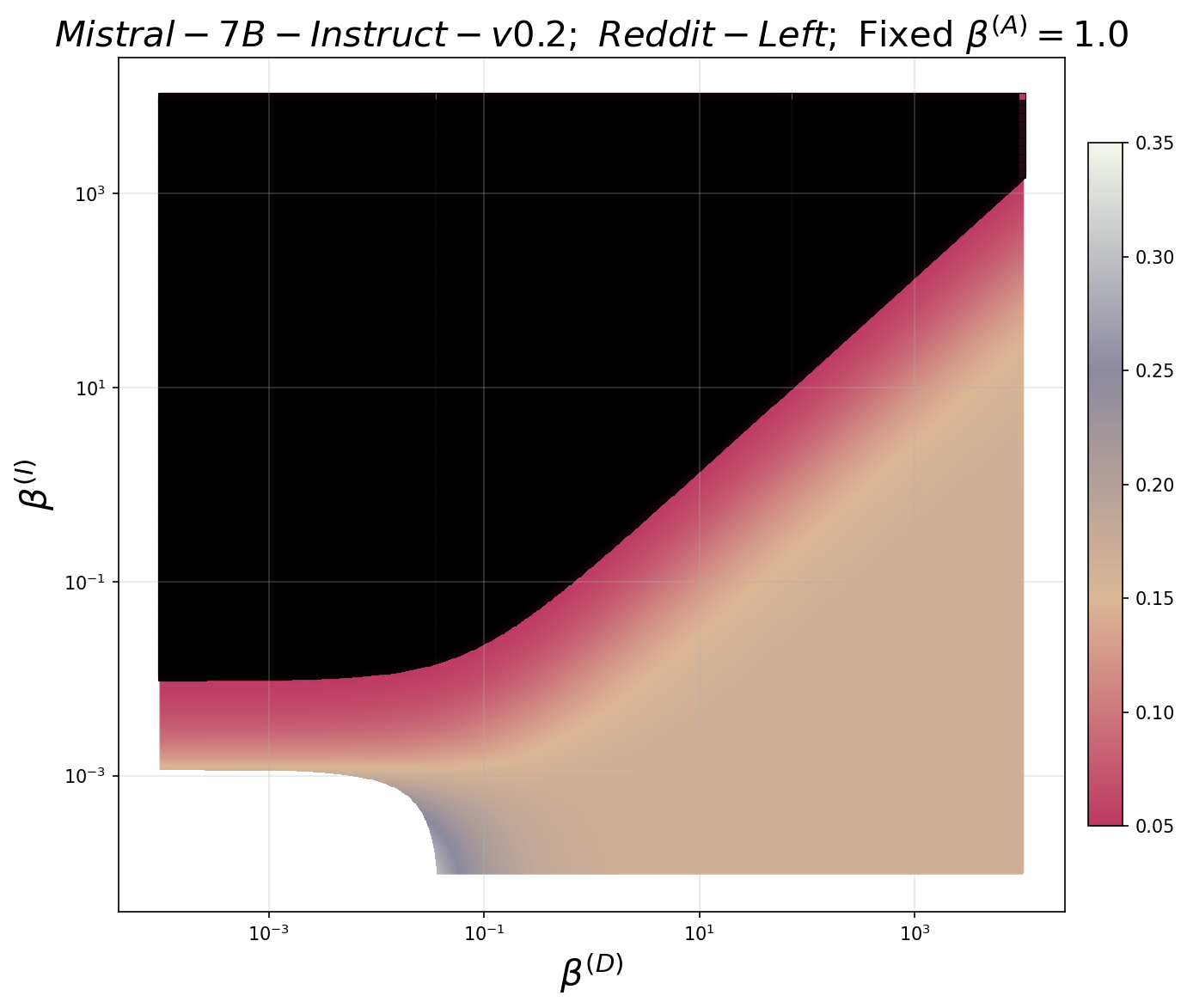}
    \includegraphics[width=0.25\linewidth]{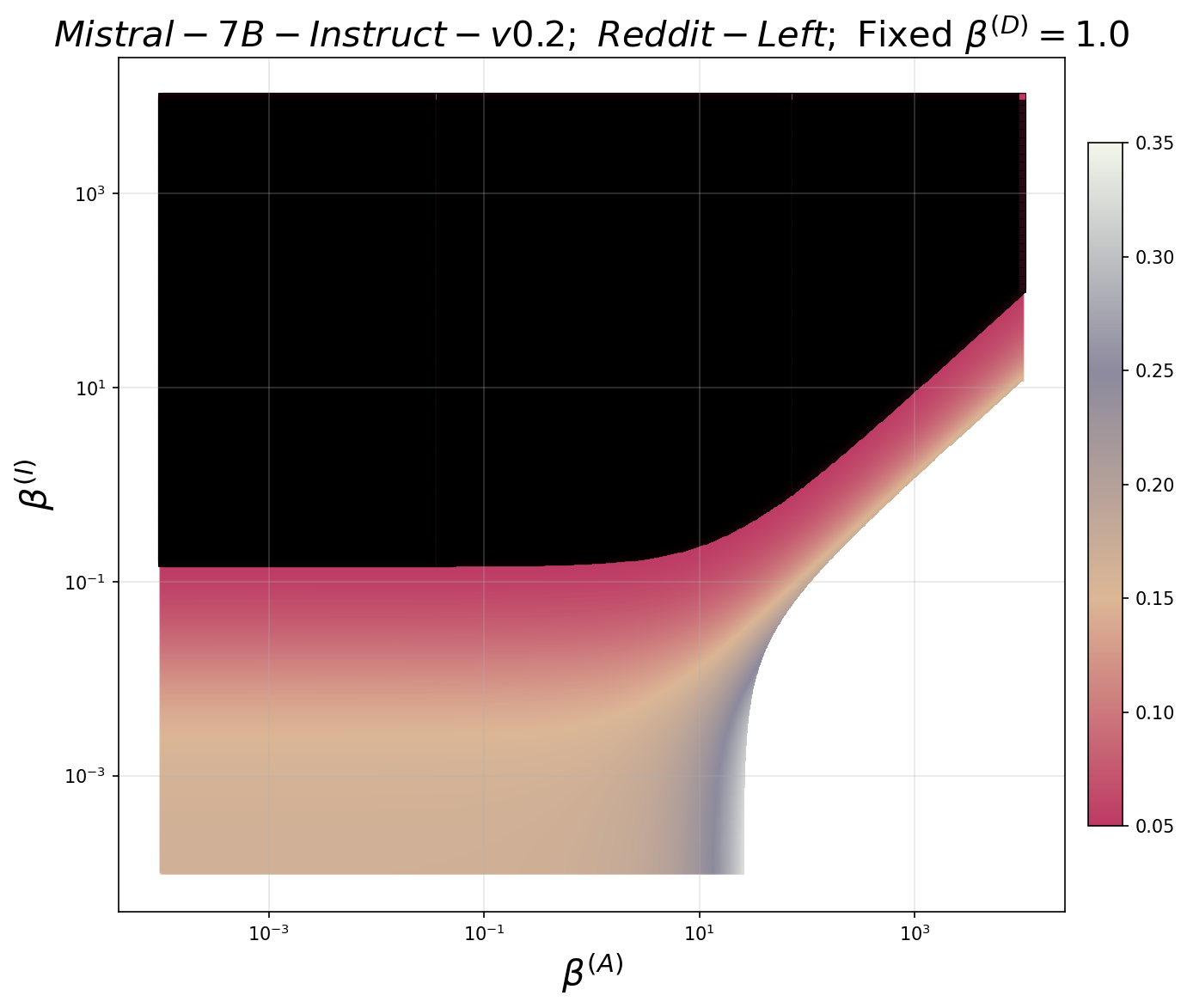}
    \includegraphics[width=0.25\linewidth]{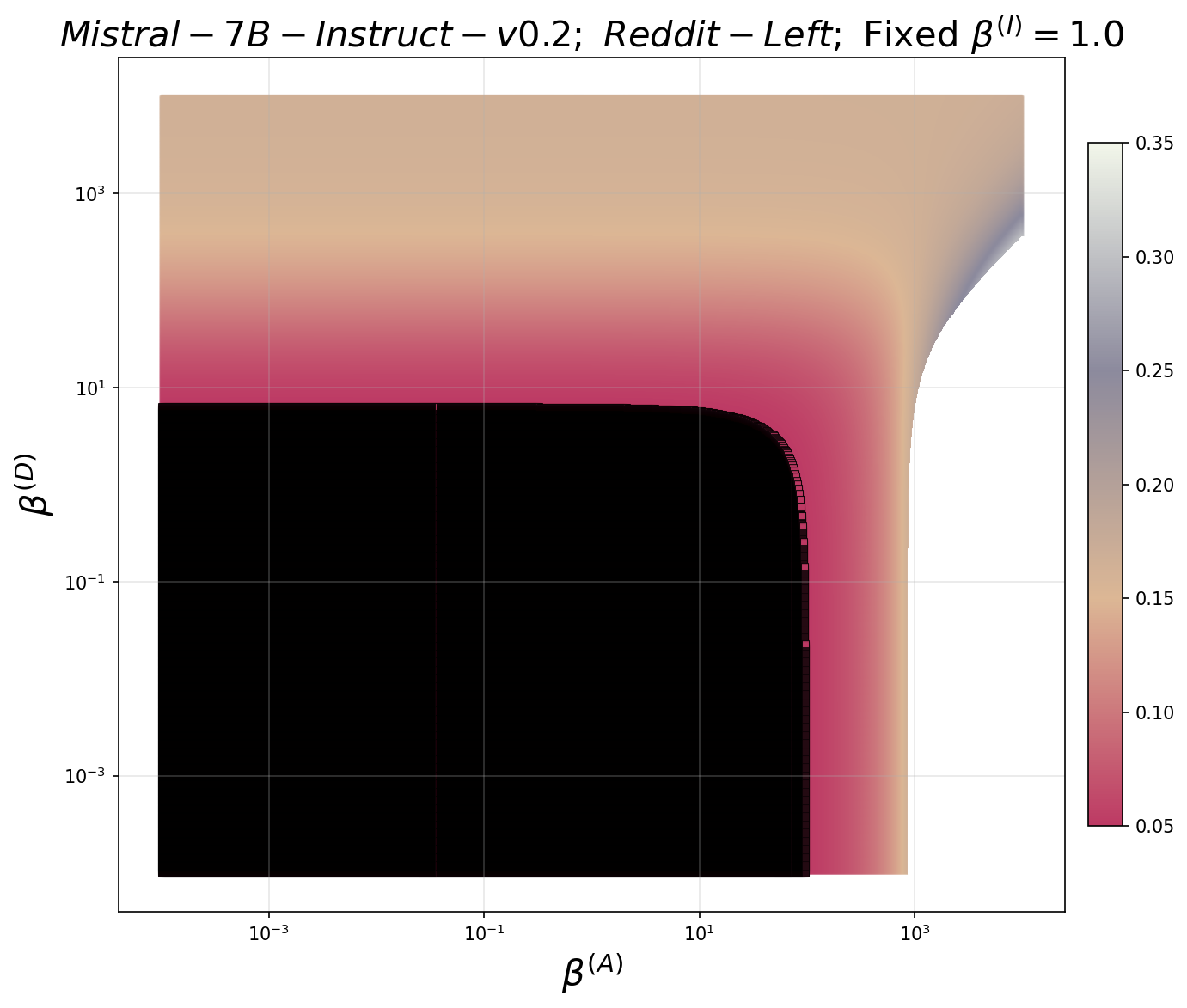}

    \includegraphics[width=0.25\linewidth]{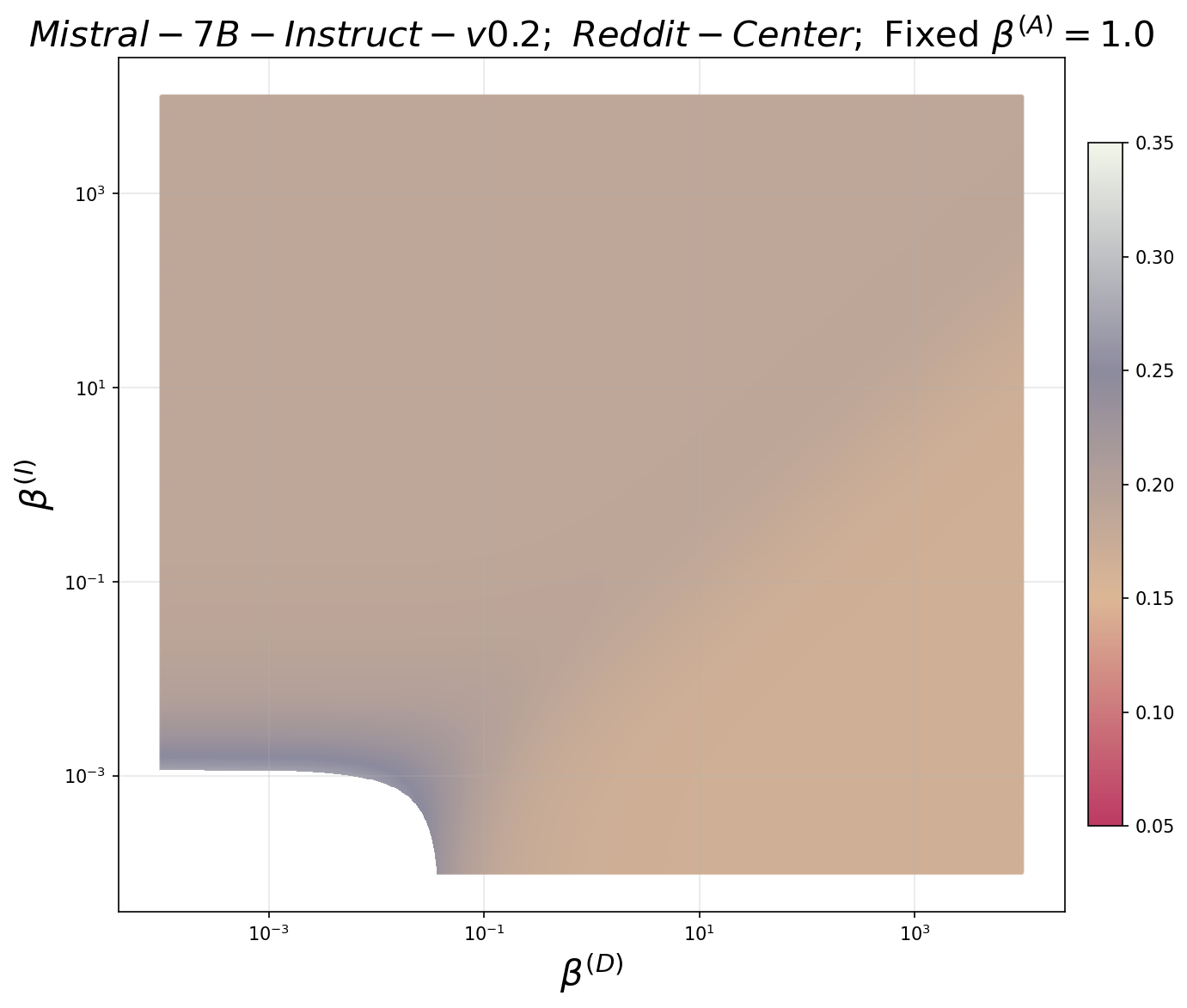}
    \includegraphics[width=0.25\linewidth]{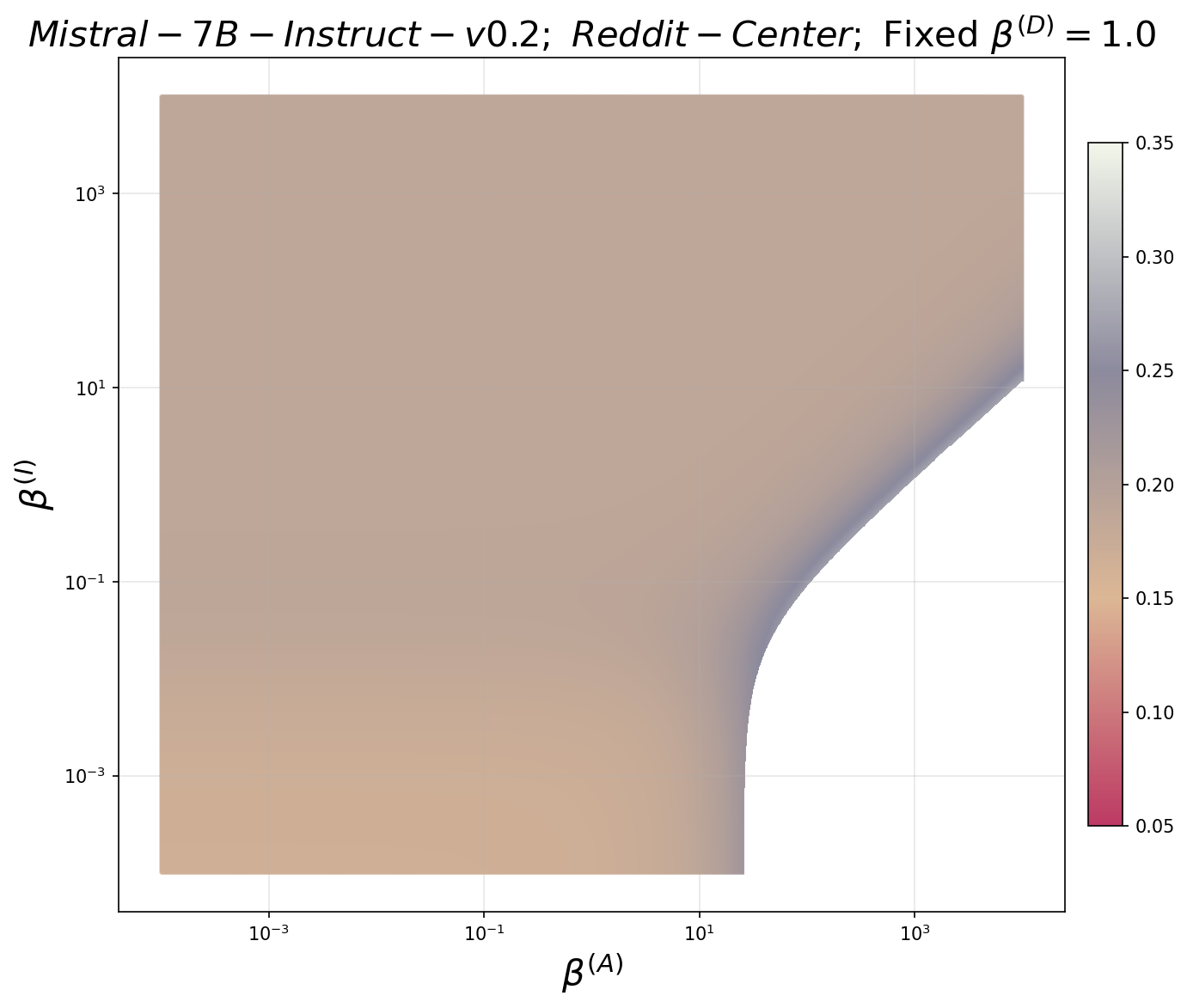}
    \includegraphics[width=0.25\linewidth]{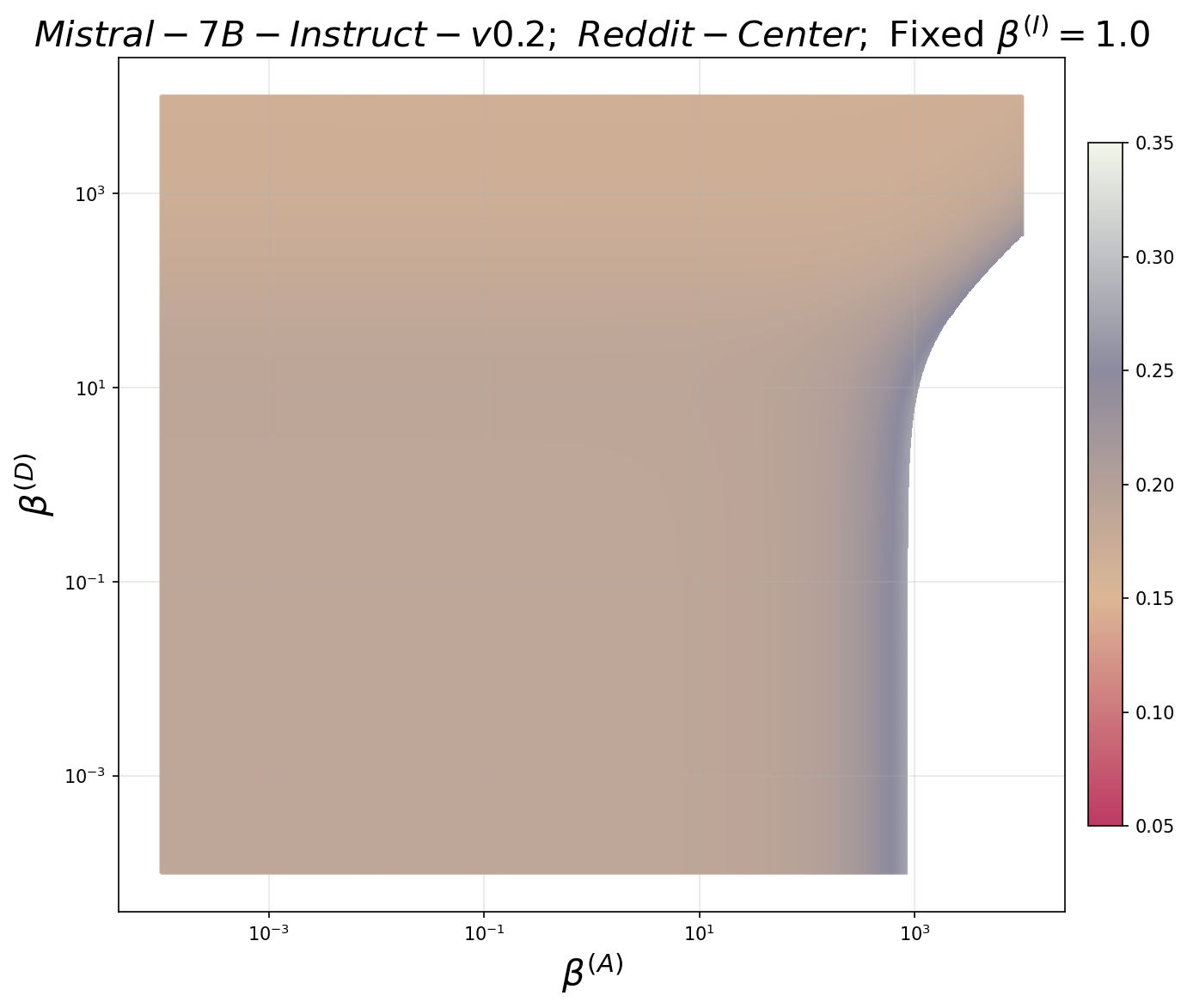}

    \includegraphics[width=0.25\linewidth]{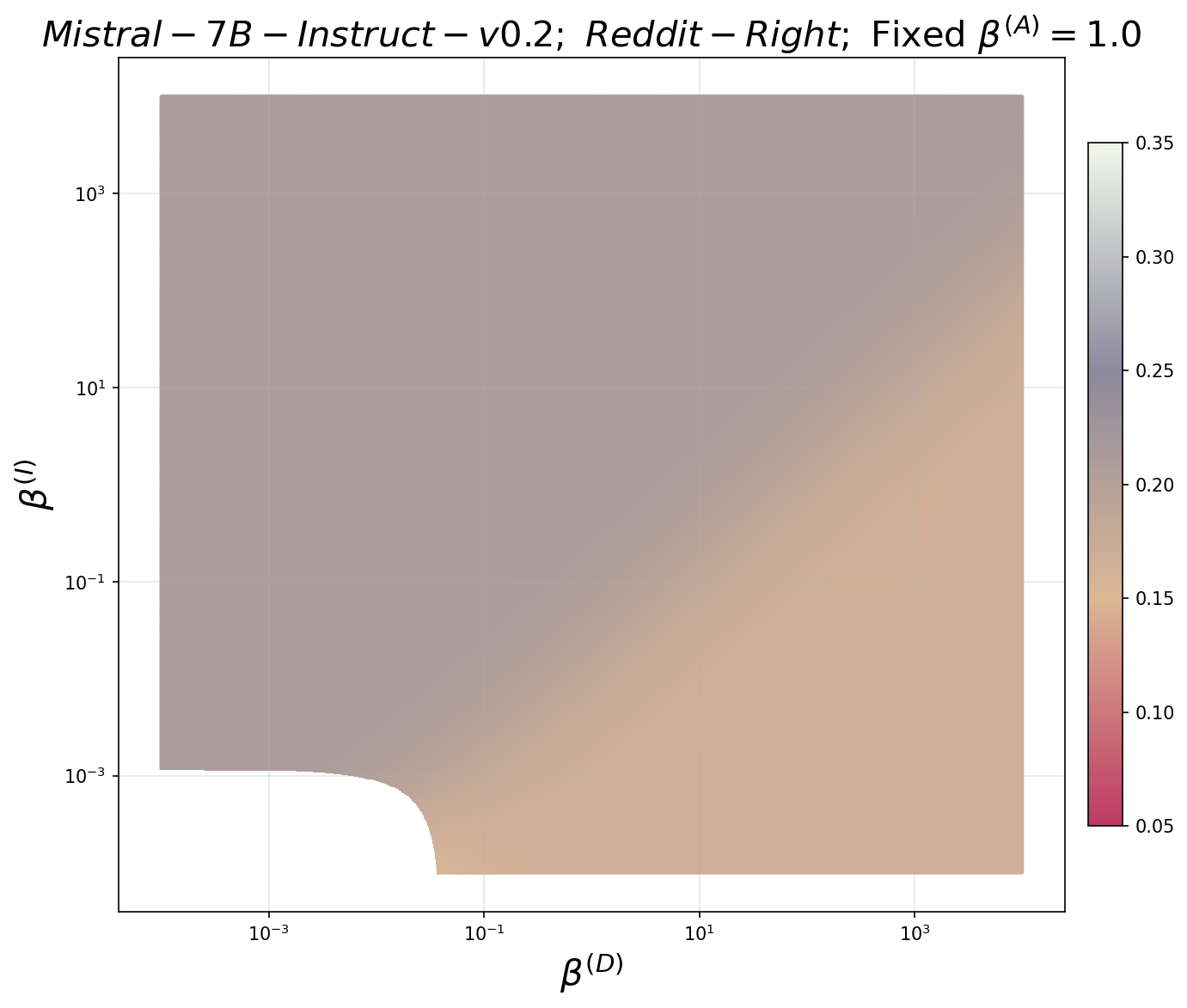}
    \includegraphics[width=0.25\linewidth]{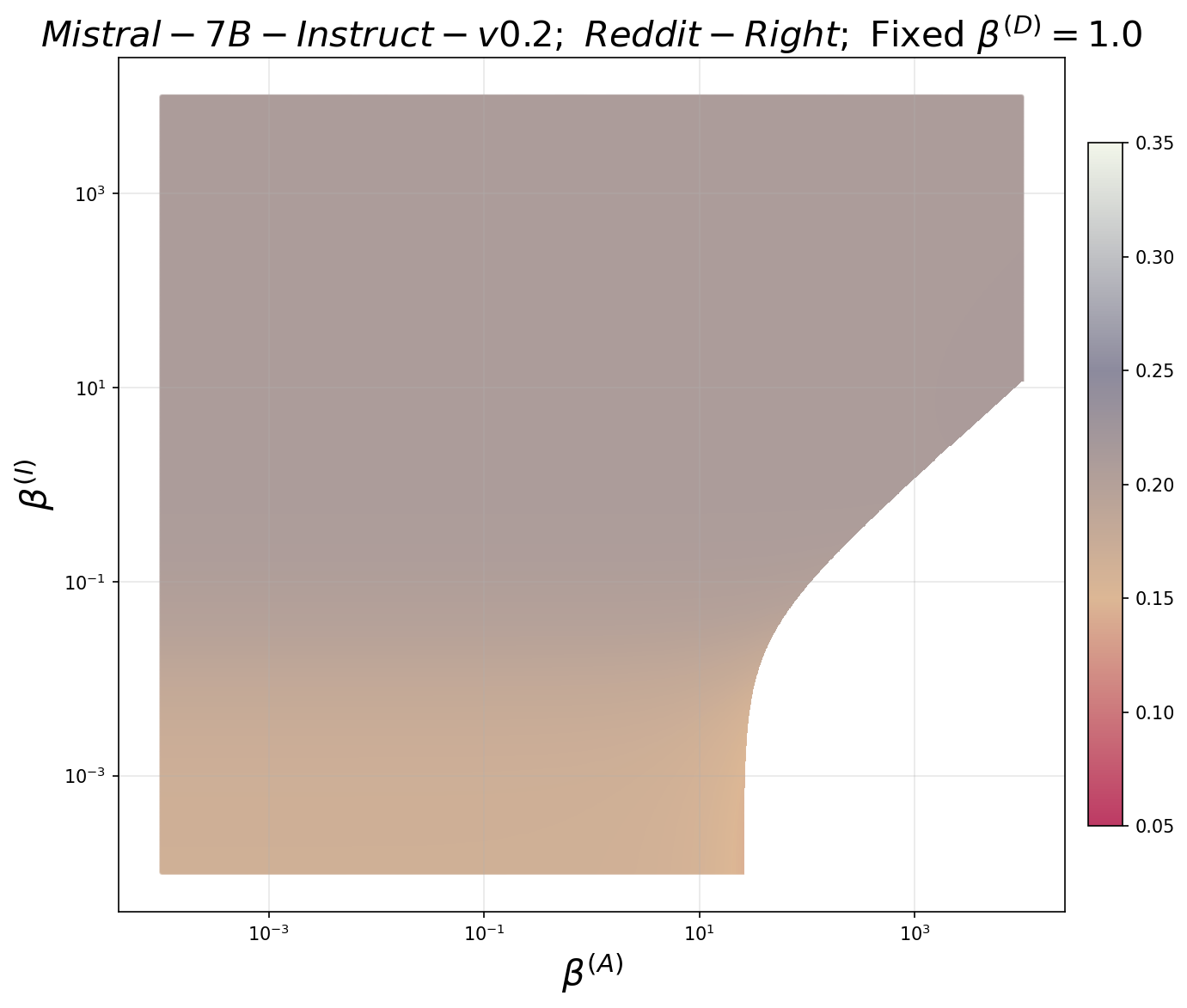}
    \includegraphics[width=0.25\linewidth]{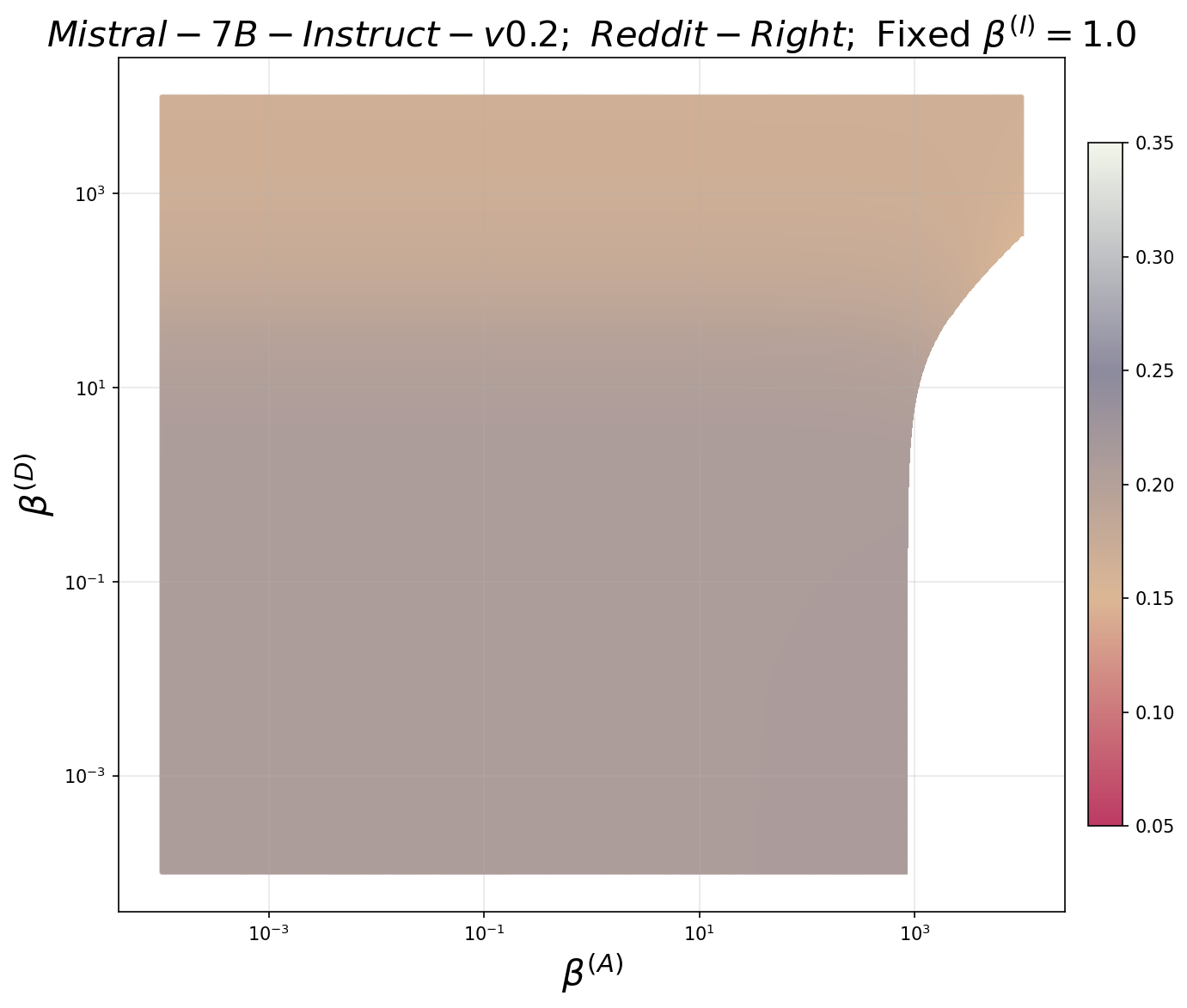}
    
    \caption{Political exclusion of \emph{Mistral-7B-Instruct-v0.2} on the $\mathtt{POLITICS}$ dataset.}
    \label{fig:Reddit}
\end{figure}


\begin{figure}
    \centering
    \includegraphics[width=0.3\linewidth]{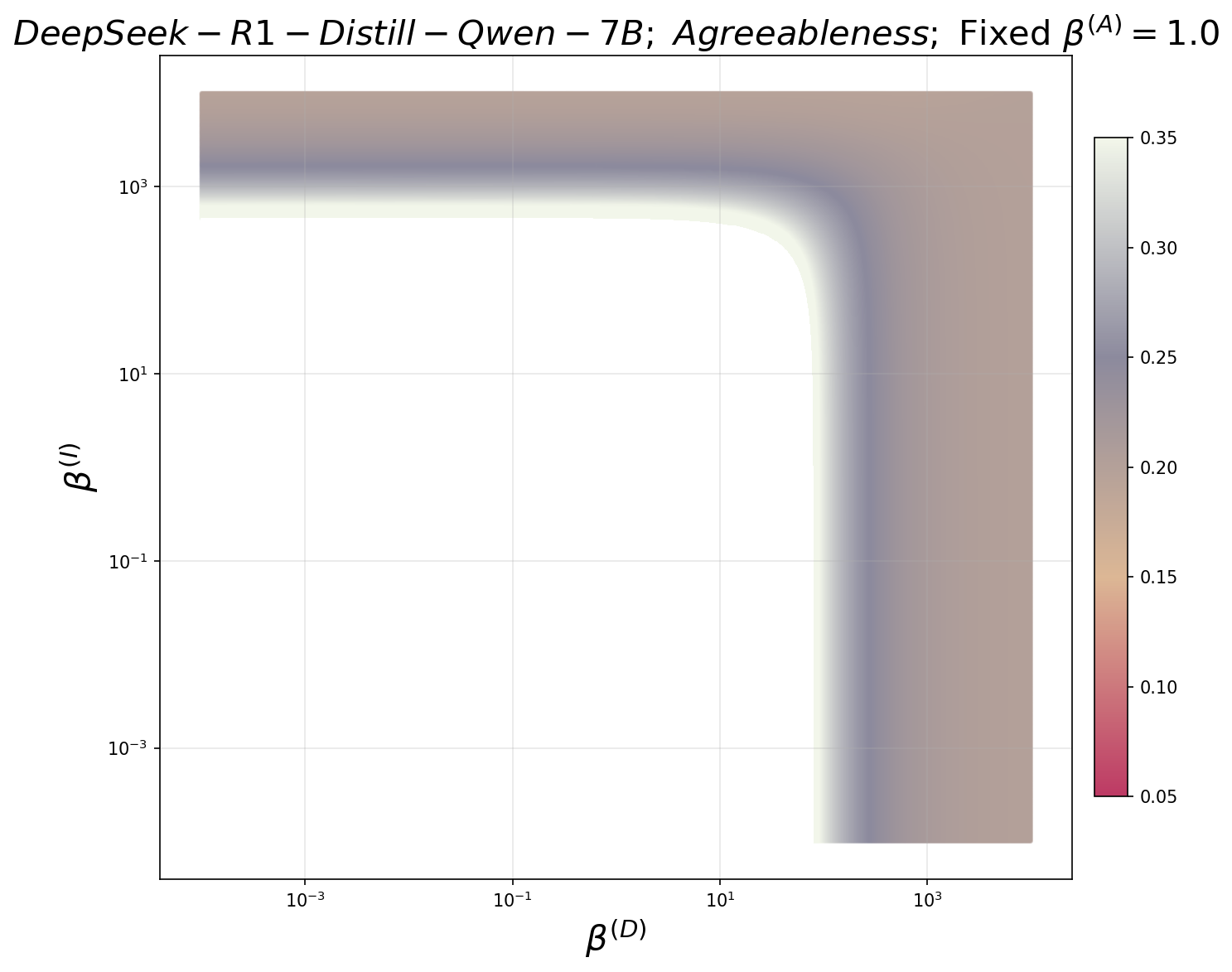}
    \includegraphics[width=0.3\linewidth]{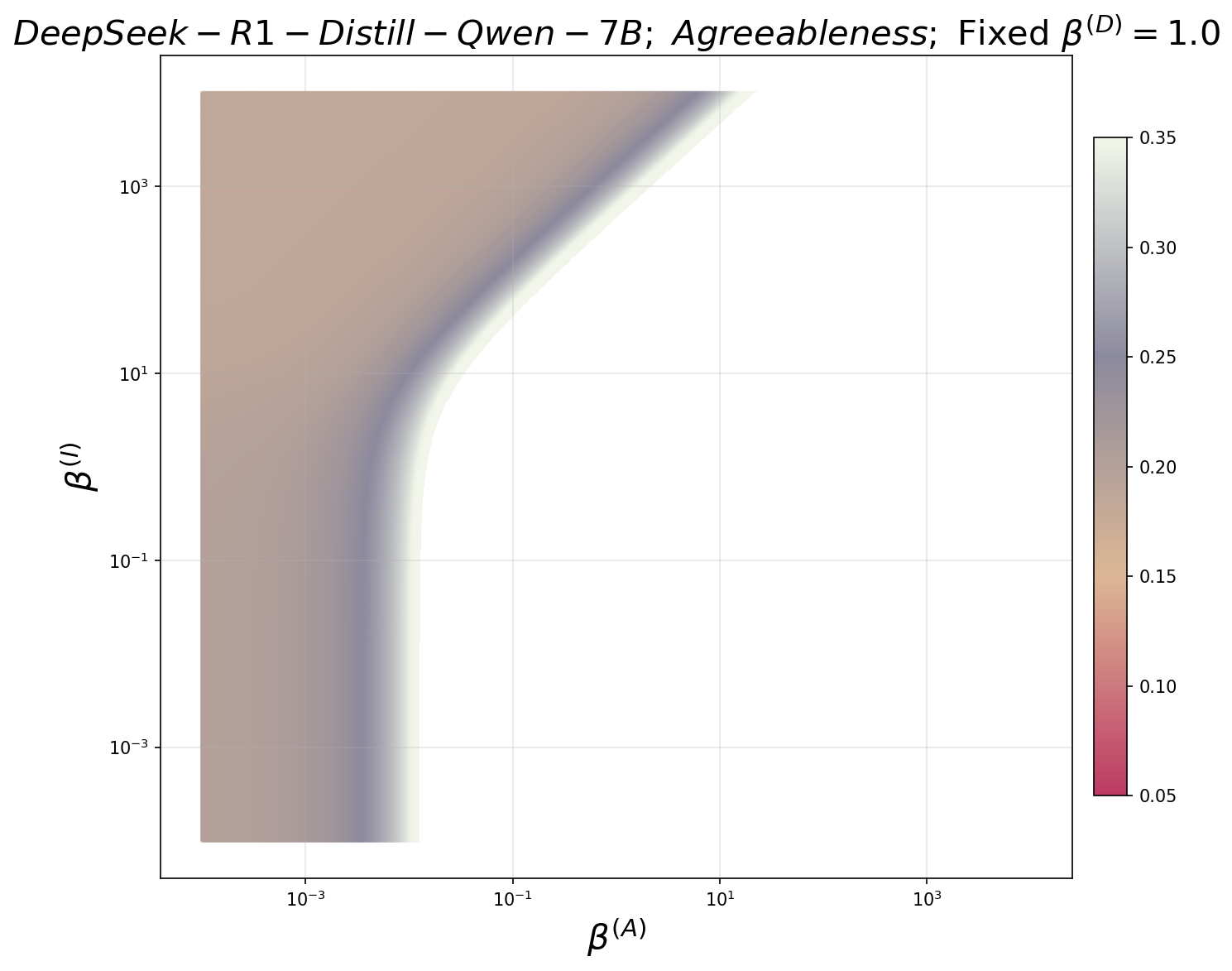}
    \includegraphics[width=0.3\linewidth]{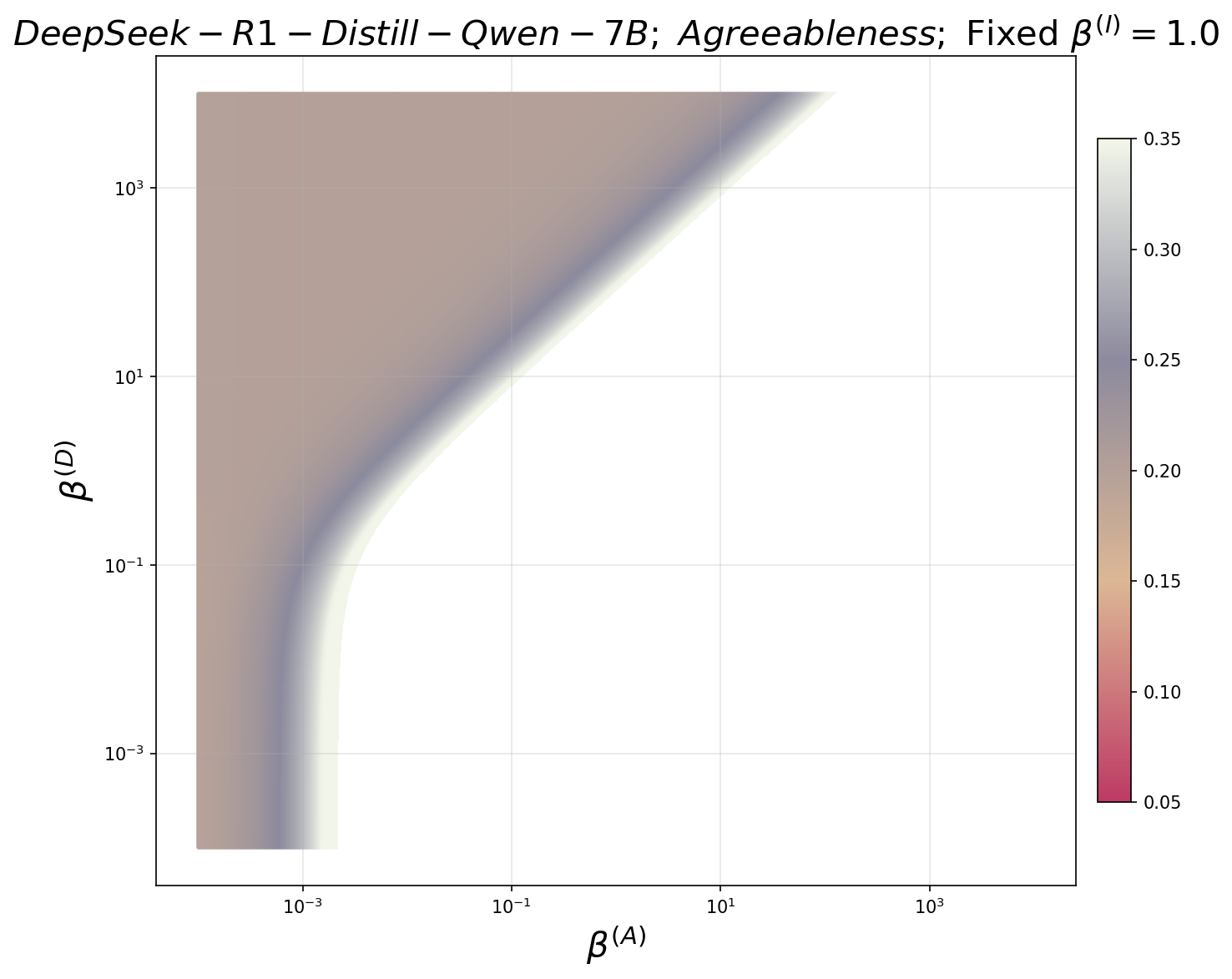}

    \includegraphics[width=0.3\linewidth]{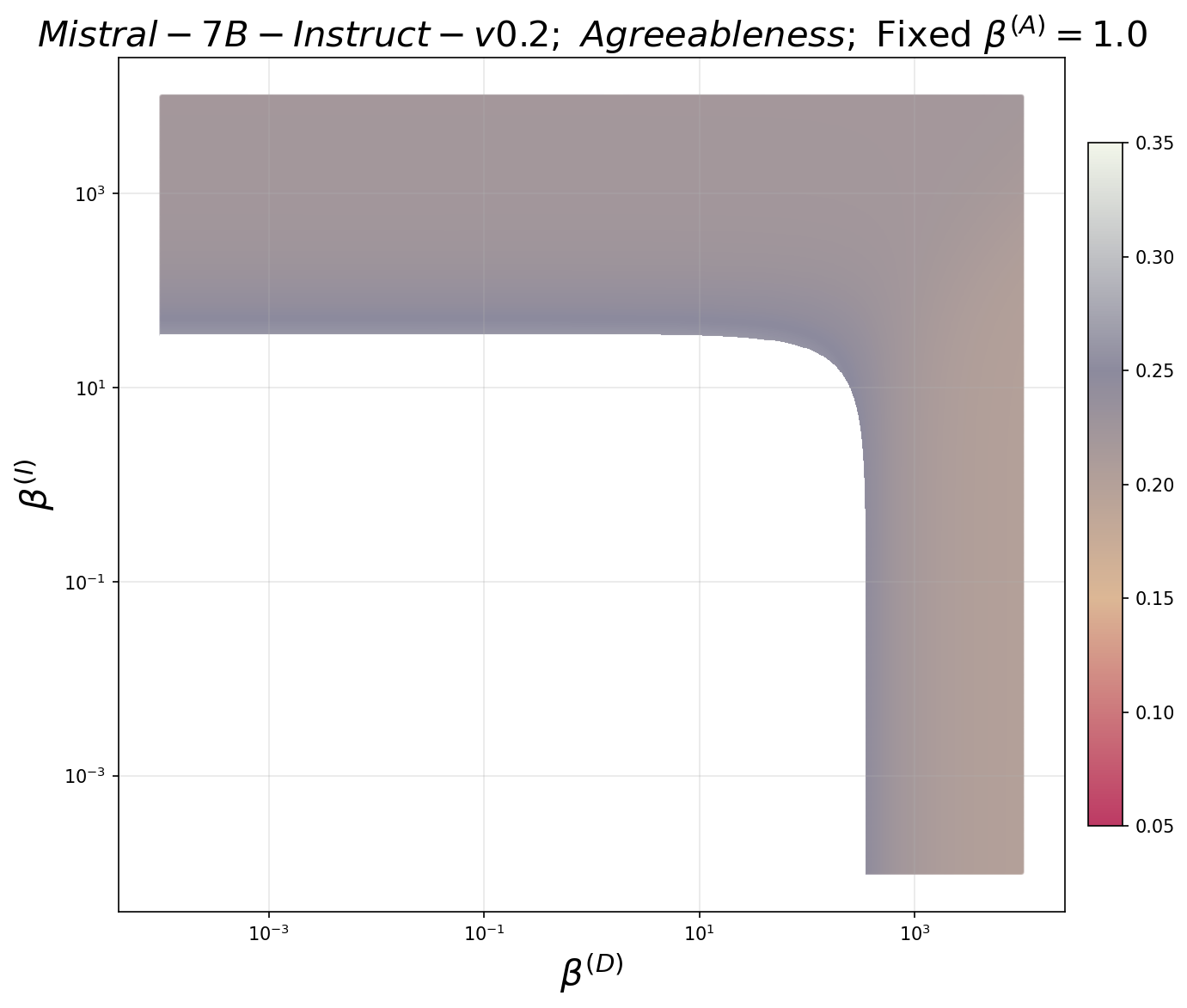}
    \includegraphics[width=0.3\linewidth]{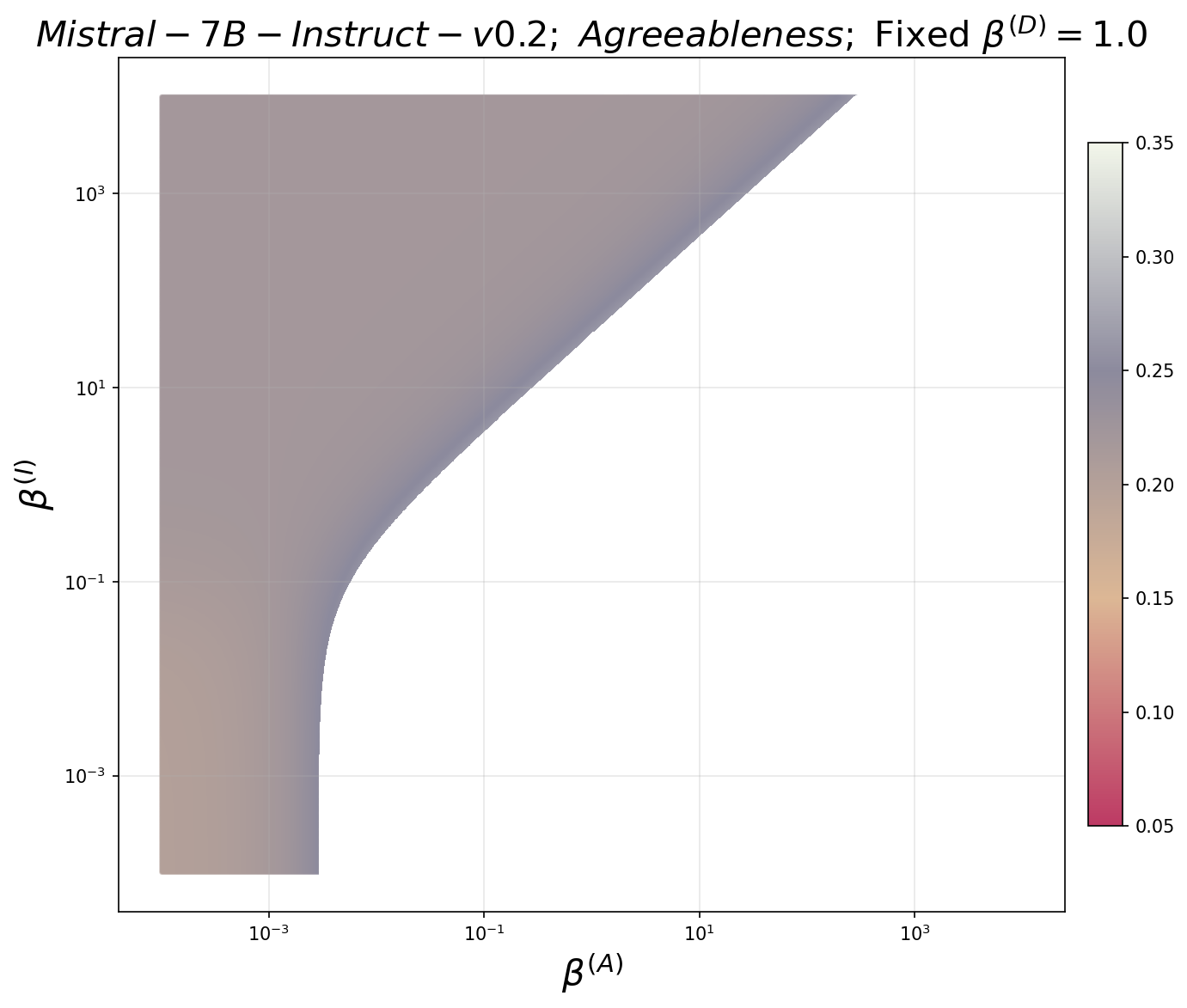}
    \includegraphics[width=0.3\linewidth]{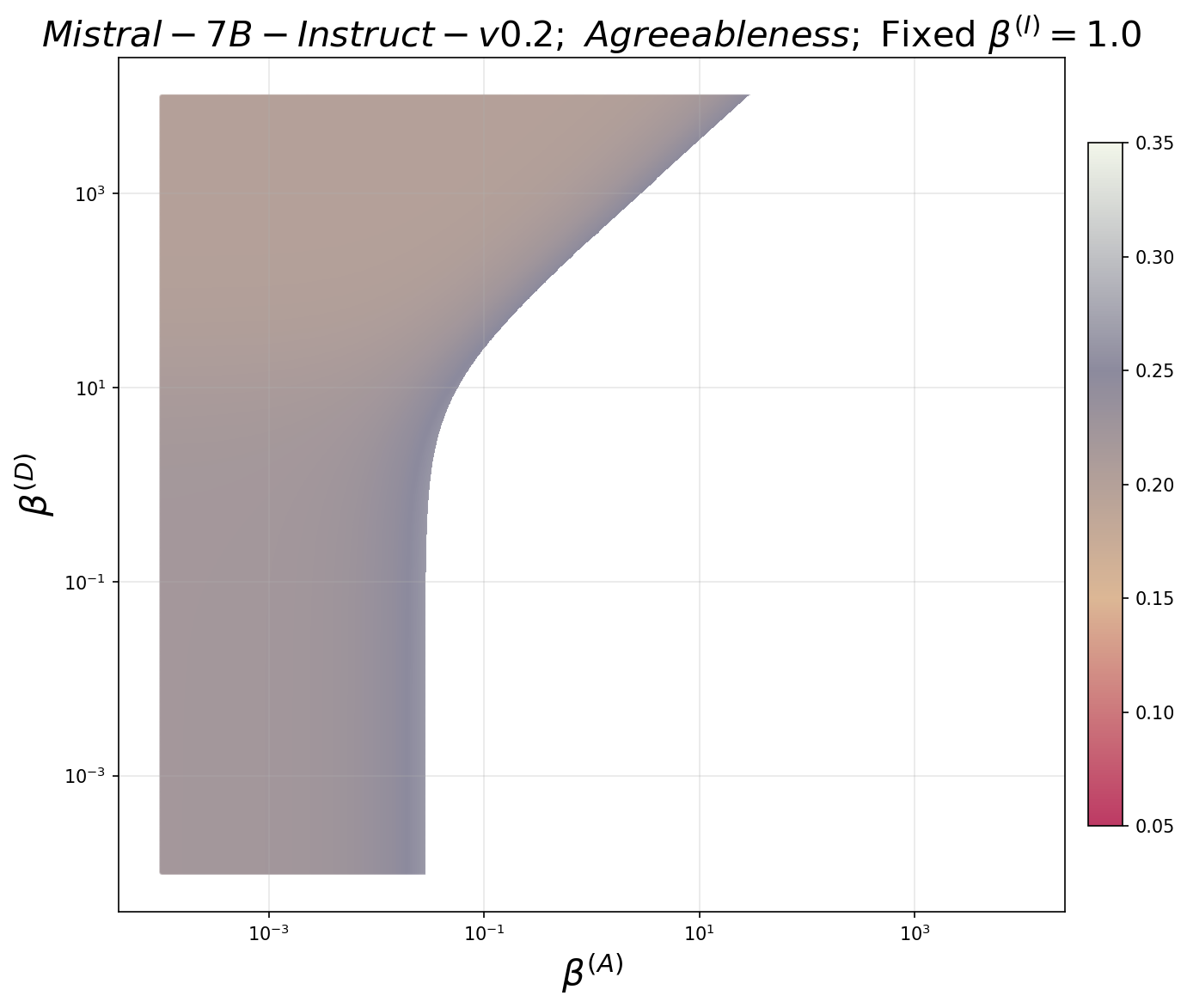}

    \includegraphics[width=0.3\linewidth]{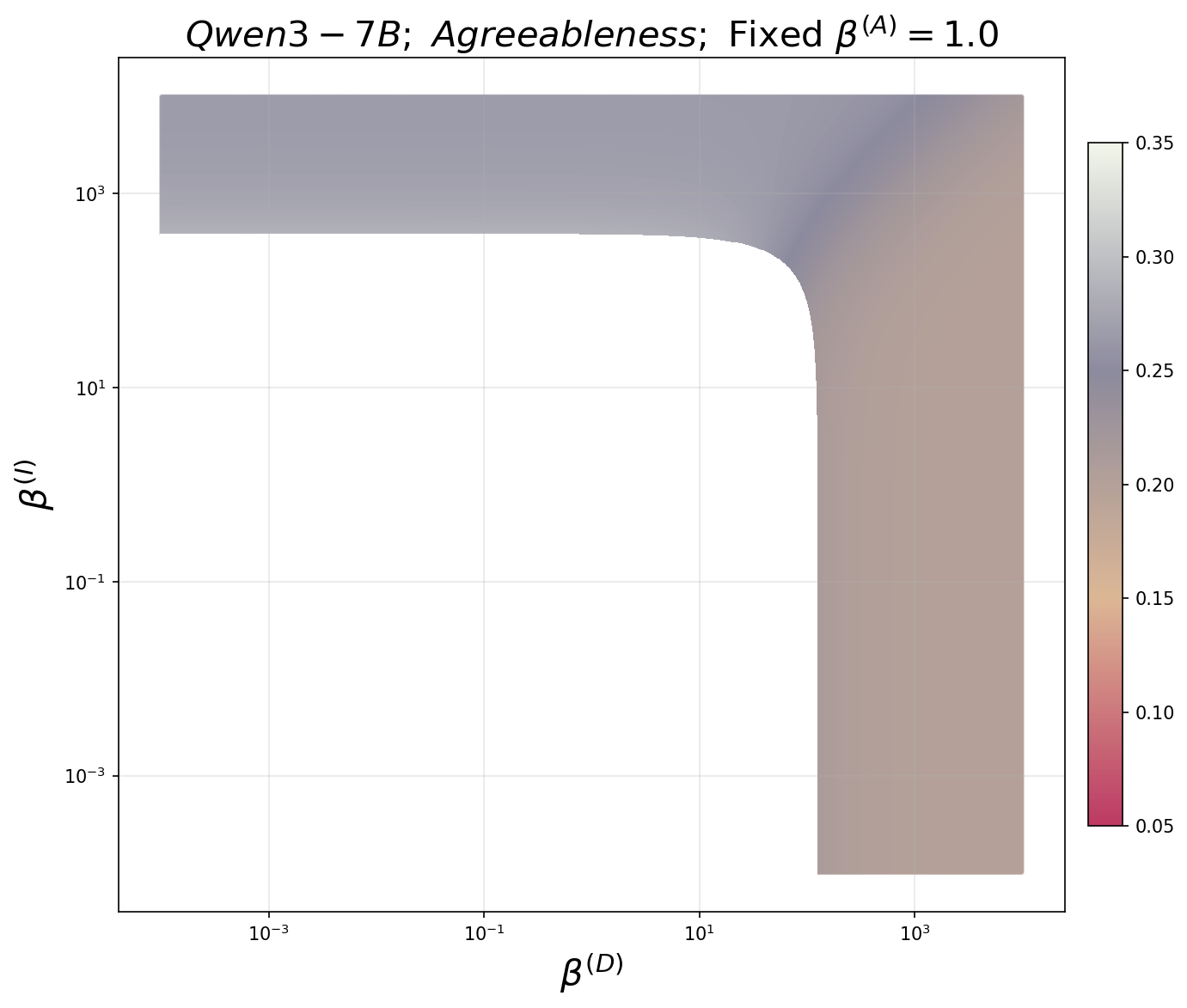}
    \includegraphics[width=0.3\linewidth]{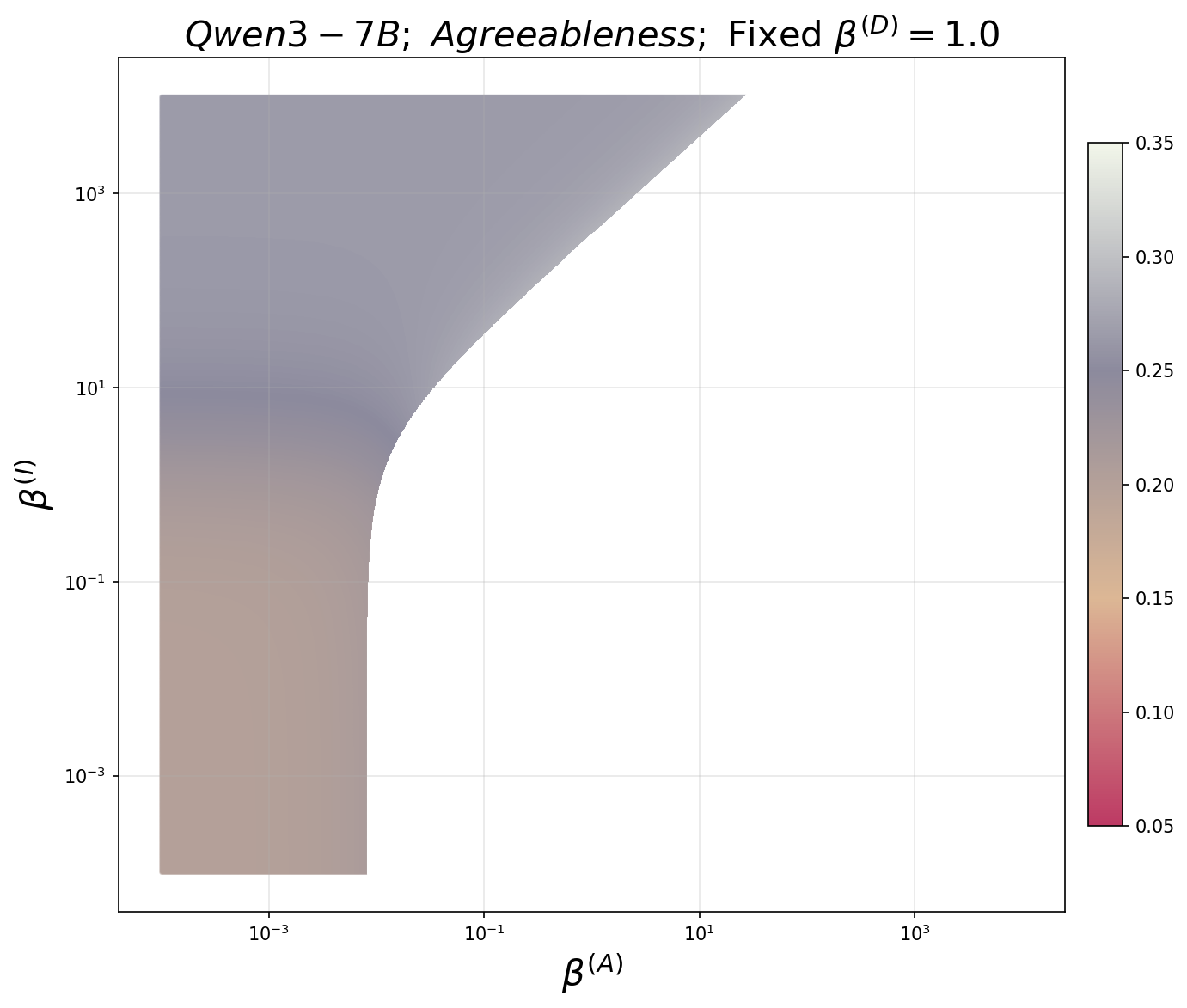}
    \includegraphics[width=0.3\linewidth]{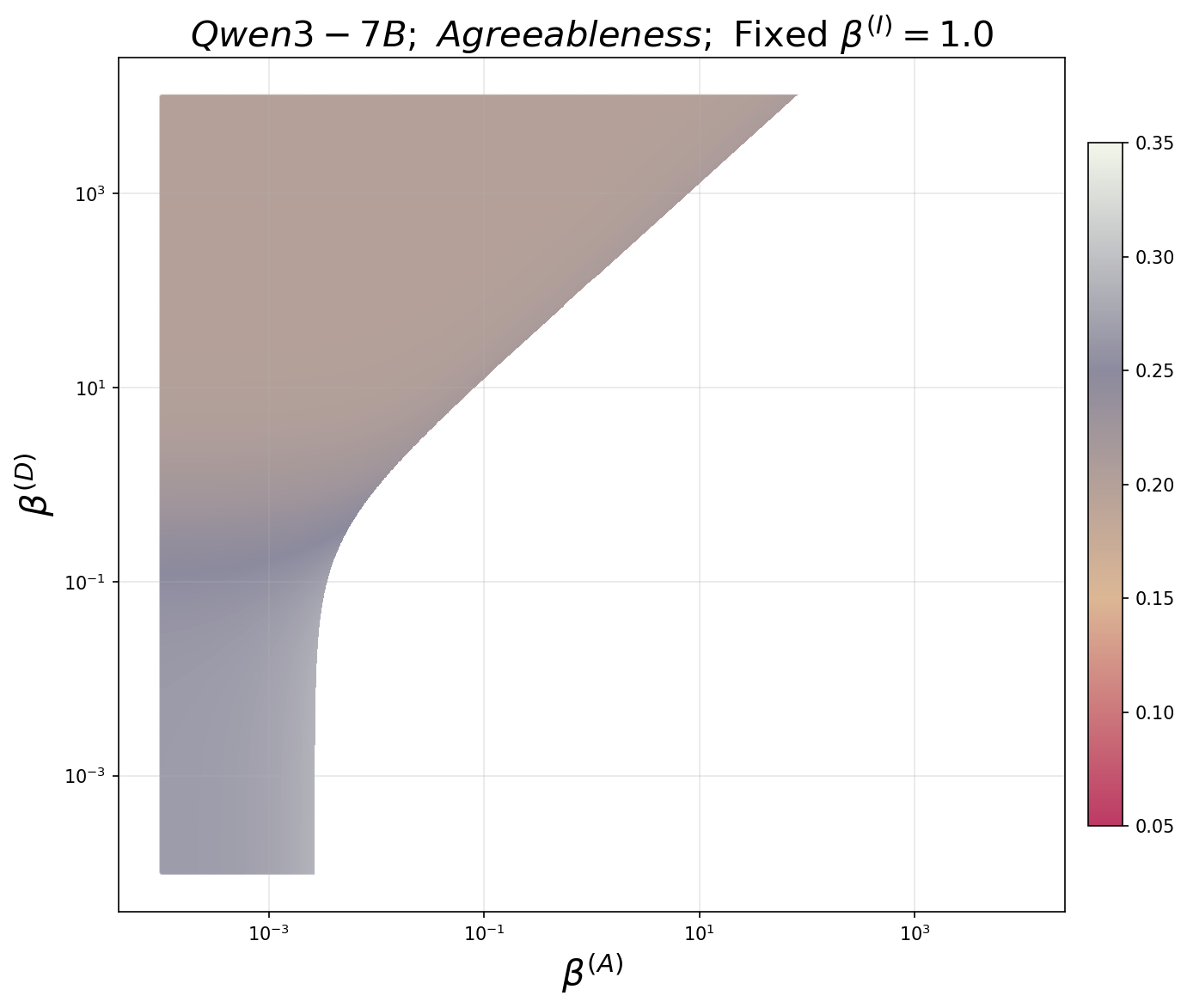}

    \includegraphics[width=0.3\linewidth]{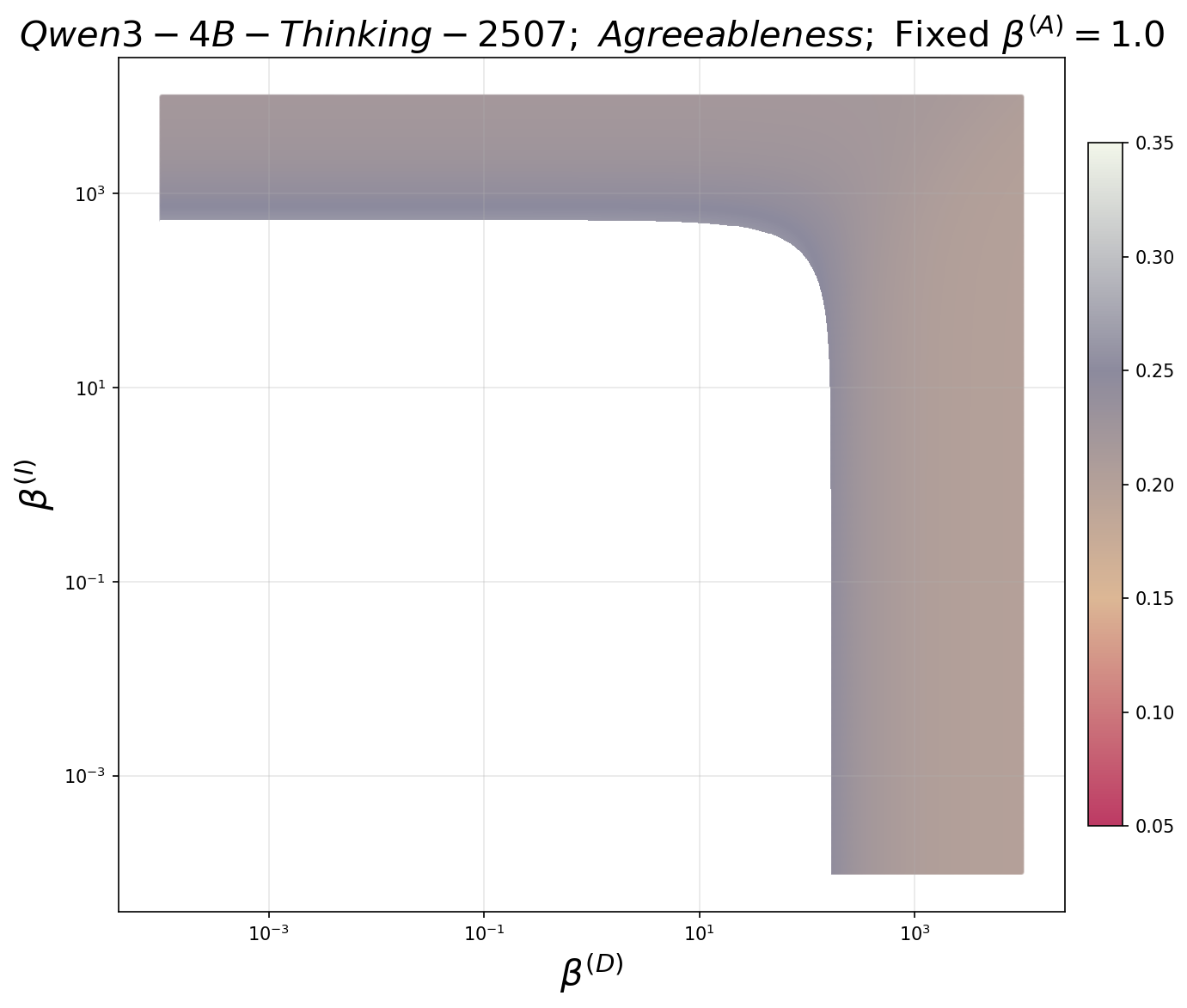}
    \includegraphics[width=0.3\linewidth]{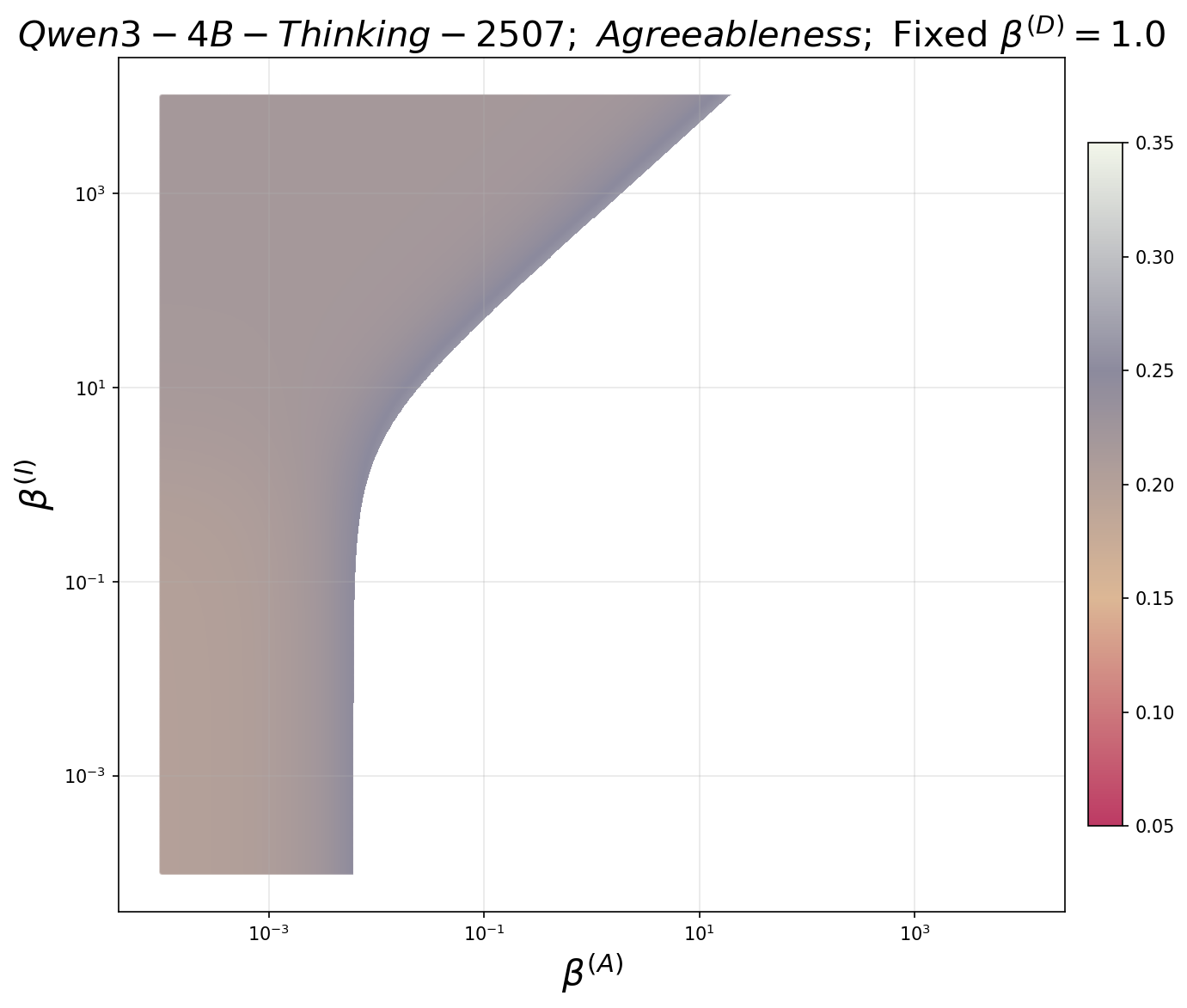}
    \includegraphics[width=0.3\linewidth]{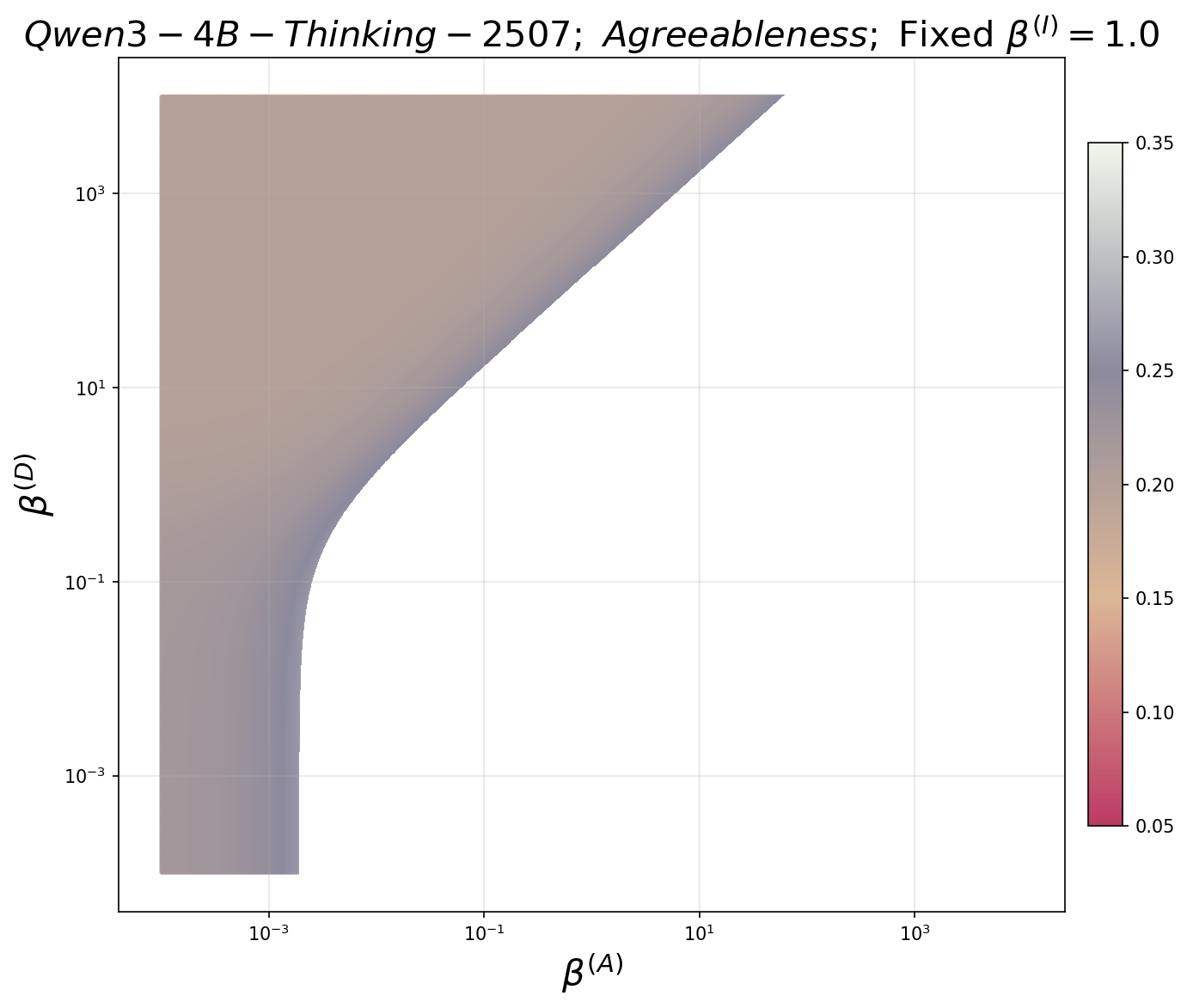}

    \includegraphics[width=0.3\linewidth]{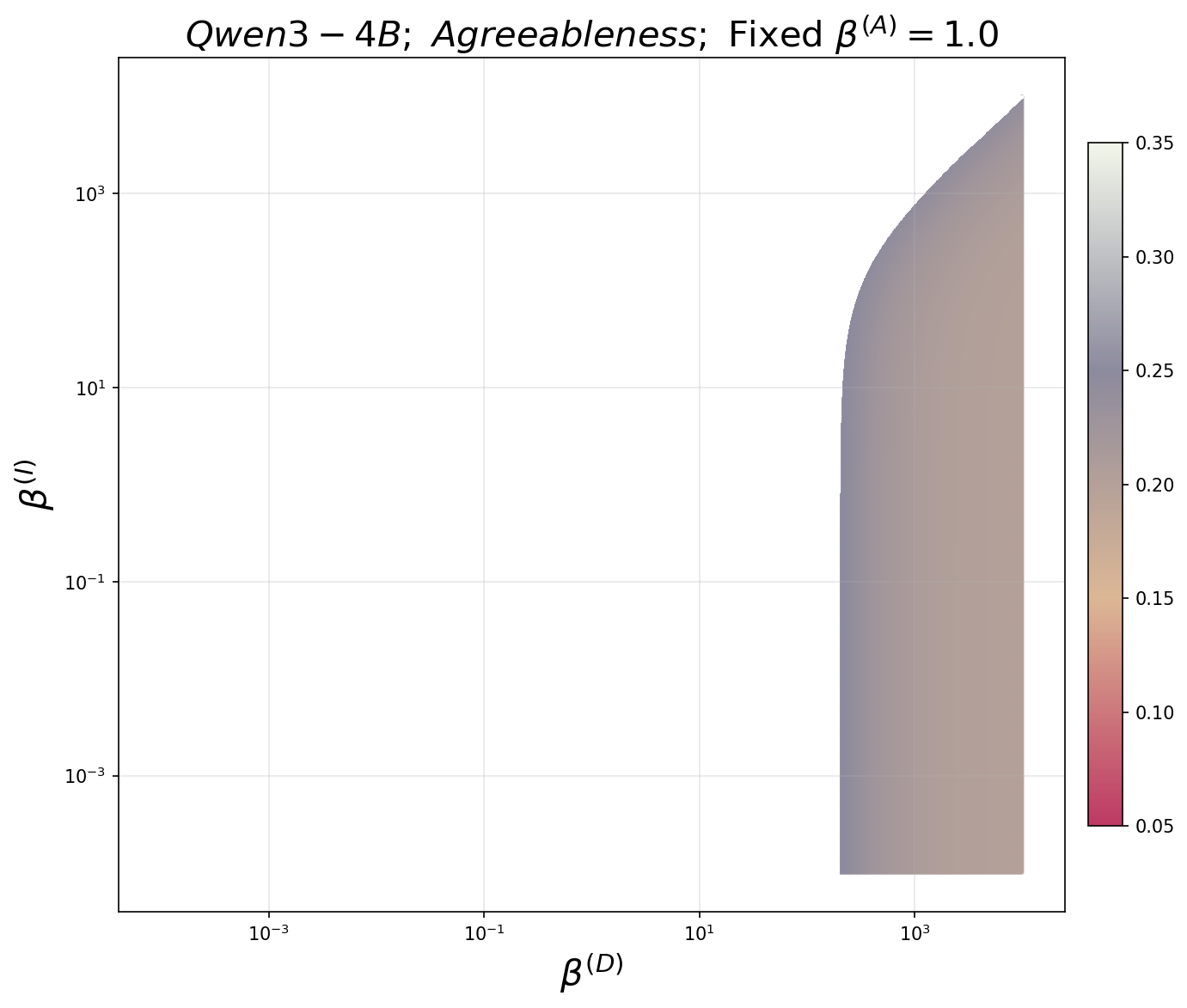}
    \includegraphics[width=0.3\linewidth]{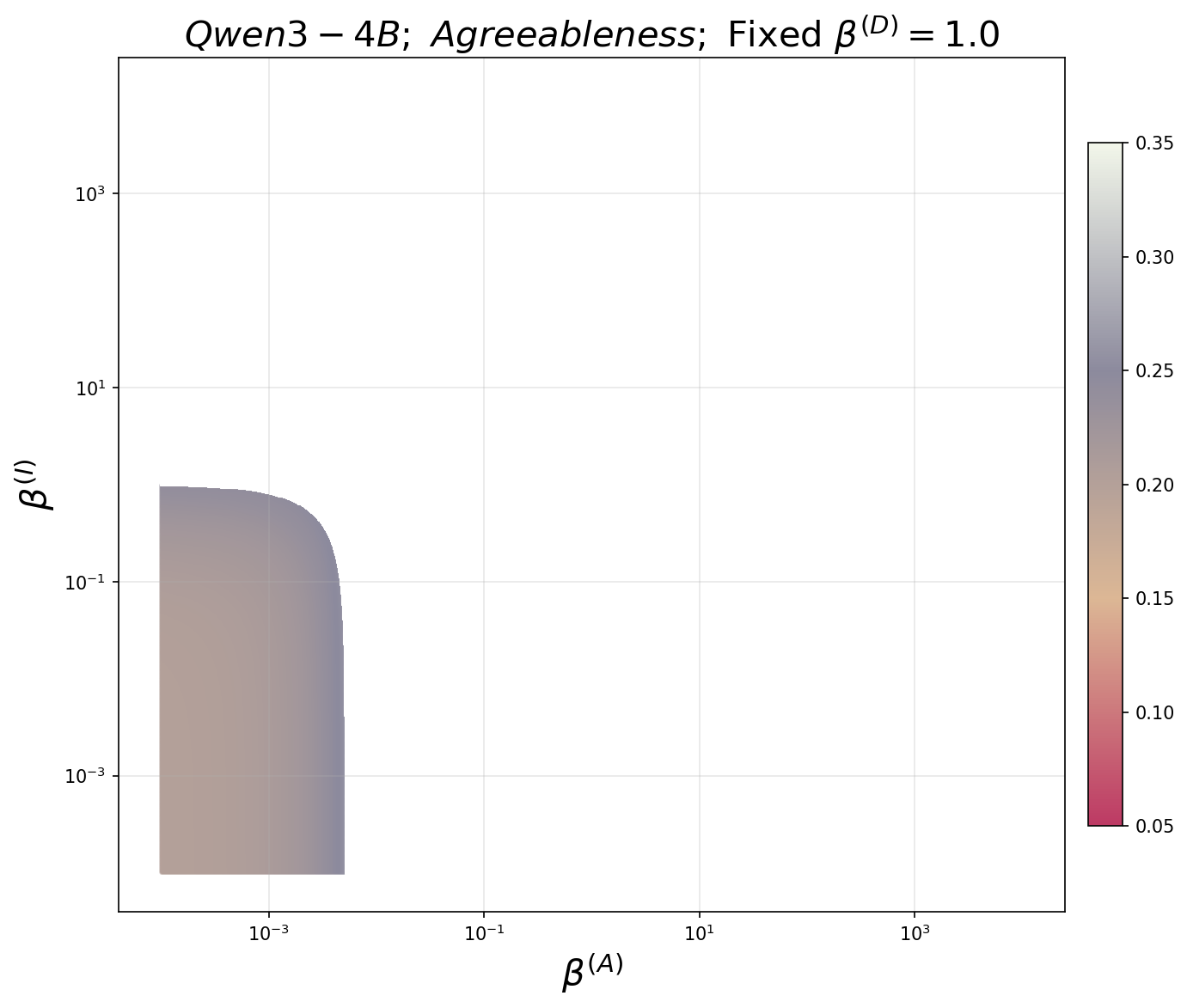}
    \includegraphics[width=0.3\linewidth]{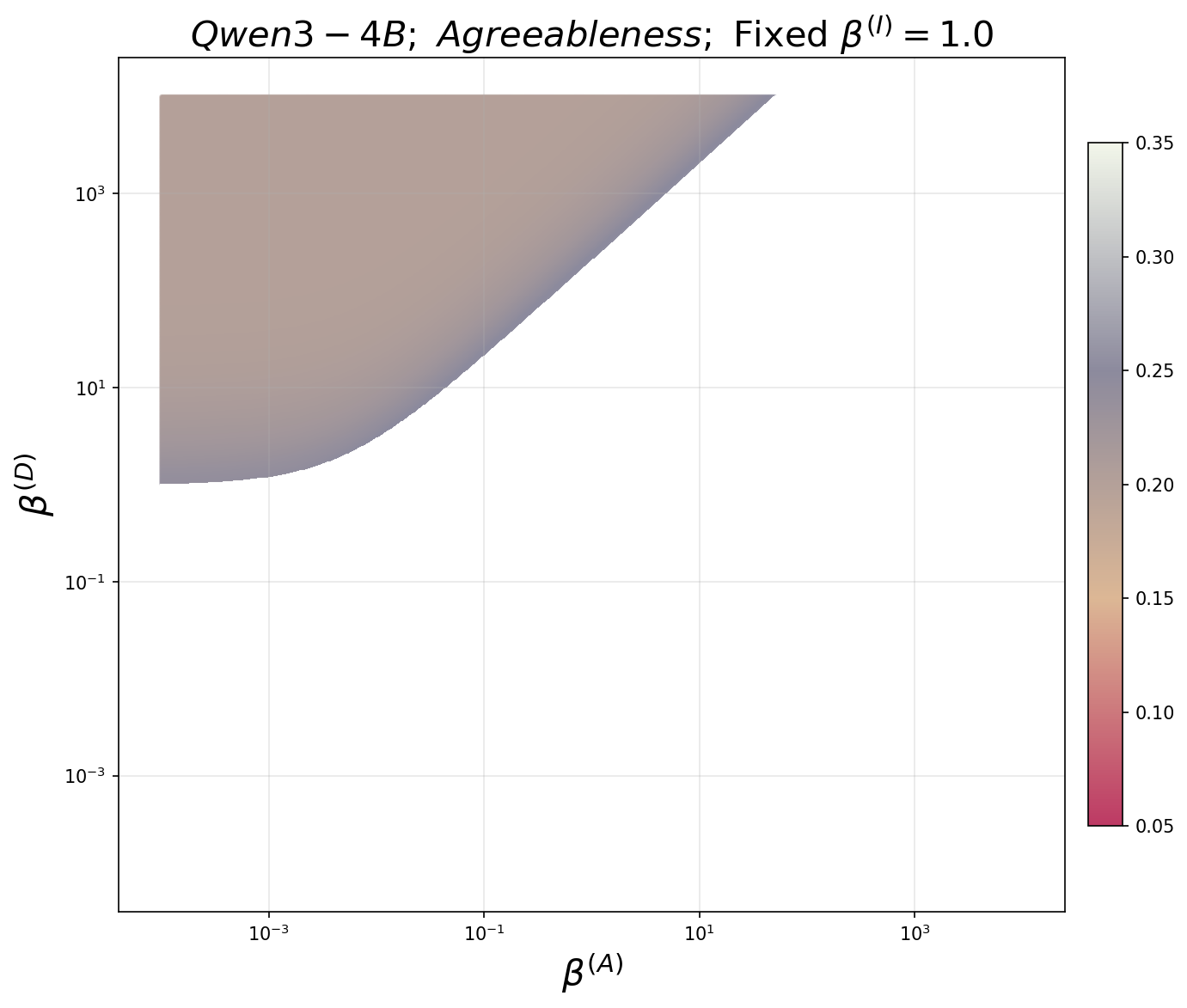}
    
    \caption{Political exclusion regarding the subpopulation \emph{Agreeableness} on the $\mathtt{Big\ Five}$ dataset.}
    \label{fig:Agreeableness}
\end{figure}

\begin{figure}
    \centering

    \includegraphics[width=0.3\linewidth]{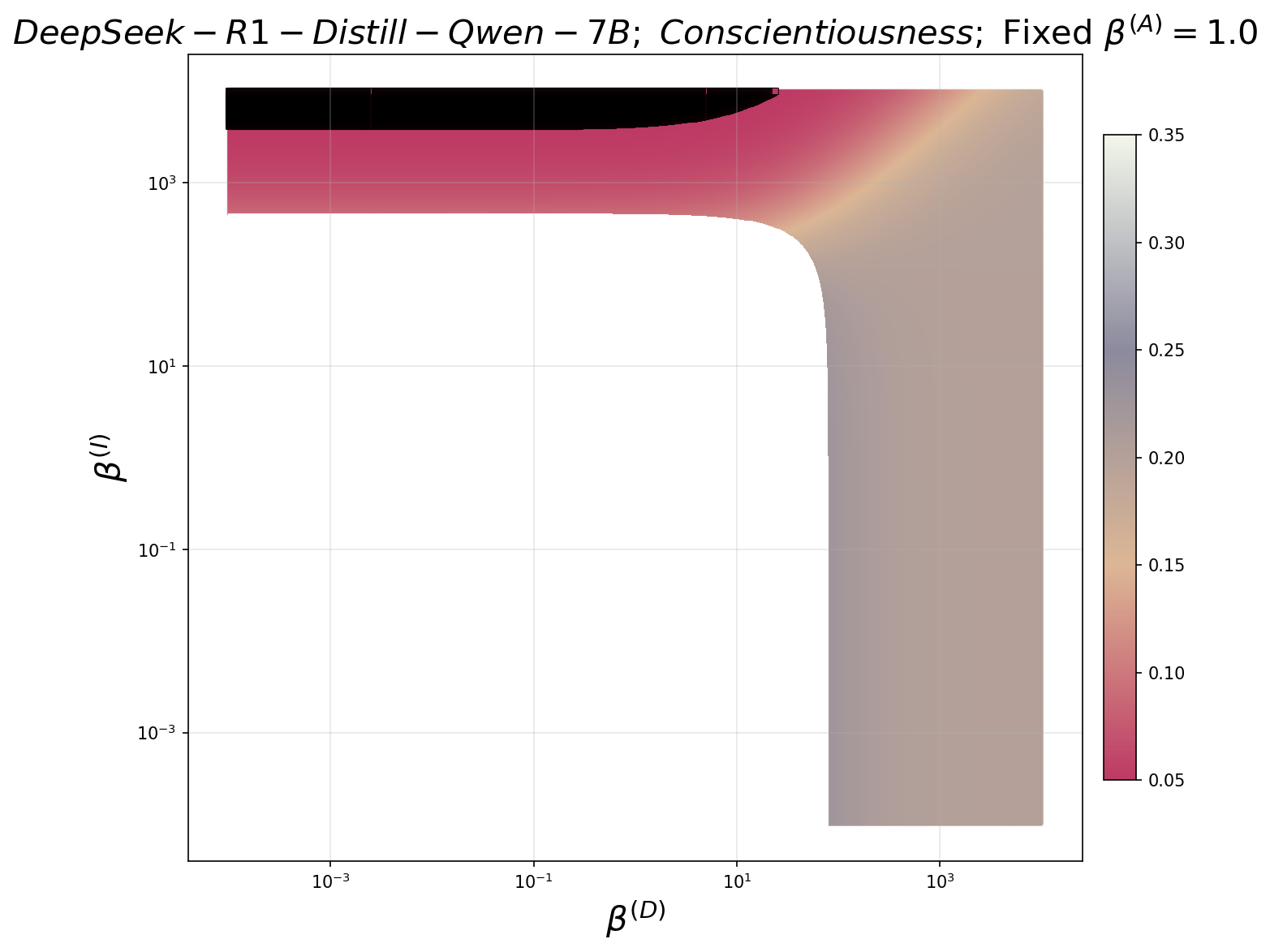}
    \includegraphics[width=0.3\linewidth]{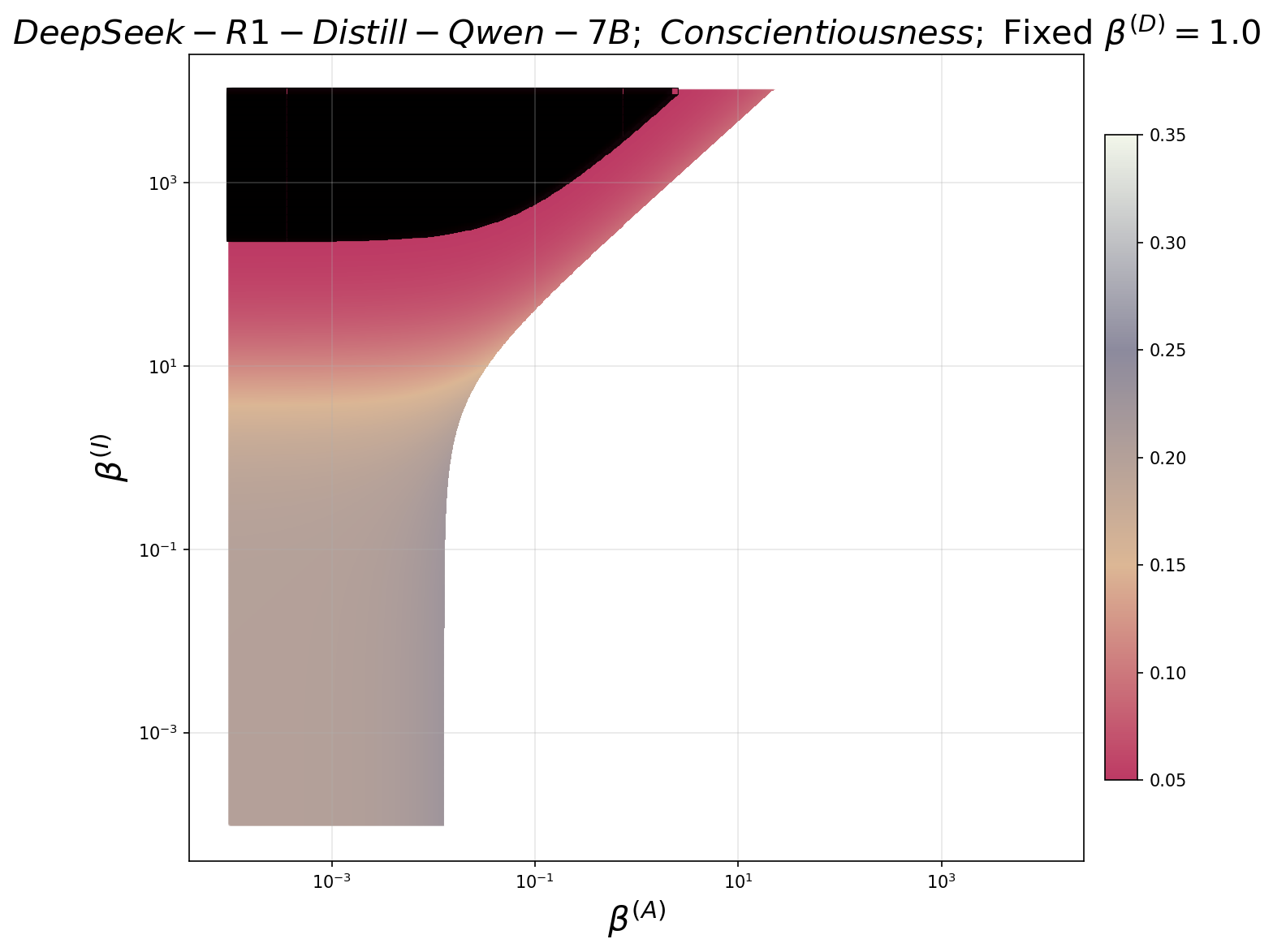}
    \includegraphics[width=0.3\linewidth]{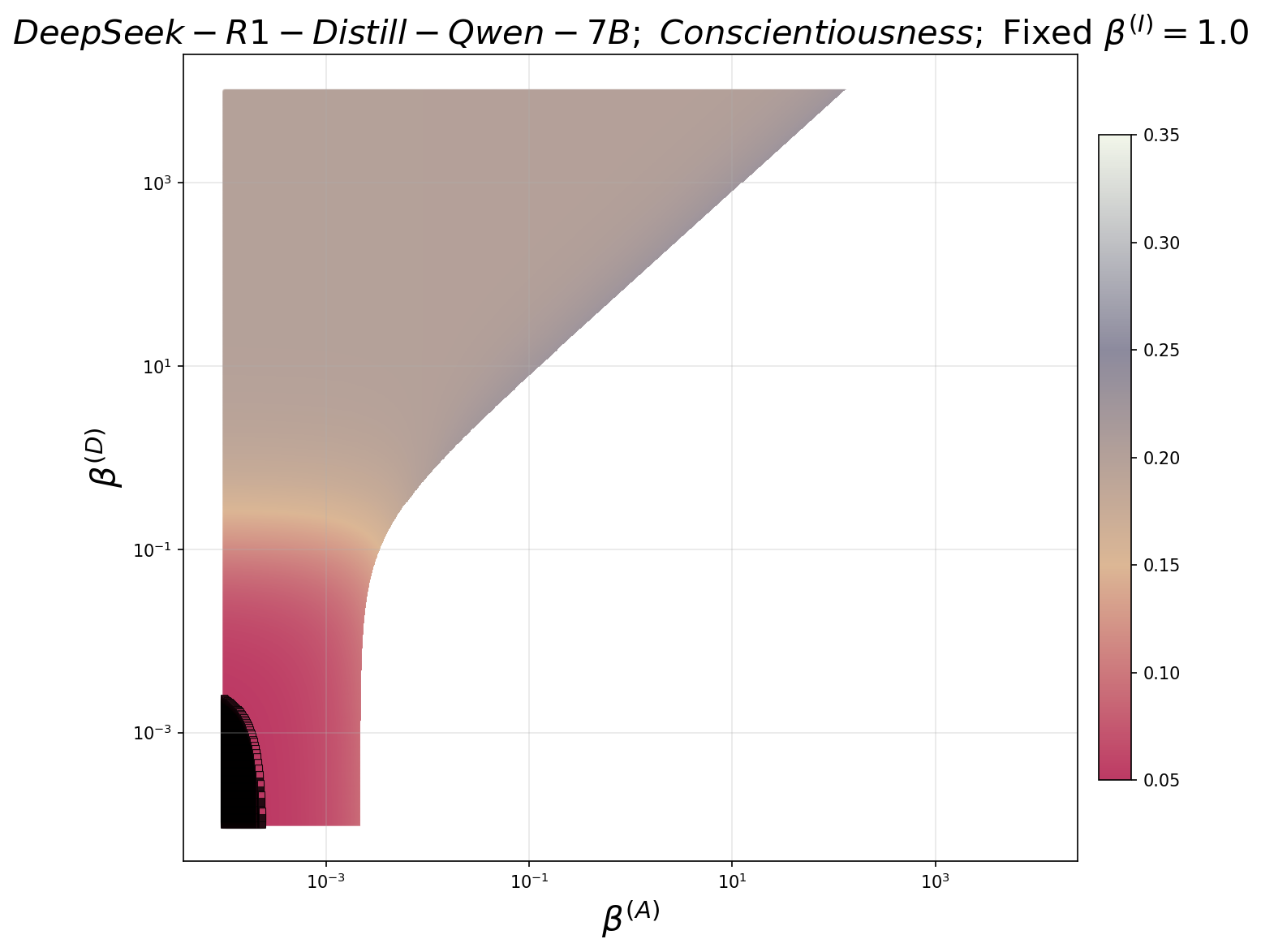}

    \includegraphics[width=0.3\linewidth]{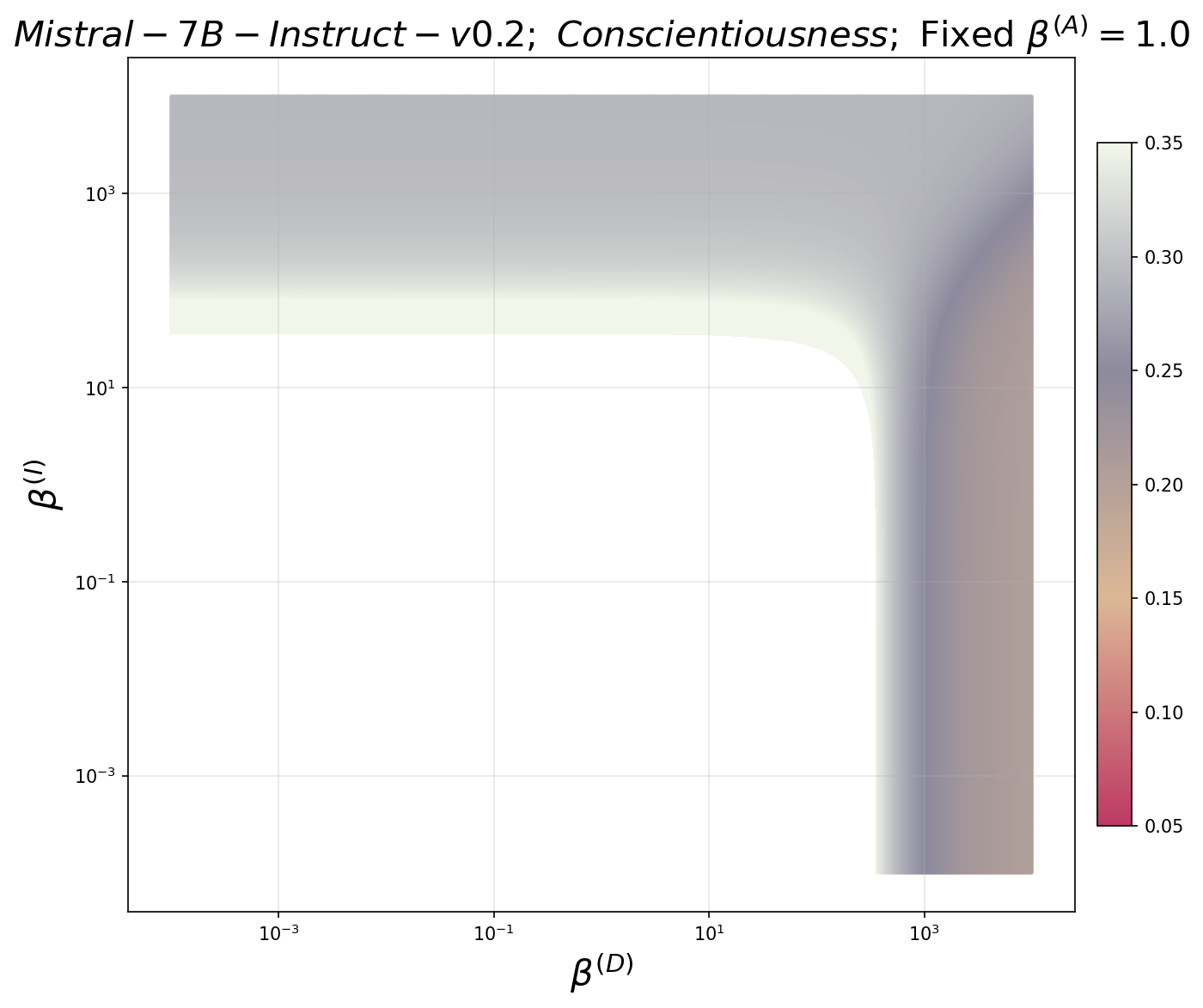}
    \includegraphics[width=0.3\linewidth]{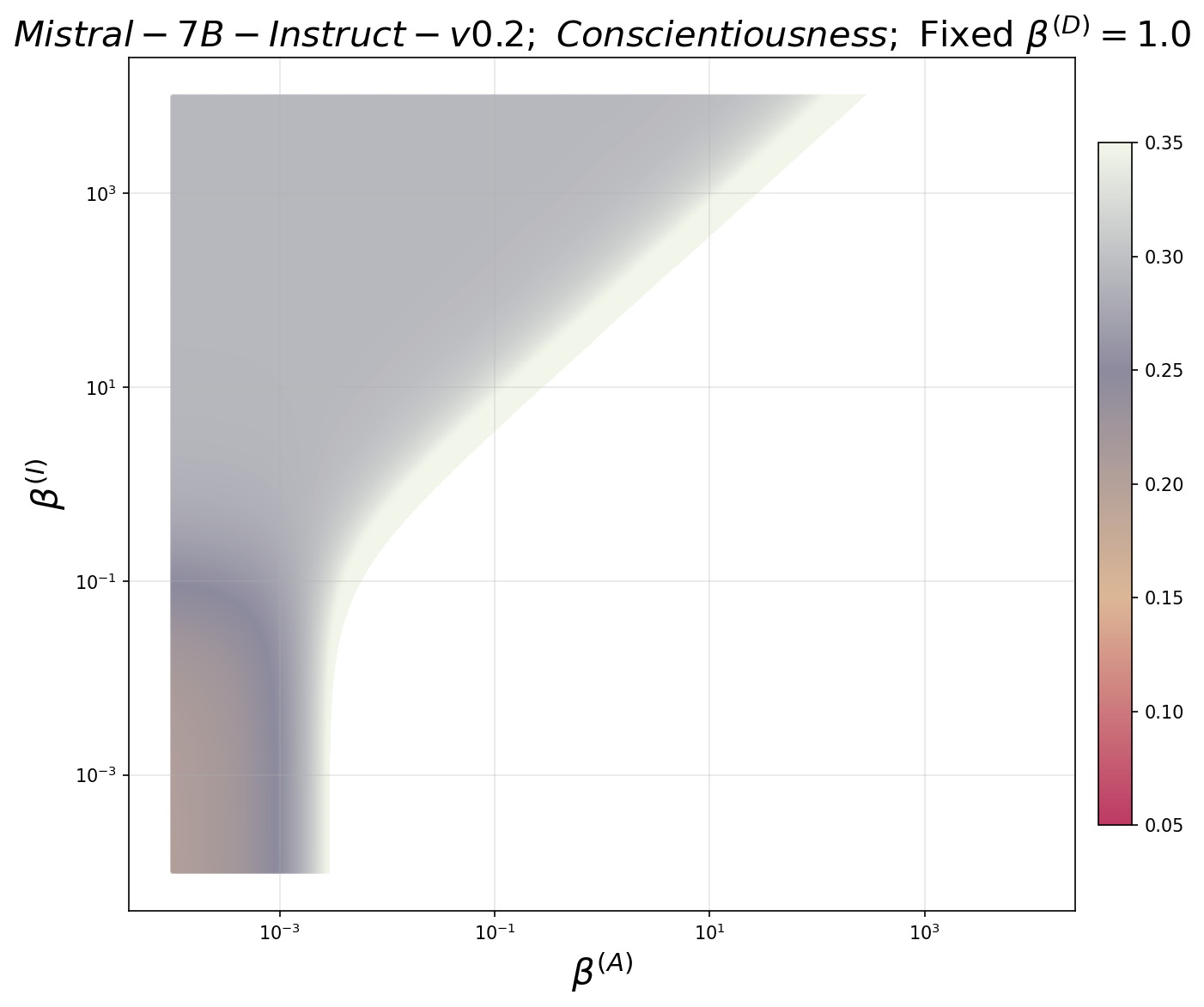}
    \includegraphics[width=0.3\linewidth]{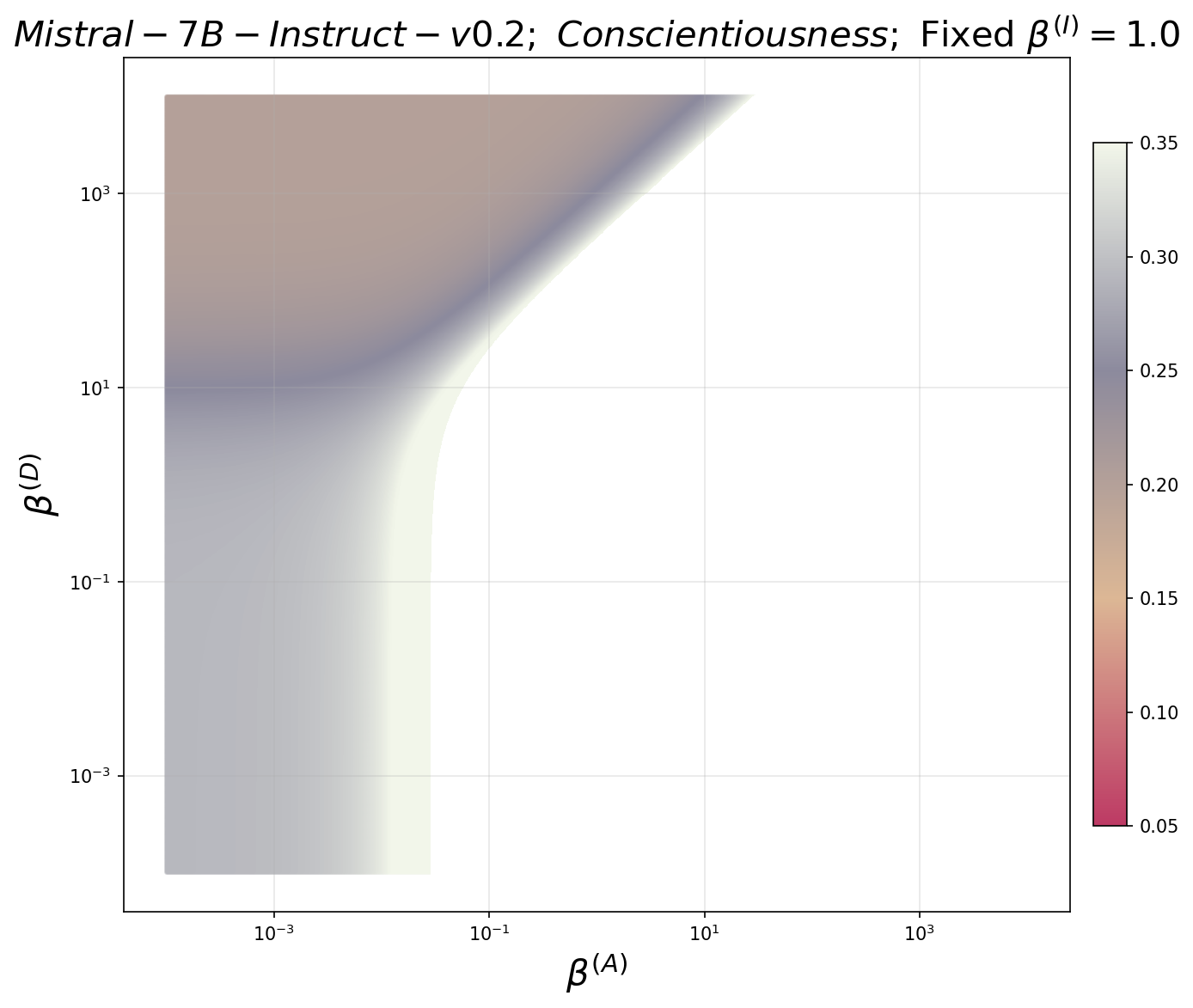}

    \includegraphics[width=0.3\linewidth]{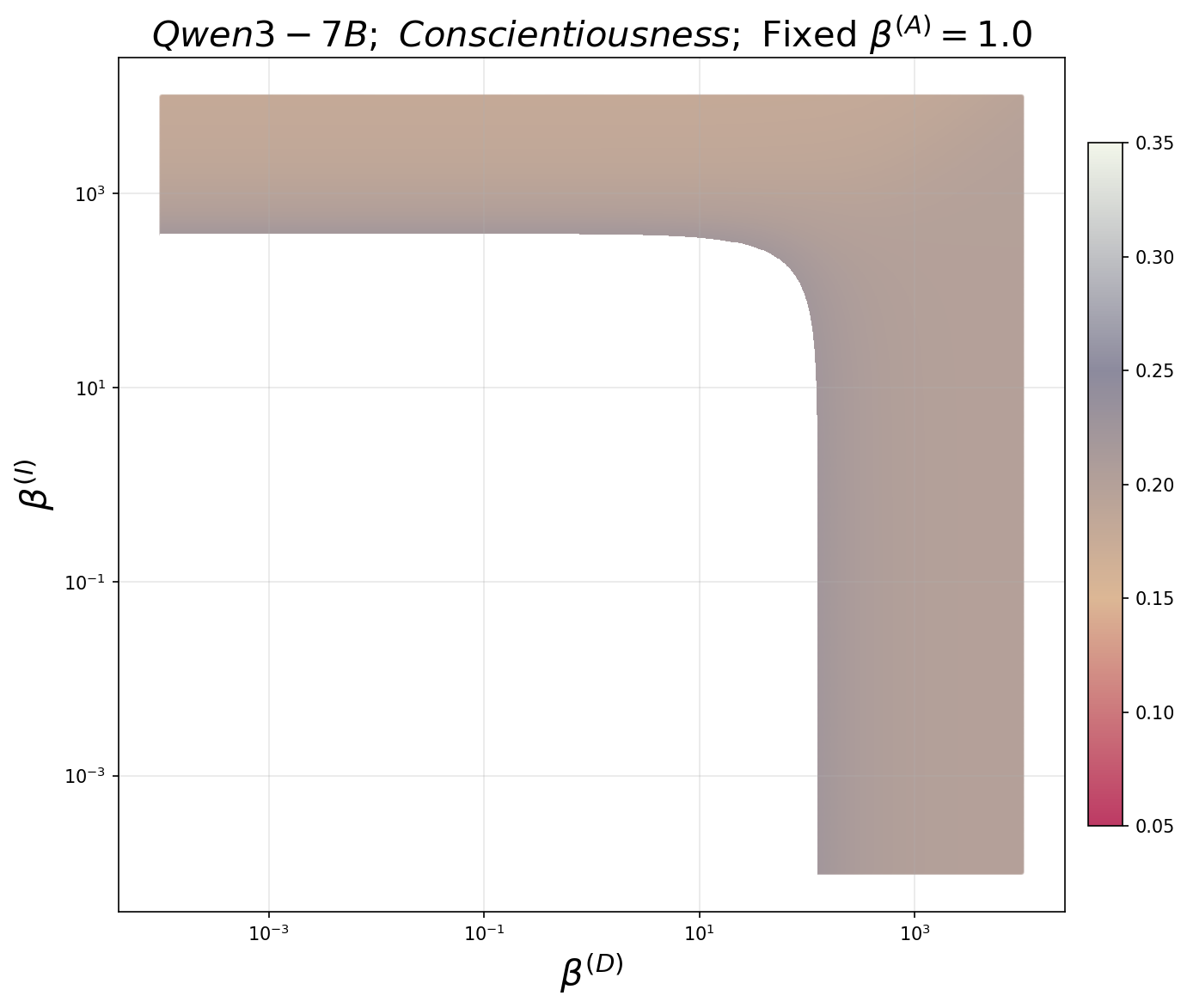}
    \includegraphics[width=0.3\linewidth]{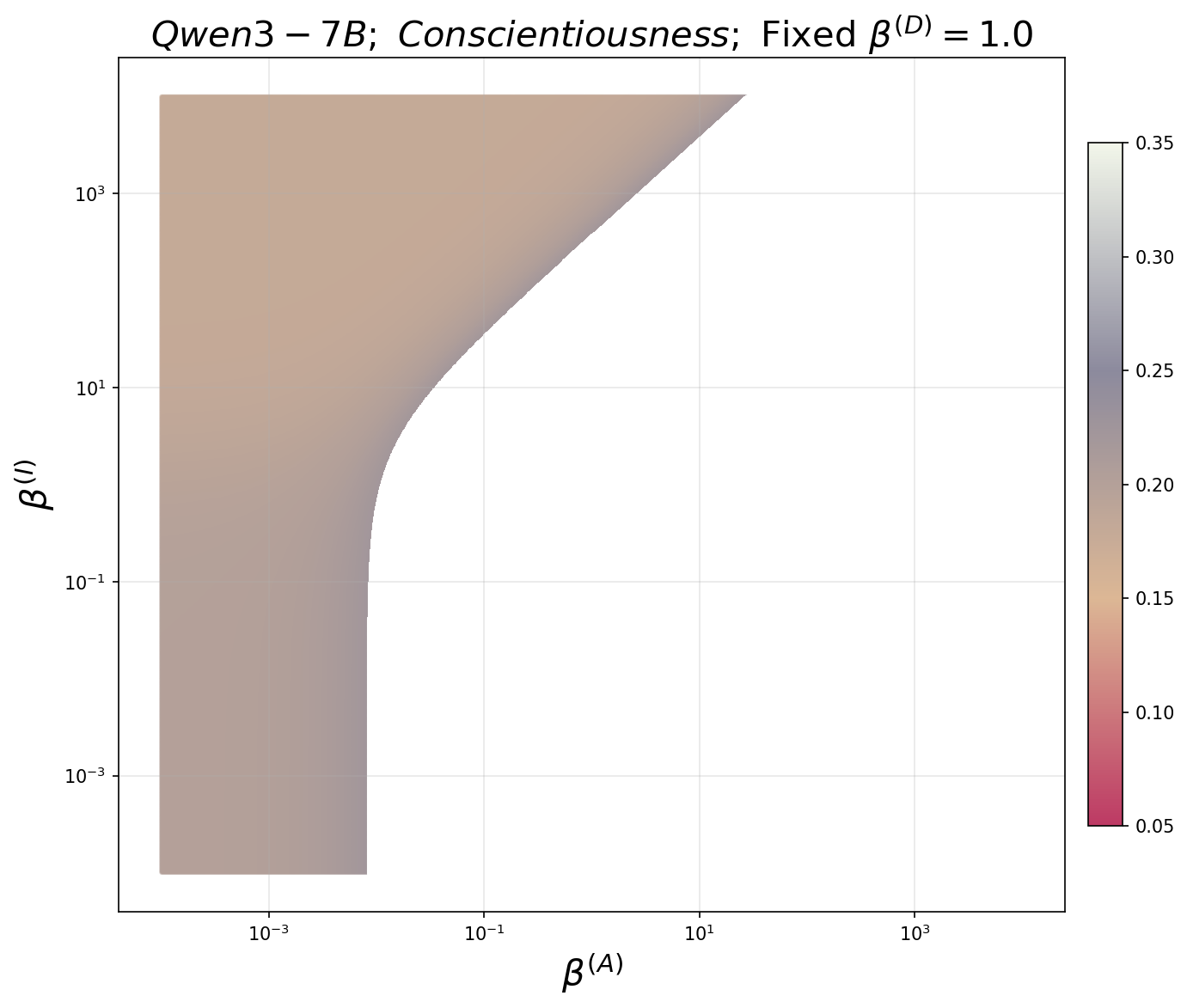}
    \includegraphics[width=0.3\linewidth]{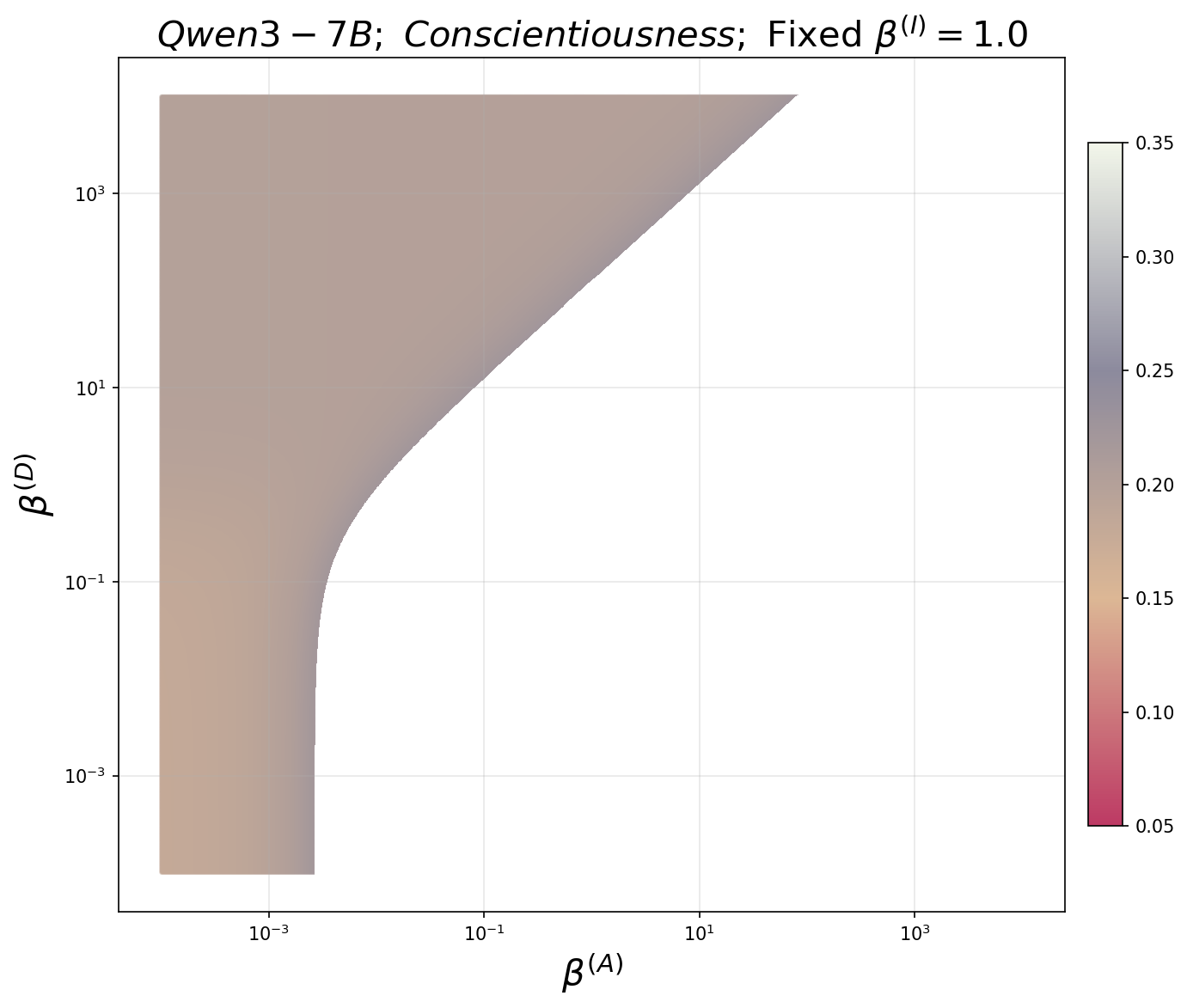}

    \includegraphics[width=0.3\linewidth]{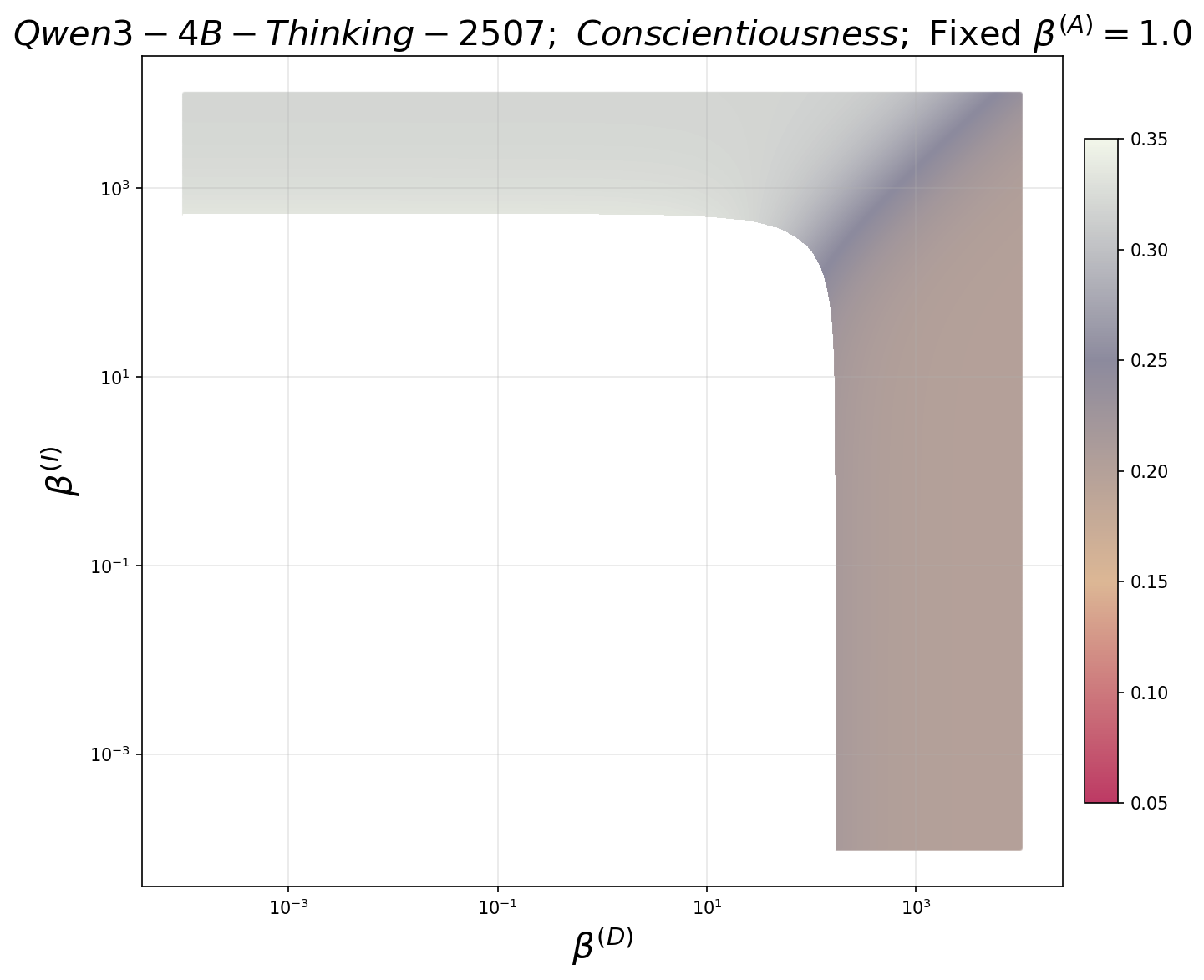}
    \includegraphics[width=0.3\linewidth]{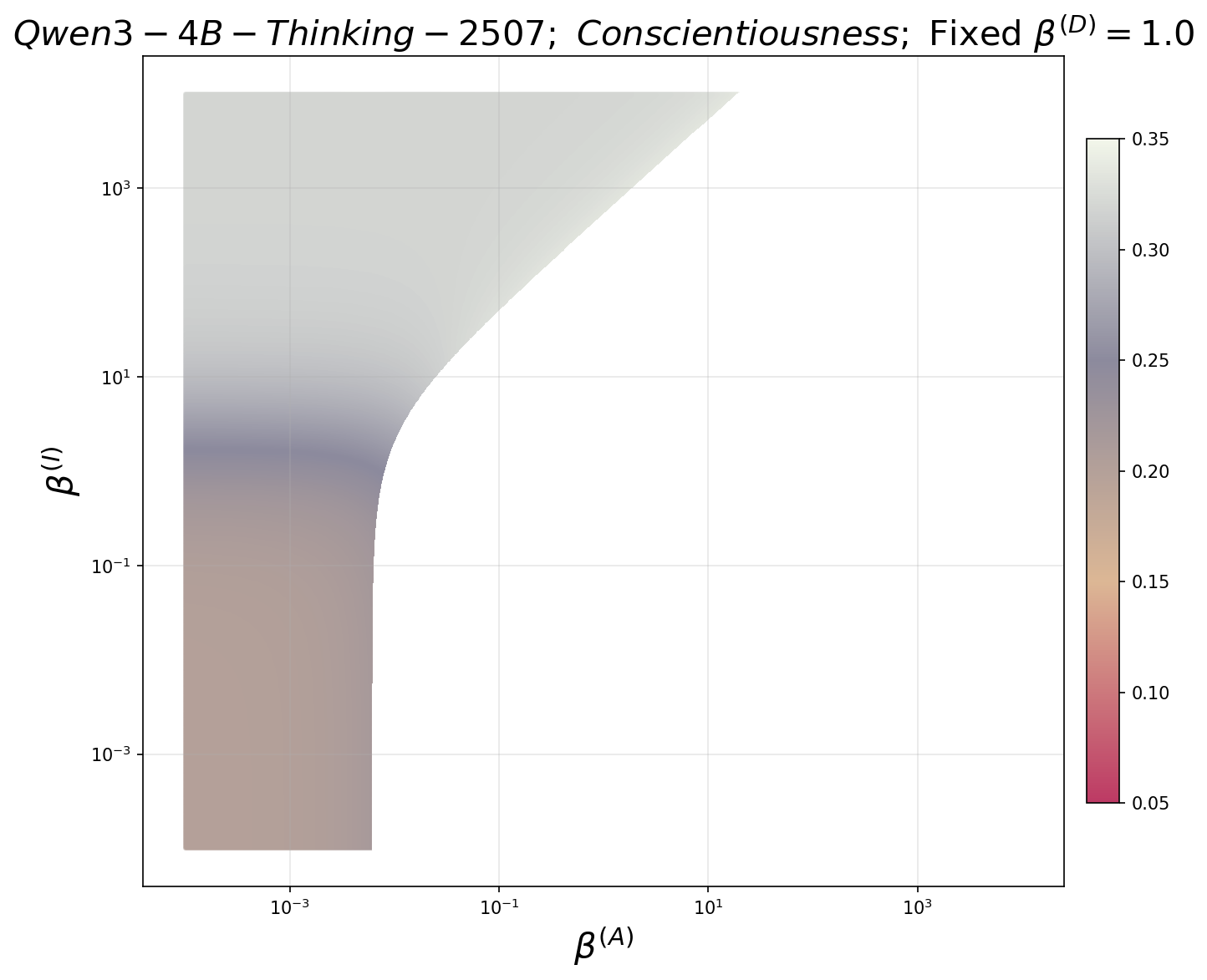}
    \includegraphics[width=0.3\linewidth]{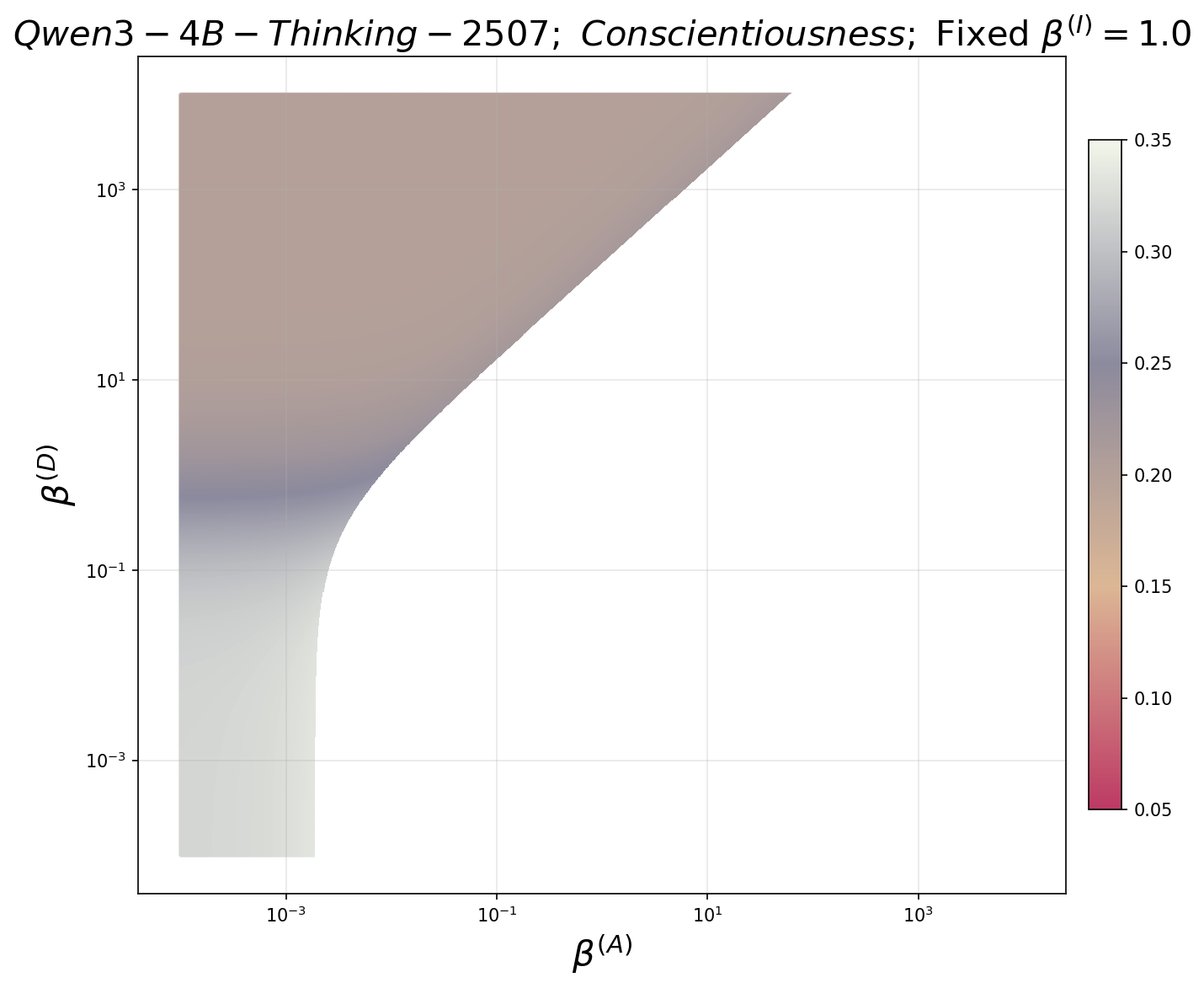}

    \includegraphics[width=0.3\linewidth]{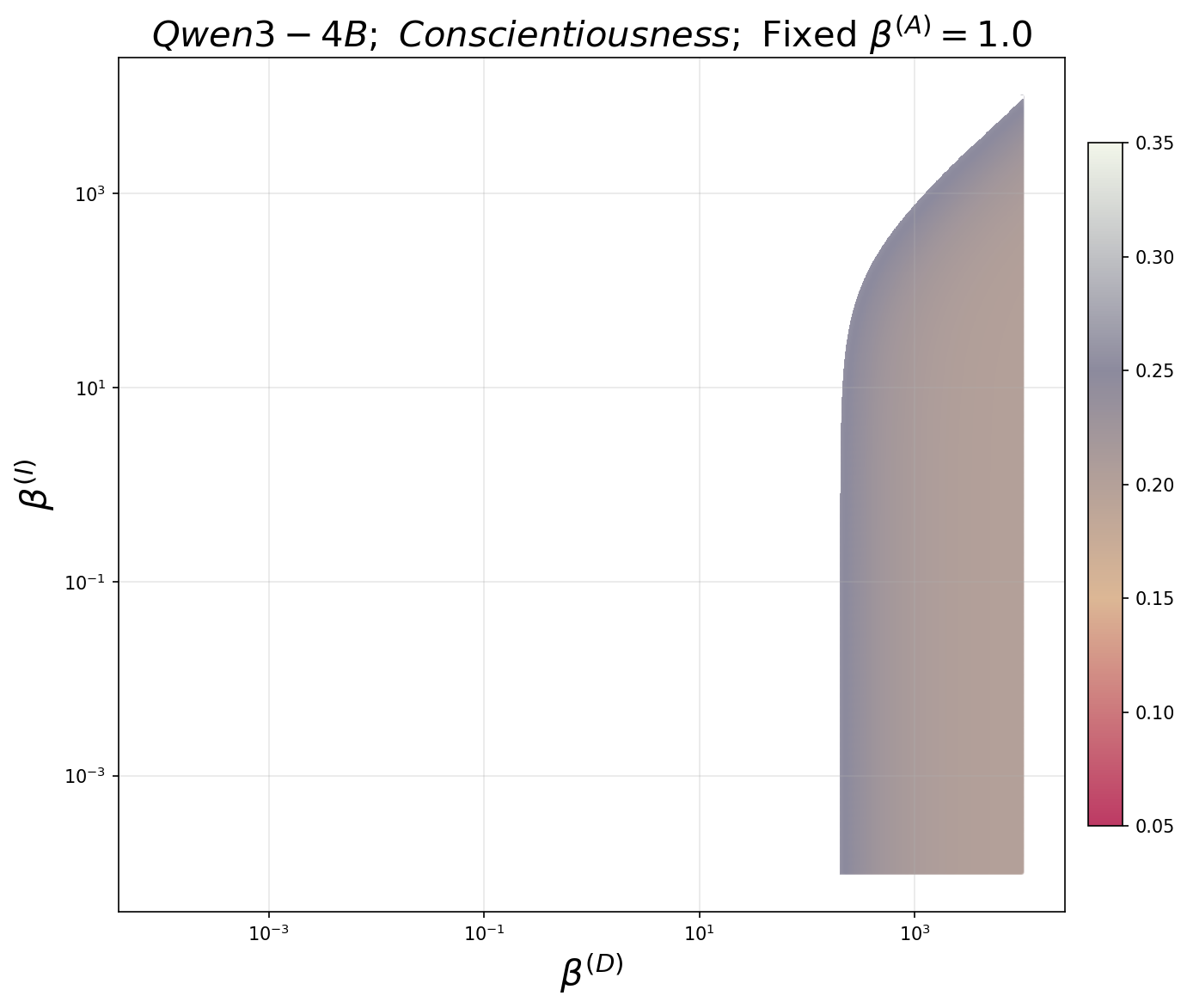}
    \includegraphics[width=0.3\linewidth]{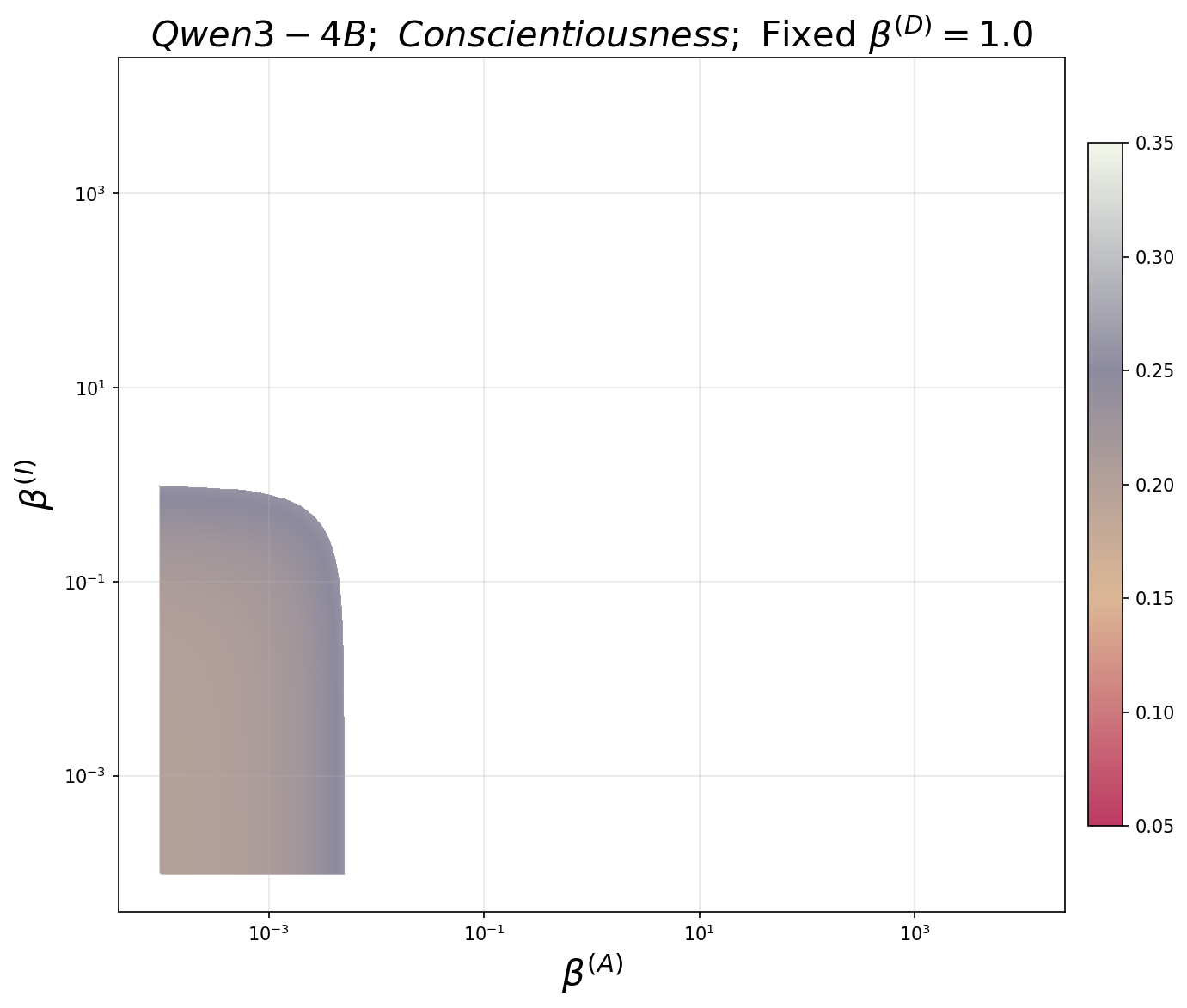}
    \includegraphics[width=0.3\linewidth]{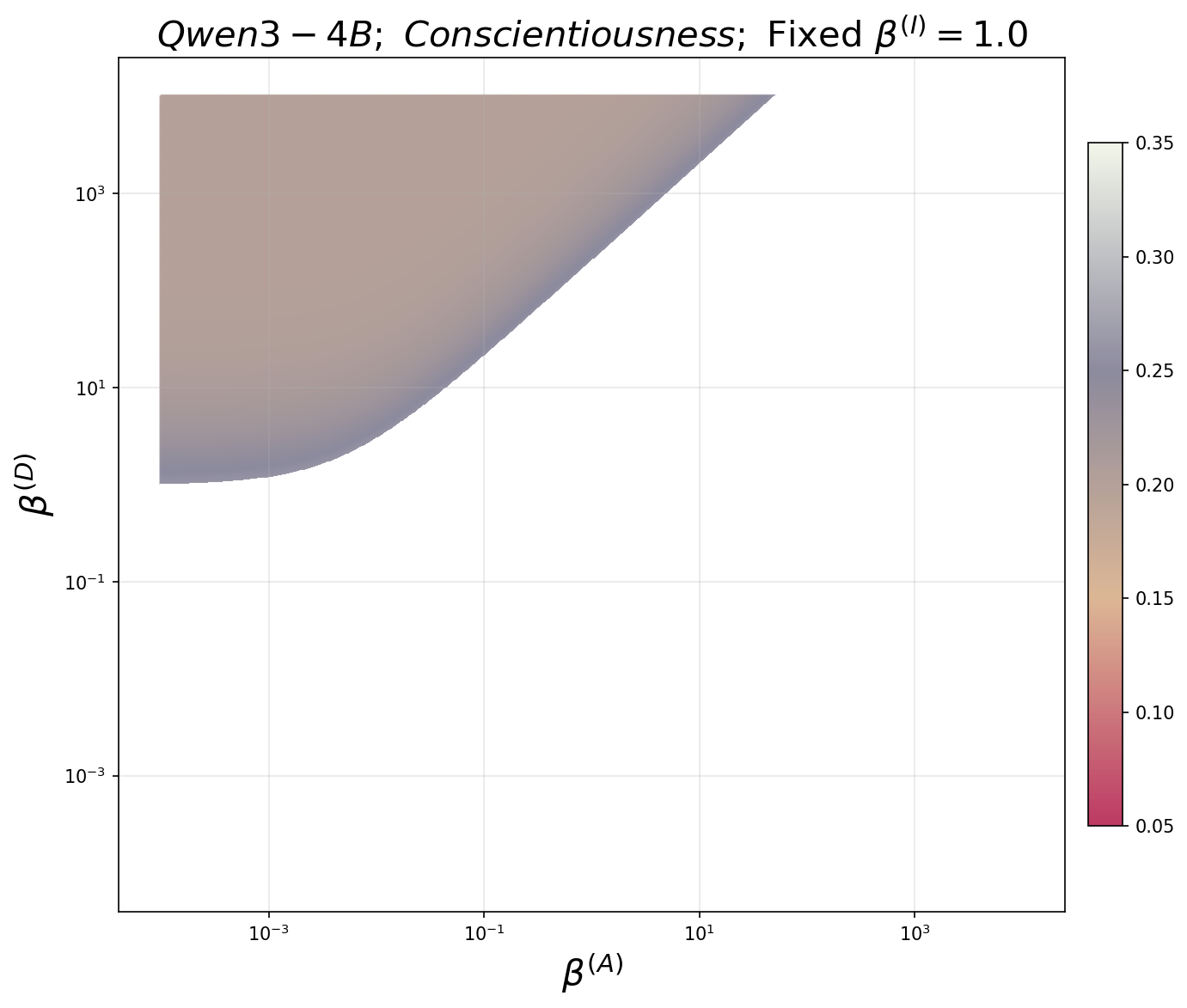}
    
    \caption{Political exclusion regarding the subpopulation \emph{Conscientiousness} on the $\mathtt{Big\ Five}$ dataset.}
    \label{fig:Conscientiousness}
\end{figure}

\begin{figure}
    \centering

    \includegraphics[width=0.3\linewidth]{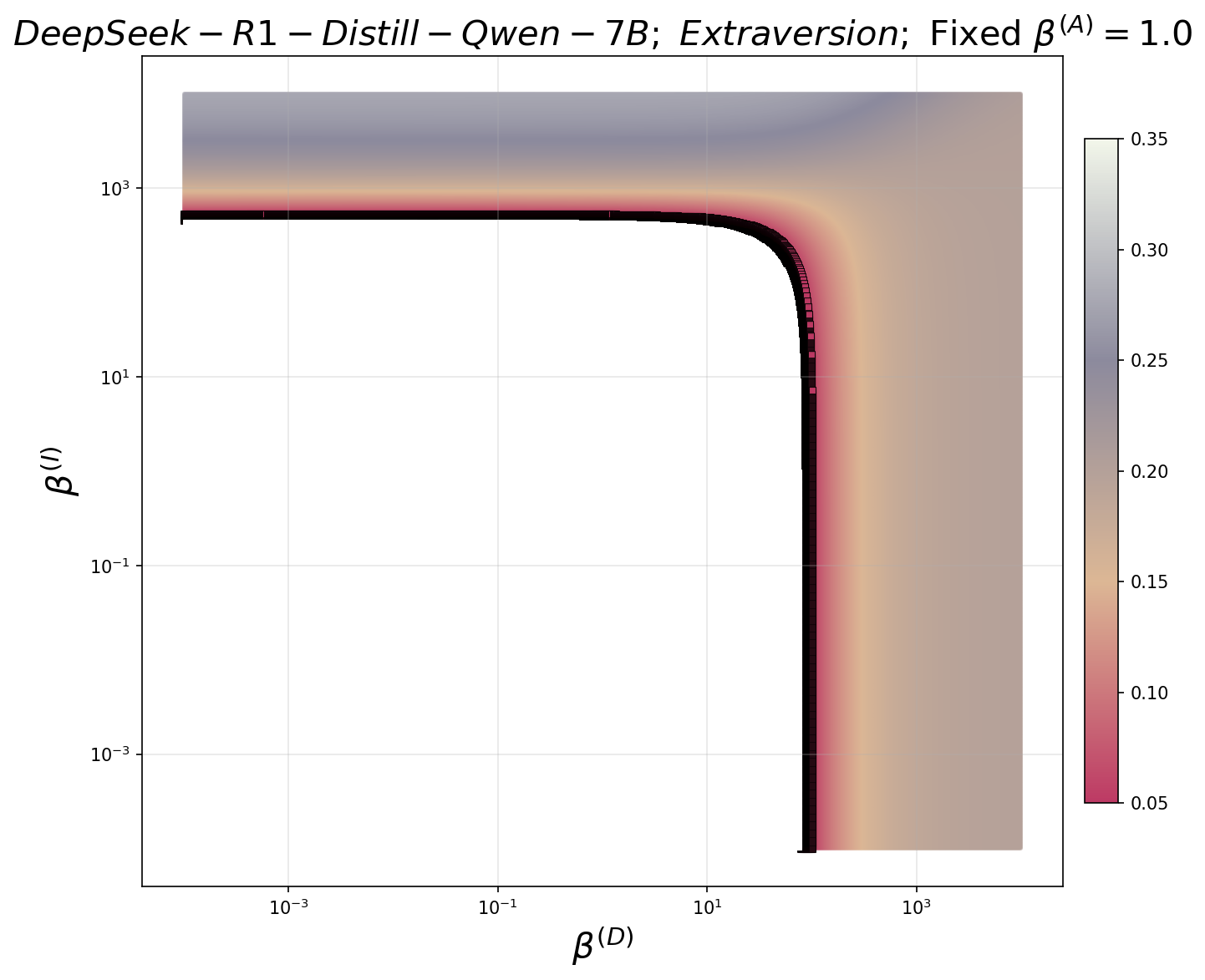}
    \includegraphics[width=0.3\linewidth]{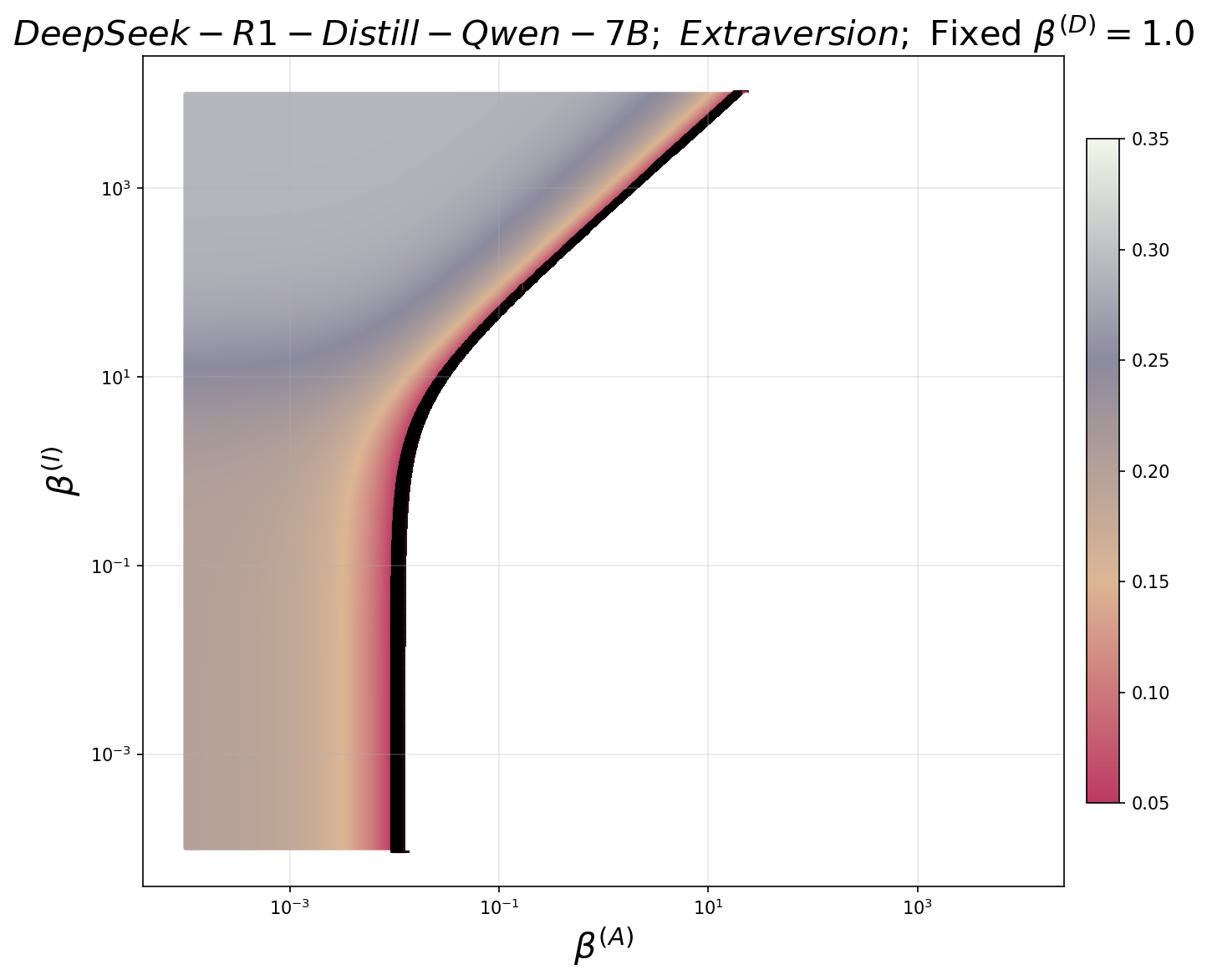}
    \includegraphics[width=0.3\linewidth]{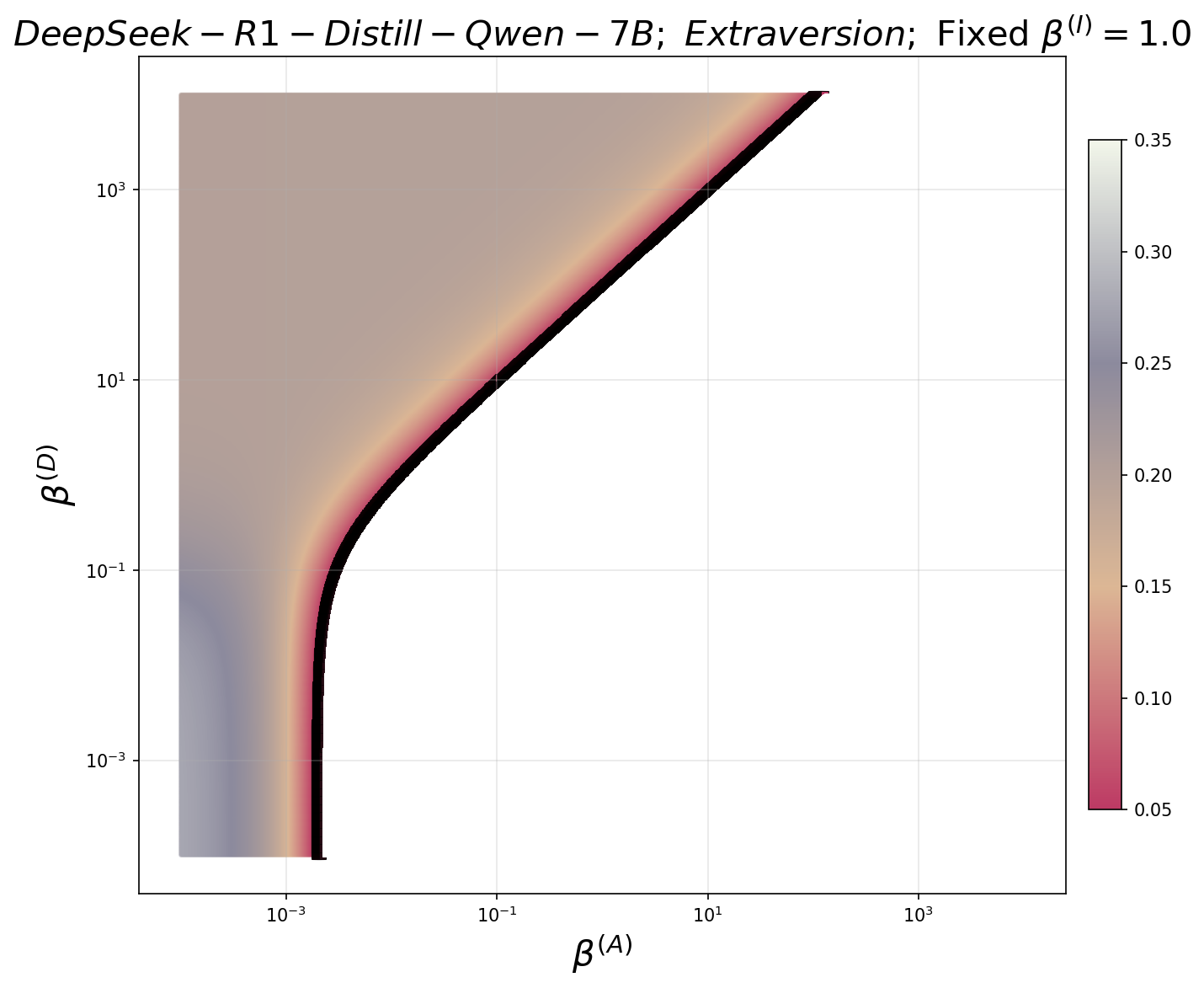}

    \includegraphics[width=0.3\linewidth]{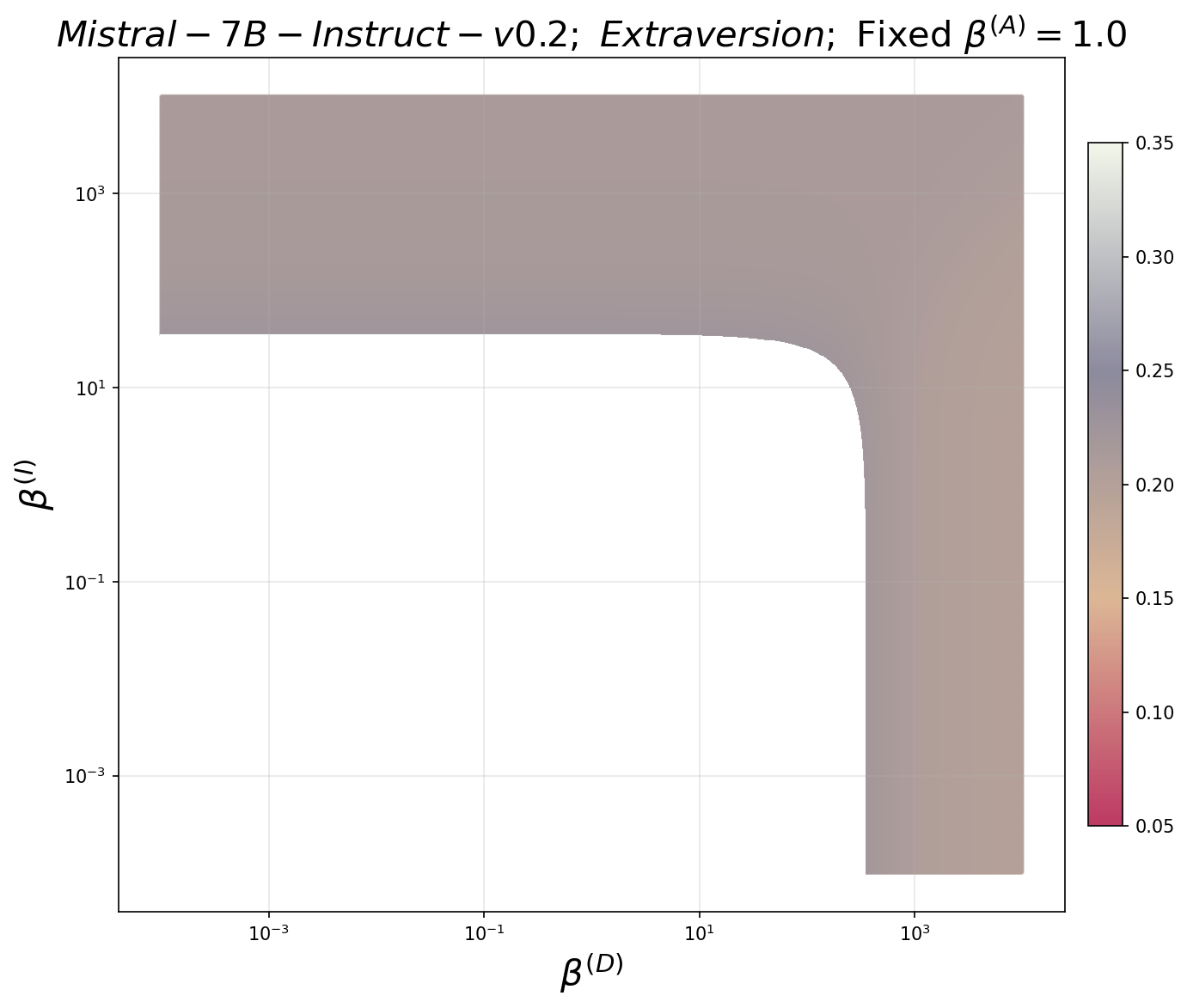}
    \includegraphics[width=0.3\linewidth]{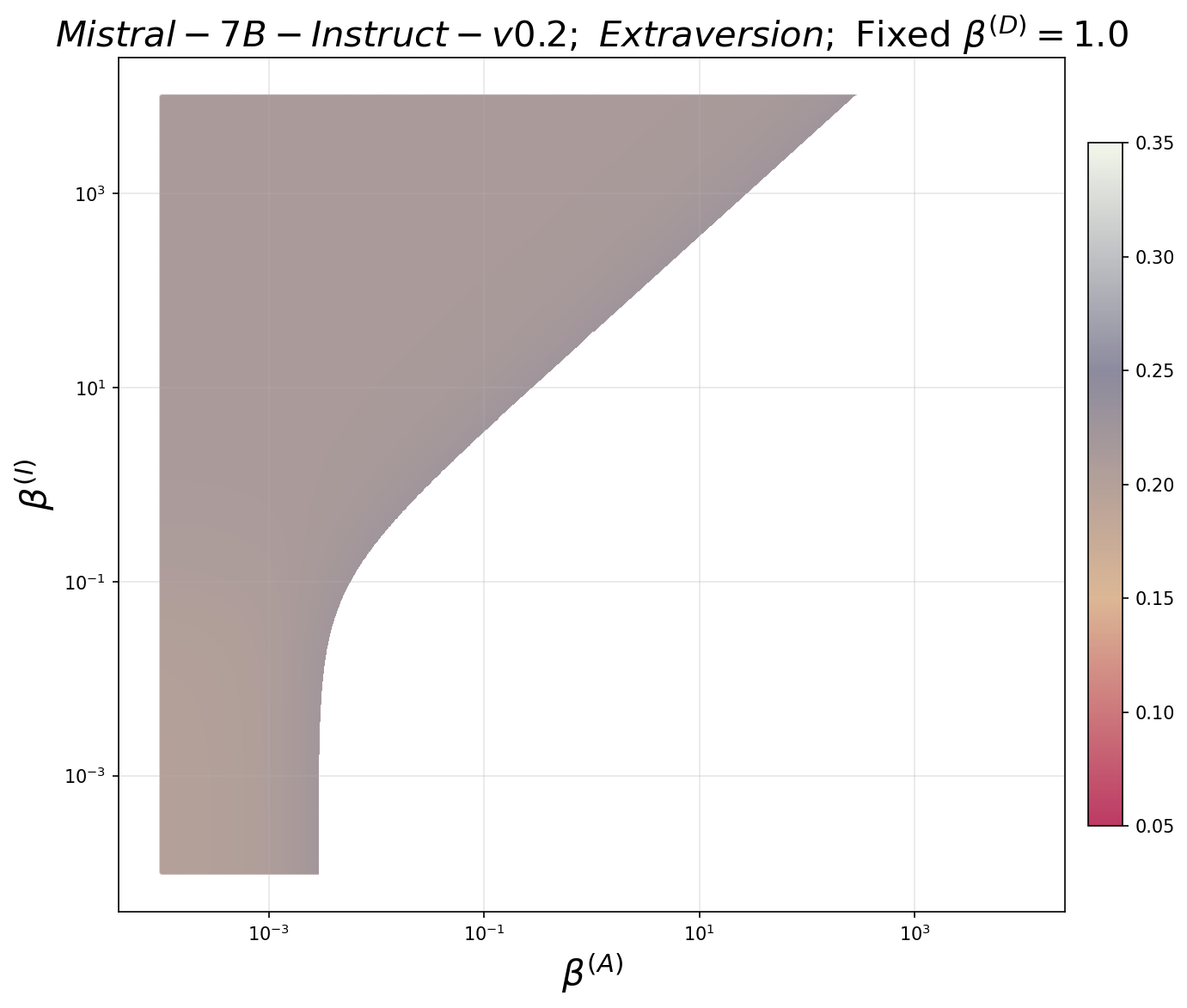}
    \includegraphics[width=0.3\linewidth]{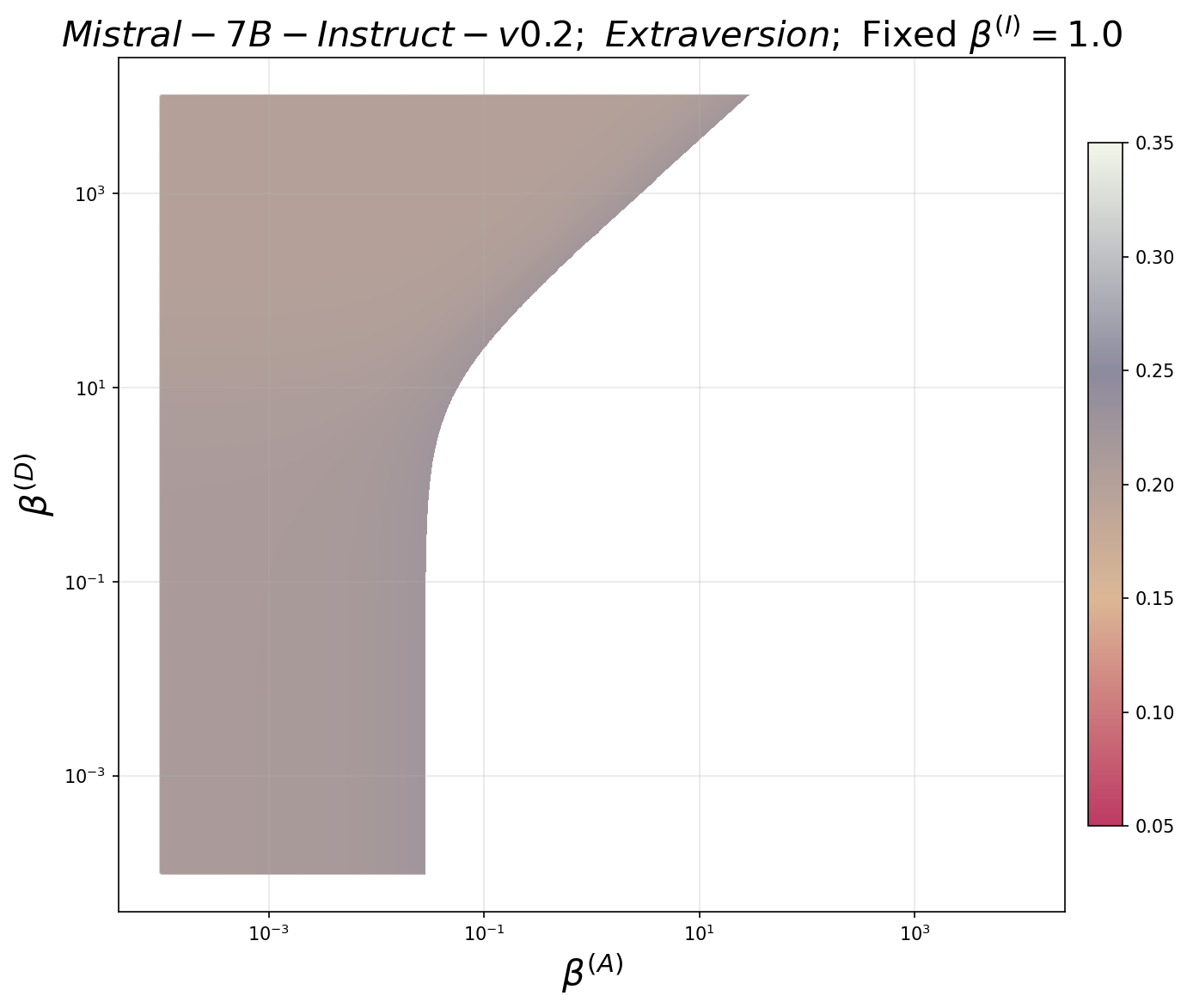}

    \includegraphics[width=0.3\linewidth]{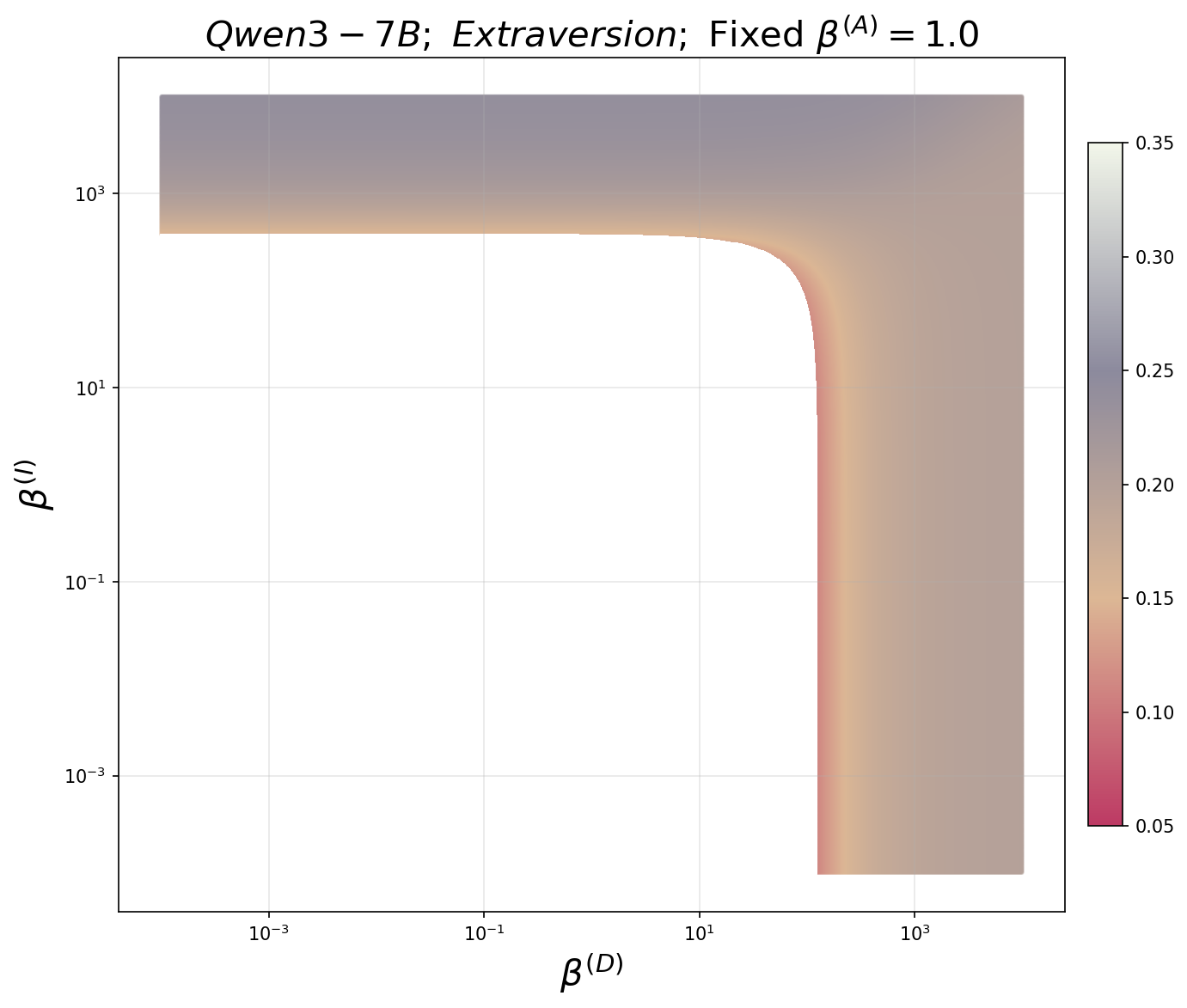}
    \includegraphics[width=0.3\linewidth]{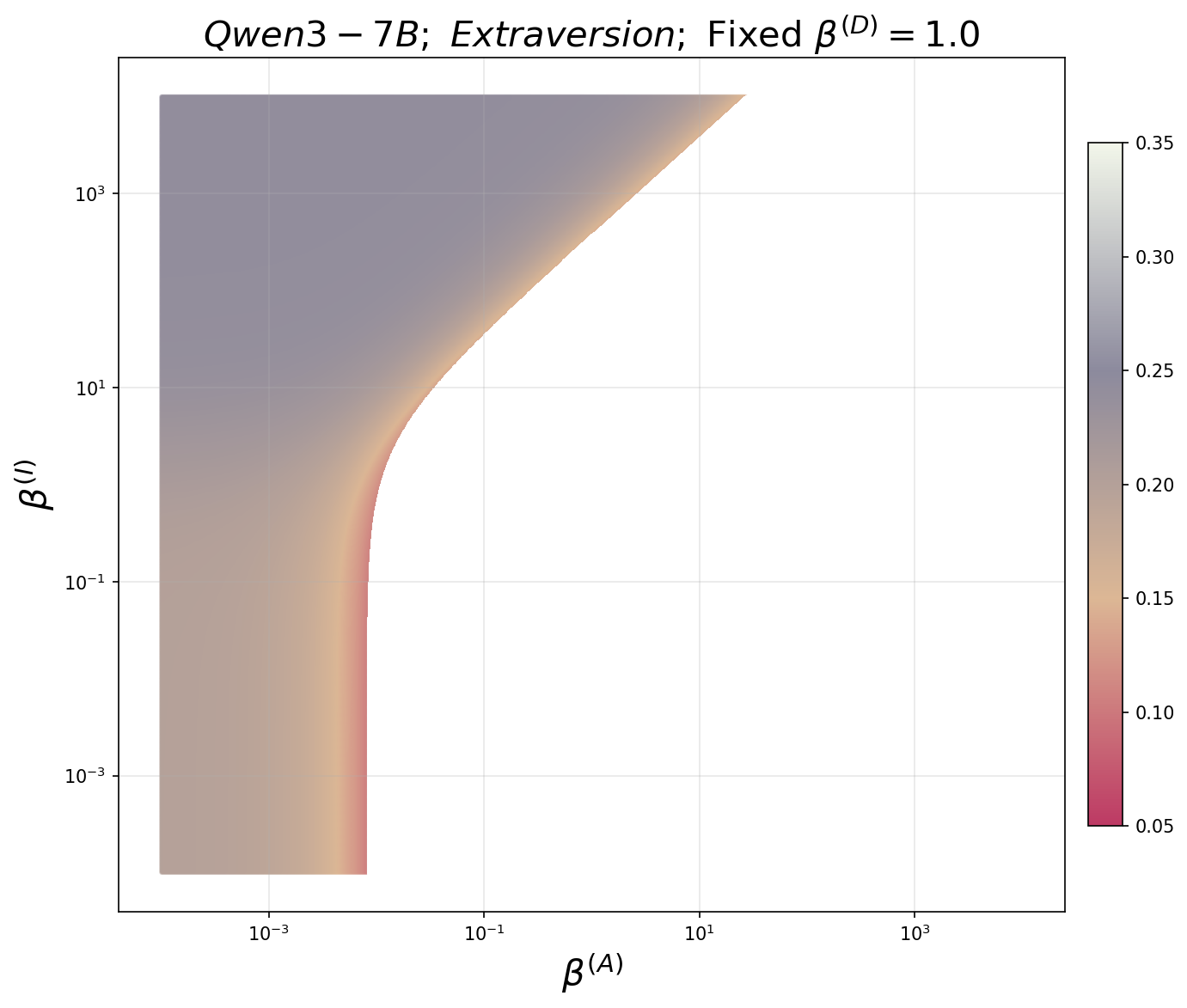}
    \includegraphics[width=0.3\linewidth]{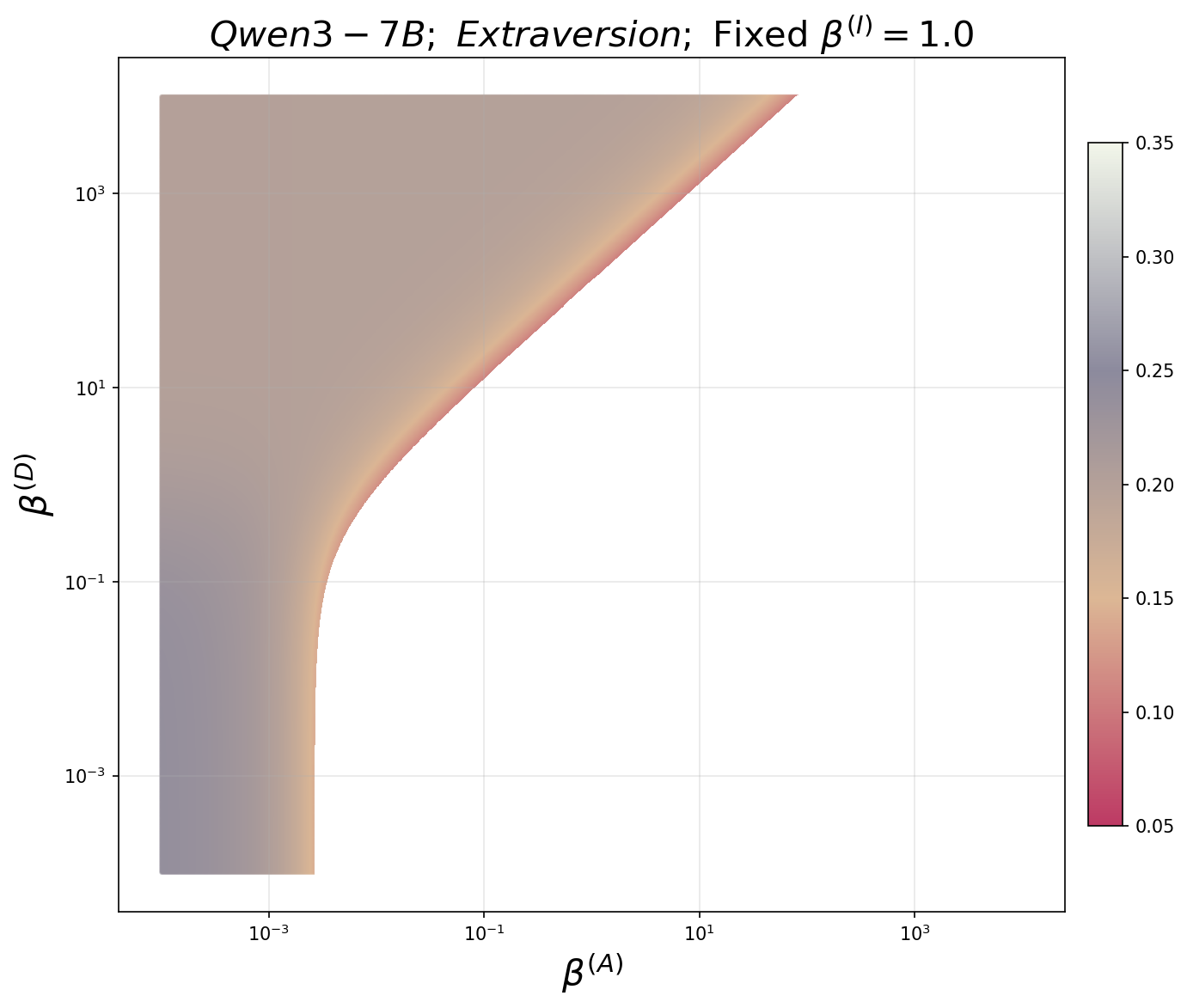}

    \includegraphics[width=0.3\linewidth]{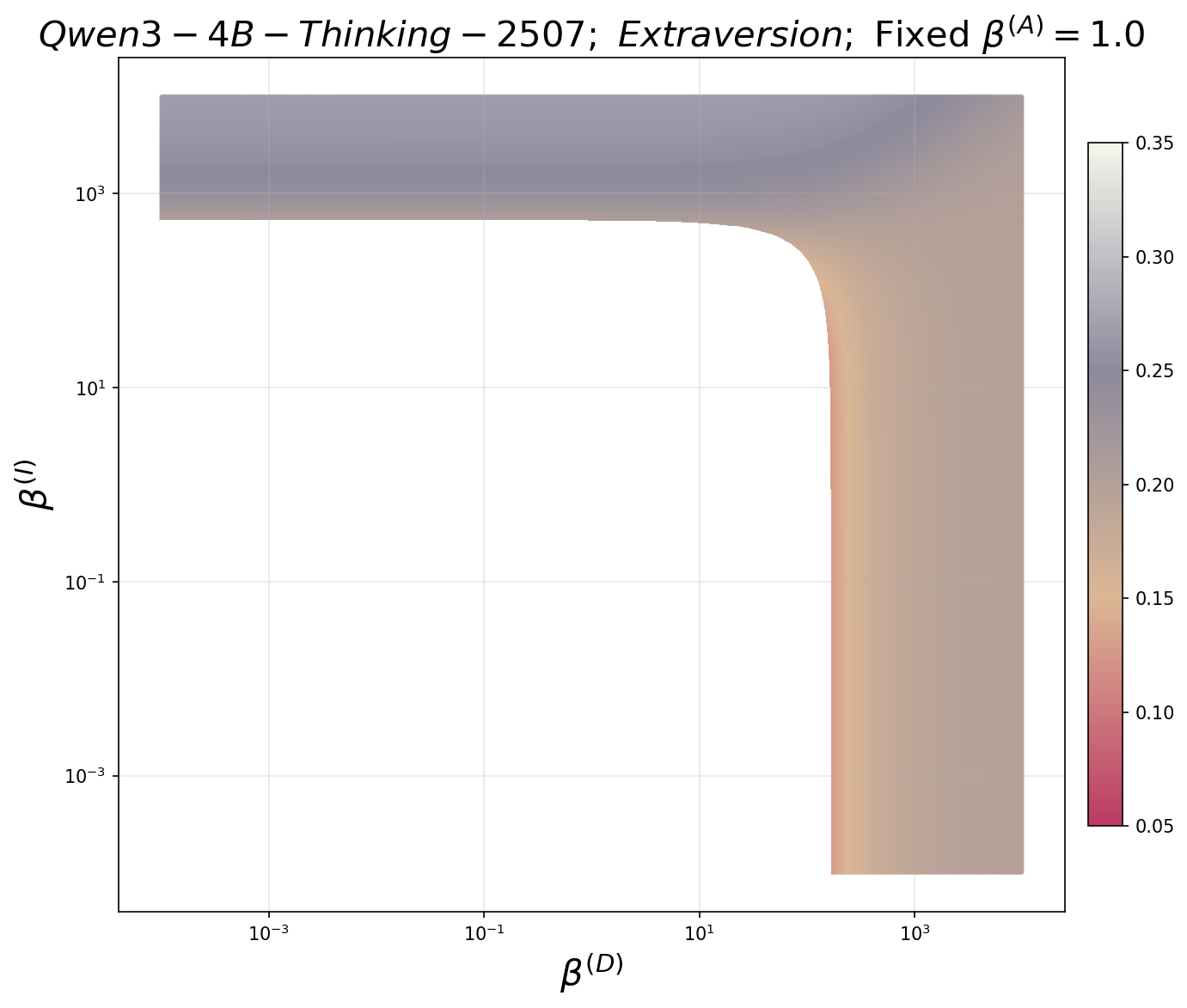}
    \includegraphics[width=0.3\linewidth]{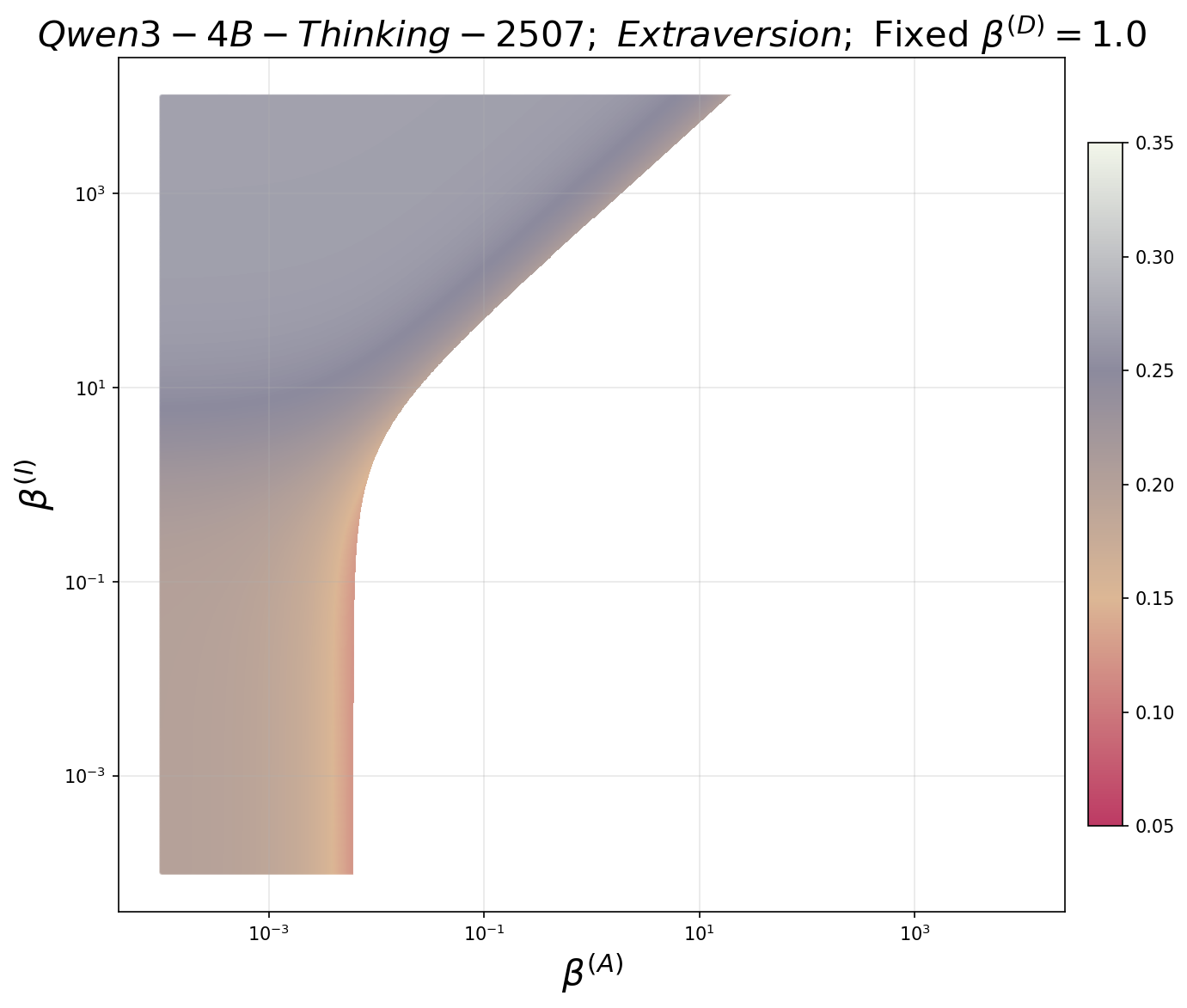}
    \includegraphics[width=0.3\linewidth]{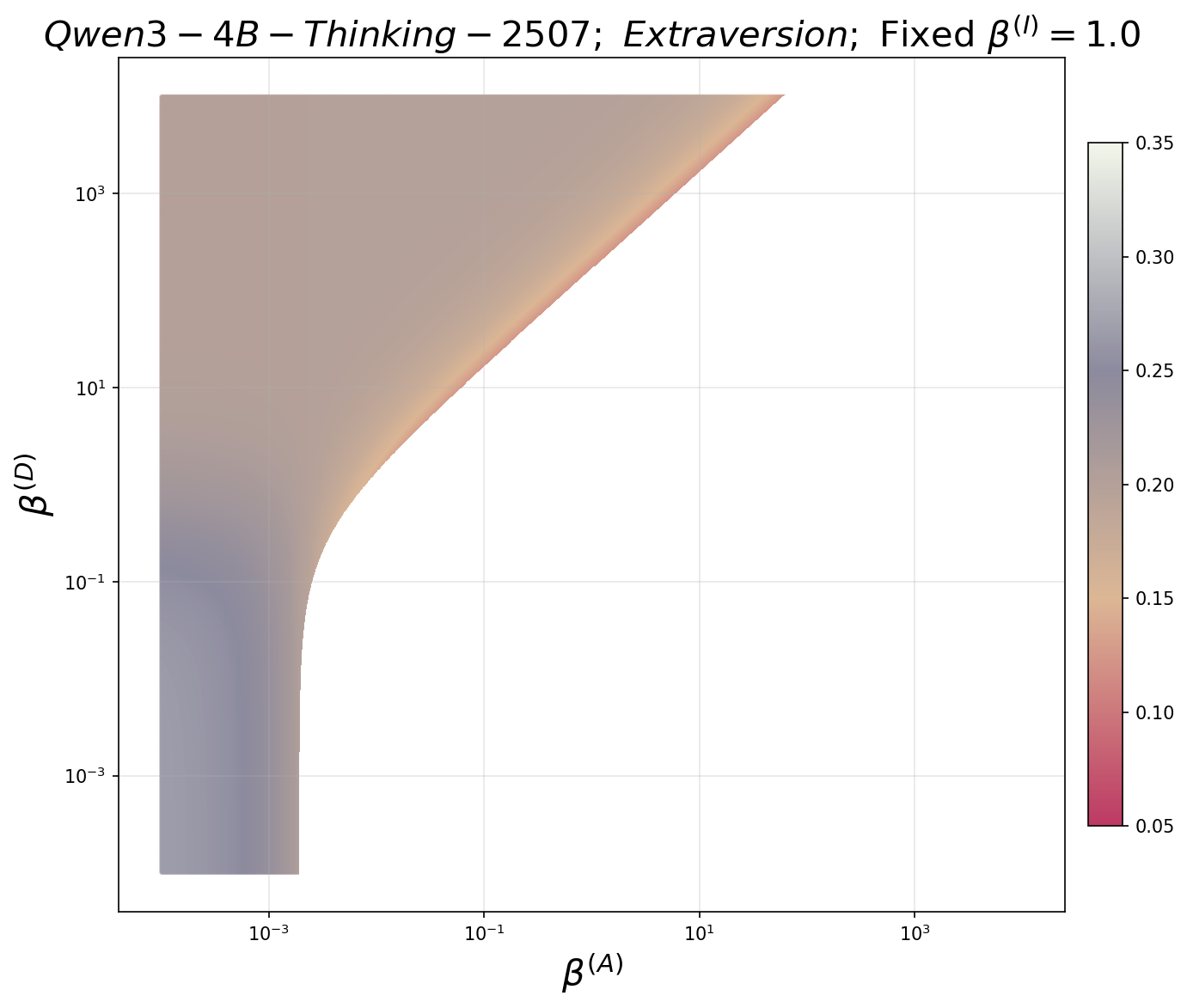}

    \includegraphics[width=0.3\linewidth]{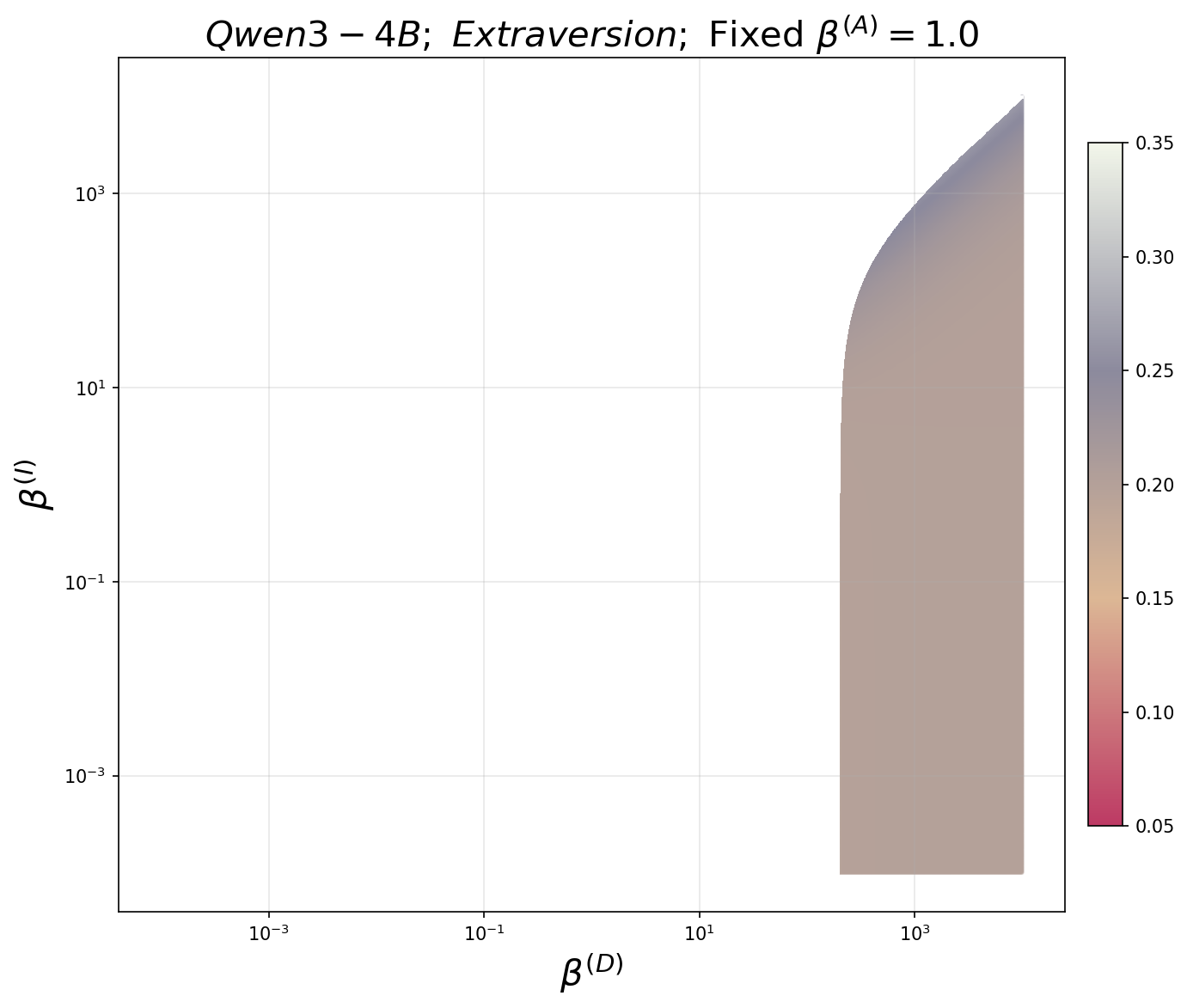}
    \includegraphics[width=0.3\linewidth]{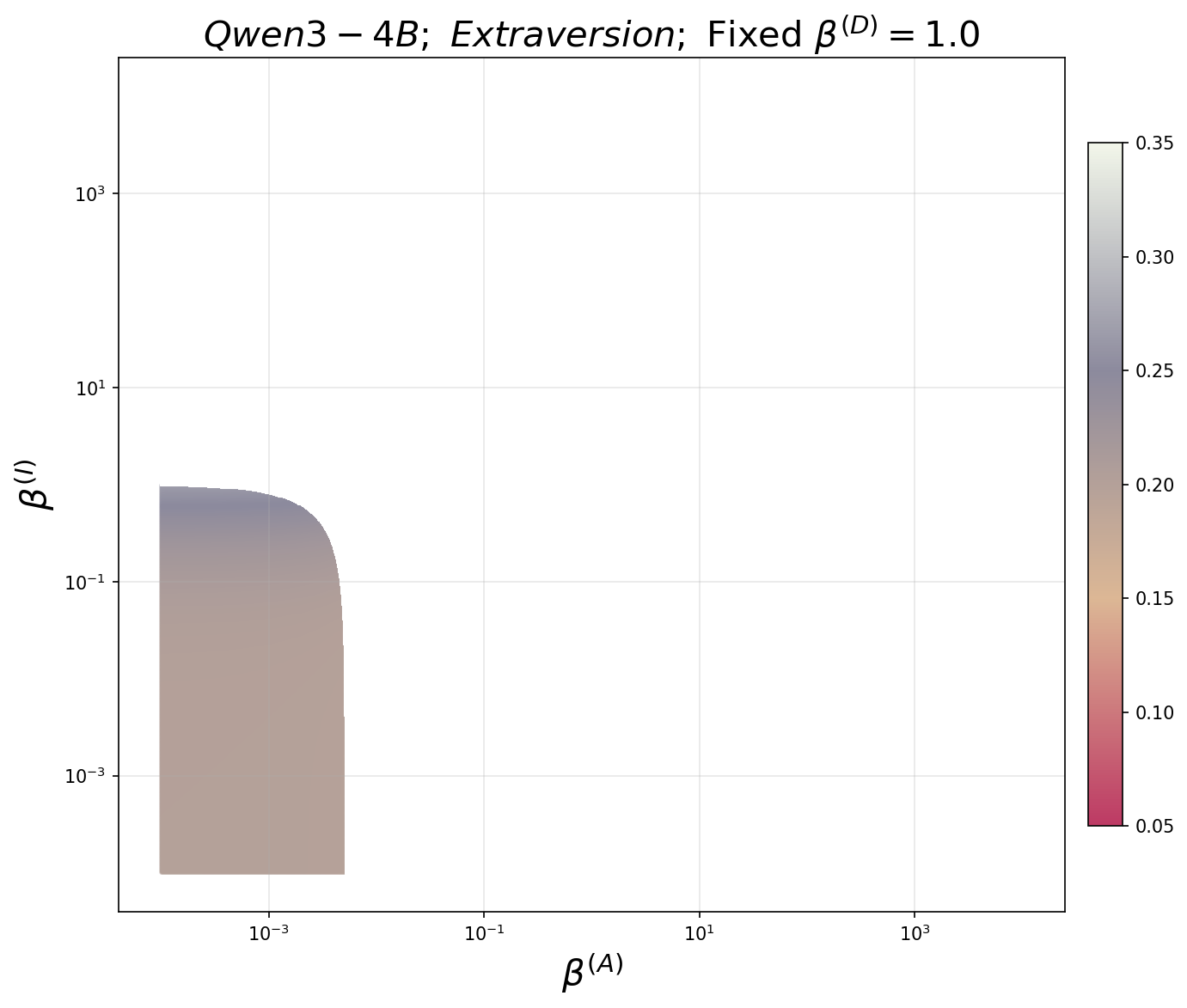}
    \includegraphics[width=0.3\linewidth]{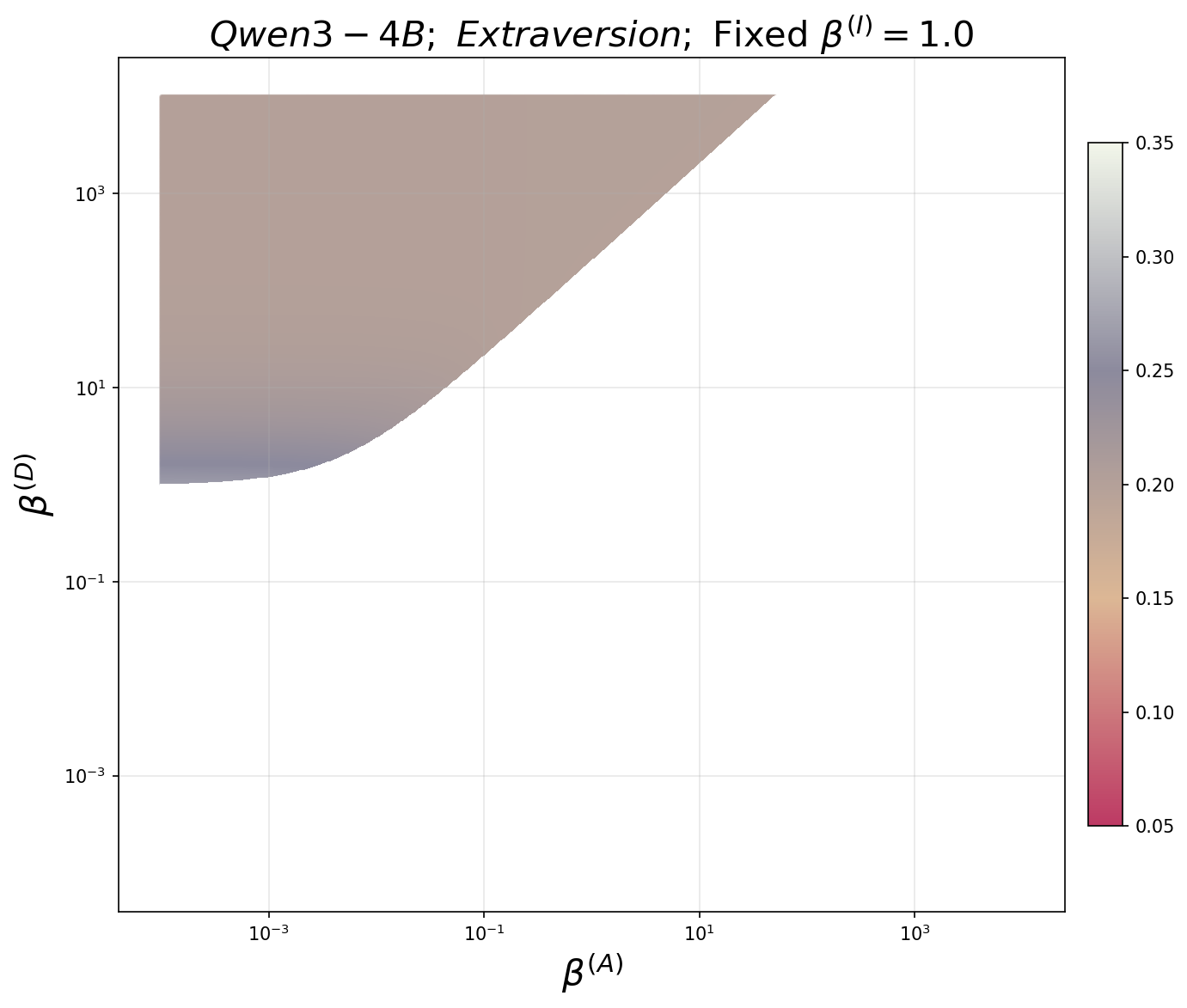}
    
    \caption{Political exclusion regarding the subpopulation \emph{Extraversion} on the $\mathtt{Big\ Five}$ dataset.}
    \label{fig:Extraversion}
\end{figure}

\begin{figure}
    \centering

    \includegraphics[width=0.3\linewidth]{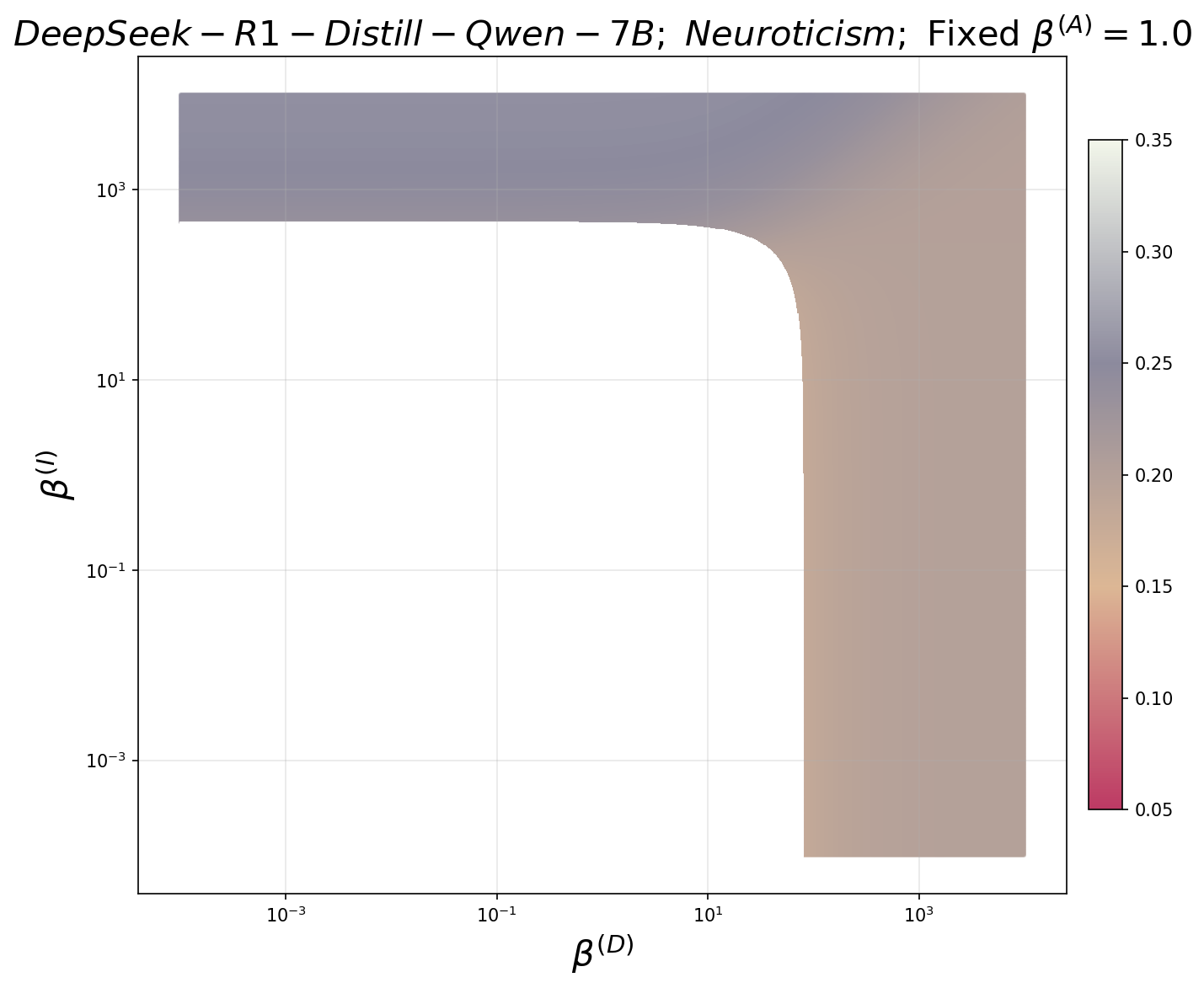}
    \includegraphics[width=0.3\linewidth]{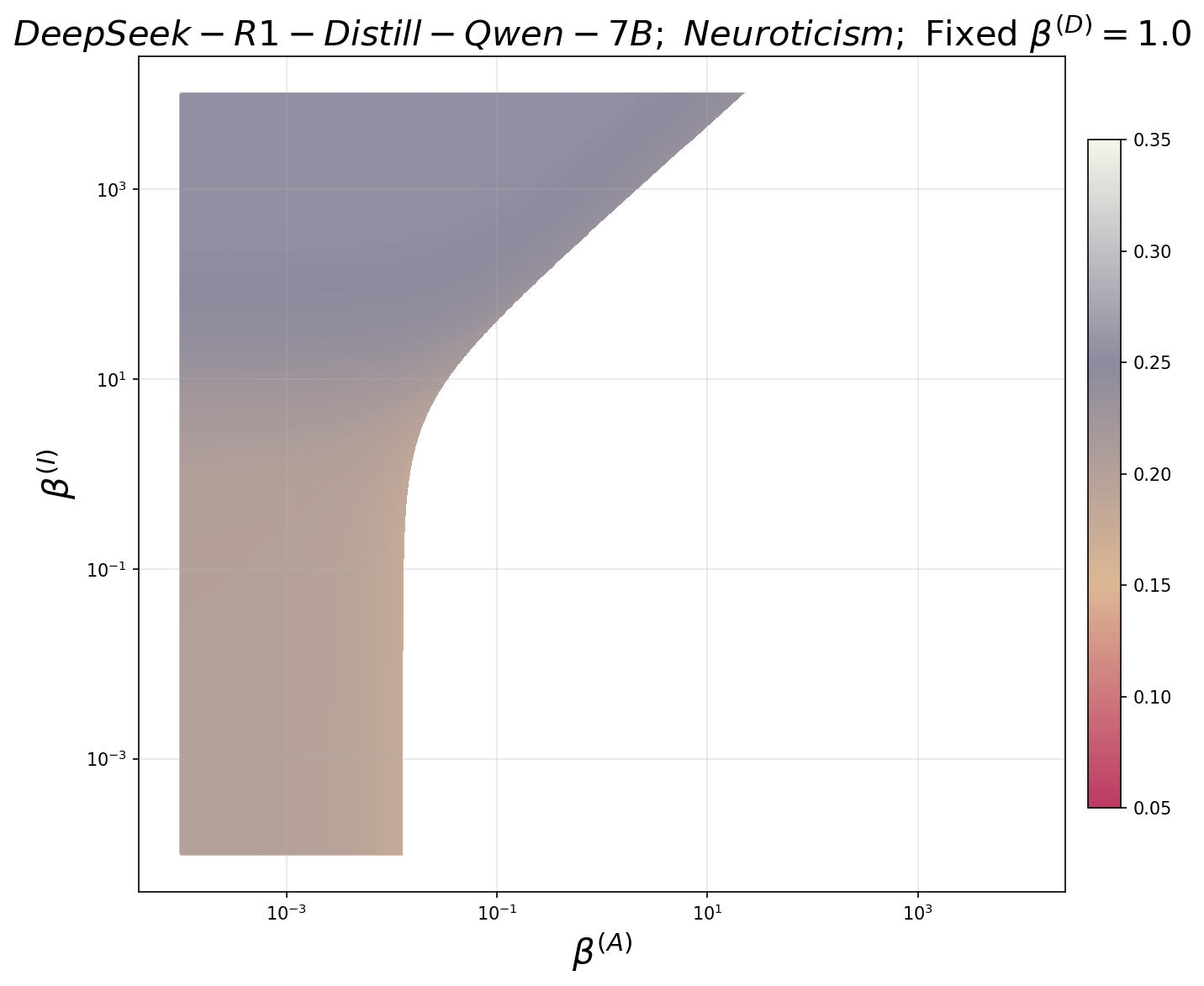}
    \includegraphics[width=0.3\linewidth]{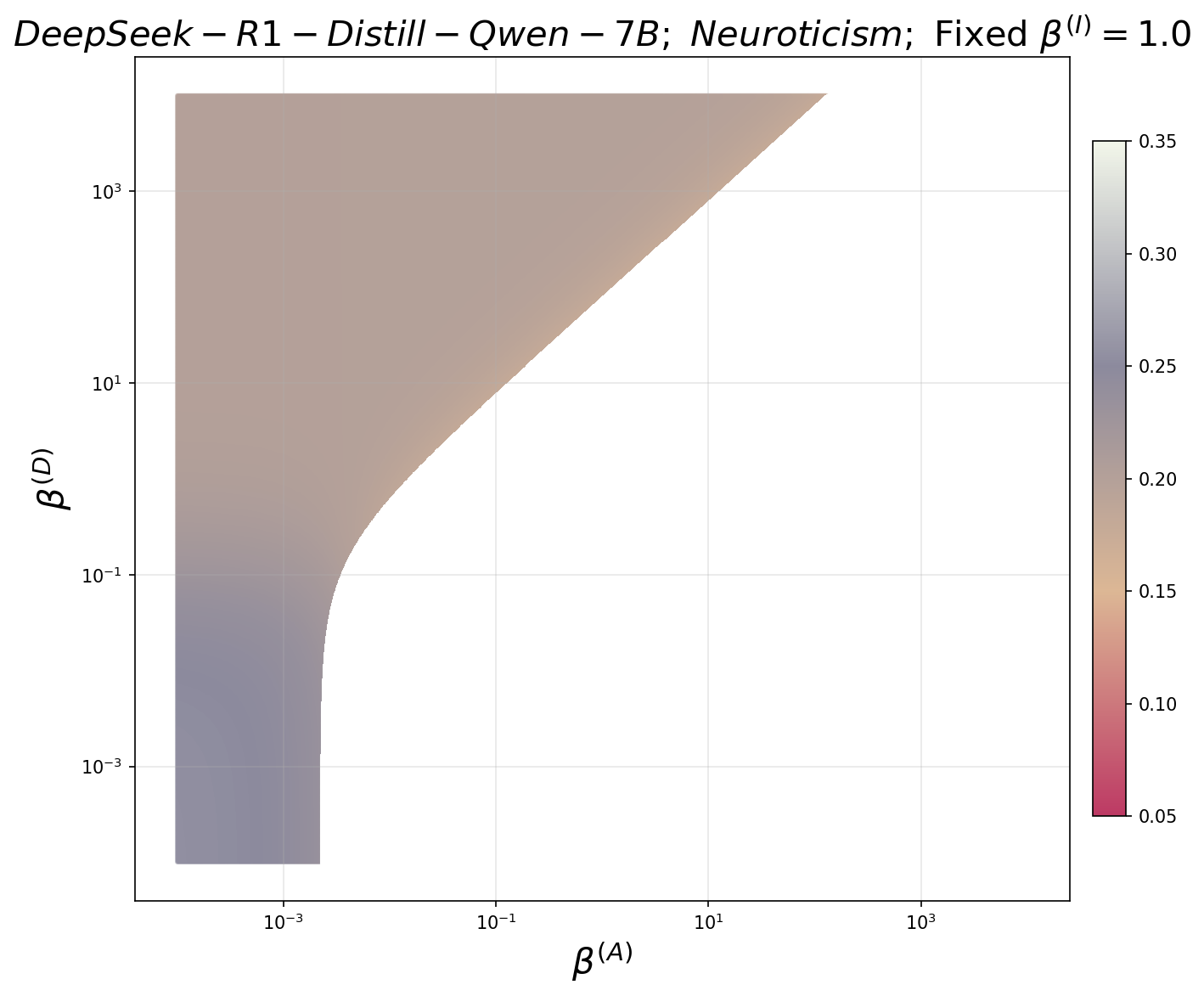}

    \includegraphics[width=0.3\linewidth]{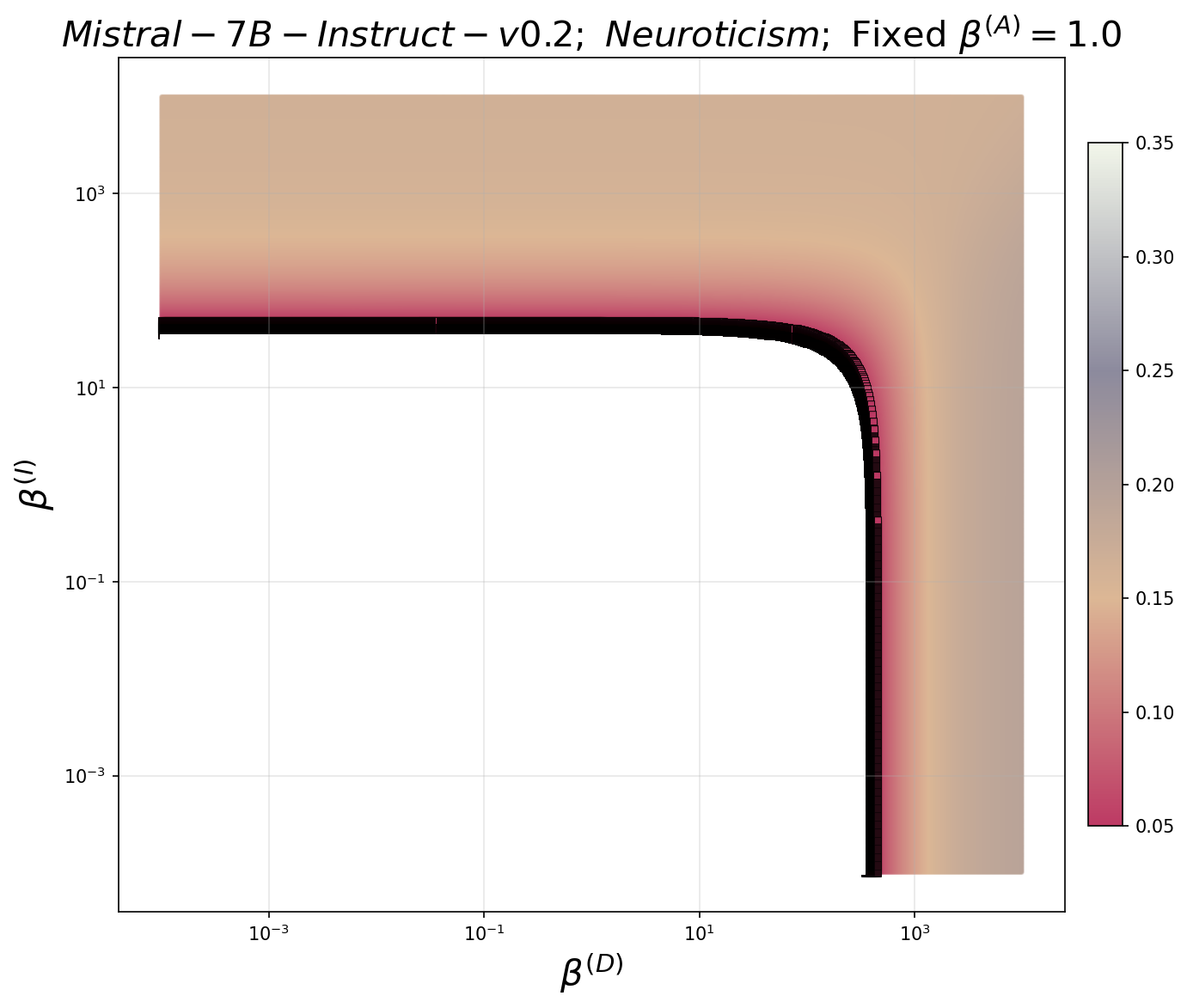}
    \includegraphics[width=0.3\linewidth]{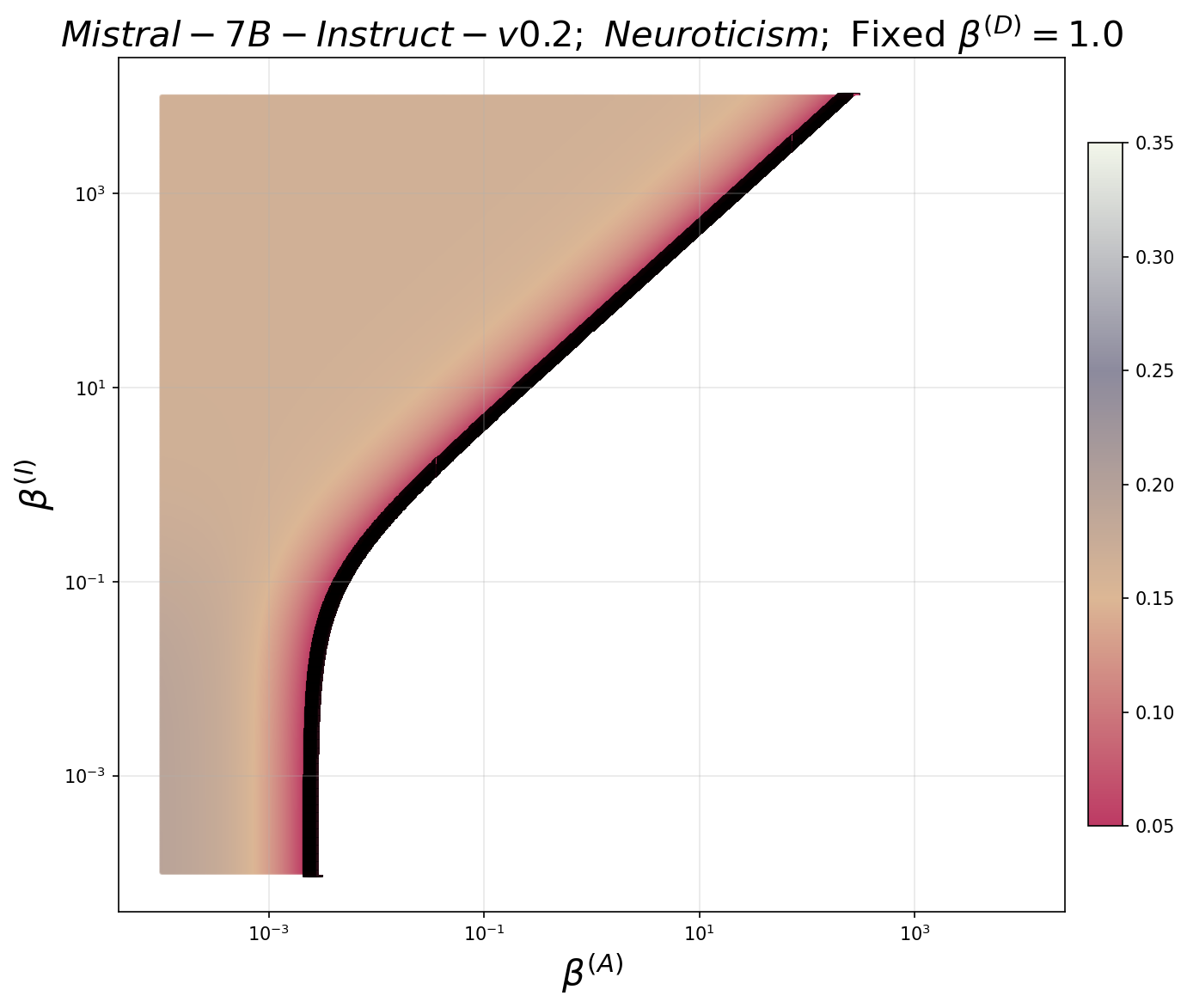}
    \includegraphics[width=0.3\linewidth]{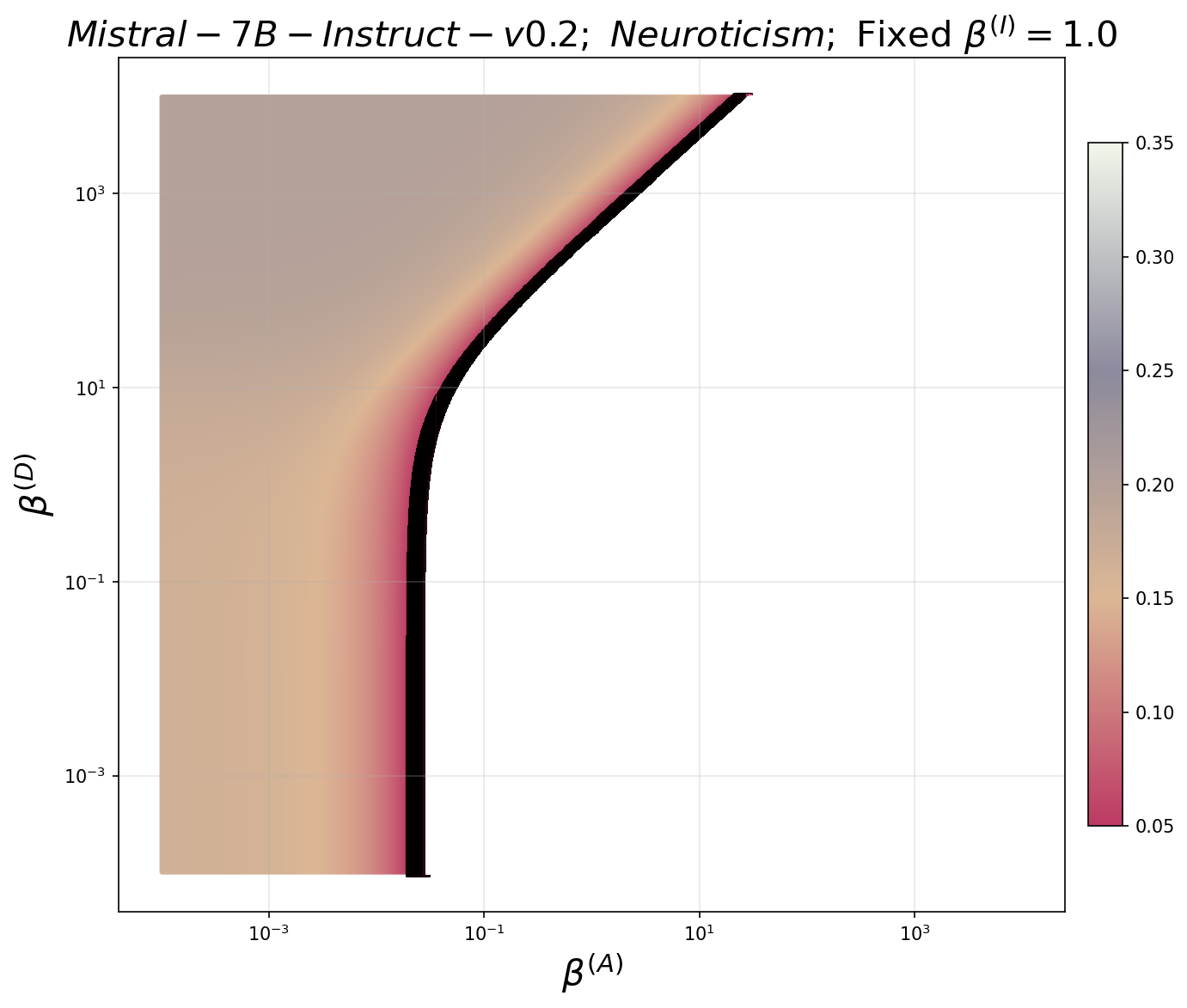}

    \includegraphics[width=0.3\linewidth]{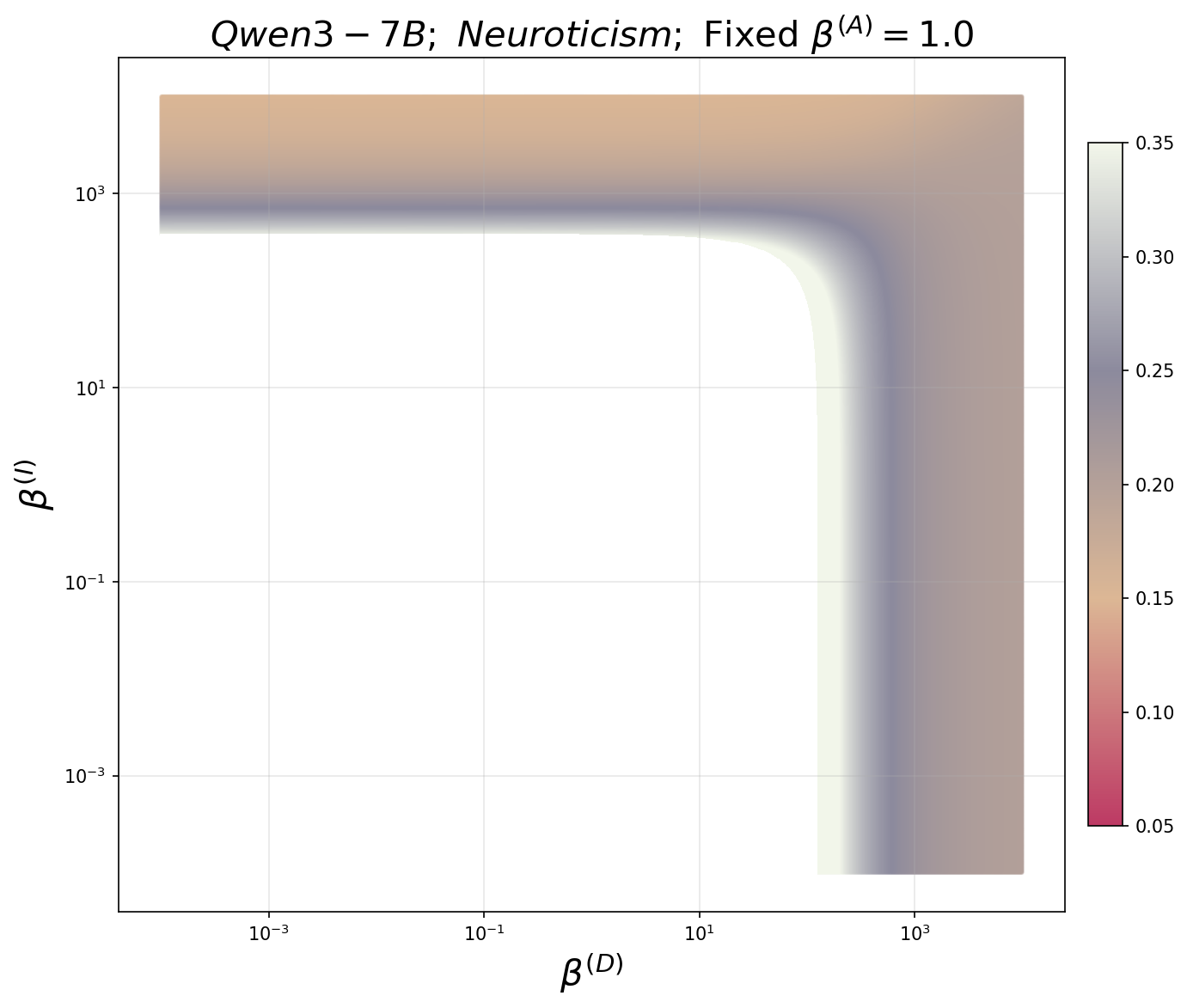}
    \includegraphics[width=0.3\linewidth]{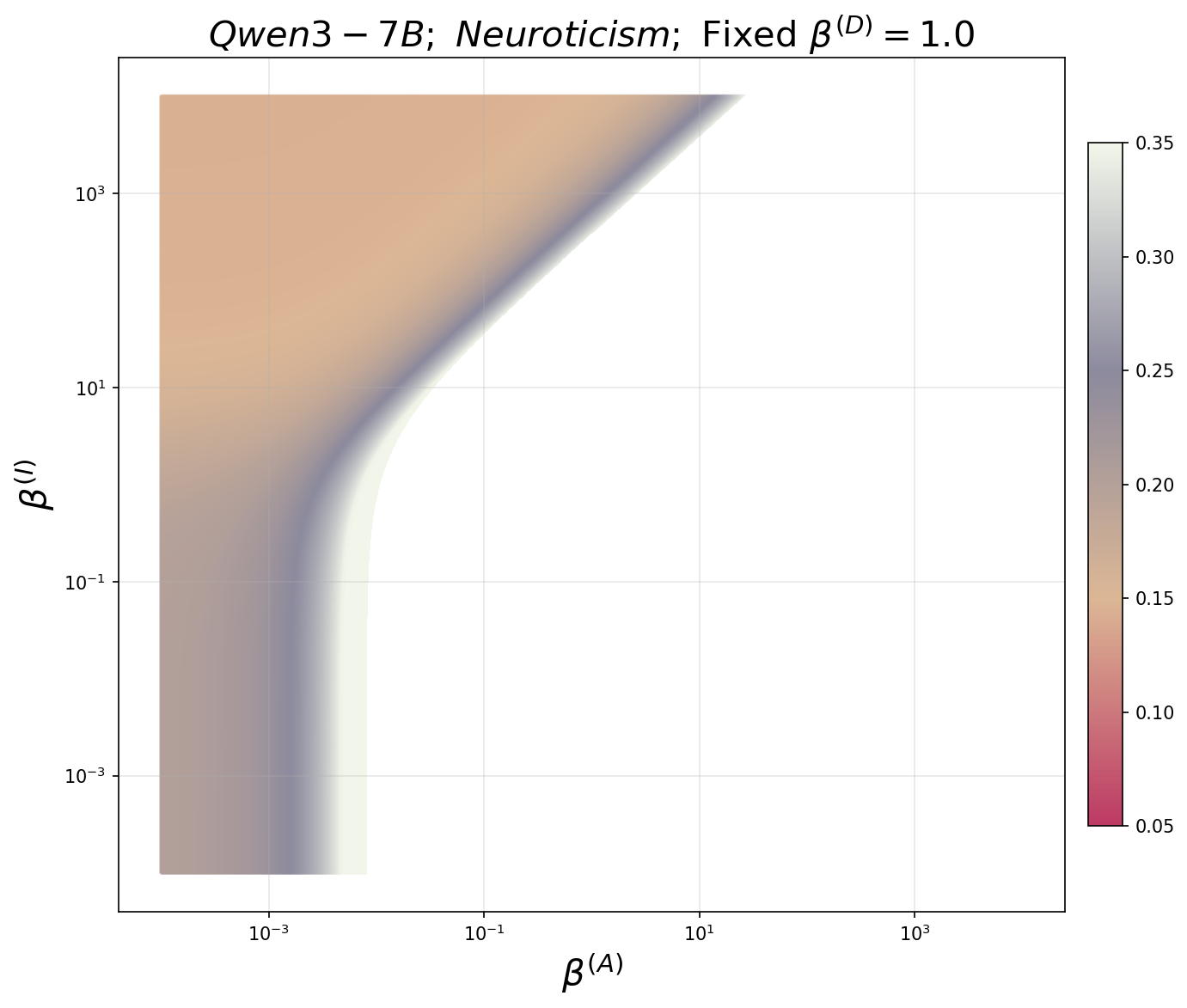}
    \includegraphics[width=0.3\linewidth]{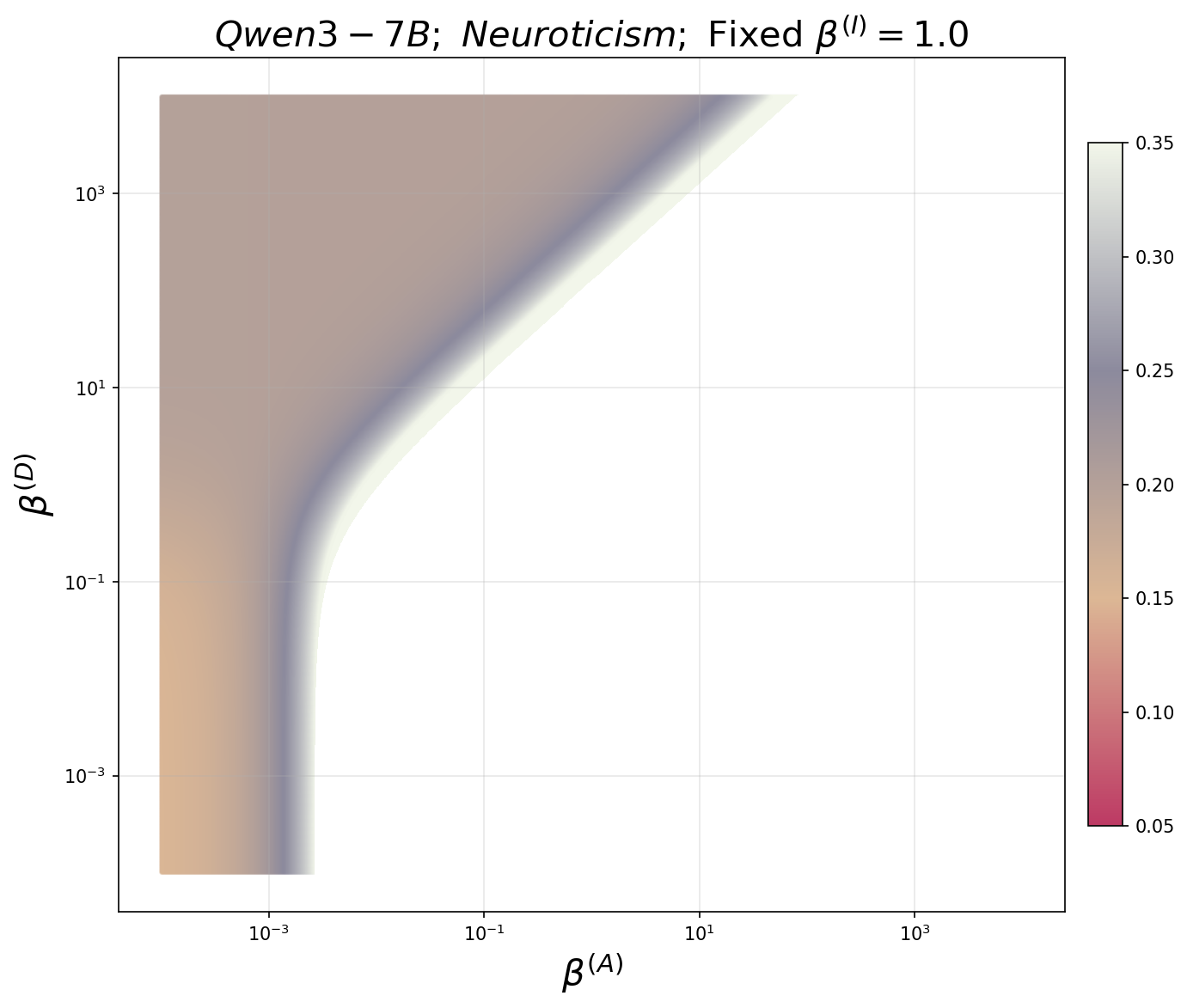}

    \includegraphics[width=0.3\linewidth]{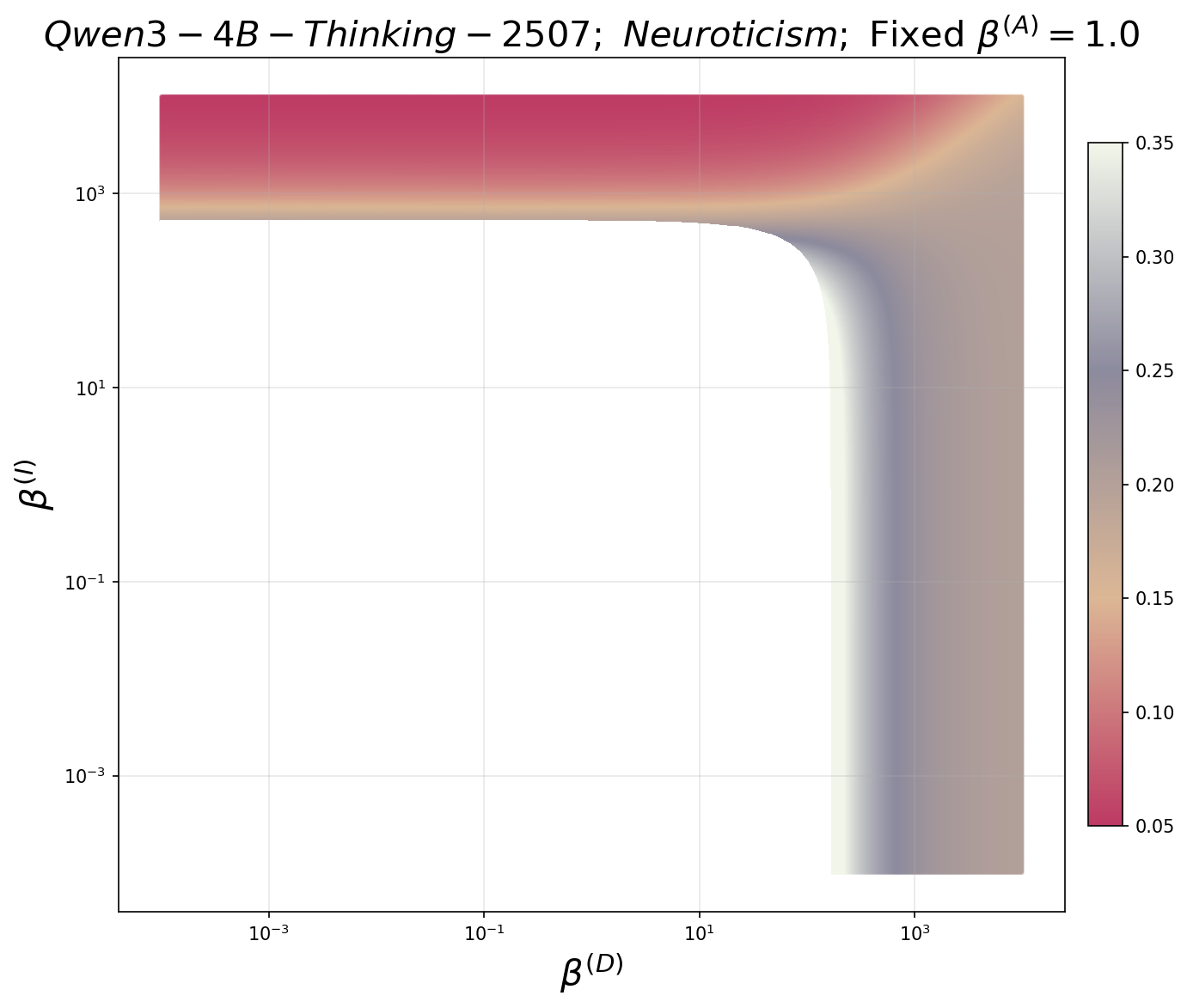}
    \includegraphics[width=0.3\linewidth]{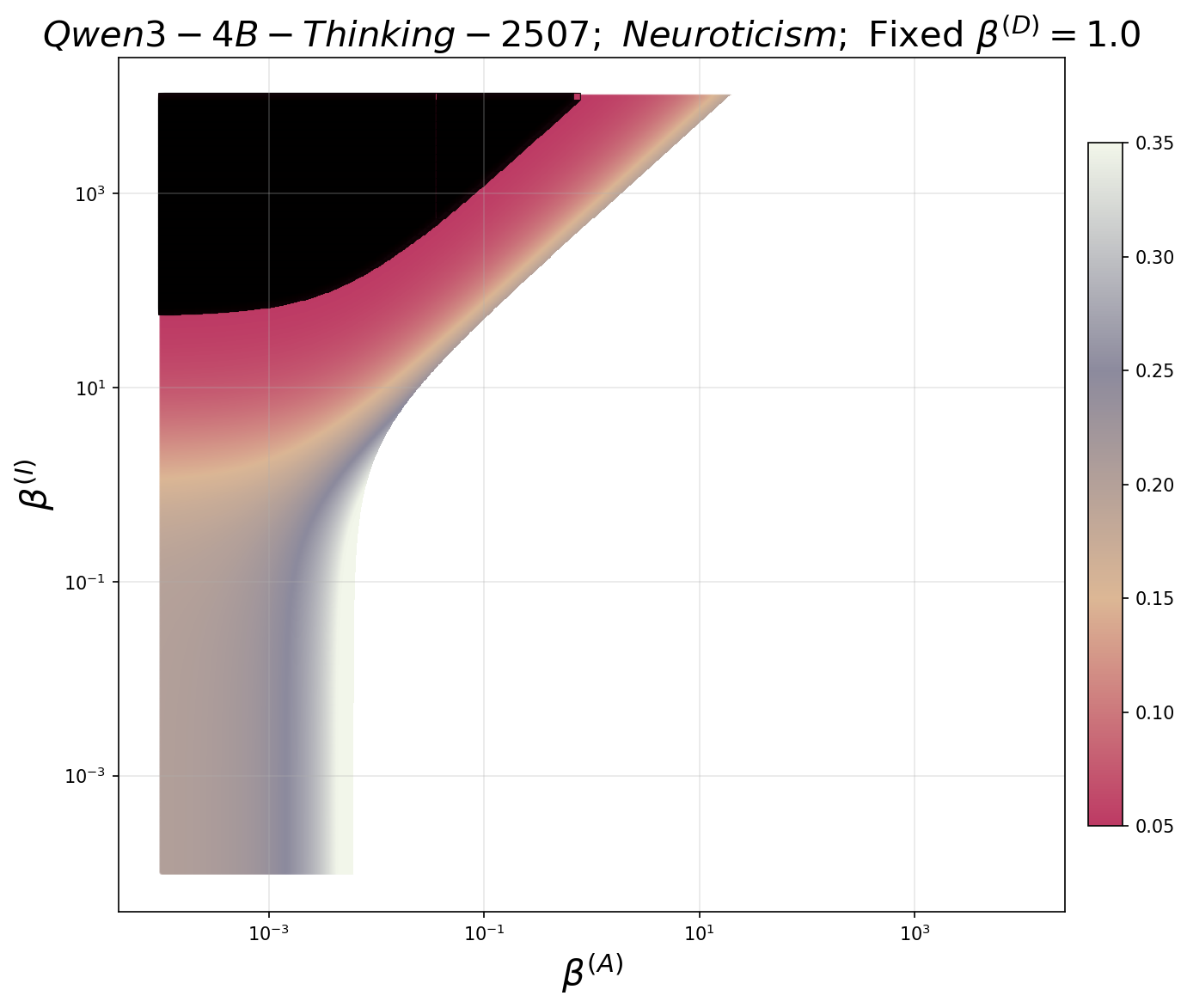}
    \includegraphics[width=0.3\linewidth]{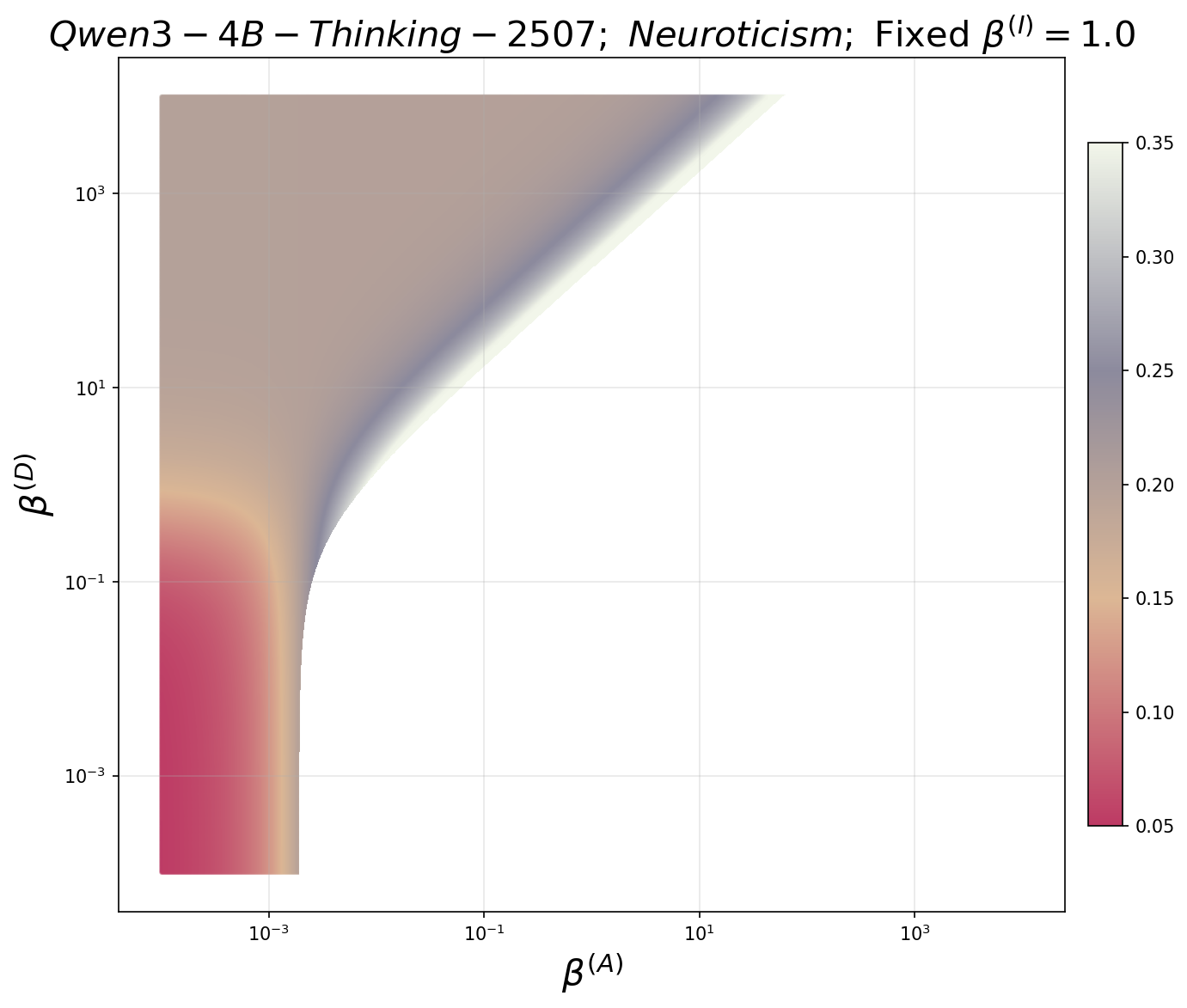}

    \includegraphics[width=0.3\linewidth]{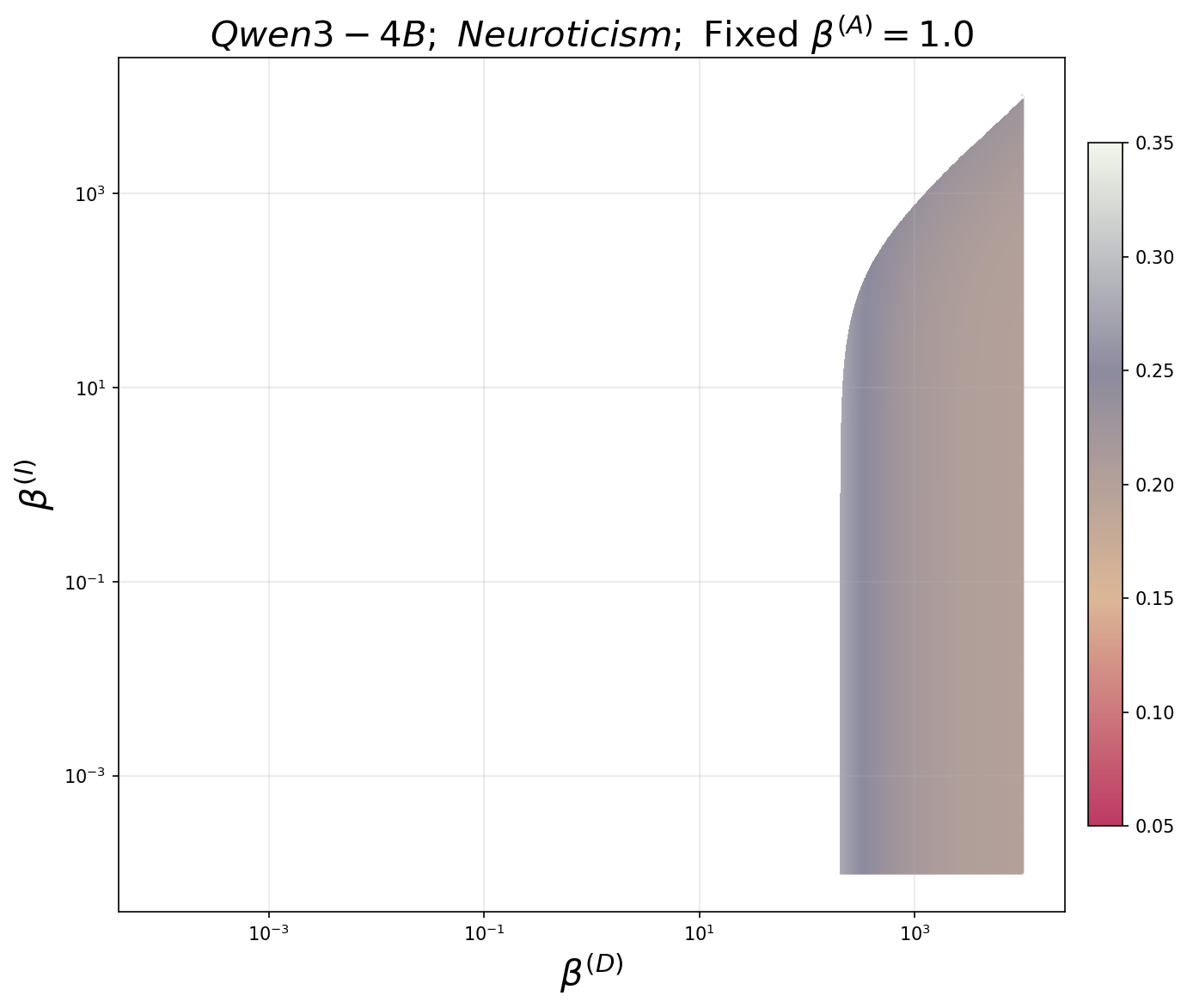}
    \includegraphics[width=0.3\linewidth]{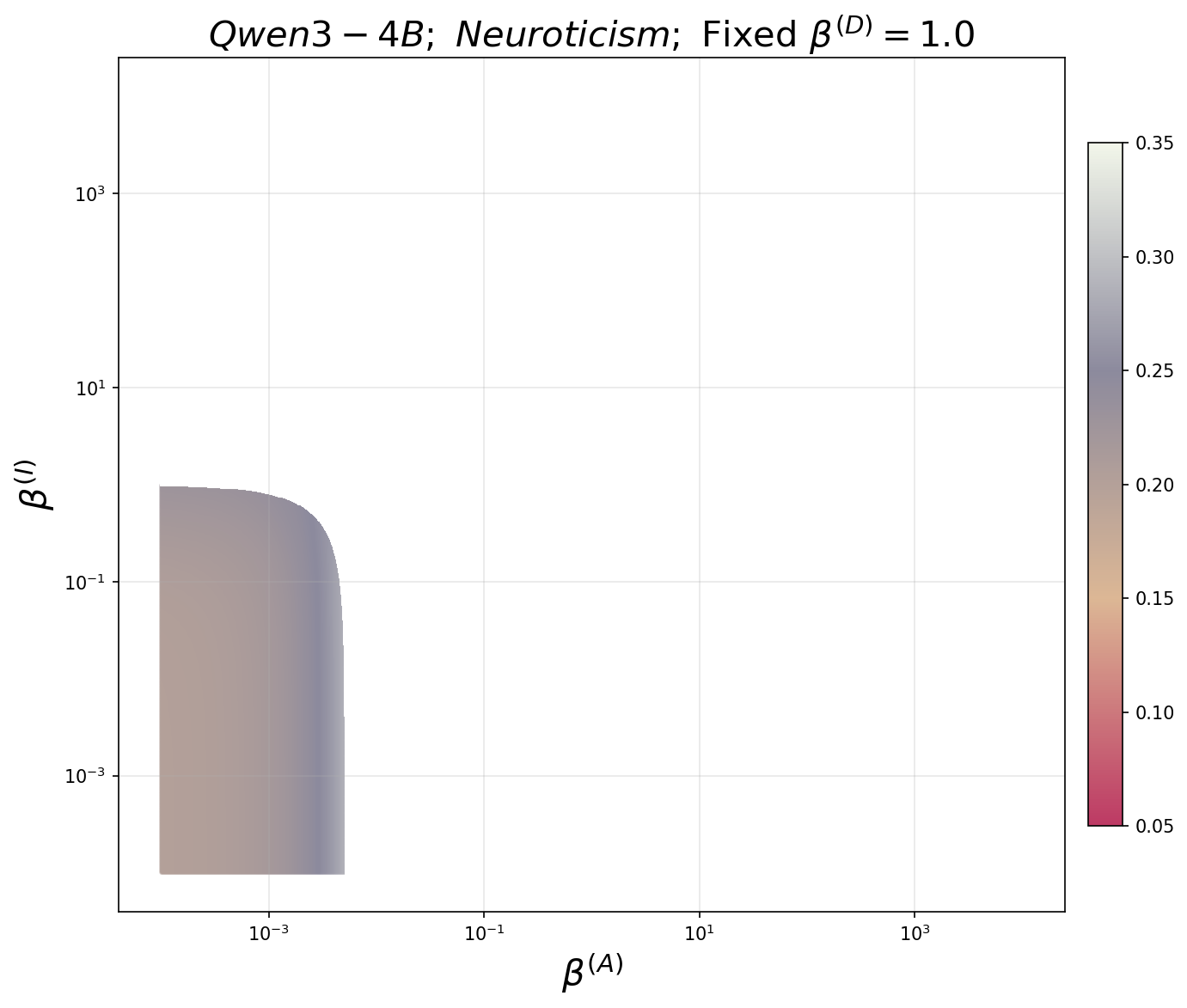}
    \includegraphics[width=0.3\linewidth]{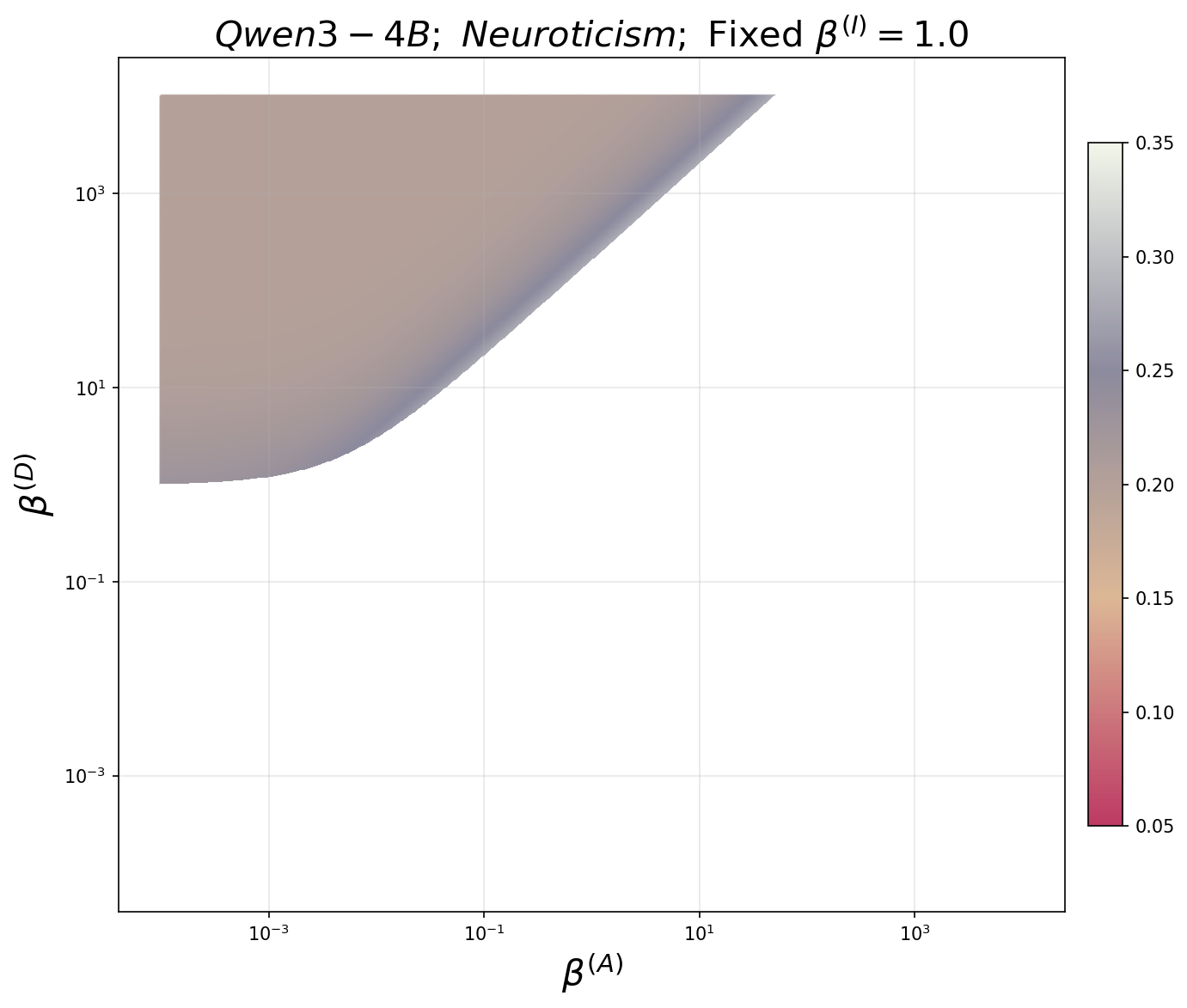}
    
    \caption{Political exclusion regarding the subpopulation \emph{Neuroticism} on the $\mathtt{Big\ Five}$ dataset.}
    \label{fig:Neuroticism}
\end{figure}

\begin{figure}
    \centering

    \includegraphics[width=0.3\linewidth]{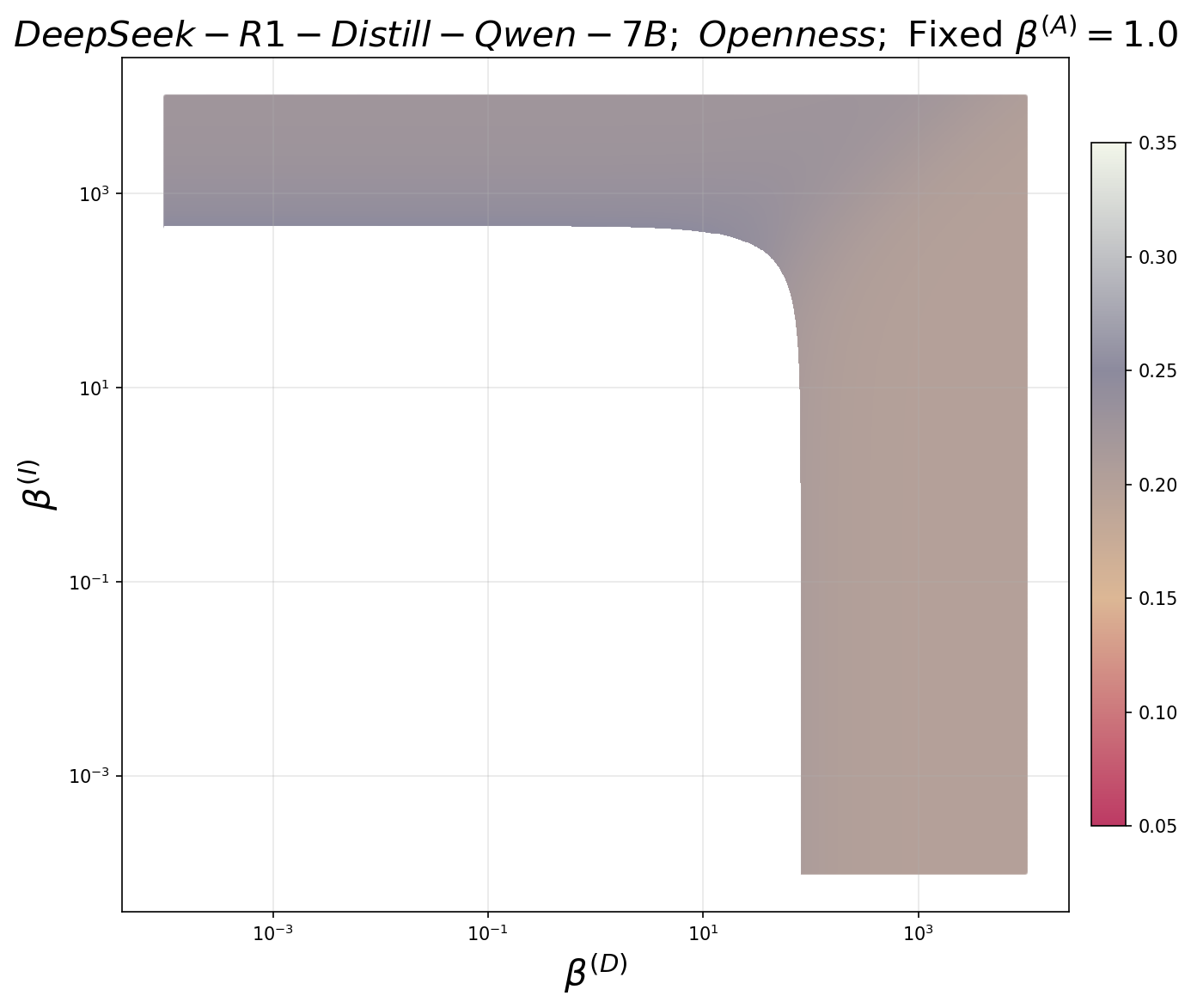}
    \includegraphics[width=0.3\linewidth]{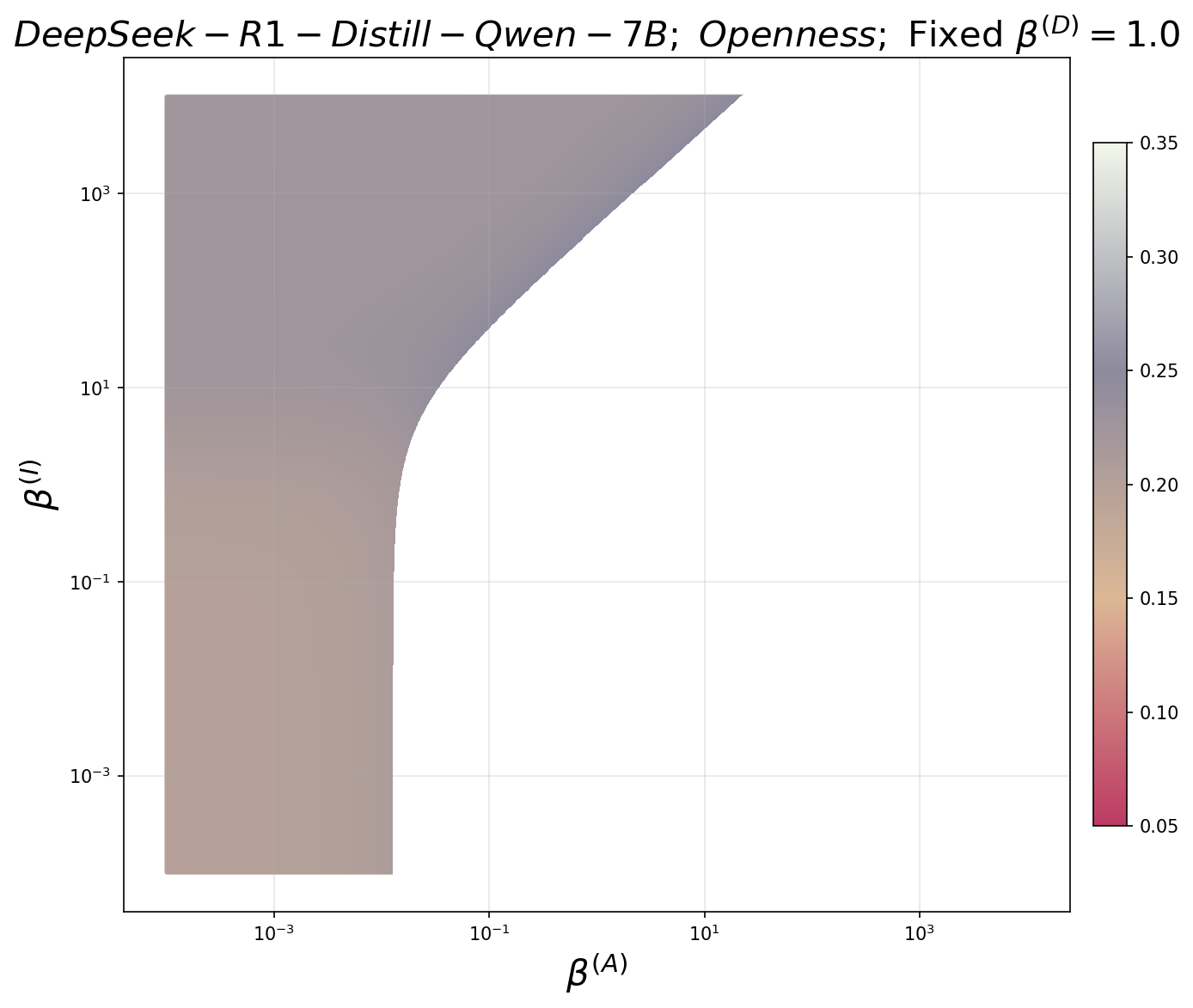}
    \includegraphics[width=0.3\linewidth]{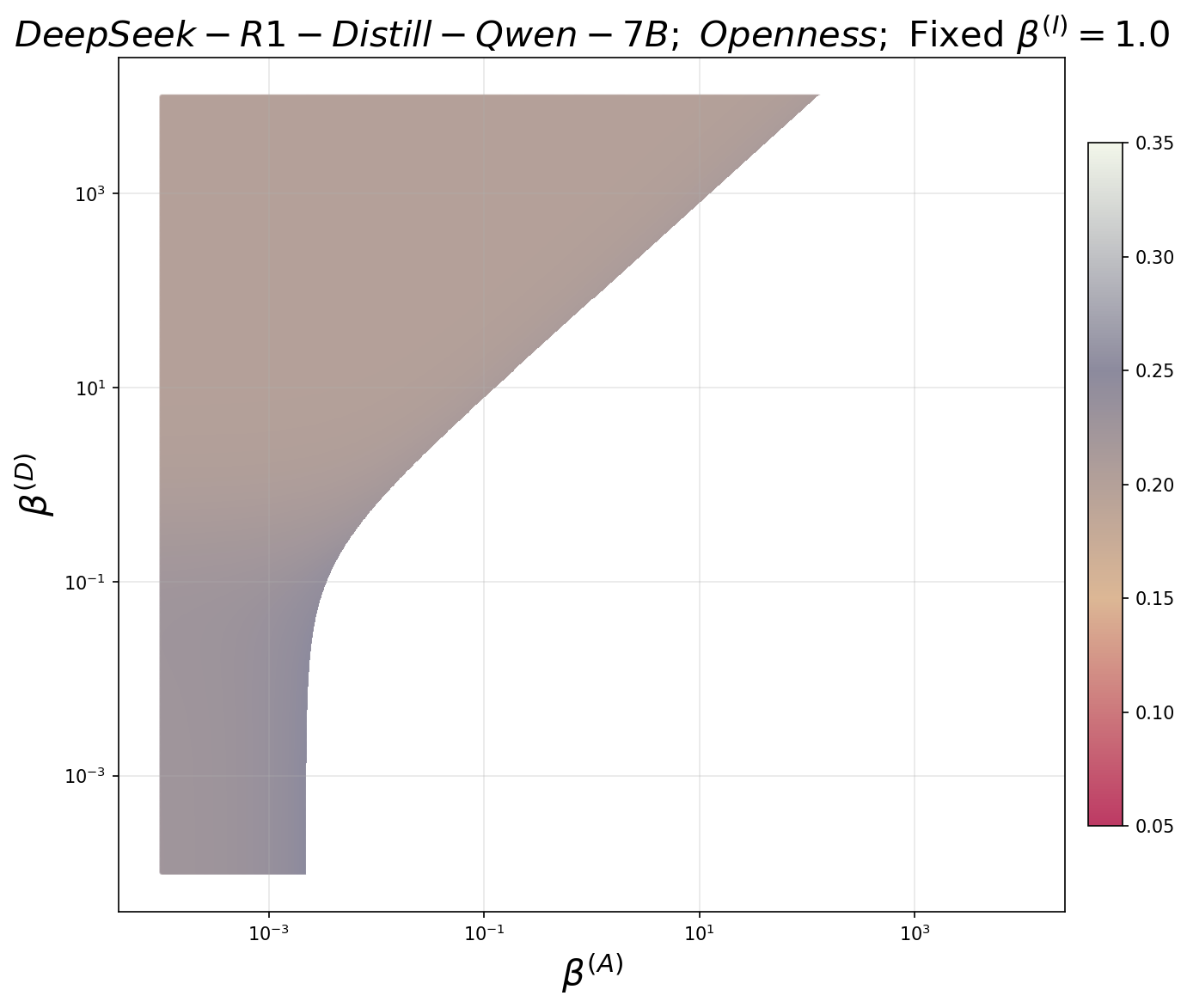}

    \includegraphics[width=0.3\linewidth]{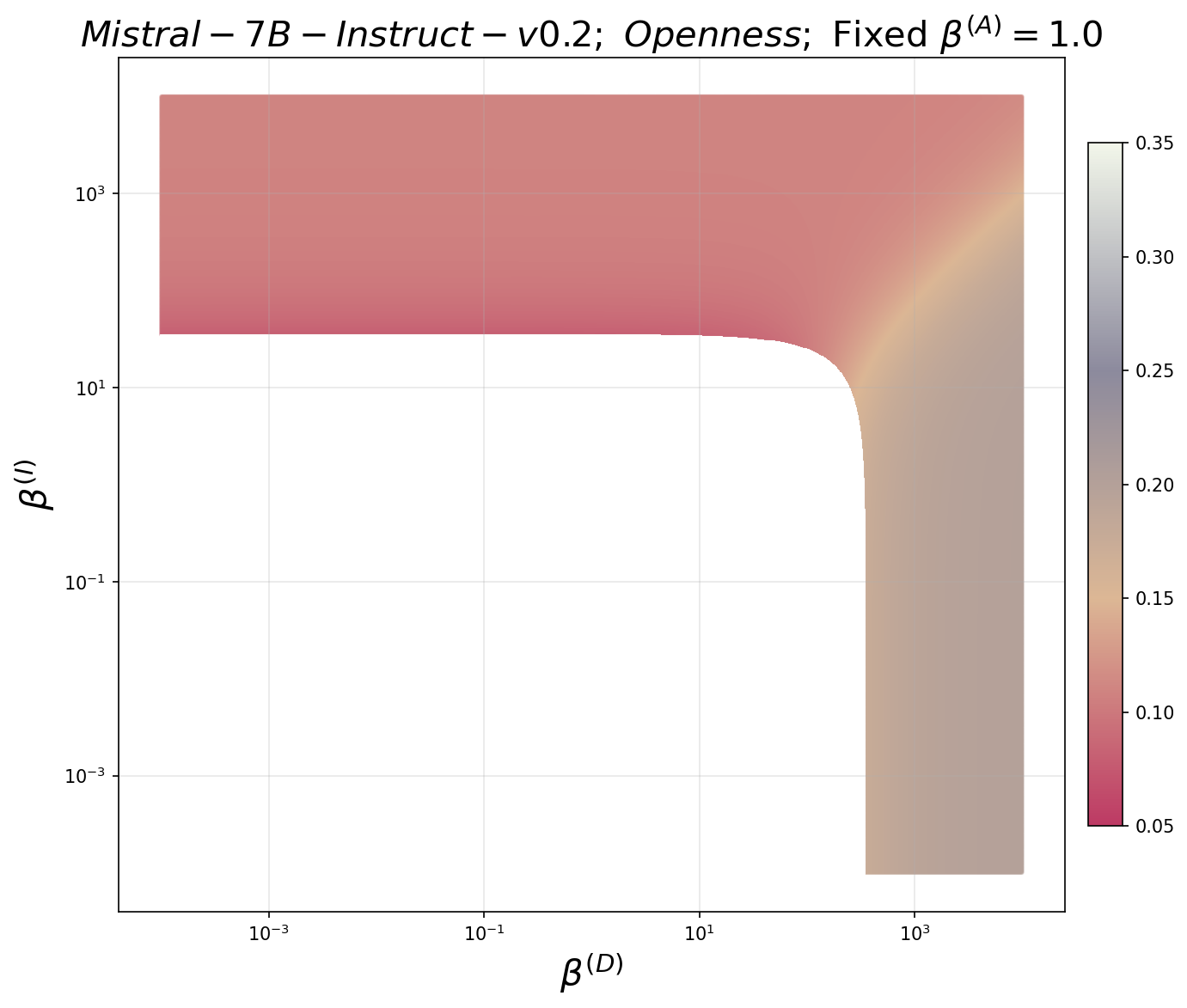}
    \includegraphics[width=0.3\linewidth]{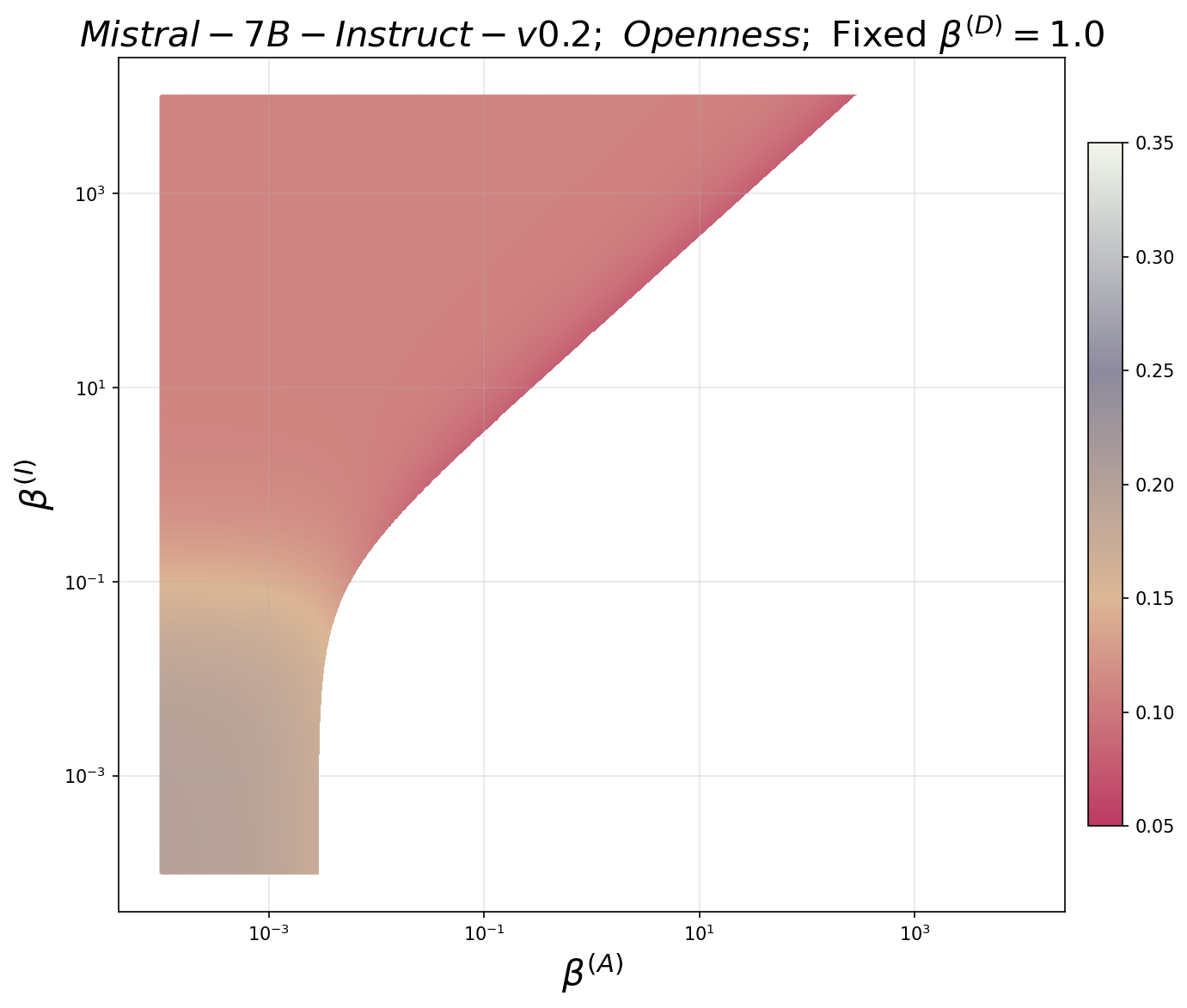}
    \includegraphics[width=0.3\linewidth]{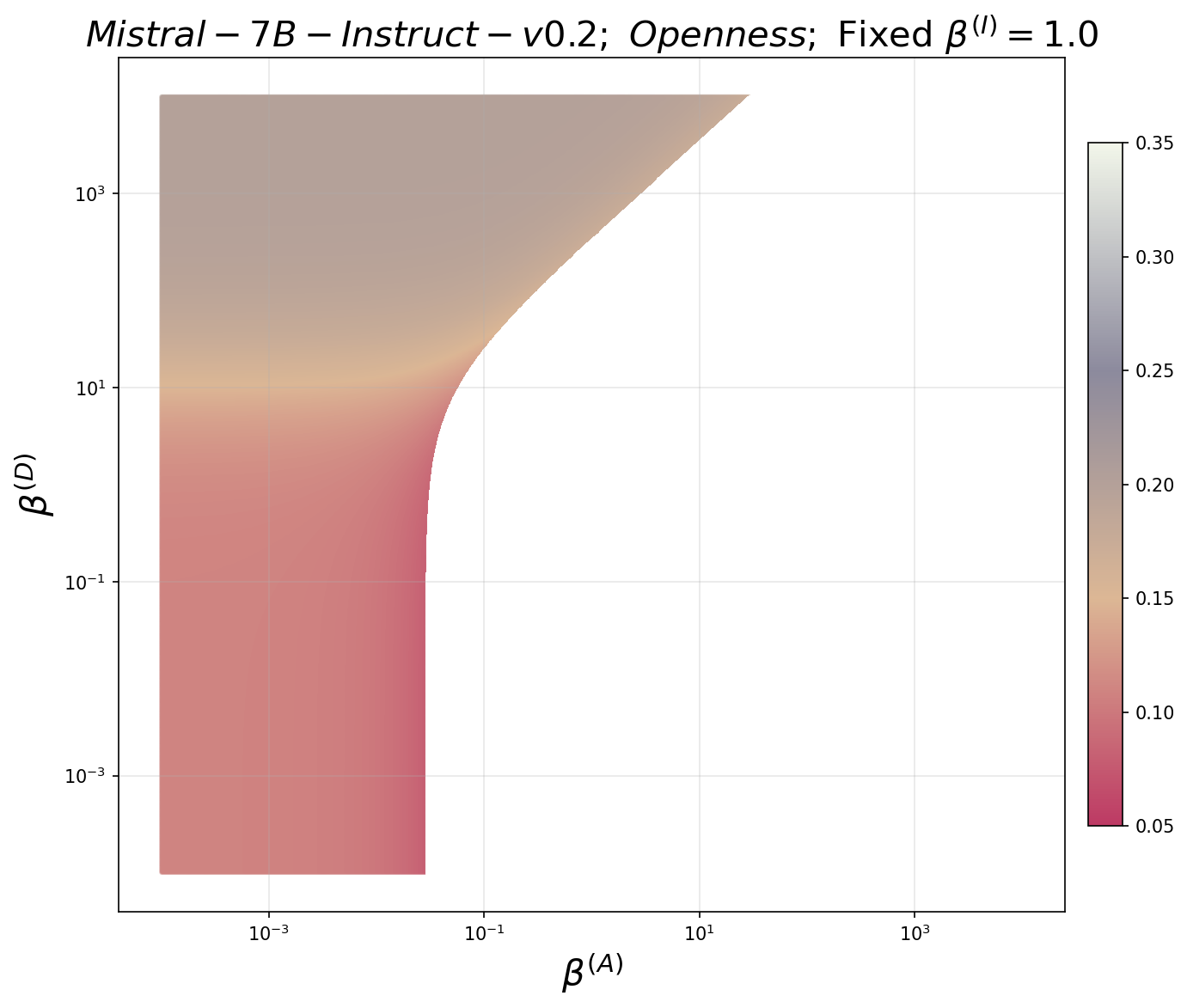}

    \includegraphics[width=0.3\linewidth]{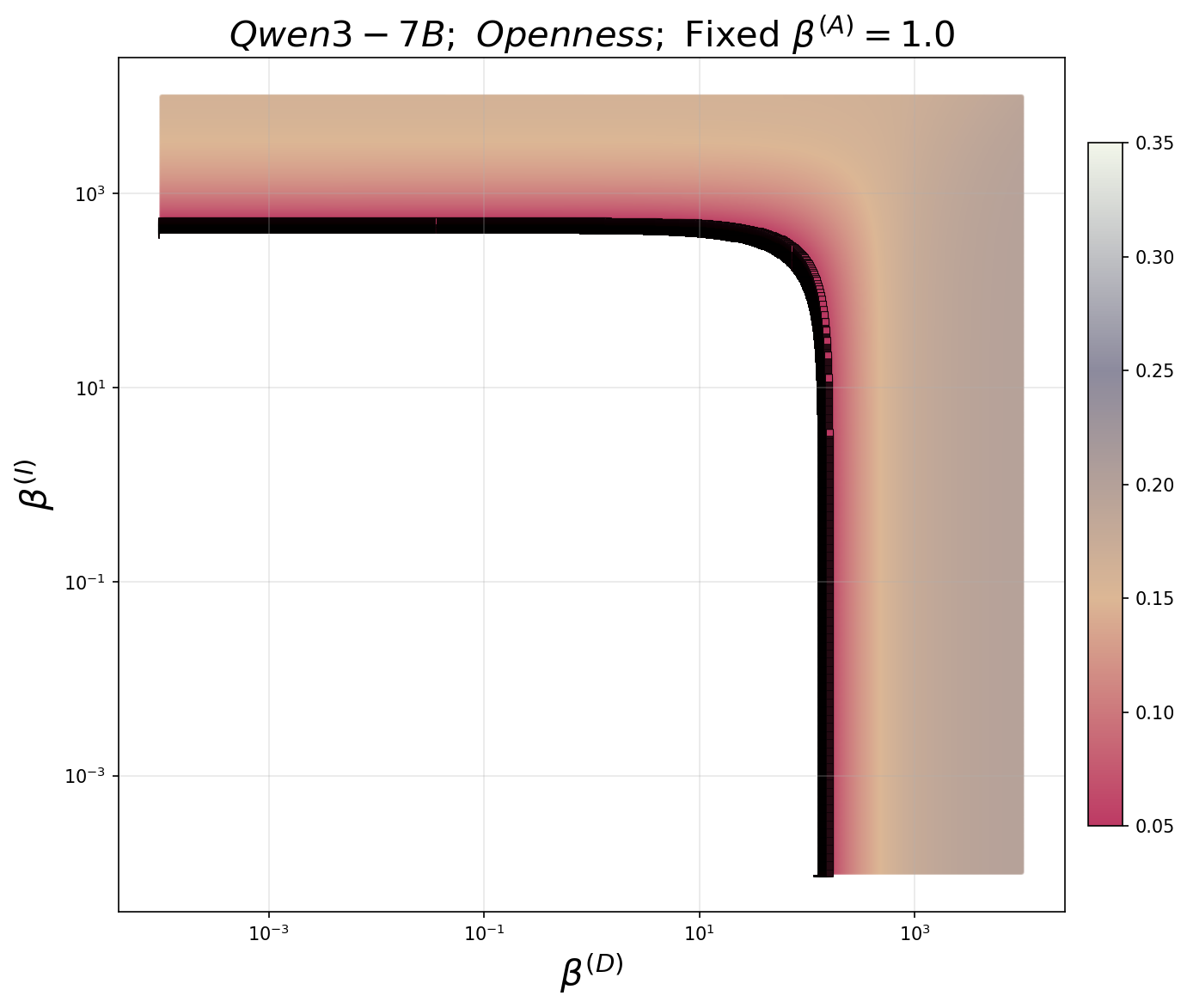}
    \includegraphics[width=0.3\linewidth]{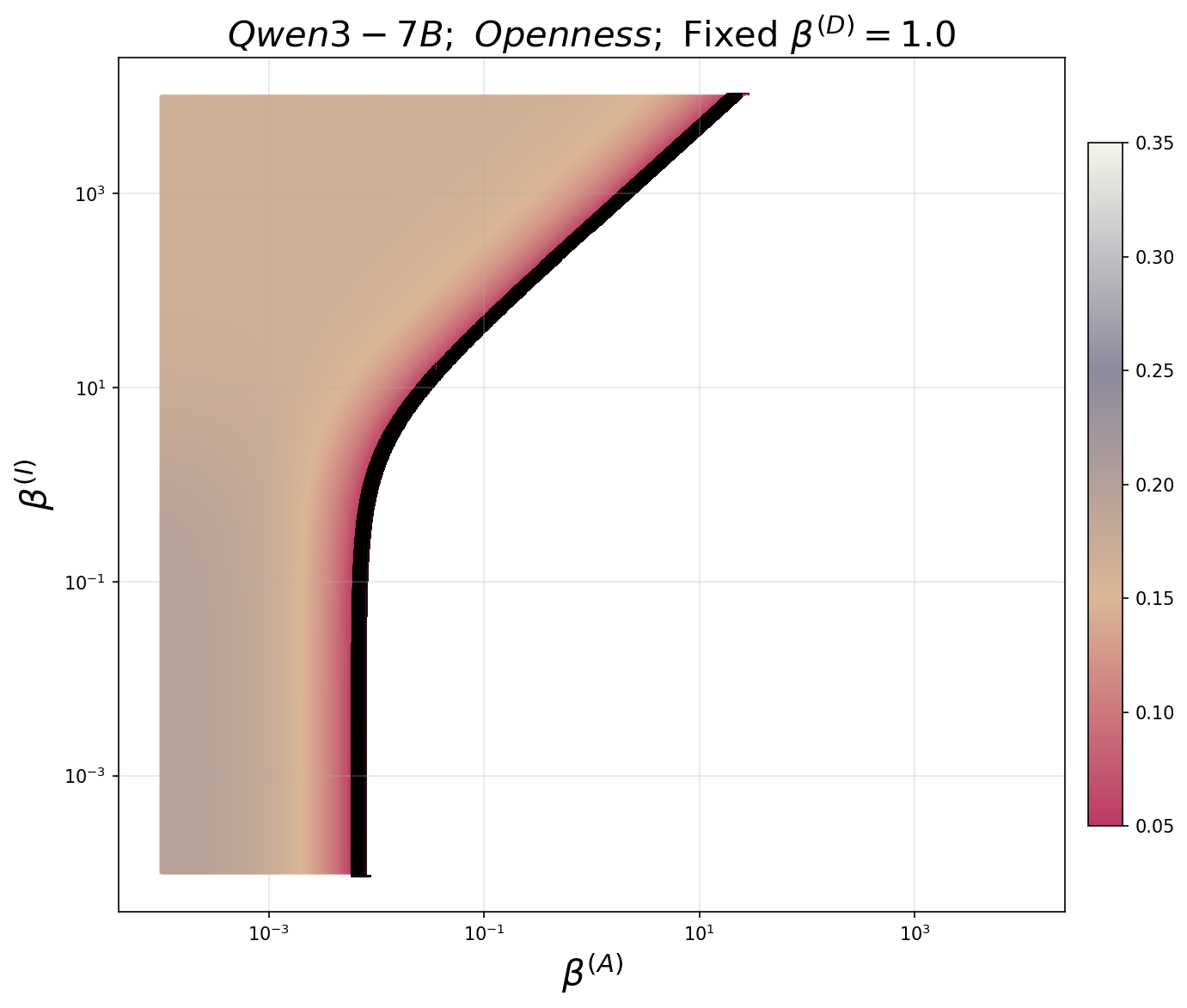}
    \includegraphics[width=0.3\linewidth]{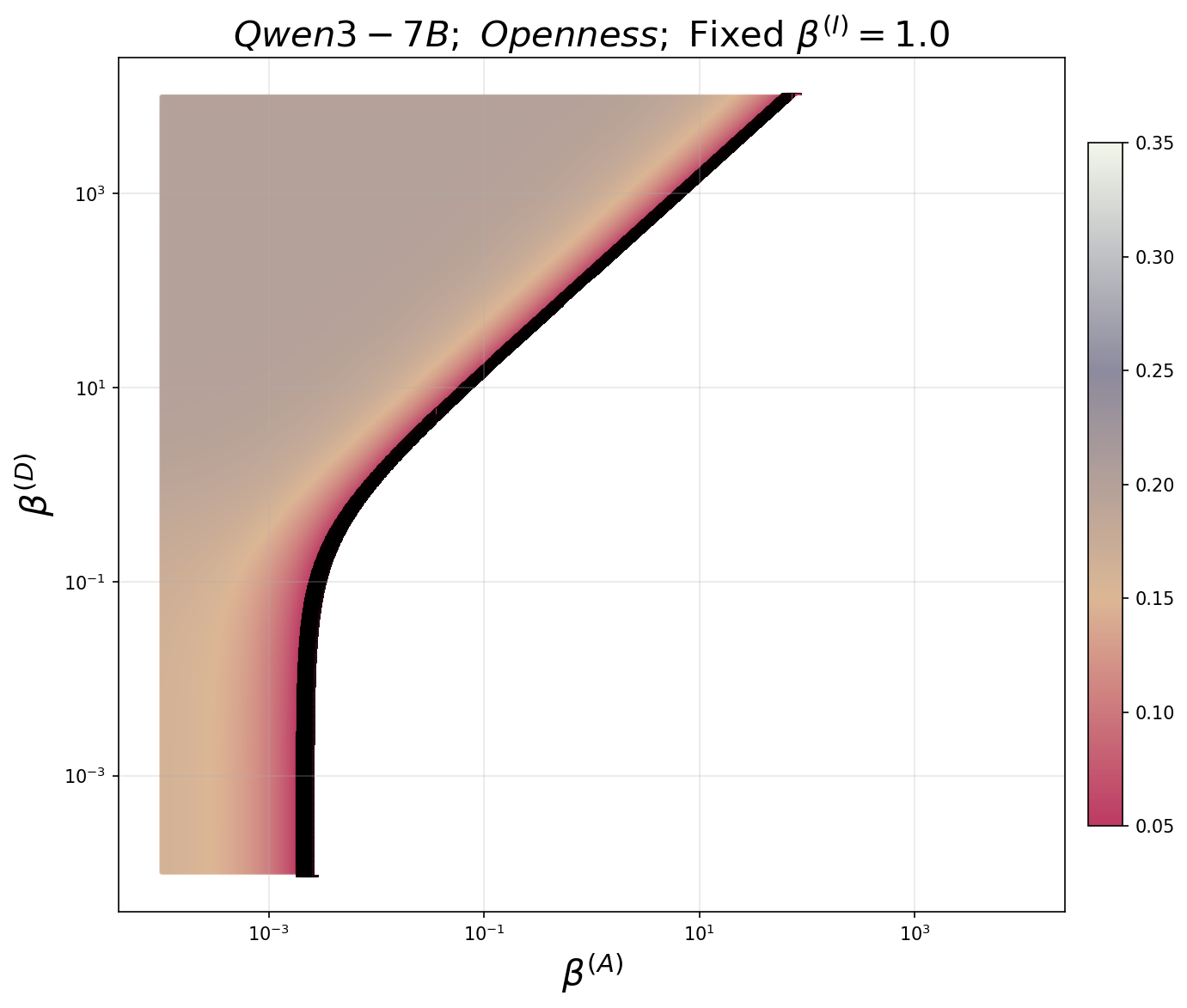}

    \includegraphics[width=0.3\linewidth]{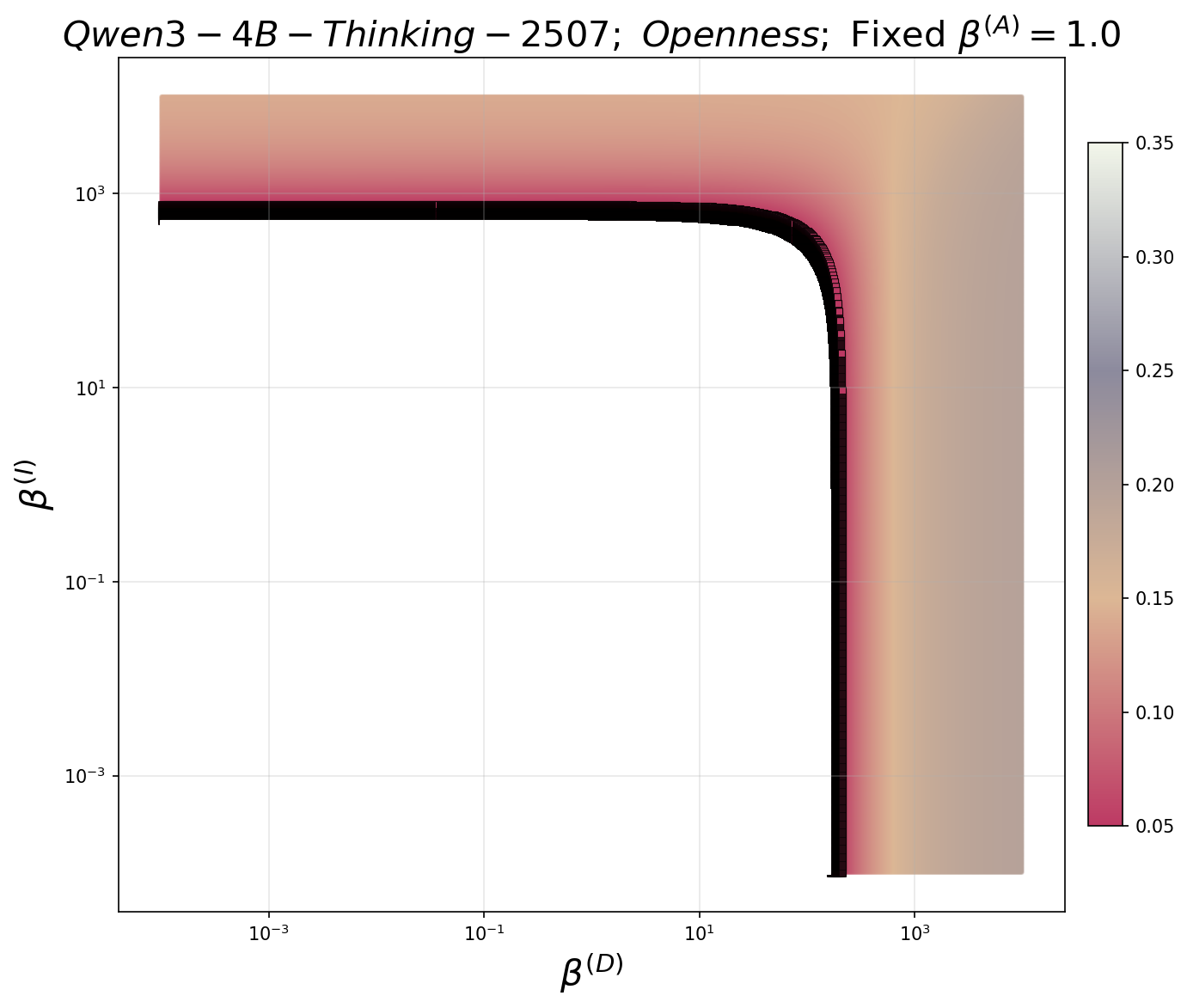}
    \includegraphics[width=0.3\linewidth]{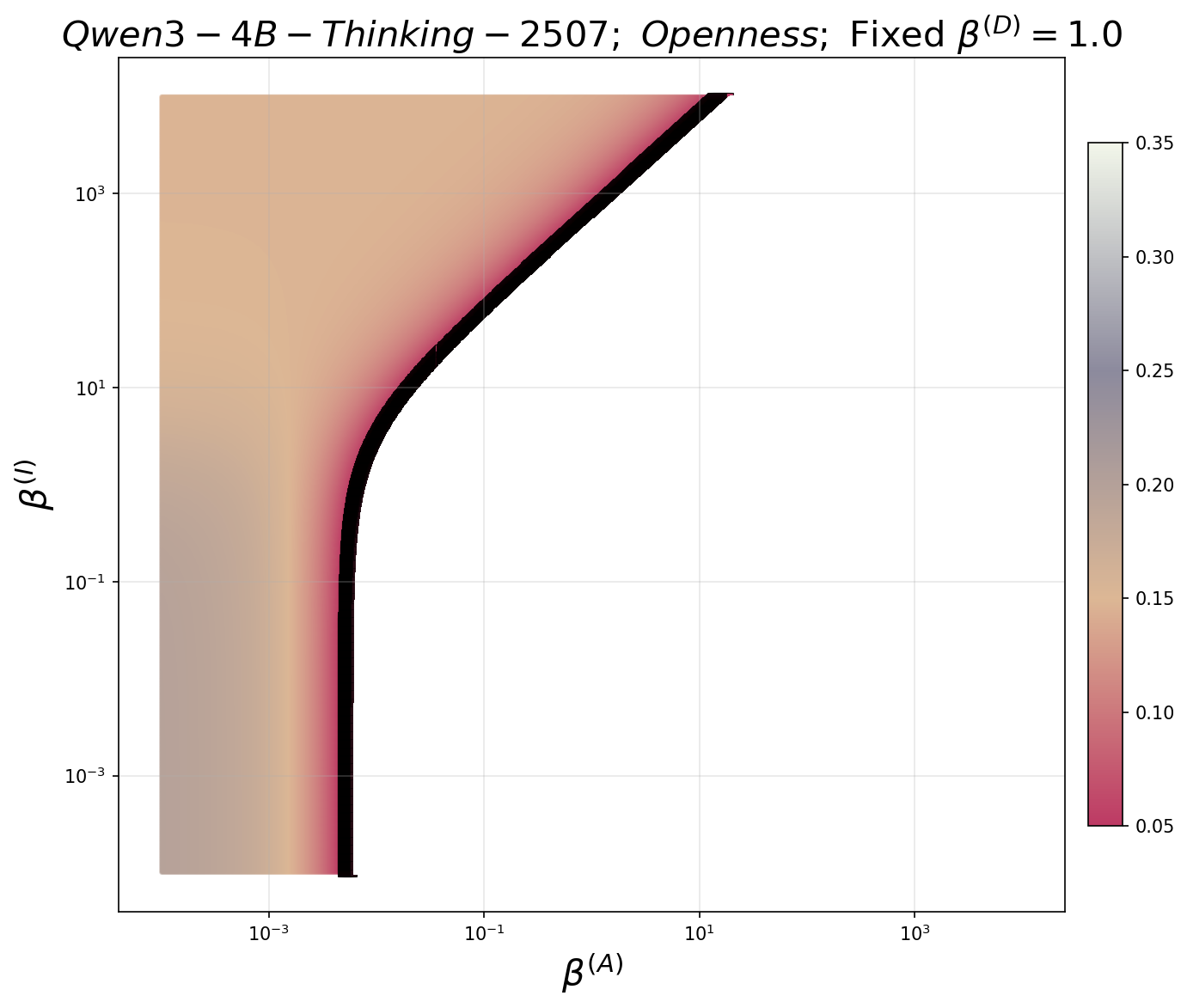}
    \includegraphics[width=0.3\linewidth]{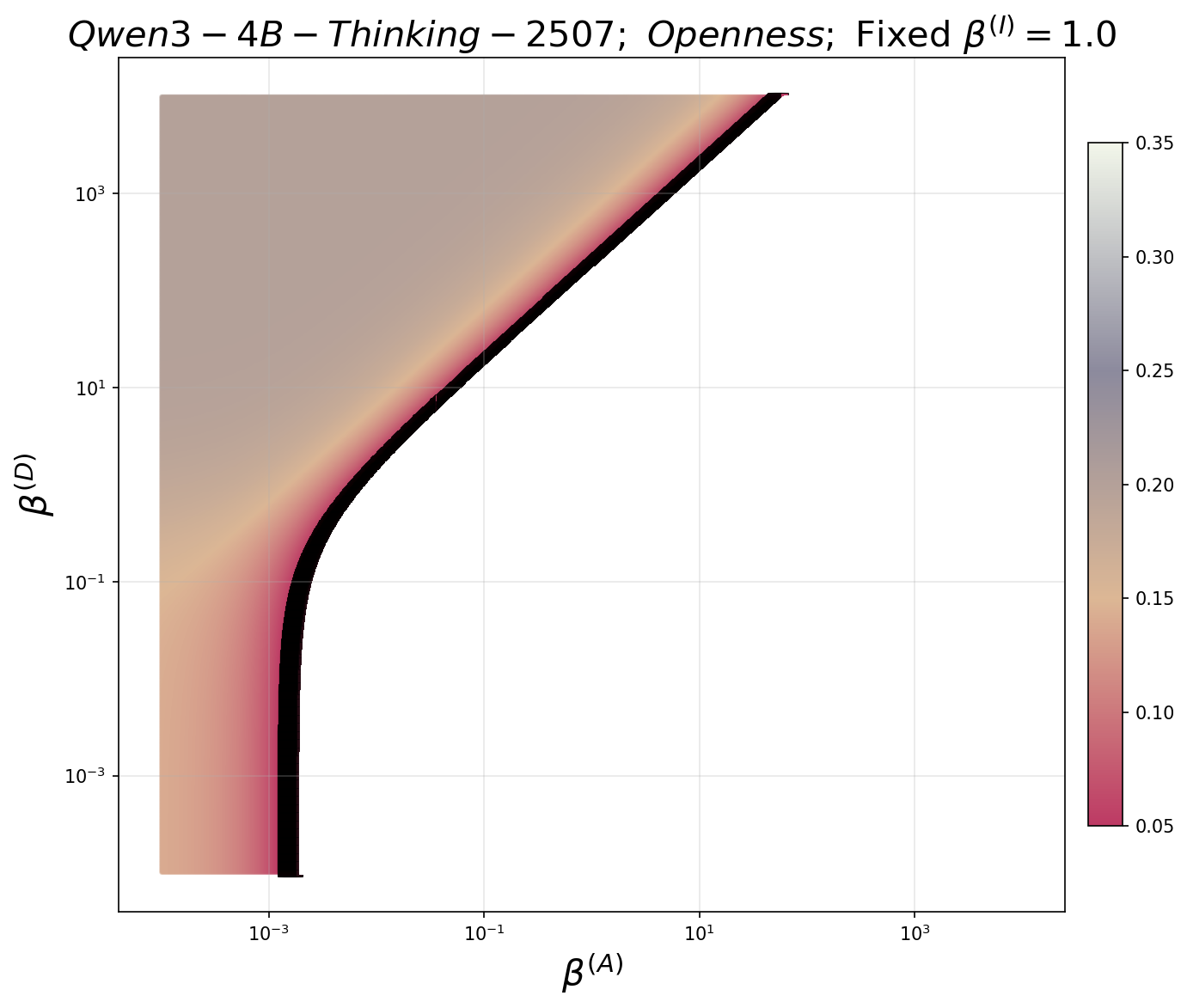}

    \includegraphics[width=0.3\linewidth]{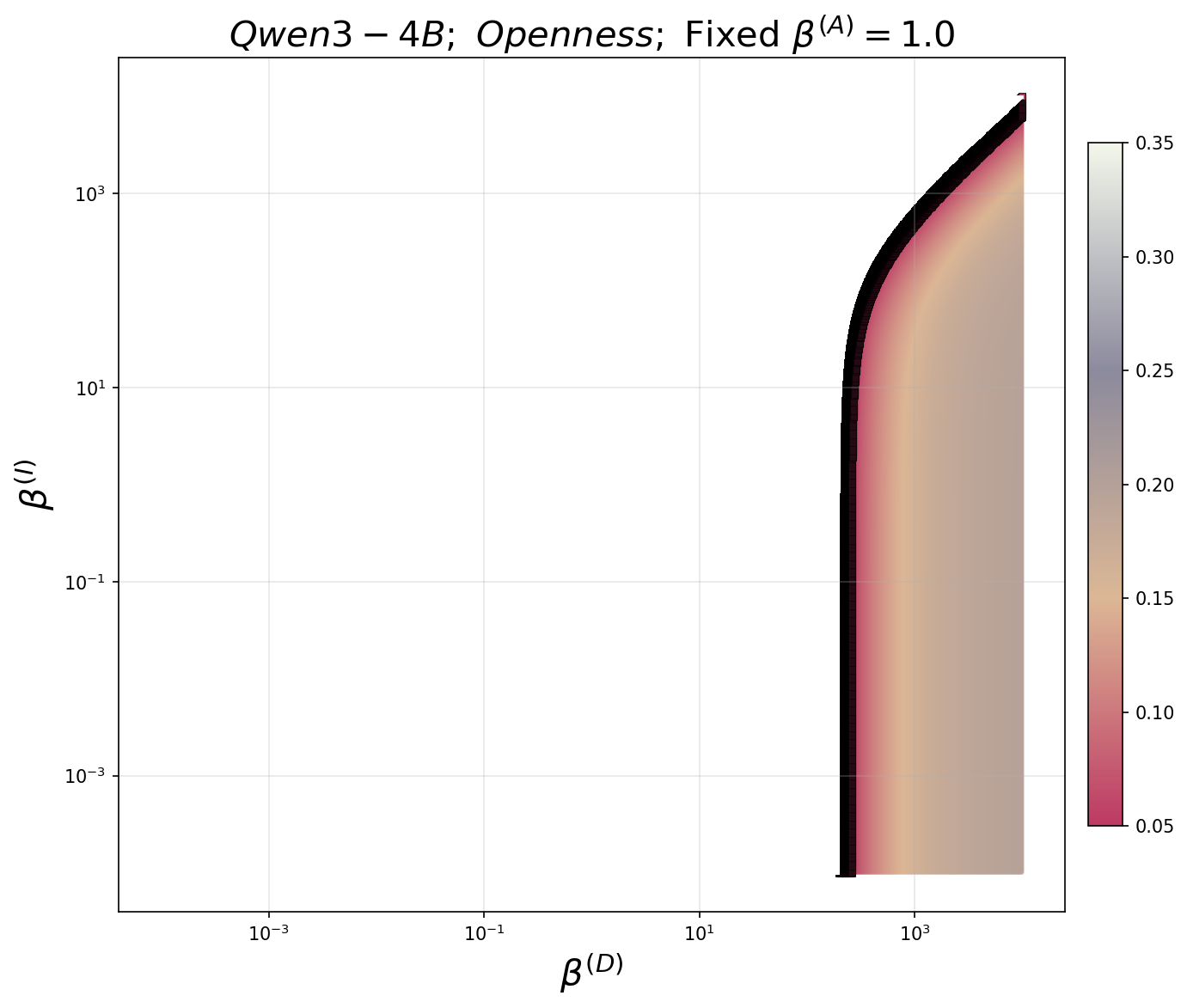}
    \includegraphics[width=0.3\linewidth]{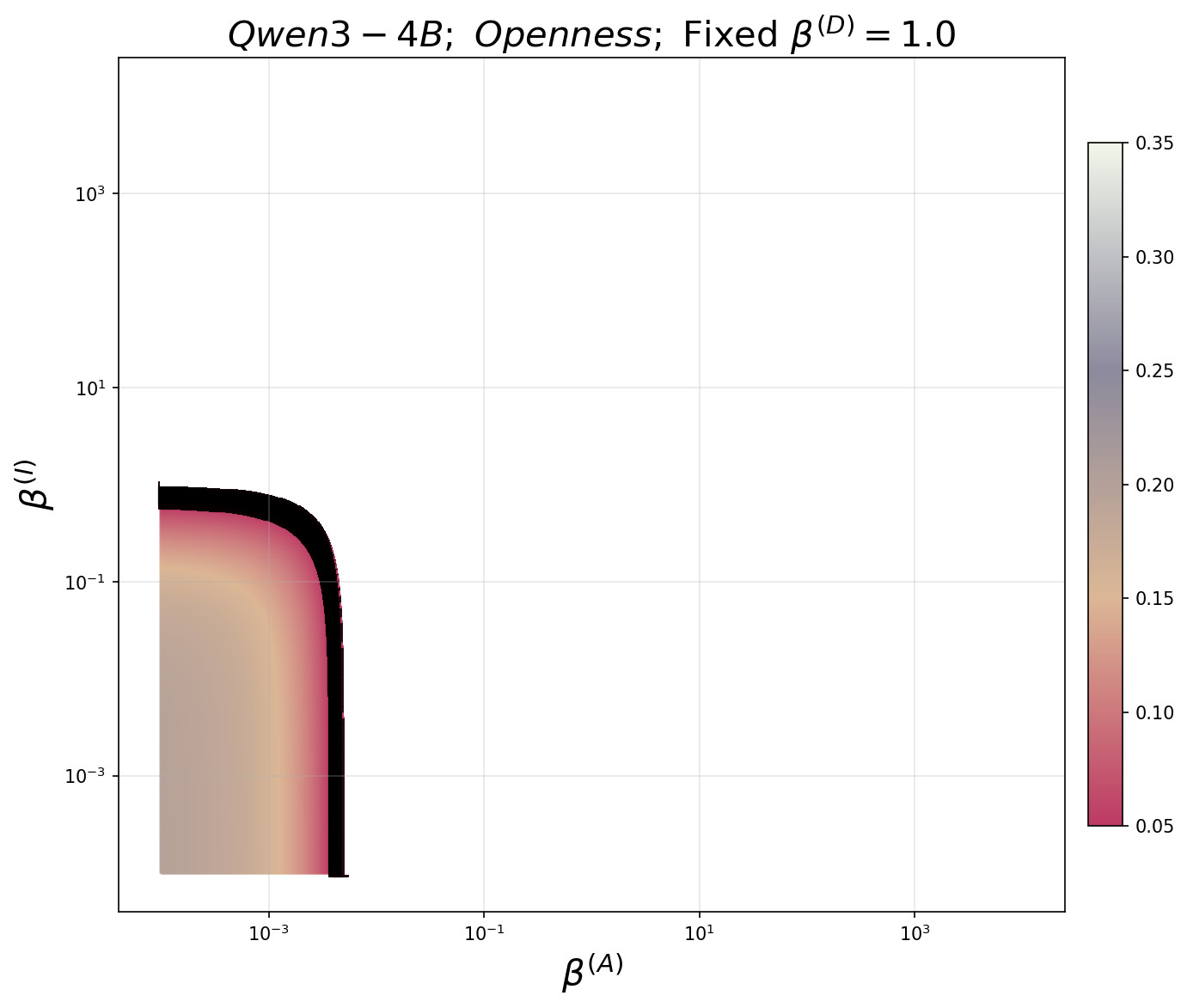}
    \includegraphics[width=0.3\linewidth]{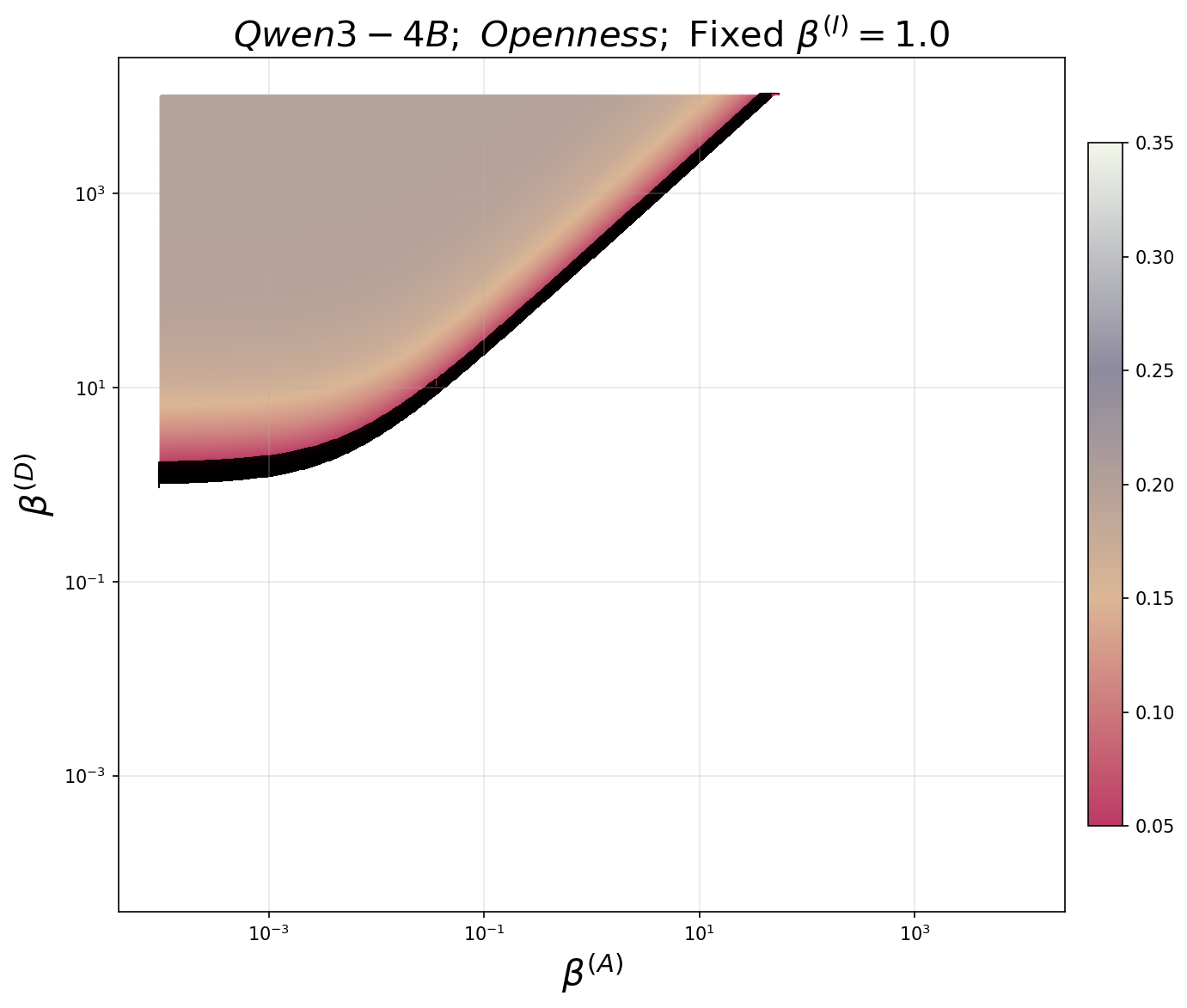}
    
    \caption{Political exclusion regarding the subpopulation \emph{Openness} on the $\mathtt{Big\ Five}$ dataset.}
    \label{fig:Openness}
\end{figure}


\section{More Related Work}

In our paper, we adopt the equilibrium behavior of LLM agents as a governance lever that guides the design of system incentive. This is related to the literature of \emph{differentiable economics}~\cite{zheng2022ai,wang2024gemnet,finocchiaro2021bridging,wang2023deep,ivanov2024principal,zhang2024position,hossain2024multi,rahme2020auction,ivanov2022optimal,curry2022differentiable,duan2023scalable,dutting2021optimal,curry2025automated,wang2025bundleflow} and environment design~\citep{zhang2024position}. 
At a high level, this line of work replaces hand-derived mechanism structure with parameterized architectures, and then searches over designs using optimization methods such as gradient descent. However, it is largely understudied how these automated mechanism design methods can be extended to address problems arising in settings with LLM players.

In recent ML-driven settings, Nash equilibrium computation is often approached with first-order, gradient-based dynamics, applying stabilized gradient schemes such as extra-gradient or extrapolation~\citep{gidel2019vi_gans,mokhtari2020unified,jelassi2020extragradient}. Beyond extrapolation, several works modify the raw gradient dynamics to reduce rotation and improve stability in differentiable games—for example optimism in adversarial learning~\citep{daskalakis2018training}, symplectic gradient adjustment based on a decomposition of game Jacobians~\citep{balduzzi2018mechanics,letcher2019differentiable}, local symplectic surgery~\citep{mazumdar2025finding}, and game-aware local solvers like Competitive Gradient Descent~\citep{schafer2019competitive}. For general-sum games, \citet{song2019convergence} study convergence of a finite step-size gradient-based multi-agent learning dynamic (GA-SPP), providing Nash convergence guarantees without requiring infinitesimal learning rates. \citet{gemp2024approximating} define a loss based on the norm of the projected gradient of players’ expected payoffs onto the product of simplices, so driving this projected-gradient norm to (near) zero via stochastic optimization yields an (approximate) NE. In principle, these methods can be used in our setting to find boundary NE.
